\newtheorem*{rep@theorem}{\rep@title}
\newcommand{\newreptheorem}[2]{%
\newenvironment{rep#1}[1]{%
 \def\rep@title{#2 \ref{##1}}%
 \begin{rep@theorem}}%
 {\end{rep@theorem}}}
\newcommand{\maketitlepage}{%
    \let\thanks\@gobble
    \let\footnote\@gobble
    \@maketitle
}
\newcommand{\ols}[1]{\,\widetilde{\!{#1}}} 
\theoremstyle{plain}
\newtheorem{theorem}{Theorem}
\newtheorem{proposition}{Proposition}
\newtheorem{lemma}{Lemma}
\newtheorem{definition}{Definition}
\newtheorem{assumption}{Assumption}
\newtheorem{remark}{Remark}
\newcommand{\nn}{\nonumber}
\newcommand{\blue}{\color{black}}
\newcommand{\black}{\color{black}}
\newcommand{\nualpha}{\nu_\alpha}
\newcommand{\mfm}{\mathfrak{m}}
\newcommand{\mrd }{\mathrm{d}}
\newcommand{\mdens}{\mfm_{\theta}}
\newcommand{\KLr}{\mathrm{KL}}
\newcommand{\mbR}{\mathbb{R}}
\newcommand{\nupop}{\nu_{\mathrm{pop}}}
\newcommand{\mbZnt}{\mathbf{Z}_{n_t}^t}
\newcommand{\mbZns}{\mathbf{Z}_{n_s}^s}
\newcommand{\nut}{\nu_{n_t}^t}
\newcommand{\nus}{\nu_{n_s}^s}
\newcommand{\nutr}{\nu_{n_t,(1)}^t}
\newcommand{\spec}{\mathrm{sp}}
\newcommand{\thetac}{\theta_{\mathrm{c}}}
\newcommand{\thetas}{\theta_{\spec}}
\newcommand{\Thetac}{\Theta_{\mathrm{c}}}
\newcommand{\Thetas}{\Theta_{\spec}}
\newcommand{\mfmc}{\mathfrak{m}_{\mathrm{c}}}
\newcommand{\mfms}{\mathfrak{m}_{\spec}}
\newcommand{\mrmbarc}{\bar{\mathrm{m}}_{\mathrm{c}}}
\newcommand{\mrmbarsp}{\bar{\mathrm{m}}_{\spec}}
\newcommand{\Dispth}{\mathrm{d}_{\mathfrak{F}_p}}
\newcommand{\Disfth}{\mathrm{d}_{\mathfrak{F}_4}}
\newcommand{\gammac}{\hat\gamma_{8,\mathrm{c}}^\sigma}
\newcommand{\gammasp}{\hat\gamma_{8,\spec}^\sigma}
\newcommand{\tilgammap}{\tilde\gamma_8^\sigma}
\newcommand{\tilgammasp}{\tilde\gamma_{\spec}^\sigma}
\newcommand{\Disnd}{\mathrm{d}_{\mathfrak{F}_2}}
\newcommand{\dimc}{k_{d_\mathrm{c}}}
\newcommand{\dims}{k_{d_{\spec}}}
\newcommand{\dimg}{k_{d}}
\newcommand{\comp}{\mathrm{Comp}}
\newcommand{\mbE}{\mathbb{E}}
\newcommand{\mrR}{\mathrm{R}}
\newcommand{\mcZ}{\mathcal{Z}}
\newcounter{aliCounter}
\newcounter{lukaszCounter}
\newcounter{samCounter}
\title{Understanding Transfer Learning via Mean-field Analysis}
\author{
  Gholamali Aminian$^{ 1}$
  \and
  {\L}ukasz Szpruch$^{ 1\,3}$
  \and
  Samuel N. Cohen$^{1\,2}$
}
\begin{document}

\maketitle
\renewcommand\thefootnote{$^1$}\footnotetext{The Alan Turing Institute.}
\renewcommand\thefootnote{$^2$}\footnotetext{Mathematical Institute, University of Oxford.}
\renewcommand\thefootnote{$^3$}\footnotetext{School of Mathematics, University of Edinburgh.}

%

%








\begin{abstract}
We propose a novel framework for exploring generalization errors of transfer learning through the lens of differential calculus on the space of probability measures. In particular, we consider two main transfer learning scenarios, $\alpha$-ERM and fine-tuning with the KL-regularized empirical risk minimization and establish generic conditions under which the generalization error and the population risk convergence rates for these scenarios are studied. Based on our theoretical results, we show the benefits of transfer learning with a one-hidden-layer neural network in the mean-field regime under some suitable integrability and regularity assumptions on the loss and activation functions.
\end{abstract}


\newpage
\tableofcontents
\newpage
\section{Introduction}\label{sec: intro}

In supervised learning, a common assumption posits that both training and test datasets originate from the same data-generating distribution. This assumption often fails in real-world scenarios; for example, while a large amount of data is often available from the source task, we may aim to deploy a model -- trained on this source data and a little data from the target task -- on a disparate target task. To address this discrepancy, methodologies such as transfer learning and domain adaptation\footnote{In domain adaptation, we do not have access to the target task dataset and the model is trained solely on the source task dataset.} have been formulated. Recent advances in transfer learning, particularly algorithms leveraging pre-trained models followed by fine-tuning, or mixing the source and target tasks datasets, have achieved noteworthy progress across many domains  including computer vision, natural language processing and large language models~\citep{li2012human,long2015learning,yosinski2014transferable,ding2023parameter}. However, our understanding of transfer learning within neural networks remains limited.

A crucial problem in transfer learning theory is understanding the performance of a learning algorithm -- trained on both source and the target tasks datasets -- on the target task unseen data. This is described by the transfer generalization error, which quantifies the deviation between the algorithm's performance on the training data from the target task and its performance on unseen data sampled from the target task distribution. In the under-parameterized regime, where the number of model parameters is significantly less than the number of training data points, different approaches have been applied to study the theory of the transfer generalization error~\citep{hanneke2019value,tripuraneni2020theory,kalan2020minimax}. However, in the overparameterized regime, where the number of parameters may greatly exceed the number of training data points in both source and target tasks, these approaches become inadequate. Mathematical models, such as the Neural Tangent Kernel~\citep{jacot2018neural}, Mean-Field \citep{mei/montanari/nguyen:2018}, and Random Feature models \citep{rahimi2008uniform}, have been proposed to understand the behavior of overparameterized neural networks (NNs) in supervised learning. However, our current understanding of generalization in these contexts for transfer learning still needs to be completed.

Our approach is motivated by \citep{aminian2023mean} where a novel framework via differential calculus over measure space is proposed to study the supervised learning algorithms in a mean-field regime. The picture that emerges from the mean-field regime is that the aim of a learning algorithm is to find an optimal distribution over the parameter space (rather than optimal values of the parameters). Our work explores how this mean-field view can illuminate the transfer generalization performance of overparameterized neural networks in transfer learning contexts, specifically addressing $\alpha$-ERM and fine-tuning techniques.

The contributions of this work are as follows:

\begin{itemize} 
\item We utilize calculus on the space of measures to derive the generalization error of $\alpha$-ERM and fine-tuning transfer learning scenarios.
\item The Kullback--Leibler (KL) regularized empirical risk minimization, in both transfer learning scenarios, is studied and convergence rates on generalization error and population risk are provided.
\item For one-hidden-layer neural network in the mean-field regime, our analysis reveals precise conditions that guarantee the desired convergence rate.
\end{itemize}

\section{Preliminaries}\label{Sec:Problem-formulation}

\paragraph{Notation:}We adopt the following convention for random variables and their distributions. 
A random variable is denoted by an upper-case letter (e.g., $Z$), its space of possible values is denoted with the corresponding calligraphic letter (e.g. $\mcZ$), and an arbitrary value of this variable is denoted with the lower-case letter (e.g., $z$). We will write $\mathbb{E}_{Z}[\cdot]$ for the expectation taken over $Z$, all other random variables being left constant\footnote{Formally, this corresponds to the conditional expectation over all variables in our setting excluding $Z$. As we will only have countably many variables in our problem, this does not cause any technical difficulties.}. We will further write $\mathbb{E}_{Z\sim m}[\cdot]$ for the expectation over a random variable $Z$ independent of all others, which is distributed according to $m$, and similarly $\mathbb{V}_{Z\sim m}[\cdot]$ for the variance. We write $\delta_z$ for a Dirac measure supported at $z$.

 If $A$ is a normed space, then $ \mathcal{P}(A)$ denotes the space of probability measures on $A$ and $\mathcal{P}_{p}(A)$ the probability measures with finite $p$-th moment. We equip spaces of probability measures with the weak topology and its associated Borel $\sigma$-algebra. For the spaces under consideration (including finite products thereof), we assume a priori defined metrics. Furthermore, we posit that the metric on a product space is equivalent to the product (Euclidean) metric. 

We now introduce the functional linear derivative~\citep{cardaliaguet2019master} for functionals on measure spaces. For simplicity\footnote{This can be relaxed, along with most integrability assumptions in this paper, at a cost of complexity.}, we will restrict our attention to the finite variance ($\mathcal{P}_2(\mbR^n)$ case).

\begin{definition}{Extending \citep[Definition 5.43]{carmona/delarue:2018}}
\label{def:flatDerivative}
Consider $U:\mathcal P_2(\mathbb R^n) \times \mathbb R^k \to \mathbb R$. We say $m\mapsto U(m,x)$ is of class $\mathcal C^1$ if there exists a map $\frac{\delta U}{\delta m} : \mathcal P_2(\mathbb R^n) \times \mathbb R^k \times \mathbb R^n \to \mathbb R$, such that 
\begin{enumerate}[(i)]
    \item $\frac{\delta U}{\delta m}$ is measurable with respect to $x,a$, and continuous with respect to $m$;
    \item \label{flatder_item1}for every bounded set $B\subset \mathcal P_2(\mathbb R^n) \times \mathbb R^k$, there exists a constant $C>0$ such that  $|\frac{\delta U}{\delta m}(m,x,a)|\leq C(1+|a|^2)$ for all $(m,x)\in B$;
    \item for all $m, m' \in\mathcal P_2(\mathbb R^n)$, 
\end{enumerate}
\[\begin{split}
    &U(m',x) - U(m,x) \\&= \int_0^1 \int_{\mbR^n} \frac{\delta U}{\delta m}(m + \lambda(m' - m),x,a) \, (m'
-m)(da)\,\mrd \lambda\,.
\end{split}
\]
Since $\frac{\delta U}{\delta m}$ is only defined up to a constant we demand $\int \frac{\delta U}{\delta m}(m,x,a)\,m(da) = 0$. By extension, we say $U$ is of class $\mathcal{C}^2$ if both $U$ and $\frac{\delta U}{\delta m}$ are of class $\mathcal{C}^1$.
\end{definition}

Similarly, derivatives for measure-valued functionals can be defined \citep[Definition~2.2]{aminian2023mean}.

\subsection{Problem formulation} \label{sec: problem formulation}
\blue
Consider a transfer learning scenario, with the same input space $\mathcal{X}$ and target space $\mathcal{Y}$ for both source and the target tasks. Define $\mcZ:=\mathcal{X}\times\mathcal{Y}$. We are often interested in learning a function $f : \mathcal{X} \to \mathcal{Y}$  parameterized  using a (large) number of parameters from $\Theta\subseteq \mathbb{R}^{\dimg}$, for some $\dimg>0$. 

Let $\nupop^t \in \mathcal{P}_2(\mcZ)$ be the (unknown) distribution, describing the joint values of $(x,y)$ in the population of the target task. Similarly, we define $\nupop^s \in \mathcal{P}_2(\mcZ)$ for the source task. Intuitively, we suppose that the mass of $\nupop^t$ is near the graph of $f$ that is, $y^t\approx f(x^t)$ when $z^t=(x^t,y^t)$ is sampled from $\nupop^t$. Since $\nupop^t$ is unknown, we approximate $f$ based  on a finite dataset from both target and the source tasks, $\mbZnt =  \{Z_{i}^t\}_{i=1}^{n_t}$ and $\mbZns =  \{Z_{i}^s\}_{i=1}^{n_s}$. We make the following basic assumption on our data throughout the paper, that our observations from the target tasks are given by $\mbZnt =  \{Z_{i}^t\}_{i=1}^{n_t}$, where $Z_i^t=(X_{i}^t,Y_{i}^t)\sim \nupop^t$ are i.i.d. We write $\nut:= \frac{1}{n_t}\sum_{i=1}^{n_t} \delta_{Z_i^t}$ for the corresponding target empirical measure. Similarly, we define $\mbZns =  \{Z_{i}^s\}_{i=1}^{n_s}$, where $Z_i^s=(X_{i}^s,Y_{i}^s)\sim \nupop^s$ are i.i.d. and $\nus:= \frac{1}{n_s}\sum_{i=1}^{n_s} \delta_{Z_i^s}$.

We are interested in quantifying the performance of our model on unseen data from the target task. To this end, let $\mathbf{\ols{Z}}_{n_t}^t=\{\ols{Z}_i^t\}_{i=1}^{n_t}$ be a second i.i.d sample set with law $\nupop^t$, independent of $\mbZnt$. We define the perturbations of $\nu_{n_t}$ obtained by `resampling' one or two data points, by 
\begin{equation}\label{Eq: nu replace one}\nutr=\nut+\frac{1}{n_t}(\delta_{\ols{Z}_1^t}-\delta_{Z_1^t})\,.\end{equation}
\paragraph{Transfer Learning algorithms:}
In a mean-field approach, we represent transfer learning as a map from the observed target and source empirical measures to a measure over parameters, $(\nut,\nus) \mapsto \mfm(\nut,\nus) \in \mathcal{P}(\Theta)$.  We motivate this representation below.
In this work, we study two well-known transfer learning algorithms, $\alpha$-ERM and fine-tuning in a mean-field regime. We will specify how this map is chosen in the next section.
 
 \textbf{$\alpha$-ERM}: Inspired by \citep{ben2010theory}, we train the model based on a convex combination of empirical risks of source and the target tasks,
    \[\alpha \mrR(m,\nut) + (1-\alpha) \mrR(m,\nus), \quad \alpha\in[0,1].\]
    The $\alpha$-ERM transfer learning algorithm is represented as a (measurable) map from the convex combination $\nu_\alpha:=\alpha \nut + (1-\alpha) \nus$ of empirical measures to a measure over parameters $\theta\in\Theta\subset \mathbb{R}^{\dimg}$, $\nualpha\mapsto\mfm_{\alpha}(\nualpha)$. Therefore, for $\alpha$-ERM we have \[\mfm(\nut,\nus)=\mfm_{\alpha}(\nualpha).\]
    
     \textbf{Fine-tuning:} Inspired by \citep{tripuraneni2020theory} in fine-tuning algorithm, we consider two sets of parameters for training, including common parameters $\theta_c\in\Theta_c\subset \mathbb{R}^{\dimc}$, and specific parameters for source and the target tasks, $\theta_{\spec}^s\in \Theta_{\spec}\subset \mathbb{R}^{\dims}$ and $\theta_{\spec}^t\in \Theta_{\spec}\subset \mathbb{R}^{\dims}$, respectively. Conceptually, we first train (a measure over) $\thetas^s$ and $\thetac$ based on the source task's dataset. Then, we fine-tune the specific parameters, $\thetas^t$, based on the target task, while leaving the common parameters fixed. Formally, for the target task, this gives us a parameter measure of the form,
    \begin{equation}\label{eq:fine_tuning_structure}\mfm(\nut,\nus)=\mfm_{\mathrm{c}}(\nus)\mfm_{\spec}^t(\mfm_{\mathrm{c}}(\nus), \nut),\end{equation}
where $\mfm_{\mathrm{c}}:\mathcal{P}(\mcZ)\mapsto \mathcal{P}(\Theta_{\mathrm{c}})$ and $\mfm_{\spec}^t:\mathcal{P}(\Theta_{\mathrm{c}})\times \mathcal{P}(\mcZ)\mapsto \mathcal{P}(\Theta_{\spec})$ represent the measures over the common parameters induced by training on the source dataset and target-specific parameters induced by training on target dataset and fixing the common parameter measure, respectively. 

Note that the model parameter in fine-tuning is the union of common and target specific parameters in fine-tuning, i.e.~$\theta=\theta_{\mathrm{c}}\cup\theta_{\spec}^t$. 

To investigate resampling the training data, we extend \eqref{Eq: nu replace one} and make use of the convex perturbations, for $\lambda\in [0,1]$,
\begin{equation}
\label{mperturb1}
\begin{split}
&\mfm_{(1)}(\lambda)=\mfm(\nu_{n_t,(1)}^t,\nu_{n_s}^s)\\
&\qquad+\lambda (\mfm(\nut,\nus)-\mfm(\nu_{n_t,(1)}^t,\nu_{n_s}^s)),\\
&\nu_{(1)}^{t}(\lambda)=\nu_{n_{t},(1)}^{t}+ \lambda (\nu_{n_{t}}^t-\nu_{n_{t},(1)}^{t}).
\end{split}
\end{equation}
\subsubsection{Loss function}
We focus on training methods where $\mfm$ is the minimizer of a loss function. We generically represent the single-observation loss function as 
\begin{equation}\label{eq: general loss function}
(m,z)\mapsto \ell(m,z) \in \mathbb R^+\,.
\end{equation}
This represents the loss, for the parameter distribution $m$ and a data point $z$. For the fine-tuning scenario, given the structure shown in \eqref{eq:fine_tuning_structure}, we consider the map $(m_c m_{\spec},z)\mapsto \ell(m_{c}m_{\spec},z) \in \mathbb R^+\,$ where $m_c$ and $m_{\spec}$ are the common and specific parameter distributions. 

%

In order to motivate our generic formulation of this problem, we present the concrete example of overparameterized one-hidden-layer neural in mean-field regime.

\textbf{Overparameterized one-hidden-layer neural network (NN):} We consider a one-hidden-layer NN.
Let $x\in \mathbb R^q$ be a feature vector. For each of $r$ hidden neurons, we denote the parameters of the hidden layer by $w\in\mbR^q$ and the outer layer by $a\in\mbR$. The parameter space is $\Theta=\mbR^{(q+1)}=\{a\in\mbR,w\in\mbR^q\}$.  To ease notation, we write $\phi(\theta_i,x)=a_i\varphi(w_i\cdot x)$, where $\varphi(\cdot)$ is the activation function.  The output of the NN is
\begin{equation}\label{Eq: phi def}
\begin{split}
    \hat f(x) &= \frac{1}{r}\sum_{i=1}^r a_i\varphi( w_i\cdot x)=\frac{1}{r}\sum_{i=1}^r \phi(\theta_i,x)\\
    &= \int \phi(\theta, x)\, m_r\! (\mrd \theta)= \mathbb{E}_{\theta\sim m_r}[\phi(\theta,x)],
    \end{split}
\end{equation}
 where $m_r:=\frac{1}{r}\sum_{i=1}^r \delta_{(a_i,w_i)}.$ 
Observe that any hidden nodes can be exchanged (along with their parameters) without changing the output of the network; this symmetry implies that it is the (joint) distribution $m_r$ which is important. As $r$ increases, $m_r$ can converge (weakly) to a continuous distribution over the parameter space; the weights for individual neurons can be viewed as samples from this measure.  This is the mean-field model studied in~\citep{mei/montanari/nguyen:2018,mei/misiakiewicz/montanari:2019,MFLD}. 

Training neural networks chooses parameters $\theta$ to minimize a loss $\ell_o$ between $\hat f(x)$ and the observed value $y$. With $\hat y:=\hat f(x) = \mathbb{E}_{\theta\sim m_r}[\phi(\theta, x)]$, we can write our single-observation loss function as 
\begin{equation}
    \label{eq: nn loss function}
    \begin{split}
            \ell(m, z):=\ell_o\big(\mathbb{E}_{\theta\sim m}[\phi(\theta,x)],y\big).
                \end{split}
\end{equation} 
For example,  consider $\ell_o\big(\hat y,y\big)=c(y\hat y)$ for binary classification, where $c(\cdot)$ is a margin-based loss function~\citep{bartlett2006convexity}, or absolute loss $\ell_o(\hat y,y)=|y-\hat y|$ or quadratic loss $\ell_o(\hat y,y)=(y-\hat y)^2$ for classification and regression problems. As we can observe, for overparameterized one-hidden-layer NN, as in \eqref{eq: nn loss function} the loss function is non-linear in $m$.

\subsubsection{Risks}
The risk function\footnote{The functional $\mrR$ is (weakly) continuous in $\nu$, and is measurable in $m$ (as a consequence of the measurability of $\ell$). As $\ell(m, z)$ is nonnegative, the integral \eqref{eq risk linear} can always be defined, without integrability assumptions on $\ell$, but may take the value $+\infty$.} takes distributions $m$ of parameters and $\nu$ of data points and evaluates the loss: 
\begin{align} \label{eq risk linear}
    \mrR(m,\nu):= \int_{\mathcal Z} \ell\big( m,z\big)\nu(\mrd z)\,.
   \end{align}
   The population risk for the target task is $\mrR(m, \nupop^t)$, and the empirical risk for the source and the target tasks are  $\mrR(m,\nut)$ and $\mrR(m,\nus)$, respectively, as
\begin{equation*}
    \begin{split}
        \mrR(m,\nu_{n_t}^t) &=  \int_{\mathcal Z} \ell\big( m,z\big)\nu_{n_t}^t(\mrd z)=\frac{1}{n_t}\sum_{i=1}^{n_t} \ell\big( m,z_i^{t}\big),
    \end{split}
\end{equation*} 
and similarly for $\nu^s_{n_s}$.
\paragraph{Weak Transfer Generalization error:} We aim to study the performance of the model trained with the empirical data measures $(\nut,\nus)$, and evaluated against the population measure $\nupop^t$, that is, $\mrR(\mfm(\nut,\nus),\nupop^t)$.
The risk can be decomposed:
\begin{equation}
\begin{split}\label{def generalization error} 
 &\mrR(\mfm(\nut,\nus),\nupop^t) \\&=    \underbrace{\Big(\mrR(\mfm(\nut,\nus),\nupop^t) - \mrR(\mfm(\nut,\nus),\nut)\Big)}_{\text{transfer generalization error}}\\&\qquad+ \underbrace{\mrR(\mfm(\nut,\nus),\nut)}_{\text{training error of the target task}}.
\end{split}
\end{equation}
\black
\begin{assumption}\label{assn:training integrable}
    The training map $\mfm$, loss function $\ell$, and population measure $\nu$ are such that the training error satisfies $\mathbb{E}_{\mathbf{Z}_n^t}[\mrR(\mfm(\nut),\nut)] = \mathbb{E}_{\mathbf{Z}_n^t}[\int_\mcZ\ell(\mfm(\nut),z^t)\nut(\mrd z^t)]<\infty$. The same assumption also holds for source task.
\end{assumption}
\blue
 For a training map $\mfm$, we will consider the Weak Transfer Generalization Error (WTGE),
\begin{equation}\label{eq:WTGE}
\begin{split}
       \mathrm{gen}(\mfm,\nupop^t)&\triangleq \mathbb{E}_{\mbZnt,\mbZns}\big[ \mrR(\mfm,\nupop^t)  - \mrR(\mfm,\nut) \big].
       \end{split}
\end{equation}
\paragraph{Weak Transfer Excess Risk:} In addition, we also define the Weak Transfer Excess Risk (WTER) for a training map as,
\begin{equation}\label{eq:WTER}
\begin{aligned}
        &\mathcal{E}(\mfm,\nupop^t)\\&\triangleq \mathbb{E}_{\mbZnt,\mbZns}\big[ \mrR(\mfm,\nupop^t)]-\inf_{m\in\mathcal{P}(\Theta)}\mrR(m,\nupop^t).
        \end{aligned}
\end{equation}
\begin{remark}
    We will use both the WTGE and WTER in what follows. In the context of transfer learning, these two quantities have slightly different interpretations: WTGE measures the average overfitting of our model relative to our training data, while WTER measures the average overfitting relative to a perfect model for the target problem. Ultimately, our goal is to control the excess risk (WTER); we will do this by proving bounds on the WTGE as an intermediate step. 
\end{remark}
\subsection{Related works}
{\blue \paragraph{Generalization error and transfer learning:}
Various works have tried to understand the empirical performance of transfer learning and domain adaptation. The first theoretical study addressing domain adaptation was presented by \citep{ben2007analysis}, focusing on binary classification. This work establishes an excess risk bound for the zero-one loss using the VC-dimension, grounding it in the $d_\mathcal{A}$-distance—a metric quantifying the disparity between source and the target tasks. Further deepening the exploration, \citep{hanneke2019value} introduced a novel metric for assessing discrepancy in transfer learning termed the transfer-exponent, operating under the covariate-shift and  a Bernstein class condition.  \citep{kalan2020minimax} formulated a minimax lower bound on the generalization error associated with transfer learning in the neural network. A more recent development is the introduction of an Empirical Risk Minimization (ERM) strategy motivated by representation learning by \citep{tripuraneni2020theory}, which offers an upper boundary on the excess risk for the new task via Gaussian complexity analysis. Advancing the discourse, \citep{wang2019transfer} delineated an upper bound on excess risk leveraging importance weighting. Lastly, \citep{wu2020information} proposed an information-theoretic generalization error upper bound for transfer learning, employing the KL divergence as a metric for the similarity between source and target data distributions. Our work differs from these, as our focus is on over-parameterized models for transfer learning in a mean-field regime}
\black
\section{Generalization Errors Via Functional  Derivatives}\label{sec_main_results}
In this section, we apply functional calculus to study transfer generalization error (WTGE); these calculations will also enable us to prove bounds on the excess risk (WTER). Our goal is to give abstract conditions under which bounds can be established, which can be applied in a range of contexts, as will be explored in subsequent sections. We begin this section by providing a general representation of generalization error, inspired by the approach of \citep[Lemma 7]{Bousquet-Elisseeff} for transfer learning. All proofs are deferred to Appendix~\ref{App: Proofs of GE}.

\begin{lemma}\label{Lemma: rep gen tf}
   Consider a generic loss function $(m,z)\mapsto \ell(m,z)$, and $(\nutr,\nu_{n_s}^s)$ as defined in \eqref{Eq: nu replace one}. The WTGE \eqref{eq:WTGE} is given by,
    \begin{align}
        \label{eq_intro_loss_gap-transfer}
     &\mathrm{gen}(\mfm(\nut,\nus),\nupop^t) =\\\nonumber&\mathbb{E}_{\mbZnt,\mbZns,\ols{Z}_1^t} \Big[\ell\big(\mfm(\nut,\nus),\ols{Z}_1^t\big) -   \ell\big(\mfm(\nutr,\nus),\ols{Z}_1^t\big) \Big].
     \end{align}
\end{lemma}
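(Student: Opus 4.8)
The plan is to start from the definition of the WTGE in \eqref{eq:WTGE} and treat its two pieces—the population risk $\mrR(\mfm(\nut,\nus),\nupop^t)$ and the target training error $\mrR(\mfm(\nut,\nus),\nut)$—separately, rewriting each as the expectation of a single loss evaluation so that their difference matches the right-hand side of \eqref{eq_intro_loss_gap-transfer}. Throughout, I would keep the source sample $\mbZns$ fixed as a spectator, since the target resampling never touches it. For the population term, I would unfold $\mrR(\mfm(\nut,\nus),\nupop^t)=\int_{\mcZ} \ell(\mfm(\nut,\nus),z)\,\nupop^t(\mrd z)$ and use that $\ols{Z}_1^t\sim\nupop^t$ is drawn independently of $(\mbZnt,\mbZns)$; by Fubini (justified by Assumption~\ref{assn:training integrable}) this equals $\mathbb{E}_{\mbZnt,\mbZns,\ols{Z}_1^t}[\ell(\mfm(\nut,\nus),\ols{Z}_1^t)]$, which is exactly the first term on the right-hand side.

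For the training term, I would first write $\mrR(\mfm(\nut,\nus),\nut)=\frac{1}{n_t}\sum_{i=1}^{n_t}\ell(\mfm(\nut,\nus),Z_i^t)$ and invoke exchangeability: because $\mfm$ depends on the target data only through the empirical measure $\nut$, it is invariant under permutations of $Z_1^t,\dots,Z_{n_t}^t$, so every summand has the same expectation and the average collapses to $\mathbb{E}_{\mbZnt,\mbZns}[\ell(\mfm(\nut,\nus),Z_1^t)]$. The crux is then a distribution-preserving swap. Since $Z_1^t,\dots,Z_{n_t}^t,\ols{Z}_1^t$ are i.i.d.\ under $\nupop^t$, transposing $Z_1^t$ with $\ols{Z}_1^t$ leaves the joint law unchanged; under this transposition the empirical measure $\nut$ becomes $\nutr$ (recall that \eqref{Eq: nu replace one} replaces $\delta_{Z_1^t}$ by $\delta_{\ols{Z}_1^t}$), the evaluation point $Z_1^t$ becomes $\ols{Z}_1^t$, and $\nus$ is untouched. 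Hence $\mathbb{E}_{\mbZnt,\mbZns}[\ell(\mfm(\nut,\nus),Z_1^t)]=\mathbb{E}_{\mbZnt,\mbZns,\ols{Z}_1^t}[\ell(\mfm(\nutr,\nus),\ols{Z}_1^t)]$, and subtracting this from the population term yields \eqref{eq_intro_loss_gap-transfer}.

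The step I expect to demand the most care is this swapping identity: I would need to be explicit that $\mfm$ factors through the unordered empirical measure, so that the transposition acts cleanly on the argument of $\ell$, and that the joint law of the augmented sample $(Z_1^t,\dots,Z_{n_t}^t,\ols{Z}_1^t,\mbZns)$ is genuinely invariant under exchanging the first target coordinate with the resampled one. The integrability granted by Assumption~\ref{assn:training integrable} is what lets me split the difference of the two expectations and recombine them without encountering an $\infty-\infty$ indeterminacy.
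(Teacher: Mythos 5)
Your proposal is correct and follows essentially the same route as the paper's proof: express the population risk via the independent test point $\ols{Z}_1^t$, collapse the empirical average to the first summand by exchangeability, and then apply the distribution-preserving swap $Z_1^t \leftrightarrow \ols{Z}_1^t$ (the paper's identity \eqref{Eq: LOO}), under which $\nut$ becomes $\nutr$ while $\nus$ is untouched. Your added care about $\mfm$ factoring through the unordered empirical measure and about integrability (Assumption~\ref{assn:training integrable}) only makes explicit what the paper leaves implicit.
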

The right hand side of \eqref{eq_intro_loss_gap-transfer} measures the expected change in the loss function when resampling one training data point from the target task, which connects weak transfer generalization error to stability of $\mfm$ with respect to both target training dataset. We will next quantify this stability precisely in transfer learning, in terms of functional derivatives. 

\begin{assumption}\label{main_ass_square_bounded}
In addition to Assumption \ref{assn:training integrable}, 
\begin{enumerate}[(i)]  
\item\label{main_ass_square_bounded_item1} The loss function $\ell$ is $\mathcal{C}^1$ (cf. Definition \ref{def:flatDerivative}), nonnegative, and convex with respect to $m$;
\item\label{main_ass_square_bounded_item4} When restricted to $\nu^t,\nu^s \in \mathcal{P}_2(\mcZ)$, the training map $(\nu^t,\nu^s)\mapsto \mfm(\nu^t,\nu^s)$ is $\mathcal{C}^1$, in the sense of \citep[Definition~2.2]{aminian2023mean}.
\end{enumerate}
\end{assumption}

\begin{theorem}\label{thm: weak_gen_func_transfer} 
Given Assumption \ref{main_ass_square_bounded}, the weak transfer generalization error has the representation
\[\begin{split}
    &\mathrm{gen}(\mfm,\nupop^t)\\&=\mathbb{E}_{\mbZnt,\mbZns,\ols{Z}_1^t}\Big[\int_{0}^1 \int_{\Theta}\Big( \frac{\delta \ell}{\delta m_{\spec}}\big(\mfm_{(1)}(\lambda),\ols{Z}_1^t,\theta\big)\Big) \\&\qquad\big(\mfm(\nut,\nus)-\mfm(\nutr,\nus)\big)(\mrd \theta)\,\mrd \lambda\Big].
    \end{split}
\]
\end{theorem}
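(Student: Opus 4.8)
The plan is to build directly on the resampling representation already obtained in Lemma~\ref{Lemma: rep gen tf}, which writes $\mathrm{gen}(\mfm(\nut,\nus),\nupop^t)$ as the expected one-point loss gap $\mathbb{E}_{\mbZnt,\mbZns,\ols{Z}_1^t}\big[\ell(\mfm(\nut,\nus),\ols{Z}_1^t) - \ell(\mfm(\nutr,\nus),\ols{Z}_1^t)\big]$, and then to rewrite the inner difference of losses through the fundamental-theorem-of-calculus property of the flat derivative (Definition~\ref{def:flatDerivative}(iii)). Concretely, I would fix a realization of the data $(\mbZnt,\mbZns,\ols{Z}_1^t)$ and apply that property to the map $m\mapsto \ell(m,\ols{Z}_1^t)$, which is of class $\mathcal{C}^1$ by Assumption~\ref{main_ass_square_bounded}(i), with endpoints $m'=\mfm(\nut,\nus)$ and $m=\mfm(\nutr,\nus)$; both lie in $\mathcal{P}_2(\Theta)$ because the training map is well-defined on $\mathcal{P}_2$ by Assumption~\ref{main_ass_square_bounded}(ii). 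Note only the $\mathcal{C}^1$ part of Assumption~\ref{main_ass_square_bounded} is needed here; convexity is reserved for the later sign/bound arguments.

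The crucial bookkeeping step is to recognise the interpolant in Definition~\ref{def:flatDerivative}(iii). Along the segment the derivative is evaluated at $m+\lambda(m'-m)=\mfm(\nutr,\nus)+\lambda\big(\mfm(\nut,\nus)-\mfm(\nutr,\nus)\big)$, which is exactly the convex perturbation $\mfm_{(1)}(\lambda)$ of \eqref{mperturb1}, while the signed measure against which one integrates is $m'-m=\mfm(\nut,\nus)-\mfm(\nutr,\nus)$. Substituting this pointwise identity back under the expectation in Lemma~\ref{Lemma: rep gen tf} reproduces the claimed formula verbatim. Because the expectation sits outside the entire $\lambda$- and $\theta$-integral, no interchange of integration order is actually needed: the bracketed $\lambda,\theta$ integral equals the loss gap for each realization, so taking expectations gives the result. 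Well-definedness of that expectation follows from the growth bound in Definition~\ref{def:flatDerivative}(ii) together with the finite second moments of $\mfm(\nut,\nus)$ and $\mfm(\nutr,\nus)$, which dominate the inner integral, and from the standing integrability in Assumption~\ref{assn:training integrable}.

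Two points deserve care, and the fine-tuning one is the main obstacle. The $\alpha$-ERM case is immediate from the argument above with the full flat derivative. For fine-tuning, the subscript $m_{\spec}$ encodes the product structure \eqref{eq:fine_tuning_structure}: resampling one target point turns $\nut$ into $\nutr$ but leaves $\nus$, and hence the common-parameter measure $\mfmc(\nus)$, untouched, so that
\[
\mfm(\nut,\nus)-\mfm(\nutr,\nus)=\mfmc(\nus)\otimes\big(\mfms^t(\mfmc(\nus),\nut)-\mfms^t(\mfmc(\nus),\nutr)\big).
\]
I would argue that, since only the specific component is perturbed, the full flat derivative collapses to the partial flat derivative $\frac{\delta\ell}{\delta m_{\spec}}$: integrating $\frac{\delta\ell}{\delta m_{\spec}}$, which depends on $\theta$ only through its $\Theta_{\spec}$-coordinate, against this product perturbation reduces the $\Thetac$-integral to a constant because $\mfmc(\nus)$ is a probability measure, recovering the stated integral over $\Theta$. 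Making this reduction rigorous — justifying that differentiating only in the specific measure is legitimate and that the product perturbation integrates as claimed — is the step I expect to demand the most attention, whereas the remaining integrability verifications are routine.
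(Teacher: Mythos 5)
Your proposal is correct and follows essentially the same route as the paper's proof, which is exactly your first two steps: the resampling identity of Lemma~\ref{Lemma: rep gen tf} combined with the interpolation property of Definition~\ref{def:flatDerivative}(iii), with the interpolant recognised as the convex perturbation $\mfm_{(1)}(\lambda)$ of \eqref{mperturb1} and the signed measure as $\mfm(\nut,\nus)-\mfm(\nutr,\nus)$. Your extra care about integrability and about reading $\frac{\delta \ell}{\delta m_{\spec}}$ as the partial derivative under the product structure \eqref{eq:fine_tuning_structure} only makes explicit what the paper leaves implicit (and your orientation of the interpolant is in fact the one consistent with \eqref{mperturb1}, whereas the recap at the start of the paper's proof swaps the endpoints).
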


By applying the functional linear derivative to the term $\big(\mfm(\nut,\nus)-\mfm(\nutr,\nus)\big)$ in Theorem \ref{thm: weak_gen_func_transfer}, we can provide yet another representation of the WTGE for $\alpha$-ERM and fine-tuning scenarios.
\begin{theorem}[$\alpha$-ERM]\label{thm: another rep WTGE-alpha-scenario} The WTGE of $\alpha$-ERM can be written 
\begin{equation}\label{Eq:Finite-term-trnasfer}
        \mathrm{gen}(\mfm,\nupop^t)  =\frac{\alpha}{n_t}\mathbb{E}_{\mbZnt,\mbZns,\ols{Z}_1^t}\big[ h(\mbZnt,\mbZns,\ols{Z}_1^t)\big],
       \end{equation} 
       where \\
    \mbox{\small$h(\mbZnt,\mbZns,\ols{Z}_1^t) =\int_{0}^1\int_{0}^1\int_{\mcZ} A(\lambda,\tilde \lambda)\big(\delta_{Z_1^t}-\delta_{\tilde Z_1^t}\big)(\mrd z) \mrd \tilde{\lambda}\mrd \lambda,$ } {\small
    $ A(\lambda,\tilde \lambda)= \int_{\Theta}  \frac{\delta \ell}{\delta m_{\spec}}\big(\mfm_{(1)}(\lambda),\ols{Z}_1^t,\theta\big)\frac{\delta \mfm}{\delta \nu}\big(\nu_{\alpha,(1)}(\tilde{\lambda}),z\big)(\mrd \theta),$} and $ \nu_{\alpha,(1)}(\tilde\lambda)= ( \alpha \nutr + (1-\alpha)\nus) + \tilde\lambda \alpha (\nut-\nutr).$
   \end{theorem}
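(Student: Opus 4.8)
The plan is to take the representation already established in Theorem~\ref{thm: weak_gen_func_transfer} as the starting point and specialize its increment $\mfm(\nut,\nus)-\mfm(\nutr,\nus)$ to the $\alpha$-ERM structure, then apply the measure-valued linear derivative a second time. Concretely, for $\alpha$-ERM we have $\mfm(\nut,\nus)=\mfm_{\alpha}(\nualpha)$ with $\nualpha=\alpha\nut+(1-\alpha)\nus$, so the increment becomes $\mfm_{\alpha}(\nualpha)-\mfm_{\alpha}\big(\alpha\nutr+(1-\alpha)\nus\big)$. Both arguments share the source component $(1-\alpha)\nus$ and differ only in their target parts, so as signed measures on $\mcZ$ their difference is $\alpha(\nut-\nutr)$.

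First I would compute this difference explicitly. From \eqref{Eq: nu replace one} one reads off $\nut-\nutr=\frac{1}{n_t}\big(\delta_{Z_1^t}-\delta_{\ols{Z}_1^t}\big)$, and therefore $\nualpha-\big(\alpha\nutr+(1-\alpha)\nus\big)=\frac{\alpha}{n_t}\big(\delta_{Z_1^t}-\delta_{\ols{Z}_1^t}\big)$. This already isolates the prefactor $\alpha/n_t$ that appears in \eqref{Eq:Finite-term-trnasfer} and identifies $\ols{Z}_1^t$ with the resampled point written $\tilde Z_1^t$ in the statement of $h$.

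Next I would apply the linear derivative to the map $\mfm_{\alpha}$. Since the training map is $\mathcal{C}^1$ by Assumption~\ref{main_ass_square_bounded}(ii), I invoke the measure-valued analogue of Definition~\ref{def:flatDerivative} (Definition~2.2 of \citep{aminian2023mean}) along the convex path running from $\alpha\nutr+(1-\alpha)\nus$ to $\nualpha$. Checking the endpoints $\tilde\lambda=0,1$ shows this path is exactly the $\nu_{\alpha,(1)}(\tilde\lambda)$ defined in the statement, so
\[
\mfm(\nut,\nus)-\mfm(\nutr,\nus)=\frac{\alpha}{n_t}\int_0^1\int_{\mcZ}\frac{\delta\mfm}{\delta\nu}\big(\nu_{\alpha,(1)}(\tilde\lambda),z\big)\big(\delta_{Z_1^t}-\delta_{\ols{Z}_1^t}\big)(\mrd z)\,\mrd\tilde\lambda.
\]

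Finally I would substitute this expression into Theorem~\ref{thm: weak_gen_func_transfer}, interchange the $\theta$- and $z$-integrations by Fubini, and fold the inner $\theta$-integral of $\frac{\delta\ell}{\delta m_{\spec}}\big(\mfm_{(1)}(\lambda),\ols{Z}_1^t,\theta\big)\frac{\delta\mfm}{\delta\nu}\big(\nu_{\alpha,(1)}(\tilde\lambda),z\big)$ into $A(\lambda,\tilde\lambda)$; the remaining double-$\lambda$ integral against $\big(\delta_{Z_1^t}-\delta_{\ols{Z}_1^t}\big)$ is precisely $h(\mbZnt,\mbZns,\ols{Z}_1^t)$, yielding \eqref{Eq:Finite-term-trnasfer}. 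The hard part will be the analytic bookkeeping rather than the algebra: I must verify that $\nu_{\alpha,(1)}(\tilde\lambda)\in\mathcal{P}_2(\mcZ)$ for all $\tilde\lambda$ (so the derivative is defined along the path), and that the quadratic-growth control on $\frac{\delta\ell}{\delta m}$ from Definition~\ref{def:flatDerivative}(ii), together with a matching integrability of $\frac{\delta\mfm}{\delta\nu}$ and Assumption~\ref{assn:training integrable}, render the nested integral absolutely convergent, which is exactly what legitimizes the Fubini interchange and the extraction of the finite-dimensional integral against the Dirac differences.
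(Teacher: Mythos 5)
Your proposal is correct and follows essentially the same route as the paper's proof: starting from Theorem~\ref{thm: weak_gen_func_transfer}, computing $\nut-\nutr=\frac{1}{n_t}(\delta_{Z_1^t}-\delta_{\ols{Z}_1^t})$, applying the linear derivative of $\mfm$ along the convex path $\nu_{\alpha,(1)}(\tilde\lambda)$ from $\alpha\nutr+(1-\alpha)\nus$ to $\nualpha$ to extract the prefactor $\alpha/n_t$, and substituting back. Your added care about Fubini and the $\mathcal{P}_2$-membership of $\nu_{\alpha,(1)}(\tilde\lambda)$ is sound bookkeeping that the paper leaves implicit under Assumption~\ref{main_ass_square_bounded}, and you correctly identify $\tilde Z_1^t$ in the statement of $h$ as the resampled point $\ols{Z}_1^t$.
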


\begin{theorem}[Fine-tuning]\label{thm: another rep fine-tune}

The WTGE of fine-tuning can be written,
\begin{equation}\label{eq: fine-tune_func}
\mathrm{gen}(\mfm,\nupop^t)  =\frac{1}{n_t}\mathbb{E}_{\mbZnt,\mbZns,\ols{Z}_1^t}\big[ K(\mbZnt,\mbZns,\ols{Z}_1^t)\big] ,    
\end{equation}
 
where
\begin{align*}&K(\mbZnt,\mbZns,\ols{Z}_1^t)\\
&=\int_{0}^1 \int_{\Theta_{\spec}}\Big( \frac{\delta \ell}{\delta m_{\spec}}\big(\mfm_{(1)}^f(\lambda),\ols{Z}_1^t,\theta_{\spec}^t\big)\Big) \\&\quad
 \times\bigg(\int_{0}^1\int_{\mcZ}\frac{\delta\mfms}{\delta \nu}(\mfmc(\nus),\nutr(\lambda_1),z)(\mrd \thetas^t)\\&\quad\quad(\delta_{Z_1^t}-\delta_{\ols{Z}_1^t})(\mrd z) \mrd \lambda_1\bigg)\,\mrd \lambda,\end{align*}
and $\mfm_{(1)}^f(\lambda)=\mfm_{\mathrm{c}}(\nus)\big[(1-\lambda)(\mfm_{\spec}^t(\mfm_{\mathrm{c}}(\nus), \nut)+\lambda\mfm_{\spec}^t(\mfm_{\mathrm{c}}(\nus), \nutr)\big].$
\end{theorem}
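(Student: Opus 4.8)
The plan is to specialize the abstract identity of Theorem~\ref{thm: weak_gen_func_transfer} to the fine-tuning structure \eqref{eq:fine_tuning_structure} and then expand the surviving measure difference by a \emph{second} application of the functional linear derivative, exactly as the $\alpha$-ERM computation of Theorem~\ref{thm: another rep WTGE-alpha-scenario} does, but keeping track that only the target-specific measure moves. I start from
\[
\mathrm{gen}(\mfm,\nupop^t)=\mathbb{E}_{\mbZnt,\mbZns,\ols{Z}_1^t}\Big[\int_{0}^1\int_{\Theta}\frac{\delta \ell}{\delta m_{\spec}}\big(\mfm_{(1)}(\lambda),\ols{Z}_1^t,\theta\big)\,\big(\mfm(\nut,\nus)-\mfm(\nutr,\nus)\big)(\mrd \theta)\,\mrd \lambda\Big].
\]

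The first and conceptually decisive step is to use that in fine-tuning the common-parameter measure $\mfmc(\nus)$ depends only on the source sample $\nus$, which the target resampling $\nut\mapsto\nutr$ leaves unchanged. Substituting \eqref{eq:fine_tuning_structure} gives the factorization
\[
\mfm(\nut,\nus)-\mfm(\nutr,\nus)=\mfmc(\nus)\big[\mfm_{\spec}^t(\mfmc(\nus),\nut)-\mfm_{\spec}^t(\mfmc(\nus),\nutr)\big],
\]
a product signed measure on $\Theta_{\mathrm{c}}\times\Theta_{\spec}$ whose $\Theta_{\mathrm{c}}$-marginal is the fixed probability measure $\mfmc(\nus)$. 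Integrating out $\theta_{\mathrm{c}}\sim\mfmc(\nus)$ collapses $\int_{\Theta}(\cdots)(\mrd\theta)$ into an integral over $\Theta_{\spec}$ against the difference of specific measures, with $\frac{\delta\ell}{\delta m_{\spec}}$ now read at $\theta_{\spec}^t\in\Theta_{\spec}$; the same factorization pulls $\mfmc(\nus)$ out of the convex combination defining $\mfm_{(1)}(\lambda)$, identifying it with the interpolant $\mfm_{(1)}^f(\lambda)$ of the statement.

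Next I expand the difference $\mfm_{\spec}^t(\mfmc(\nus),\nut)-\mfm_{\spec}^t(\mfmc(\nus),\nutr)$ using the $\mathcal C^1$ regularity of the training map in its data argument (Assumption~\ref{main_ass_square_bounded}, in the measure-valued sense of \citep[Definition~2.2]{aminian2023mean}), which yields the signed-measure identity
\[
\mfm_{\spec}^t(\mfmc(\nus),\nut)-\mfm_{\spec}^t(\mfmc(\nus),\nutr)=\int_{0}^1\int_{\mcZ}\frac{\delta\mfms}{\delta \nu}\big(\mfmc(\nus),\nutr(\lambda_1),z\big)\,(\nut-\nutr)(\mrd z)\,\mrd\lambda_1,
\]
with $\nutr(\lambda_1)=\nutr+\lambda_1(\nut-\nutr)$ the data interpolant of \eqref{mperturb1}. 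Inserting the one-point perturbation $\nut-\nutr=\tfrac{1}{n_t}(\delta_{Z_1^t}-\delta_{\ols{Z}_1^t})$ read off from \eqref{Eq: nu replace one} extracts the prefactor $\tfrac{1}{n_t}$ and reproduces precisely the inner bracket of $K$; chaining the two derivative expansions and applying Fubini to exchange the $\lambda,\lambda_1,z$ integrals with the expectation gives \eqref{eq: fine-tune_func}.

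I expect the main obstacle to be bookkeeping rather than a single sharp estimate: one must keep the two interpolation parameters genuinely separate, with the outer derivative $\frac{\delta\ell}{\delta m_{\spec}}$ read along the parameter interpolant $\mfm_{(1)}^f(\lambda)$ and the inner derivative $\frac{\delta\mfms}{\delta\nu}$ read along the independent data interpolant $\nutr(\lambda_1)$, and one must justify the interchange of integrals. The latter is where the quadratic growth control of the flat derivative (Definition~\ref{def:flatDerivative}) together with the integrability in Assumption~\ref{assn:training integrable} and Assumption~\ref{main_ass_square_bounded} enters; the remaining subtlety is to verify that the marginalization of $\theta_{\mathrm{c}}$ against the fixed $\mfmc(\nus)$ is exactly what converts the generic specific derivative into the one appearing in the statement, which is the structural feature distinguishing fine-tuning from the $\alpha$-ERM case.
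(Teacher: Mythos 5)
Your proposal is correct and follows essentially the same route as the paper's proof: both specialize Theorem~\ref{thm: weak_gen_func_transfer} to the fine-tuning structure \eqref{eq:fine_tuning_structure} (exploiting that $\mfmc(\nus)$ is unaffected by target resampling), expand $\mfm_{\spec}^t(\mfmc(\nus),\nut)-\mfm_{\spec}^t(\mfmc(\nus),\nutr)$ via the linear derivative $\frac{\delta\mfms}{\delta\nu}$ along the data interpolant $\nutr(\lambda_1)$, and then substitute $\nut-\nutr=\frac{1}{n_t}\big(\delta_{Z_1^t}-\delta_{\ols{Z}_1^t}\big)$ to extract the $\frac{1}{n_t}$ prefactor. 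Your explicit treatment of the $\theta_{\mathrm{c}}$-marginalization against the fixed $\mfmc(\nus)$ and of the Fubini interchange only spells out what the paper's chain of equalities leaves implicit.
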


   \begin{remark}[Convergence rate]The representation in \eqref{Eq:Finite-term-trnasfer} reveals that the convergence rates of the WTGE based on $\alpha$-ERM  is at worst $\mathcal{O}(\frac{1}{n_t+n_s})$ provided that $ \mathbb{E}_{\mbZnt,\mbZns,\ols{Z}_1^t}\big[ h(\mbZnt,\mbZns,\ols{Z}_1^t)  \big]\leq\mathcal{O}(1)$ with respect to $n_t$ and $\alpha=\frac{n_t}{n_t+n_s}$. Furthermore, for fine-tuning scenario, the representation in \eqref{eq: fine-tune_func} reveals that the convergence rate is at worst $\mathcal{O}(\frac{1}{n_t})$ provided that $\mathbb{E}_{\mbZnt,\mbZns,\ols{Z}_1^t}\big[ K(\mbZnt,\mbZns,\ols{Z}_1^t)\big]\leq O(1)$ with respect to $n_t$.
\end{remark}
\section{KL-regularized Risk Minimization}\label{Sec: KL-reg}
In the last section, we gave general conditions under which we can establish convergence rates for weak transfer generalization error, in terms of the functional derivatives of the loss function and learning algorithms in different scenarios, including, $\alpha$-ERM and fine-tuning. Practically, these conditions are only useful if one can verify that they hold in specific examples for these scenarios. In this section, we will provide an intermediate step in this direction, where we assume the learning algorithm $\mfm$ is chosen to minimize a regularized version of the empirical risk. This will provide us with criteria which can be easily verified in practical examples; we will see this fully in Section \ref{Sec: Application}.

We will frequently use $\mdens$ to represent the density of a measure or measure-valued functional, that is, $\mfm(\nu)(\mrd \theta) = \mdens(\nu;\theta)d\theta$. For probability distributions $m$ and $m'$ with positive densities $m_\theta(\theta)$ and $m'_\theta(\theta)$ we define the Kullback--Leibler divergence $\KLr(m'\|m) =\int \log\big(\frac{m'_\theta(\theta)}{m_\theta(\theta)}\big)m_\theta(\theta)\mrd \theta$, and $\KLr(m'\|m)=\infty$ otherwise\footnote{Extending to the case where $m$ and $m'$ are equivalent but do not have (Lebesgue) densities will not be needed here.}. 
We will also write the Kullback--Leibler divergence in terms of the densities, with the abuse of notation $\KLr(m'\|m) = \KLr(m_\theta'\|m_\theta) = \KLr(m'\|m_\theta)$ as convenient. For our analysis, we define the following integral probability metric.
\begin{definition}
 We define a general integral probability metric\footnote{This is generally a pseudometric, and is a true metric if $\mathfrak{F}$ separates points in $\mathcal{X}$.} (IPM) $\mathrm{d}_{\mathfrak{F}}(\rho,\eta)$, between two measures $\rho,\eta\in\mathcal{P}(\mathcal{X})$ by
    \[\mathrm{d}_{\mathfrak{F}}(\rho,\eta):=\sup_{f\in\mathfrak{F}}\Big|\int_{\mathcal{X}}f(x)\rho(\mathrm{d}x)-\int_{\mathcal{X}}f(x)\eta(\mathrm{d}x) \Big|,\]
    where $\mathfrak{F}\subset \big\{f: \mathcal{X}\mapsto\mathbb{R}^+  \big\}$ are measurable functions.
\end{definition}
\begin{remark}
 Defining $\mathfrak{F}_p^{\mathcal{C}^1}:=\big\{f: \mathcal{X}\mapsto\mathbb{R}^+ \text{ s.t. } f(x)\leq (1+\|x\|^p), f\in \mathcal{C}^1 \big\}$, we have an $L_p$ growth IPM 
 \[\mathrm{d}_{\mathfrak{F}_p}(\rho,\eta):=\sup_{f\in\mathfrak{F}_p^{\mathcal{C}^1}}\Big|\int_{\mathcal{X}}f(x)\rho(\mathrm{d}x)-\int_{\mathcal{X}}f(x)\eta(\mathrm{d}x) \Big|.\]
 Furthermore, considering the set of $1$-Lipschitz functions, $\mathfrak{F}^{Lip}:=\big\{f: \mathcal{X}\mapsto\mathbb{R}^+ \text{ s.t. }  f\in \mathrm{Lip(1)} \big\}$, we recover the Wasserstein $W_1$ distance; taking indicator functions of intervals, or bounded continuous functions, we obtain the  total variation distance, $\mathbb{TV}$.
\end{remark}
In learning, we sometimes consider learning algorithms which minimize a regularized objective $\mathcal{V}^{\beta}$ given by the sum of a risk function $\mrR$ and the KL-divergence:
\begin{align}\label{Eq: regularized risk}
    &\mathcal{V}^{\beta}(m,\nu)= \mrR(m,\nu)+ \frac{\sigma^2}{2\beta^2} \KLr(m\|\gamma^{\sigma}),
\end{align}
where $\gamma^{\sigma}(\theta) = \frac{1}{F^{\sigma}} \exp\big\{-\frac{1}{\sigma^2} U(\theta)\big\}$
is a Gibbs measure which serves as prior to the parameter measure; here $F^\sigma$ is simply a normalizing constant to ensure $\int \gamma^\sigma(\theta)\mrd\theta =1$, and we call $U:\Theta\to \mathbb{R}$ the `regularizing potential'. In \citep{aminian2023mean}, it is shown that under Assumption~\ref{main_ass_square_bounded}, the minimizer of problem \eqref{Eq: regularized risk} exists and is unique,
\begin{align}\label{Eq: Gibbs measure map}
&\mfm^{\beta}( \nu) = \frac{1}{F_{\beta}}\exp\bigg\{-\frac{2\beta^2}{\sigma^2}\Big[\frac{\delta \mrR}{\delta m}(\mfm^{\beta}( \nu) , \nu,\theta) \\\nn&\qquad\qquad+ \frac{1}{2\beta^2} U(\theta)\Big] \bigg\},
\end{align}
for $F_{\beta}$ is a normalization constant. Furthermore, the map $\nu\mapsto \mfm^{\beta}(\nu)$ is $\mathcal{C}^1$ (Definition \ref{def:flatDerivative}). For notational convenience, we similarly define $\tilgammap = \frac{1}{\tilde F^{\sigma}}\exp\big\{-\frac{1}{\sigma^2}U(\theta) + \|\theta\|^8\big\}$. We observe that, unless $\frac{\delta \mrR}{\delta m}$ does not depend on $m$ (i.e.~unless $\mrR$ is linear in $m$), \eqref{Eq: Gibbs measure map} does not provide an explicit representation of $\mfm^{\beta}(\nu)$, but instead describes it implicitly in terms of a fixed point. 

\subsection{\texorpdfstring{$\alpha$}{alpha}-ERM}
In $\alpha$-ERM, we aim to minimize the KL-regularized risk, $\mathcal{V}^{\beta}(m,\nualpha)$, and denote the solution $\mfm^{\beta}(\nualpha)$.

 Applying Theorem~\ref{thm: another rep WTGE-alpha-scenario} under the following assumption (see Appendix \ref{app: alpha erm} for the detailed formulation of this assumption), we can derive upper bound on the WTGE of the $\alpha$-ERM.
\begin{assumption}\label{ass:KLreg_assn}
The loss function and the functional derivative of loss function with respect to $m$, are such that,

 (i) For all $z\in \mcZ$ and $m\in \mathcal{P}_4(\Theta)$, there exists $L_m>0$ such that the loss satisfies
        \[0\leq \ell(m,z) \leq L_m (1+\mbE_{\theta\sim m}[\|\theta\|^4])\big(1+\|z\|^2\big),\]
(ii) For all $z\in \mcZ$ and $m\in \mathcal{P}_4(\Theta)$, there exists $L_e>0$ such that the functional derivative of loss satisfies
    \[\Big|\frac{\delta \ell}{\delta m}(m,z,\theta) \Big| \leq L_e\big(1+\mbE_{\theta\sim m}[\|\theta\|^4]+\|\theta\|^4\big)\big(1+\|z\|^2\big).\]
\end{assumption}

 \begin{theorem}[WTGE of the $\alpha$-ERM]\label{thm: WTGE alpha erm}
    Given Assumption \ref{ass:KLreg_assn}, $\mathbb{E}_{Z\sim\nupop^t}\big[\|Z\|^{8}\big]<\infty$ and $\mathbb{E}_{Z\sim\nupop^s}\big[\|Z\|^{4}\big]<\infty$, the WTGE for the $\alpha$-ERM satisfies 
\[\begin{split}
      &|\mathrm{gen}( \mfm^{\beta}(\nualpha),\nupop^t)|
\\&\quad\leq \frac{c_t\alpha}{n_t}\frac{2\beta^2}{\sigma^2} \comp(\theta)\Bigg[ (2+\alpha)^2\mathbb{E}_{Z_1^t}\Big[(1+\|Z_1^t\|^2)^4\Big]\\&\qquad+  (1-\alpha)^2\mathbb{E}_{Z_1^t}\Big[(1+\|Z_1^t\|^2)^2\Big]\mathbb{E}_{Z_1^s}\Big[(1+\|Z_1^s\|^2)^2\Big]\Bigg] ,
\end{split}\]
where $c_t= \sqrt{2} L_e^2 (1+\alpha L_m)^2(1+\frac{2}{n_t})^2 $ and \[\comp(\theta)=\big[1+ 2 \int_{\Theta}\|\theta\|^8\tilgammap(\mrd \theta) +2\mbE_{\theta\sim\tilgammap}[\|\theta\|^4]\big]^2.\]
 \end{theorem}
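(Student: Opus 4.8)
The plan is to start from the finite-difference representation of the WTGE for $\alpha$-ERM in Theorem~\ref{thm: another rep WTGE-alpha-scenario}, namely $\mathrm{gen}(\mfm,\nupop^t)=\frac{\alpha}{n_t}\mbE_{\mbZnt,\mbZns,\ols{Z}_1^t}[h]$, and to bound $|h|$ deterministically before taking expectations. Taking absolute values inside the $\lambda$- and $\tilde\lambda$-integrals (each over $[0,1]$, so contributing only a factor $1$) and using the triangle inequality on the signed measure $(\delta_{Z_1^t}-\delta_{\tilde Z_1^t})$, the task reduces to a uniform bound on the kernel $A(\lambda,\tilde\lambda)=\int_\Theta \frac{\delta \ell}{\delta m}(\mfm_{(1)}(\lambda),\ols{Z}_1^t,\theta)\,\frac{\delta\mfm}{\delta\nu}(\nu_{\alpha,(1)}(\tilde\lambda),z)(\mrd\theta)$ over the two perturbed arguments $z\in\{Z_1^t,\tilde Z_1^t\}$. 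This factors into two ingredients: a pointwise bound on $\frac{\delta\ell}{\delta m}$, which is exactly Assumption~\ref{ass:KLreg_assn}(ii), and a bound on the measure-valued functional derivative $\frac{\delta\mfm^\beta}{\delta\nu}$ of the Gibbs map.

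First I would bound $\frac{\delta\mfm^\beta}{\delta\nu}$ by differentiating the implicit fixed-point relation~\eqref{Eq: Gibbs measure map}. Since the log-density of $\mfm^\beta(\nu)$ depends on $\nu$ only through $\frac{\delta\mrR}{\delta m}(\mfm^\beta(\nu),\nu,\theta)=\int_\mcZ\frac{\delta\ell}{\delta m}(\mfm^\beta(\nu),z',\theta)\,\nu(\mrd z')$, differentiating in $\nu$ in the direction $\delta_z$ produces an explicit term proportional to $\frac{2\beta^2}{\sigma^2}\,\mfm^\beta(\theta)\big(\frac{\delta\ell}{\delta m}(\mfm^\beta,z,\theta)-\mbE_{\theta'\sim\mfm^\beta}[\frac{\delta\ell}{\delta m}(\mfm^\beta,z,\theta')]\big)$ (the mean-subtraction coming from differentiating the normalising constant $F_\beta$), plus an implicit self-referential term generated by the $m$-dependence of $\frac{\delta\mrR}{\delta m}$. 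Convexity of $\ell$ in $m$ from Assumption~\ref{main_ass_square_bounded} makes the associated linear operator well-conditioned, so the implicit term can be absorbed and the derivative controlled; the prefactor $\frac{2\beta^2}{\sigma^2}$ in the final bound comes precisely from this differentiation of the exponent. Applying Assumption~\ref{ass:KLreg_assn}(ii) then converts this into a bound in terms of $(1+\|z\|^2)$ and moments $\mbE_{\theta\sim\mfm^\beta}[\|\theta\|^4]$ of the Gibbs measure.

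Next I would control the Gibbs-measure moments uniformly. Because the density of $\mfm^\beta$ is proportional to $\exp\{-\frac{2\beta^2}{\sigma^2}\frac{\delta\mrR}{\delta m}-\frac{1}{\sigma^2}U\}$ with $\frac{\delta\mrR}{\delta m}$ bounded below, it is dominated (up to the normalising constant) by a tilt of $\gamma^\sigma$; the auxiliary measure $\tilgammap$, which carries the extra $+\|\theta\|^8$ in its exponent, is chosen precisely so that $\mbE_{\theta\sim\mfm^\beta}[\|\theta\|^4]$ and the higher powers appearing after squaring are majorised by $\int_\Theta\|\theta\|^8\,\tilgammap(\mrd\theta)$ and $\mbE_{\theta\sim\tilgammap}[\|\theta\|^4]$ --- this is exactly the content of the complexity term $\comp(\theta)$. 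Assembling the pieces, the product of the $\frac{\delta\ell}{\delta m}$-factor (one power of $(1+\|z\|^2)$, evaluated at $\ols{Z}_1^t$) and the $\frac{\delta\mfm}{\delta\nu}$-factor, which through $\nu_{\alpha,(1)}(\tilde\lambda)=\alpha\nutr+(1-\alpha)\nus+\tilde\lambda\alpha(\nut-\nutr)$ contributes the remaining target and source powers, produces the two bracketed terms: the pure-target contribution $(2+\alpha)^2\,\mbE_{Z_1^t}[(1+\|Z_1^t\|^2)^4]$ and the cross contribution $(1-\alpha)^2\,\mbE_{Z_1^t}[(1+\|Z_1^t\|^2)^2]\,\mbE_{Z_1^s}[(1+\|Z_1^s\|^2)^2]$, with the weights $(2+\alpha)$ and $(1-\alpha)$ tracking the relative masses of target and source data in $\nu_\alpha$. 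The hypotheses $\mbE_{\nupop^t}[\|Z\|^8]<\infty$ and $\mbE_{\nupop^s}[\|Z\|^4]<\infty$ guarantee finiteness of these expectations, and the remaining constants collect into $c_t=\sqrt2\,L_e^2(1+\alpha L_m)^2(1+\frac2{n_t})^2$.

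The main obstacle is the rigorous derivation and bounding of the functional derivative of the implicitly-defined Gibbs map $\nu\mapsto\mfm^\beta(\nu)$. Because the loss is nonlinear in $m$ (as emphasised after~\eqref{Eq: Gibbs measure map}), $\frac{\delta\mfm^\beta}{\delta\nu}$ is not explicit but solves a linear fixed-point equation, and one must use convexity to establish that this equation is solvable with a controlled operator norm while simultaneously propagating the polynomial growth bounds through it without losing the $\beta^2/\sigma^2$ scaling. The second delicate point is the moment-domination argument: showing that the worst-case polynomial growth of $\frac{\delta\ell}{\delta m}$ is absorbed by the $+\|\theta\|^8$ tilt defining $\tilgammap$, so that all $\mfm^\beta$-moments are bounded uniformly in $\nu$ and hence in the data, is what makes the final constant $\comp(\theta)$ independent of the sample.
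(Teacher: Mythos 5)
Your first half follows the paper's route almost exactly: the paper also starts from Theorem~\ref{thm: another rep WTGE-alpha-scenario}, bounds the kernel pointwise in the data before taking expectations, and handles $\frac{\delta\mfm^\beta}{\delta\nu}$ precisely as you describe — the pairing $\int_\Theta \frac{\delta\ell}{\delta m}\,\frac{\delta\mfm}{\delta\nu}(\mrd\theta)$ is rewritten as $-\frac{2\beta^2}{\sigma^2}\mathrm{Cov}_{\theta\sim\mfm^\beta}[\,\cdot\,,\frac{\delta S}{\delta\nu}]$ (Lemma~\ref{lem_parameter_measure_derivative}, imported from \citet{aminian2023mean}), Cauchy--Schwarz splits the two factors, Assumption~\ref{ass:KLreg_assn}(ii) bounds one, and the implicit fixed-point term in $\frac{\delta S}{\delta\nu}$ is absorbed exactly via convexity, i.e.\ positivity of the operator $\mathcal{C}_m$ makes $(\mathrm{id}+\frac{2\beta^2}{\sigma^2}\mathcal{C}_m)^{-1}$ an $L^2(\mfm^\beta(\nu))$-contraction (Proposition~\ref{Prop_reccurence_estimate}). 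Up to that point your plan is sound and is the paper's argument.

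The genuine gap is your moment-control step. You claim the density of $\mfm^\beta(\nu)$ is dominated by a tilt of $\gamma^\sigma$ so that its moments are ``bounded uniformly in $\nu$ and hence in the data,'' making $\comp(\theta)$ sample-independent. This is false and the domination argument would not work: by Assumption~\ref{ass:KLreg_assn}(ii) the exponent in \eqref{Eq: Gibbs measure map} contains $-\frac{2\beta^2}{\sigma^2}\frac{\delta\mrR}{\delta m}$, which can be as large as $+\frac{2\beta^2}{\sigma^2}L_e\big(1+\mbE_{\theta\sim m}[\|\theta\|^4]+\|\theta\|^4\big)\,\mbE_{z\sim\nu}[1+\|z\|^2]$, so any pointwise comparison with $\tilgammap$ carries a constant that blows up with the second moment of $\nu$ and drags in an extra $\beta^2/\sigma^2$ factor, contaminating $\comp(\theta)$. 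What the paper actually uses is a variational comparison, not density domination: Lemma~\ref{lem:optimizerinBnu} exploits optimality of $\mfm^\beta(\nu)$ in $\mathcal{V}^\beta$ to give the $\beta$-free but \emph{data-dependent} bound $\mbE_{\theta\sim\mfm^\beta(\nu)}[\|\theta\|^8]\le \mrR(\tilgammap,\nu)+\int_\Theta\|\theta\|^8\tilgammap(\mrd\theta)$, where $\mrR(\tilgammap,\nu)$ is controlled by empirical data moments via \eqref{eq:Rboundperturb1} and \eqref{eq:Rboundsum}. The data dependence is then deliberately shifted out of the complexity factor by the algebraic inequality $(a+b(c+d))\le(a+cb)(c+d)/c$ in \eqref{hexpansionfinal1} and removed in expectation by Lemma~\ref{lem:polybound}; this is exactly where the fourth-power factor $\mathbb{E}_{Z_1^t}[(1+\|Z_1^t\|^2)^4]$ and the cross term $(1-\alpha)^2\,\mathbb{E}_{Z_1^t}[(1+\|Z_1^t\|^2)^2]\,\mathbb{E}_{Z_1^s}[(1+\|Z_1^s\|^2)^2]$ come from. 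Your proposal is internally inconsistent on this point: if the Gibbs moments really were uniform in the data, the final bound would contain only second-power data moments, yet you also (correctly) attribute the source and target powers in the bracket to the Gibbs-moment bound at $\nu_{\alpha,(1)}(\tilde\lambda)$. To repair the proof, replace the domination heuristic with the optimality comparison of Lemma~\ref{lem:optimizerinBnu} and carry the resulting data-dependent terms explicitly through the factorization before taking expectations.
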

 \begin{remark}
     From Theorem~\ref{thm: WTGE alpha erm}, we have bounded WTGE whenever the 8th moment of the target task and 4th moment of the source task population measures are bounded.
 \end{remark}
From Theorem \ref{thm: WTGE alpha erm}, we observe that when $\alpha=1$, which corresponds to not considering the source training dataset, we recover results for supervised learning on the target task.

Using Theorem~\ref{thm: WTGE alpha erm} , we provide upper bounds on the WTER of the $\alpha$-ERM transfer learning scenario.

\begin{theorem}[WTER of $\alpha$-ERM]\label{thm: population alpha erm}
      Under the same assumptions as in Theorem~\ref{thm: WTGE alpha erm}, there exist constants $C_s$, $C_t$, $C_d$ and $C_m$ such that the WTER under $\alpha$-ERM, satisfies
    \[\begin{split}
    &\mathcal{E}\big(\mdens^{\beta}(\nualpha),\nupop^t \big)\leq \frac{C_t\alpha}{n_t}\frac{8\beta^2}{\sigma^2} + \frac{C_s (1-\alpha)}{n_s}\frac{8\beta^2}{\sigma^2}\\&\qquad+(1-\alpha)C_d\Disnd(\nupop^s,\nupop^t)\\    &\qquad+C_m\Dispth(\bar{m}_{\alpha},\bar{m}^t)+\frac{\sigma^2}{2\beta^2} \KLr(\bar{m}_{\alpha}\|\gamma^{\sigma}).\end{split}\]
where $\bar{m}_{\alpha}=\arg\min_{m\in\mathcal{P}_8(\Theta)} R(m,\nualpha)$ and $\bar{m}^{t}=\arg\min_{m\in\mathcal{P}_8(\Theta)} R(m,\nupop^t)$, provided that $\KLr(\bar{m}_{\alpha}\|\gamma^{\sigma})<\infty$ .
\end{theorem}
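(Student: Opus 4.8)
The plan is to bound the WTER by a telescoping decomposition that inserts the population and empirical $\alpha$-mixtures between the trained risk $\mbE_{\mbZnt,\mbZns}[\mrR(\mfm^{\beta}(\nualpha),\nupop^t)]$ and the optimum $\mrR(\bar m^t,\nupop^t)$, controlling each increment either by a generalization bound, by an IPM, or by optimality of the KL-regularized minimizer. Write $\nupop^\alpha:=\alpha\nupop^t+(1-\alpha)\nupop^s$ and use that $\mrR(\cdot,\nu)$ is linear in $\nu$, so $\mrR(m,\nupop^\alpha)=\alpha\mrR(m,\nupop^t)+(1-\alpha)\mrR(m,\nupop^s)$ and likewise for $\nualpha$. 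I read $\bar m_\alpha$ as the minimizer of the \emph{population} mixture risk $\mrR(\cdot,\nupop^\alpha)$, so that $\mbE_{\mbZnt,\mbZns}[\mrR(\bar m_\alpha,\nualpha)]=\mrR(\bar m_\alpha,\nupop^\alpha)$; this is what renders the $\Dispth$ and $\KLr$ terms on the right-hand side deterministic.

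Concretely, I would first split $\mrR(\mfm^{\beta}(\nualpha),\nupop^t)=\mrR(\mfm^{\beta}(\nualpha),\nupop^\alpha)+(1-\alpha)\big[\mrR(\mfm^{\beta}(\nualpha),\nupop^t)-\mrR(\mfm^{\beta}(\nualpha),\nupop^s)\big]$, bounding the bracket by $(1-\alpha)C_d\Disnd(\nupop^s,\nupop^t)$. Next I pass from population to empirical mixture: in expectation, $\mrR(\mfm^{\beta}(\nualpha),\nupop^\alpha)-\mrR(\mfm^{\beta}(\nualpha),\nualpha)=\alpha\,\mathrm{gen}(\mfm^{\beta}(\nualpha),\nupop^t)+(1-\alpha)\mathrm{gen}_s$, where $\mathrm{gen}_s:=\mbE_{\mbZnt,\mbZns}[\mrR(\mfm^{\beta}(\nualpha),\nupop^s)-\mrR(\mfm^{\beta}(\nualpha),\nus)]$. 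The target term is controlled by Theorem~\ref{thm: WTGE alpha erm} (and $\alpha^2\le\alpha$), giving the $\tfrac{C_t\alpha}{n_t}\tfrac{\beta^2}{\sigma^2}$ contribution, and the source term by the symmetric statement obtained by interchanging $(\nut,n_t,\alpha)$ with $(\nus,n_s,1-\alpha)$, giving the $\tfrac{C_s(1-\alpha)}{n_s}\tfrac{\beta^2}{\sigma^2}$ contribution. Then, since $\mfm^{\beta}(\nualpha)$ minimizes $\mathcal{V}^{\beta}(\cdot,\nualpha)=\mrR(\cdot,\nualpha)+\tfrac{\sigma^2}{2\beta^2}\KLr(\cdot\|\gamma^\sigma)$ and the KL term is nonnegative, I use $\mrR(\mfm^{\beta}(\nualpha),\nualpha)\le\mrR(\bar m_\alpha,\nualpha)+\tfrac{\sigma^2}{2\beta^2}\KLr(\bar m_\alpha\|\gamma^\sigma)$, producing the regularization term after taking expectations. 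Finally I compare reference measures: $\mrR(\bar m_\alpha,\nupop^\alpha)=\mrR(\bar m_\alpha,\nupop^t)+(1-\alpha)[\mrR(\bar m_\alpha,\nupop^s)-\mrR(\bar m_\alpha,\nupop^t)]$ contributes another $(1-\alpha)C_d\Disnd(\nupop^s,\nupop^t)$, and the leftover $\mrR(\bar m_\alpha,\nupop^t)-\mrR(\bar m^t,\nupop^t)$ yields the $C_m\Dispth(\bar m_\alpha,\bar m^t)$ term; merging the two shift contributions into one $\Disnd$ term closes the bound.

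The two IPM estimates both come from the fundamental theorem of calculus for the flat derivative (Definition~\ref{def:flatDerivative}). For the shift terms, $\mrR(m,\nupop^t)-\mrR(m,\nupop^s)=\int_\mcZ\ell(m,z)(\nupop^t-\nupop^s)(\mrd z)$, and Assumption~\ref{ass:KLreg_assn}(i) shows $z\mapsto\ell(m,z)$ lies, after dividing by $L_m(1+\mbE_{\theta\sim m}[\|\theta\|^4])$, in the class $\mathfrak{F}_2^{\mathcal{C}^1}$, so the difference is at most a constant times $\Disnd(\nupop^s,\nupop^t)$. For the reference-measure term, writing $\mrR(\bar m_\alpha,\nupop^t)-\mrR(\bar m^t,\nupop^t)=\int_0^1\!\int_\Theta\big(\int_\mcZ\tfrac{\delta\ell}{\delta m}(m_\lambda,z,\theta)\nupop^t(\mrd z)\big)(\bar m_\alpha-\bar m^t)(\mrd\theta)\,\mrd\lambda$ with $m_\lambda=\bar m^t+\lambda(\bar m_\alpha-\bar m^t)$, Assumption~\ref{ass:KLreg_assn}(ii) shows the inner integral grows at most like $\|\theta\|^4$, placing it (up to a constant) in the class defining $\Dispth$ with $p=4$ and giving the $C_m\Dispth(\bar m_\alpha,\bar m^t)$ bound; finiteness of $\mbE_{Z\sim\nupop^t}[\|Z\|^2]$ (implied by the $8$th-moment hypothesis) and $\bar m_\alpha,\bar m^t\in\mathcal{P}_8(\Theta)$ keep these constants finite.

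The main obstacle is making $C_d$ and $C_m$ genuinely data-independent: both IPM bounds carry factors of the form $\mbE_{\theta\sim m}[\|\theta\|^4]$ evaluated at the Gibbs-type minimizer $m=\mfm^{\beta}(\nualpha)$, which is random through $\nualpha$, so pulling them out of the outer expectation $\mbE_{\mbZnt,\mbZns}$ requires a uniform fourth-moment bound on the regularized minimizer of the type encoded by $\comp(\theta)$ and $\tilgammap$ in Theorem~\ref{thm: WTGE alpha erm}. Securing (or invoking) that uniform moment control is the technical crux; the remaining minor gap is the symmetric source-side generalization bound, which is not stated explicitly but follows by re-running the argument behind Theorem~\ref{thm: WTGE alpha erm} with the roles of the source and target tasks interchanged.
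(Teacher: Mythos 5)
Your proposal is correct and follows essentially the same route as the paper's proof: the identical telescoping decomposition into $\alpha\,\mathrm{gen}(\cdot,\nupop^t)+(1-\alpha)\,\mathrm{gen}(\cdot,\nupop^s)$ (with the source-side bound obtained by symmetry, exactly as in the paper), the KL-optimality inequality $\mrR(\mfm^{\beta}(\nualpha),\nualpha)\le\mrR(\bar m_\alpha,\nualpha)+\tfrac{\sigma^2}{2\beta^2}\KLr(\bar m_\alpha\|\gamma^{\sigma})$, the two IPM comparisons (Lemmas~\ref{lem: data upper bound with dist} and \ref{lem: parameter upper bound with dist}), and merging of the two distribution-shift contributions into a single $\Disnd$ term. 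You also correctly identify the technical crux the paper resolves via Lemma~\ref{lem:optimizerinBnu} and $\tilgammap$ — namely the uniform fourth-moment bound $\mbE_{\mbZns,\mbZnt}\bigl[\int_\Theta\|\theta\|^4\,\mdens^{\beta}(\nualpha,\mrd\theta)\bigr]\le(1+L_m)\bigl(1+\int_\Theta\|\theta\|^4\tilgammap(\mrd\theta)\bigr)\bigl(1+\alpha\,\mbE_{Z_1^t}[1+\|Z_1^t\|^2]+(1-\alpha)\,\mbE_{Z_1^s}[1+\|Z_1^s\|^2]\bigr)$ — which makes $C_d$ data-independent, and your reading of $\bar m_\alpha$ as the minimizer for the population mixture matches the appendix's full version.
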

\begin{remark}\label{rem: true measure alpha} In order to simplify the upper bound in Theorem~\ref{thm: population alpha erm}, we can assume that there exists\footnote{The universal approximation theorem of neural networks \citep{hornik1989multilayer} states that neural networks with an arbitrary number of hidden units can approximate continuous functions on a compact set with any desired precision. Therefore, the existence of true measure, $\bar{m}_{\alpha}$, is reasonable in the mean-field regime. A similar assumption is made in \citet[Theorem 4.5]{chen2020generalized}.} a true measure $\bar{m}_{\alpha}$ such that $ R(\bar{m}_{\alpha},\nualpha)=0$ and $\KLr(\bar{m}_{\alpha}\|\gamma^{\sigma})<\infty$.  Then
 \[\begin{split}
    \mathcal{E}\big(\mdens^{\beta_t}(\nualpha),\nupop^t \big)&\leq \frac{C_t\alpha}{n_t}\frac{8\beta^2}{\sigma^2} + \frac{C_s (1-\alpha)}{n_s}\frac{8\beta^2}{\sigma^2}\\&\quad+(1-\alpha)C_d\Disnd(\nupop^s,\nupop^t)\\&\quad+\frac{\sigma^2}{2\beta^2} \KLr(\bar{m}_{\alpha}\|\gamma^{\sigma}).\end{split}\]
    \end{remark}
\begin{remark}Assuming a bounded loss function, the similarity metric $\Disnd(\nupop^s,\nupop^t)$ can be simplified to total variation distance $\mathbb{TV}(\nupop^s,\nupop^t)$.
\end{remark}
\subsection{Fine-tuning}
There are two different steps in fine-tuning approach. In the first step, we solve the KL-regularized risk minimization problem on the source task, \begin{equation}\label{eq: fine-tune kl first}
     \begin{split}
      \mathcal{V}^{\beta_s}(m,\nus):= \mrR(m,\nut)+ \frac{\sigma^2}{2\beta_t^2} \KLr(m\|\gamma^{\sigma}),
       \end{split}
    \end{equation}
    where the solution is $\mfm_{s}^{\beta_s}(\nus)$ over both common and specific parameters for the source task, $(\thetac,\thetas^s)$.
    In the second step, we fix the measure over the common parameters, $\mfm_{\mathrm{c}}^{\beta_s}(\nus):=\int_{\Theta_{\spec}} \mfm_{s}^{\beta_s}(\nus)(\mrd \theta_{\spec}^s) $, and solve the following KL-regularized with respect to $m_{\spec}^t$,
     \begin{equation}\label{eq: fine-tune kl second}
     \begin{split}
      &\mathcal{V}^{\beta_t}(\mfm_{\mathrm{c}}^{\beta_s}(\nus)m_{\spec}^t,\nut)\\&:= \mrR(\mfm_{\mathrm{c}}^{\beta_s}(\nus)m_{\spec}^t,\nut)+ \frac{\sigma^2}{2\beta_t^2} \KLr(m_{\spec}^t\|\gamma_{\thetas}^{\sigma}),
       \end{split}
    \end{equation}
    with solution $\mfm_{\spec}^{t,\beta_t}(\mfm_{\mathrm{c}}(\nus),\nut)$.
    For notational convenience, we  define  $\tilgammasp = \frac{1}{\tilde F^{\sigma,\spec}}\exp\big\{-\frac{1}{\sigma^2}U(\thetas^t) + \|\thetas^t\|^8\big\}$ and $\hat\gamma_{8}^\sigma = \frac{1}{\tilde F^{\sigma}}\exp\big\{-\frac{1}{\sigma^2}U(\theta) + \|\thetac\|^8\big\}$.
Applying Theorem~\ref{thm: another rep fine-tune} under the following assumption (see Appendix \ref{app: fine tune} for the detailed formulation of this assumption), we can drive upper bounds on the WTGE and WTER of fine-tuning scenario. 
\begin{assumption}\label{ass:KLreg_assnf}
The loss function and the functional derivative of loss function with respect to $m_{\spec}$, are such that,

 (i) For all $z\in \mcZ$, $m_{\mathrm{c}}\in \mathcal{P}_8(\Thetac)$ and $m_{\spec}\in\mathcal{P}_8(\thetas)$, there exists $L_m$ such that the loss satisfies
        \[ \ell(m_{\mathrm{c}}m_{\spec},z)  \leq  g(m_{\mathrm{c}}m_{\spec})\big(1+\|z\|^2\big),\]
where $g(m_{\mathrm{c}}m_{\spec})=L_m(1+\mbE_{\thetac\sim m_{\mathrm{c}}}[\|\thetac\|^4]+\mbE_{\thetas\sim m_{\spec}}[\|\thetas\|^4]).$

(ii)  For all $z\in \mcZ$, $m_{\mathrm{c}}\in \mathcal{P}_8(\Thetac)$ and $m_{\spec}\in\mathcal{P}_8(\thetas)$, there exists $L_e$ such that the functional derivative of loss function with respect to $m_{\spec}$ satisfies
        \[\Big|\frac{\delta \ell}{\delta m_{\spec}}(m_\mathrm{c}m_{\spec},z,\thetas) \Big| \leq g_{\spec}(m_\mathrm{c}m_{\spec},\thetas)\big(1+\|z\|^2\big);\]
        where $g_{\spec}(m_\mathrm{c}m_{\spec},\thetas)=L_e(1+\mbE_{\thetac\sim m_{\mathrm{c}}}[\|\thetac\|^4]+\mbE_{\thetas\sim m_{\spec}}[\|\thetas\|^4]+\|\thetas\|^4)$.
\end{assumption}
 
\begin{theorem}[WTGE of fine-tuning]\label{thm: WTGE fine tune}
    Given Assumption \ref{ass:KLreg_assnf}, $\mathbb{E}_{Z\sim\nupop^t}\big[\|Z\|^{8}\big]<\infty$ and $\mathbb{E}_{Z\sim\nupop^s}\big[\|Z\|^{4}\big]<\infty$, the WTGE under fine-tuning satisfies 
\[ \begin{split} &|\mathrm{gen}(\mfm_{\mathrm{c}}(\nus)\mfm_{\spec}^{t,\beta_t}(\mfm_{\mathrm{c}}(\nus),\nut),\nupop^t)| \\
&\leq  \frac{2}{n_t}(1+\frac{2}{n_t})^2\frac{16\beta^2}{\sigma^2}  L_e^2(1+L_m)^2 \comp(\thetac,\thetas^t,\thetas^s)
 \\
 & \times \mathbb{E}_{Z_1^s}\Big[(1+\|Z_1^s\|^2)^2\Big] \Big[\mathbb{E}_{Z_1^t}\Big[(1+\|Z_1^t\|^2)^4\Big]\Big].\end{split} \]
 where 
\[\begin{split}
\comp(\thetac&,\thetas^t,\thetas^s)=\Big[1+\int_{\Thetac} \|\thetac\|^4(2+\|\thetac\|^4)\gammac(\mrd\thetac)\\
&\quad+\int_{\Thetas}\|\thetas^t\|^4(1+ 2\|\thetas^t\|^4)\tilgammasp(\mrd \thetas^t)\\
  &\quad+\int_{\Thetas}\|\thetas^s\|^4\gammasp(\mrd \thetas^s)\Big]^2,
  \end{split}\]
 with $\gammac = \int_{\Thetas}\hat\gamma_{8}^{\sigma}(\mrd \thetas^s)$ and $\hat\gamma_{8,\mathrm{sp}}^\sigma = \int_{\Thetac}\hat\gamma_{8}^{\sigma}(\mrd \thetac)$.
 \end{theorem}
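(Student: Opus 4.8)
The plan is to start from the fine-tuning representation of the WTGE in Theorem~\ref{thm: another rep fine-tune}, move the absolute value inside the expectation to get $|\mathrm{gen}|\leq \tfrac{1}{n_t}\mathbb{E}_{\mbZnt,\mbZns,\ols{Z}_1^t}[|K|]$, and then bound the kernel $K$. The kernel factorizes cleanly: an outer integral of $\frac{\delta \ell}{\delta m_{\spec}}$ evaluated along the interpolated measure $\mfm_{(1)}^f(\lambda)$, and an inner integral of the map-derivative $\frac{\delta\mfms}{\delta\nu}(\mfmc(\nus),\cdot,z)$ tested against the signed perturbation $\delta_{Z_1^t}-\delta_{\ols{Z}_1^t}$. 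I would estimate these two pieces separately and recombine them, deferring the data-moment integration to the very end via Cauchy--Schwarz.

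For the outer loss-derivative factor I would apply Assumption~\ref{ass:KLreg_assnf}(ii) with $m=\mfm_{(1)}^f(\lambda)$. Since $\mfm_{(1)}^f(\lambda)=\mfmc(\nus)\big[(1-\lambda)\mfm_{\spec}^t(\mfmc(\nus),\nut)+\lambda\,\mfm_{\spec}^t(\mfmc(\nus),\nutr)\big]$ is a product of the common measure with a convex interpolation of two specific measures, this yields a factor $(1+\|\ols{Z}_1^t\|^2)$ times $g_{\spec}$, which is controlled by the fourth moments of $\mfmc(\nus)$ and of the interpolated specific measure. Because both endpoints are KL-regularized minimizers of the Gibbs form~\eqref{Eq: Gibbs measure map}, their densities are proportional to $\gamma^\sigma$ tilted by $\exp\{-\tfrac{2\beta^2}{\sigma^2}\tfrac{\delta\mrR}{\delta m}\}$; I would bound the relevant $\|\theta\|^4$ and $\|\theta\|^8$ moments by comparison against the tilted reference measures $\gammac$, $\tilgammasp$ and $\gammasp$, whose $+\|\theta\|^8$ exponents are designed precisely to absorb this polynomial growth. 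Convexity of the moment functional along the interpolation in $\lambda$ then lets me replace the interpolated moments by the endpoint moments, assembling the $\comp(\thetac,\thetas^t,\thetas^s)$ term.

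\textbf{The crux is the inner map-derivative factor.} Unlike the loss, the specific-parameter map $\nu\mapsto\mfm_{\spec}^{t,\beta_t}(\mfmc(\nus),\nu)$ is defined only implicitly through the fixed point~\eqref{Eq: Gibbs measure map}, so I would invoke the $\mathcal{C}^1$-regularity of this Gibbs map established in \citep{aminian2023mean}. Differentiating the Gibbs density with respect to the data measure brings down the factor $\tfrac{2\beta^2}{\sigma^2}$ together with $\tfrac{\delta\ell}{\delta m_{\spec}}$ (through $\tfrac{\delta\mrR}{\delta m}(m,\nu,\cdot)=\int\tfrac{\delta\ell}{\delta m}(m,z,\cdot)\nu(\mrd z)$) and a fluctuation correction coming from the self-consistency of the fixed point. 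Integrating $\frac{\delta\mfms}{\delta\nu}$ against $\delta_{Z_1^t}-\delta_{\ols{Z}_1^t}$ and applying Assumption~\ref{ass:KLreg_assnf}(ii) once more produces a bound proportional to $\tfrac{\beta^2}{\sigma^2}L_e(1+L_m)$ times $(1+\|Z_1^t\|^2)+(1+\|\ols{Z}_1^t\|^2)$ and the same Gibbs-moment factors. The $(1+\tfrac{2}{n_t})^2$ prefactor arises exactly as in the $\alpha$-ERM derivation, from controlling how much the Gibbs fixed point can move when a single target datapoint (out of $n_t$) is resampled; showing these perturbed moments remain bounded uniformly in $\lambda$ and $\lambda_1$ is the delicate step.

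Finally I would multiply the two factors, giving an integrand that grows like $(1+\|\ols{Z}_1^t\|^2)\big[(1+\|Z_1^t\|^2)+(1+\|\ols{Z}_1^t\|^2)\big]$ times a source-data factor entering through $\mfmc(\nus)$, all multiplied by $\comp$. Taking the expectation over $\mbZnt,\mbZns,\ols{Z}_1^t$ and applying Cauchy--Schwarz to decouple the target factors from the source factor produces $\mathbb{E}_{Z_1^t}[(1+\|Z_1^t\|^2)^4]$ and $\mathbb{E}_{Z_1^s}[(1+\|Z_1^s\|^2)^2]$, both finite under the assumed eighth moment on $\nupop^t$ and fourth moment on $\nupop^s$. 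Collecting the numerical constants from the differentiation yields the factor $\tfrac{2}{n_t}(1+\tfrac{2}{n_t})^2\tfrac{16\beta^2}{\sigma^2}L_e^2(1+L_m)^2$, completing the bound. The main obstacle throughout is the implicit map-derivative estimate of the third paragraph; the remainder is bookkeeping with moment comparisons and Cauchy--Schwarz.
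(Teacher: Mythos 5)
Your proposal follows essentially the same route as the paper's proof: the representation of Theorem~\ref{thm: another rep fine-tune}, the Gibbs-map derivative and covariance representation adapted from \citet{aminian2023mean} (Lemma~\ref{lem_parameter_measure_derivative} together with Proposition~\ref{Prop_reccurence_estimatef}), eighth-moment control of the KL-regularized minimizers against the tilted references $\gammac$, $\tilgammasp$, $\gammasp$ via Lemma~\ref{lem:optimizerinBnuf} and Lemma~\ref{lem:optimizerin-Bnus-shared}, the $(1+\tfrac{2}{n_t})^2$ single-point perturbation bookkeeping, and the final moment collapse via Lemma~\ref{lem:polybound}. The only detail your sketch leaves implicit is how the ``fluctuation correction'' from the fixed point is actually controlled: the paper disposes of it using convexity of $\ell$ in $m_{\spec}$, which makes the operator $\mathcal{C}_{\mfms^{\beta_t}(\mfmc(\nu^s),\nu^t)}$ positive so that $(\mathrm{id}+\tfrac{2\beta_t^2}{\sigma^2}\mathcal{C})^{-1}$ is an $L^2$-contraction uniformly in $\beta_t$ (Lemma~\ref{lem:Cmonotone}, Proposition~\ref{Prop_reccurence_estimatef}).
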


Using Theorem~\ref{thm: WTGE fine tune}, we can provide upper bound on the excess risk of fine-tune transfer learning scenario.

\begin{theorem}[WTER of fine-tuning]\label{thm: excess-fine-tune}
    Under the same assumptions in Theorem~\ref{thm: WTGE fine tune} and Assumption~\ref{ass:KLreg_assn}, there exist constants $C_t,C_s,C_{d_2}, C_{d_p}$ and $C_{\dims}$ where the following upper bound holds on WTER under fine-tuning scenario,
    \begin{align}
           &\mathcal{E}(\mfmc^{\beta_s}(\nus)\mfms^{\beta_t}(\mfmc^{\beta_s}(\nus),\nut),\nupop^t)\\\nn
           &\leq \frac{C_t}{n_t}\frac{\beta_t^2}{\sigma^2} +\frac{\sigma^2}{2\beta_t^2}\KLr(\tilde{m}_{\spec}^t\| \tilde\gamma_{\spec}^{\sigma})+\frac{C_s}{n_s}\frac{\beta_s^2}{\sigma^2} \\\nn&\quad +\frac{\sigma^2}{2\beta_s^2}\KLr(\mrmbarc^s\otimes\mrmbarsp^s\|\gamma_c^\sigma\otimes\gamma_{\spec}^{\sigma})\\\nn
&\quad+C_{d_2}\Disnd(\nupop^s,\nupop^t)+C_{d_\spec}\Disfth(\tilde{m}_{\spec}^s,\tilde{m}_{\spec}^t)\\\nn
 &\quad+C_{d_p}\Disfth(\mrmbarc^{s}\otimes\mrmbarsp^{s},\mrmbarc^{t}\otimes\mrmbarsp^{t}) ,
     \end{align}
where $\tilde{m}_{\spec}^t=\arg\min_{m_{\spec}\in\mathcal{P}_8(\Thetas)} R(\mfmc^{\beta_s}(\nus) m_{\spec},\nupop^t)$ provided $\KLr(\tilde{m}_{\spec}^t\| \tilde\gamma_{\spec}^{\sigma})<\infty$, with minimizers $\tilde{m}_{\spec}^s=\arg\min_{m_{\spec}\in\mathcal{P}_8(\Thetas)} R(\mfmc^{\beta_s}(\nus) m_{\spec},\nupop^s)$, $\mrmbarc^s\otimes\mrmbarsp^s=\arg\min_{\mrmbarc^s\otimes\mrmbarsp^s\in\mathcal{P}_8(\Thetac)\times \mathcal{P}_8(\Thetas)} R(\mrmbarc^s\otimes\mrmbarsp^s,\nupop^s)$ and $\mrmbarc^t\otimes\mrmbarsp^t=\arg\min_{\mrmbarc^t\otimes\mrmbarsp^t\in\mathcal{P}_8(\Thetac)\times \mathcal{P}_8(\Thetas)} R(\mrmbarc^t\otimes\mrmbarsp^t,\nupop^t)$. 
\end{theorem}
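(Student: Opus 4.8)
The plan is to mirror the excess-risk decomposition used for $\alpha$-ERM (Theorem~\ref{thm: population alpha erm}), adapted to the nested two-step structure of fine-tuning: peel off the target (second) step first, then the source (first) step. Write $\mfm$ for the fine-tuned measure $\mfmc^{\beta_s}(\nus)\mfms^{\beta_t}(\mfmc^{\beta_s}(\nus),\nut)$, and identify the benchmark $\inf_{m}\mrR(m,\nupop^t)$ with the product optimum $\mrR(\mrmbarc^t\otimes\mrmbarsp^t,\nupop^t)$ (valid since the fine-tuning parametrisation \eqref{eq:fine_tuning_structure} produces product measures, or under a realizability assumption as in Remark~\ref{rem: true measure alpha}). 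Then I split
\[\mathcal{E}(\mfm,\nupop^t)=\underbrace{\mbE\big[\mrR(\mfm,\nupop^t)-\mrR(\mfm,\nut)\big]}_{\text{WTGE}}+\Big(\mbE\big[\mrR(\mfm,\nut)\big]-\mrR(\mrmbarc^t\otimes\mrmbarsp^t,\nupop^t)\Big).\]
The first bracket is the WTGE, already controlled by Theorem~\ref{thm: WTGE fine tune}, which supplies the $\frac{C_t}{n_t}\frac{\beta_t^2}{\sigma^2}$ term (the $\comp$ and data-moment factors being folded into $C_t$).

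For the second bracket I would invoke optimality of the target-specific KL problem~\eqref{eq: fine-tune kl second}: since $\mfms^{\beta_t}$ minimises $m_{\spec}\mapsto\mathcal{V}^{\beta_t}(\mfmc^{\beta_s}(\nus)m_{\spec},\nut)$, comparing against $\tilde m_{\spec}^t$ and discarding the nonnegative KL of the minimiser gives $\mrR(\mfm,\nut)\le\mrR(\mfmc^{\beta_s}(\nus)\tilde m_{\spec}^t,\nut)+\frac{\sigma^2}{2\beta_t^2}\KLr(\tilde m_{\spec}^t\|\gamma_{\thetas}^\sigma)$. Because $\mfmc^{\beta_s}(\nus)\tilde m_{\spec}^t$ depends on $\mbZns$ and $\nupop^t$ but not on $\mbZnt$, taking $\mbE_{\mbZnt}$ replaces $\mrR(\cdot,\nut)$ by $\mrR(\cdot,\nupop^t)$, and the definition of $\tilde m_{\spec}^t$ collapses this to $\mbE_{\mbZns}\big[\inf_{m_{\spec}}\mrR(\mfmc^{\beta_s}(\nus)m_{\spec},\nupop^t)\big]$. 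Absorbing the polynomial growth of $\ell$ into the tilted prior converts $\gamma_{\thetas}^\sigma$ into $\tilgammasp$, producing the $\frac{\sigma^2}{2\beta_t^2}\KLr(\tilde m_{\spec}^t\|\tilgammasp)$ term.

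It then remains to bound $\mbE_{\mbZns}\big[\inf_{m_{\spec}}\mrR(\mfmc^{\beta_s}(\nus)m_{\spec},\nupop^t)\big]-\mrR(\mrmbarc^t\otimes\mrmbarsp^t,\nupop^t)$ by a chain of ``switches'', each estimated through the functional linear derivative (Definition~\ref{def:flatDerivative}) and the growth bounds of Assumption~\ref{ass:KLreg_assnf}. Using $\inf_{m_{\spec}}\mrR(\mfmc^{\beta_s}(\nus)m_{\spec},\nupop^t)=\mrR(\mfmc^{\beta_s}(\nus)\tilde m_{\spec}^t,\nupop^t)$: (a) switch the data measure $\nupop^t\to\nupop^s$, costing $C\,\Disnd(\nupop^s,\nupop^t)$ since $z\mapsto\ell(m,z)$ has quadratic growth (class $\mathfrak{F}_2$); (b) switch the specific measure $\tilde m_{\spec}^t\to\tilde m_{\spec}^s$, costing $C_{d_\spec}\Disfth(\tilde m_{\spec}^s,\tilde m_{\spec}^t)$ since $\frac{\delta\ell}{\delta m_{\spec}}$ has $\|\thetas\|^4$ growth (class $\mathfrak{F}_4$). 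This reduces the quantity to $\mbE_{\mbZns}\big[\mrR(\mfmc^{\beta_s}(\nus)\tilde m_{\spec}^s,\nupop^s)\big]-\mrR(\mrmbarc^t\otimes\mrmbarsp^t,\nupop^t)$. Inserting $\mrR(\mrmbarc^s\otimes\mrmbarsp^s,\nupop^s)$, the difference $\mbE_{\mbZns}[\mrR(\mfmc^{\beta_s}(\nus)\tilde m_{\spec}^s,\nupop^s)]-\mrR(\mrmbarc^s\otimes\mrmbarsp^s,\nupop^s)$ is a pure source excess-risk term, handled by re-running the $\alpha$-ERM argument of Theorem~\ref{thm: population alpha erm} on the source task (via optimality and convexity of the first KL problem~\eqref{eq: fine-tune kl first}), yielding $\frac{C_s}{n_s}\frac{\beta_s^2}{\sigma^2}$ and $\frac{\sigma^2}{2\beta_s^2}\KLr(\mrmbarc^s\otimes\mrmbarsp^s\|\gamma_c^\sigma\otimes\gamma_{\spec}^\sigma)$; the leftover $\mrR(\mrmbarc^s\otimes\mrmbarsp^s,\nupop^s)-\mrR(\mrmbarc^t\otimes\mrmbarsp^t,\nupop^t)$ is split by inserting $\mrR(\mrmbarc^s\otimes\mrmbarsp^s,\nupop^t)$ into a data switch $C\,\Disnd(\nupop^s,\nupop^t)$ and a parameter switch $C_{d_p}\Disfth(\mrmbarc^s\otimes\mrmbarsp^s,\mrmbarc^t\otimes\mrmbarsp^t)$. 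Collecting the two $\Disnd$ contributions into $C_{d_2}$ gives the stated bound.

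The hard part will be the source step and the accompanying bookkeeping of which object is carried forward. The common measure $\mfmc^{\beta_s}(\nus)$ entering the second step is the marginal of the joint source minimiser $\mfm_{s}^{\beta_s}(\nus)$, whereas the comparison objects $\mrmbarc^s\otimes\mrmbarsp^s$ and $\mrmbarc^t\otimes\mrmbarsp^t$ are products; reconciling this marginal-of-a-joint with the product benchmarks (so that $\inf_{m_{\spec}}\mrR(\mfmc^{\beta_s}(\nus)m_{\spec},\nupop^s)\le\mrR(\mfm_{s}^{\beta_s}(\nus),\nupop^s)$ and the source excess-risk bound applies), while keeping every switch test function inside the correct IPM class ($\mathfrak{F}_2$ for data, $\mathfrak{F}_4$ for parameters) so that Assumption~\ref{ass:KLreg_assnf} applies and the $\comp$-type moment integrals remain finite under the tilted priors, is the delicate accounting. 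Throughout, convexity of $\ell$ in $m$ (Assumption~\ref{main_ass_square_bounded}) and the explicit Gibbs form and uniqueness of the KL minimisers from~\eqref{Eq: Gibbs measure map} are what legitimise each optimality comparison.
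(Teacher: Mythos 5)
Your proposal is correct and follows essentially the same route as the paper's proof: the paper telescopes the excess risk into nine terms $I_1$--$I_9$ that match your steps exactly --- the WTGE via Theorem~\ref{thm: WTGE fine tune}, the second-step KL-optimality comparison against $\tilde{m}_{\spec}^t$ with $\mbE_{\mbZnt}$ converting empirical to population risk, data/parameter switches bounded by $\Disnd$ and $\Disfth$ via Lemmas~\ref{lem: data upper bound with dist} and~\ref{lem: parameter upper bound with dist}, the source excess risk via $\tilde{m}_{\spec}^s$-minimality plus a source generalization bound and first-step KL optimality, and the final $\Disnd$/$\Disfth$ switches. The marginal-of-joint versus product-benchmark subtlety you flag is real, and the paper resolves it just as you anticipate, by inserting the product measure $\mfmc^{\beta_s}(\nus)\mfms^{\beta_s}(\mfmc^{\beta_s}(\nus),\nus)$ into the comparison chain.
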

\begin{remark}\label{rem: true measure fine-tune}
    To simplify the upper bound in Theorem~\ref{thm: excess-fine-tune},  we can assume that there exist $\mrmbarc^s\otimes\mrmbarsp^s\in\mathcal{P}_8(\Thetac)\times \mathcal{P}_8(\Thetas)$ and $\mrmbarc^t\otimes\mrmbarsp^t\in\mathcal{P}_8(\Thetac)\times \mathcal{P}_8(\Thetas)$ such that $R(\mrmbarc^s\otimes\mrmbarsp^s,\nupop^s)=0$ and $R(\mrmbarc^t\otimes\mrmbarsp^t,\nupop^t)=0$, respectively. In addition, assume that there exist $\tilde{m}_{\spec}^t\in\mathcal{P}_8(\Thetas)$ and $\tilde{m}_{\spec}^s\in\mathcal{P}_8(\Thetas)$ such that $ \mathbb{E}_{\mbZns}[R(\mfmc^{\beta_s}(\nus) \tilde{m}_{\spec}^s,\nupop^s)]=0$ and $\mathbb{E}_{\mbZns}[R(\mfmc^{\beta_s}(\nus) \tilde{m}_{\spec}^t,\nupop^t)]=0$. Then, we have,
 \begin{equation}
       \begin{split}
           &\mathcal{E}(\mfmc^{\beta_s}(\nus)\mfms^{\beta_t}(\mfmc^{\beta_s}(\nus),\nut),\nupop^t)\\
           &\leq \frac{C_t}{n_t}\frac{\beta_t^2}{\sigma^2} +\frac{\sigma^2}{2\beta_t^2}\KLr(\tilde{m}_{\spec}^t\| \tilde\gamma_{\spec}^{\sigma})+\frac{C_s}{n_s}\frac{\beta_s^2}{\sigma^2} \\&\qquad+\frac{\sigma^2}{2\beta_s^2}\KLr(\mrmbarc^s\otimes\mrmbarsp^s\|\gamma_c^\sigma\otimes\gamma_{\spec}^{\sigma})\\&\qquad+C_{d_2}\Disnd(\nupop^s,\nupop^t).
     \end{split} 
   \end{equation}
\end{remark}

\subsection{Discussion} 

 \textbf{$\alpha$-ERM:} As shown in Theorem~\ref{thm: WTGE alpha erm}, the WTER depends on the similarity metric $\Disnd(\nupop^s,\nupop^t)$. If source and the target tasks are similar, i.e. $\Disnd(\nupop^s,\nupop^t)\approx 0$, then, choosing $\alpha=\frac{n_t}{n_s+n_t}$ and $\beta=(n_t+n_s)^{1/4}$ results in a convergence rate of $O(1/\sqrt{n_t+n_s})$ for the WTER of the $\alpha$-ERM scenario. However, for non-similar  source and the target tasks, i.e.~$\Disnd(\nupop^s,\nupop^t)>0$, choosing $\alpha=\frac{n_t}{n_s+n_t}$ results in a rate of $\max(\frac{1}{\sqrt{n_t+n_s}},\frac{ n_s}{n_s+n_t})$, which can be worse for large $n_s$. Therefore, for non-similar source and the target tasks, i.e., $\Disnd(\nupop^s,\nupop^t)>0$, by choosing $\alpha=1-n_t^{-1}$ and $\beta^2=\big(\frac{\alpha}{n_t}+\frac{(1-\alpha)}{n_s}\big)^{-1/2}$, we can get an overall convergence rate of $O(\frac{1}{\sqrt{n_t}})$ which is similar to supervised learning scenario. 

\textbf{Fine-tuning:} If source and the target tasks are similar, i.e. $\Disnd(\nupop^s,\nupop^t)\approx 0$, in fine-tuning scenario, we have a convergence rate $O(\frac{1}{\sqrt{n_t}}+\frac{1}{\sqrt{n_s}})$ by choosing $\beta_s^2=\sqrt{n_s}$ and $\beta_t^2=\sqrt{n_t}$.

In Table~\ref{tab:comparison}, more details are provided for comparison of the supervised learning scenario where the source task training dataset is not available, and transfer learning scenarios, including $\alpha$-ERM and fine-tuning. 
 \begin{table}[htb]

  \caption{Comparison of different algorithms  setting under similar tasks ($\Disnd(\nupop^s,\nupop^t)\approx 0$).}
  \label{tab:comparison}
  \centering
  \scriptsize
	\begin{tabular}{cccc}
	\\
    	\toprule
    	 & \textbf{Supervised Learning} & \textbf{$\alpha$-ERM}  & \textbf{Fine-tuning} \\
    	\midrule

    	Generalization Error&$O\Big(\frac{1}{n_t}\Big)$ & $O\Big(\frac{1}{n_s+n_t}\Big)$ & $O\Big(\frac{1}{n_t}\Big)$\\

    	Excess risk &$O\Big(\frac{1}{\sqrt{n_t}}\Big)$ & $O\Big(\frac{1}{\sqrt{n_s+n_t}}\Big)$ & $O\Big(\frac{1}{\sqrt{n_t}}+\frac{1}{\sqrt{n_s}}\Big)$ \\

    	\bottomrule
  \end{tabular}
  \normalsize
\end{table}

Further discussion is provided in Appendix~\ref{app: comparison}.

\textbf{Comparison to previous works:} In \citet{bu2022characterizing}, the authors provide a maximum likelihood estimation analysis of the WTER for $\alpha$-ERM and fine-tuning under asymptotic assumptions, i.e., $n_s\rightarrow\infty$ and $n_t\rightarrow\infty$. However, our result holds for finite number of samples. In addition, their results focus on expected loss, where the loss function is linear in parameter measure. Furthermore, \citet{bu2022characterizing} lacks a clear definition of similarity between target and source tasks, making it difficult to determine when transfer learning is beneficial. A Gaussian complexity analysis of fine-tuning is proposed by \citet{tripuraneni2020theory}, which cannot be extended to the mean-field regime. Furthermore, the convergence rate under Gaussian complexity analysis is worse than our approach. In comparison to \citet{hanneke2019value}, we can apply our method to unbounded loss functions in the over-parameterized regime. 

\section{Application}\label{Sec: Application}

We highlight the application of our model to an overparameterized one-hidden-layer neural network in the mean-field regime~\citep{MFLD,mei/montanari/nguyen:2018} for transfer learning scenarios, $\alpha$-ERM and fine-tuning.
The loss is represented in \eqref{eq: nn loss function}. For simplicity, we also write $(\hat y,y)\mapsto \ell_o(\hat y,y)$ for the loss function, where $\hat y = \mathbb{E}_{\theta\sim m}[\phi(\theta,x)]$ is the output of the NN, where $\phi(\theta,x)=a\varphi(w.x)$ for $a\in\mbR$ and $w\in\mbR^q$.

For $\alpha$-ERM scenario, we assume that the risk function $\mrR$ is given by
$$ \mrR(\mfm_{\alpha}(\nualpha),\nupop^t)= \mathbb{E}_{(X,Y) \sim \nupop^t}[\ell_o(\mathbb{E}_{\theta\sim \mfm^{\beta}(\nualpha)}[\phi(\theta,X)],Y)],$$ as is done in mean-field models of one-hidden-layer neural networks \citep{MFLD,mei/misiakiewicz/montanari:2019,tzen/raginsky:2020}, and is KL-regularized as seen previously. Similarly, for fine-tuning scenario, we assume that the risk function in second step is given by $$ \mrR(\mfm_{\mathrm{c}}(\nus)\mfm_{\spec}^t(\mfm_{\mathrm{c}}(\nus), \nut),\nupop^t)= \mathbb{E}_{(X,Y) \sim \nupop^t}[\ell_o(Y',Y)],$$ 
 where $Y'=\mathbb{E}_{w\sim\mfm_\mathrm{c}(\nus) }[\varphi(wX)]\mbE_{a\sim \mfm_{\spec}^t(\mfm_{\mathrm{c}}(\nus), \nut)}[a]$.  

 We make the following assumptions to investigate weak transfer generalization performance.
\begin{assumption}\label{ass:NN_KL_alpha}
Suppose that the regularizing potential $U$ satisfies $\lim_{\|\theta\|\to \infty}\frac{U(\theta)}{\|\theta\|^4} = \infty,$ the loss $\ell_o$ is convex and nonnegative, and  there exist finite constants  $L_\ell, L_{\ell,1}, L_{\ell,2}$, and $L_\phi$ such that, for all $(x,y)\in\mathcal{X}\times \mathcal{Y}$ and $\theta\in\Theta$, 
         \begin{equation}\begin{split}
          \big|\ell_o(\hat y, y)\big| &\leq L_\ell(1+\|\hat y\|^2 + \|y\|^2),\\
          |\partial_{\hat{y}}\ell_o(\hat y,y)\big|&\leq L_{\ell,1}(1+\|\hat{y}\|+\|y\|),\\
         \big|\partial_{\hat{y}\hat{y}}\ell_o(\hat y,y)\big|&\leq L_{\ell,2},\\
         \big|\varphi(w.x)\big|&\leq L_{\varphi}(1+\|x\|)(1+\|w\|).
           \end{split}\end{equation}       
\end{assumption}
\vspace{-1em}
As shown in Appendix~\ref{app: Sec: Application}, Assumption~\ref{ass:KLreg_assn} and Assumption~\ref{ass:KLreg_assnf} are verified for overparameterized one-hidden layer neural network under Assumption~\ref{ass:NN_KL_alpha}. Therefore, Theorems \ref{thm: WTGE alpha erm}, \ref{thm: population alpha erm}, \ref{thm: WTGE fine tune} and \ref{thm: excess-fine-tune} yield upper bounds on WTGE and WTER under $\alpha$-ERM and fine-tuning algorithms.

\textbf{Activation and Loss Functions:} If the activation function is of linear growth, then Assumption \ref{ass:NN_KL_alpha} is satisfied. Note that the ReLU, Heaviside unit-step, tanh and sigmoid activation functions are of this type, and no smoothness assumption is needed. For loss functions, twice differentiable loss functions which are of quadratic growth with Lipschitz derivatives  satisfy Assumption~\ref{ass:NN_KL_alpha}. For example, the quadratic, product-type (as in \citep{bartlett2006convexity}), and logcosh loss functions~\citep{wang2022comprehensive} are of this type. 

\section{Conclusions and Future Works}
\label{sec_conclusions}
Our study introduces a novel framework for analyzing the generalization error of risk functions and excess risk in transfer learning. The framework utilizes calculus on the space of probability measures, which allows us to gain a deeper understanding of the factors that influence the generalization of machine learning models in transfer learning scenarios, $\alpha$-ERM and fine-tuning. We demonstrate the efficacy of our framework by applying it to over-parameterized one-hidden layer neural network in mean-field regime. 

Given that mean-field analysis is applicable to single-hidden layer neural networks, we aim to extend our current work to multiple-layer neural networks in future research based on \citep{sirignano/spiliopoulos:2019}, \citet{chizat2022infinite}, \citet{jabir2019mean} and \citet{geshkovski2023emergence}.

  \section*{Acknowledgements} Authors acknowledge the support of the UKRI Prosperity Partnership Scheme (FAIR) under EPSRC Grant EP/V056883/1 and the Alan Turing Institute. Samuel N. Cohen  and {\L}ukasz Szpruch also acknowledge the support of the Oxford--Man Institute for Quantitative Finance.
    





\bibliographystyle{plainnat}
\bibliography{Refs}

\begin{thebibliography}{45}
\providecommand{\natexlab}[1]{#1}
\providecommand{\url}[1]{\texttt{#1}}
\expandafter\ifx\csname urlstyle\endcsname\relax
  \providecommand{\doi}[1]{doi: #1}\else
  \providecommand{\doi}{doi: \begingroup \urlstyle{rm}\Url}\fi

\bibitem[Allen-Zhu et~al.(2019)Allen-Zhu, Li, and Liang]{allen2019learning}
Zeyuan Allen-Zhu, Yuanzhi Li, and Yingyu Liang.
\newblock Learning and generalization in overparameterized neural networks, going beyond two layers.
\newblock \emph{Advances in neural information processing systems}, 32, 2019.

\bibitem[Aminian et~al.(2023)Aminian, Cohen, and Szpruch]{aminian2023mean}
Gholamali Aminian, Samuel~N Cohen, and {\L}ukasz Szpruch.
\newblock Mean-field analysis of generalization errors.
\newblock \emph{arXiv preprint arXiv:2306.11623}, 2023.

\bibitem[Aminian et~al.(2024)Aminian, He, Reinert, Szpruch, and Cohen]{aminian2024generalization}
Gholamali Aminian, Yixuan He, Gesine Reinert, {\L}ukasz Szpruch, and Samuel~N Cohen.
\newblock Generalization error of graph neural networks in the mean-field regime.
\newblock \emph{ICML}, 2024.

\bibitem[Arora et~al.(2019{\natexlab{a}})Arora, Du, Hu, Li, and Wang]{arora2019fine}
Sanjeev Arora, Simon Du, Wei Hu, Zhiyuan Li, and Ruosong Wang.
\newblock Fine-grained analysis of optimization and generalization for overparameterized two-layer neural networks.
\newblock In \emph{International Conference on Machine Learning}, pages 322--332. PMLR, 2019{\natexlab{a}}.

\bibitem[Arora et~al.(2019{\natexlab{b}})Arora, Du, Hu, Li, Salakhutdinov, and Wang]{arora2019exact}
Sanjeev Arora, Simon~S Du, Wei Hu, Zhiyuan Li, Ruslan Salakhutdinov, and Ruosong Wang.
\newblock On exact computation with an infinitely wide neural net.
\newblock In \emph{Advances in Neural Information Processing Systems}, 2019{\natexlab{b}}.

\bibitem[Bartlett et~al.(2006)Bartlett, Jordan, and McAuliffe]{bartlett2006convexity}
Peter~L Bartlett, Michael~I Jordan, and Jon~D McAuliffe.
\newblock Convexity, classification, and risk bounds.
\newblock \emph{Journal of the American Statistical Association}, 101\penalty0 (473):\penalty0 138--156, 2006.

\bibitem[Ben-David et~al.(2007)Ben-David, Blitzer, Crammer, Pereira, et~al.]{ben2007analysis}
Shai Ben-David, John Blitzer, Koby Crammer, Fernando Pereira, et~al.
\newblock Analysis of representations for domain adaptation.
\newblock \emph{Advances in neural information processing systems}, 19:\penalty0 137, 2007.

\bibitem[Ben-David et~al.(2010)Ben-David, Blitzer, Crammer, Kulesza, Pereira, and Vaughan]{ben2010theory}
Shai Ben-David, John Blitzer, Koby Crammer, Alex Kulesza, Fernando Pereira, and Jennifer~Wortman Vaughan.
\newblock A theory of learning from different domains.
\newblock \emph{Machine learning}, 79\penalty0 (1):\penalty0 151--175, 2010.

\bibitem[Bousquet and Elisseeff(2002)]{Bousquet-Elisseeff}
Olivier Bousquet and Andr\'{e} Elisseeff.
\newblock Stability and generalization.
\newblock \emph{J. Mach. Learn. Res.}, 2:\penalty0 499–526, March 2002.
\newblock ISSN 1532-4435.
\newblock \doi{10.1162/153244302760200704}.
\newblock URL \url{https://doi.org/10.1162/153244302760200704}.

\bibitem[Bu et~al.(2022)Bu, Aminian, Toni, Wornell, and Rodrigues]{bu2022characterizing}
Yuheng Bu, Gholamali Aminian, Laura Toni, Gregory~W Wornell, and Miguel Rodrigues.
\newblock Characterizing and understanding the generalization error of transfer learning with gibbs algorithm.
\newblock In \emph{International Conference on Artificial Intelligence and Statistics}, pages 8673--8699. PMLR, 2022.

\bibitem[Cao and Gu(2019)]{cao2019generalization}
Yuan Cao and Quanquan Gu.
\newblock Generalization bounds of stochastic gradient descent for wide and deep neural networks.
\newblock \emph{Advances in neural information processing systems}, 32, 2019.

\bibitem[Cardaliaguet et~al.(2019)Cardaliaguet, Delarue, Lasry, and Lions]{cardaliaguet2019master}
Pierre Cardaliaguet, Fran{\c{c}}ois Delarue, Jean-Michel Lasry, and Pierre-Louis Lions.
\newblock \emph{The master equation and the convergence problem in mean field games}.
\newblock Princeton University Press, 2019.

\bibitem[Carmona and Delarue(2018)]{carmona/delarue:2018}
Ren\'e Carmona and Fran\c{c}ois Delarue.
\newblock \emph{Probabilistic Theory of Mean Field Games with Applications I}.
\newblock Springer, 2018.

\bibitem[Chen et~al.(2019)Chen, Cao, Zou, and Gu]{chenmuch}
Zixiang Chen, Yuan Cao, Difan Zou, and Quanquan Gu.
\newblock How much over-parameterization is sufficient to learn deep relu networks?
\newblock In \emph{International Conference on Learning Representations}, 2019.

\bibitem[Chen et~al.(2020)Chen, Cao, Gu, and Zhang]{chen2020generalized}
Zixiang Chen, Yuan Cao, Quanquan Gu, and Tong Zhang.
\newblock A generalized neural tangent kernel analysis for two-layer neural networks.
\newblock \emph{Advances in Neural Information Processing Systems}, 33:\penalty0 13363--13373, 2020.

\bibitem[Chizat and Bach(2018)]{chizat/bach:2018}
Lenaic Chizat and Francis Bach.
\newblock On the global convergence of gradient descent for over-parameterized models using optimal transport.
\newblock \emph{Advances in neural information processing systems}, 31, 2018.

\bibitem[Chizat et~al.(2022)Chizat, Colombo, Fern{\'a}ndez-Real, and Figalli]{chizat2022infinite}
L{\'e}na{\"\i}c Chizat, Maria Colombo, Xavier Fern{\'a}ndez-Real, and Alessio Figalli.
\newblock Infinite-width limit of deep linear neural networks.
\newblock \emph{arXiv preprint arXiv:2211.16980}, 2022.

\bibitem[Ding et~al.(2023)Ding, Qin, Yang, Wei, Yang, Su, Hu, Chen, Chan, Chen, et~al.]{ding2023parameter}
Ning Ding, Yujia Qin, Guang Yang, Fuchao Wei, Zonghan Yang, Yusheng Su, Shengding Hu, Yulin Chen, Chi-Min Chan, Weize Chen, et~al.
\newblock Parameter-efficient fine-tuning of large-scale pre-trained language models.
\newblock \emph{Nature Machine Intelligence}, 5\penalty0 (3):\penalty0 220--235, 2023.

\bibitem[Fang et~al.(2019)Fang, Dong, and Zhang]{fang2019over}
Cong Fang, Hanze Dong, and Tong Zhang.
\newblock Over parameterized two-level neural networks can learn near optimal feature representations.
\newblock \emph{arXiv preprint arXiv:1910.11508}, 2019.

\bibitem[Fang et~al.(2021)Fang, Dong, and Zhang]{fang2021mathematical}
Cong Fang, Hanze Dong, and Tong Zhang.
\newblock Mathematical models of overparameterized neural networks.
\newblock \emph{Proceedings of the IEEE}, 109\penalty0 (5):\penalty0 683--703, 2021.

\bibitem[Geshkovski et~al.(2023)Geshkovski, Letrouit, Polyanskiy, and Rigollet]{geshkovski2023emergence}
Borjan Geshkovski, Cyril Letrouit, Yury Polyanskiy, and Philippe Rigollet.
\newblock The emergence of clusters in self-attention dynamics.
\newblock \emph{arXiv preprint arXiv:2305.05465}, 2023.

\bibitem[Hanneke and Kpotufe(2019)]{hanneke2019value}
Steve Hanneke and Samory Kpotufe.
\newblock On the value of target data in transfer learning.
\newblock \emph{Advances in Neural Information Processing Systems}, 32:\penalty0 9871--9881, 2019.

\bibitem[Hornik et~al.(1989)Hornik, Stinchcombe, and White]{hornik1989multilayer}
Kurt Hornik, Maxwell Stinchcombe, and Halbert White.
\newblock Multilayer feedforward networks are universal approximators.
\newblock \emph{Neural networks}, 2\penalty0 (5):\penalty0 359--366, 1989.

\bibitem[Hu et~al.(2020)Hu, Ren, \v{S}i\v{s}ka, and Szpruch]{MFLD}
Kaitong Hu, Zhenjie Ren, David \v{S}i\v{s}ka, and {\L}ukasz Szpruch.
\newblock Mean-field langevin dynamics and energy landscape of neural networks, 2020.

\bibitem[Jabir et~al.(2019)Jabir, {\v{S}}i{\v{s}}ka, and Szpruch]{jabir2019mean}
Jean-Fran{\c{c}}ois Jabir, David {\v{S}}i{\v{s}}ka, and {\L}ukasz Szpruch.
\newblock Mean-field neural odes via relaxed optimal control.
\newblock \emph{arXiv preprint arXiv:1912.05475}, 2019.

\bibitem[Jacot et~al.(2018)Jacot, Gabriel, and Hongler]{jacot2018neural}
Arthur Jacot, Franck Gabriel, and Cl{\'e}ment Hongler.
\newblock Neural tangent kernel: Convergence and generalization in neural networks.
\newblock \emph{Advances in neural information processing systems}, 31, 2018.

\bibitem[Ji and Telgarsky(2020)]{ji2019polylogarithmic}
Ziwei Ji and Matus Telgarsky.
\newblock Polylogarithmic width suffices for gradient descent to achieve arbitrarily small test error with shallow relu networks.
\newblock In \emph{International Conference on Learning Representations}, 2020.

\bibitem[Kalan and Fabian(2020)]{kalan2020minimax}
MM~Kalan and Z~Fabian.
\newblock Minimax lower bounds for transfer learning with linear and one-hidden layer neural networks.
\newblock \emph{Neural Information Processing Systems (NeuRIPS 2020)}, 2020.

\bibitem[Li et~al.(2012)Li, Zhao, and Wang]{li2012human}
Wei Li, Rui Zhao, and Xiaogang Wang.
\newblock Human reidentification with transferred metric learning.
\newblock In \emph{Asian conference on computer vision}, pages 31--44. Springer, 2012.

\bibitem[Li and Liang(2018)]{li2018learning}
Yuanzhi Li and Yingyu Liang.
\newblock Learning overparameterized neural networks via stochastic gradient descent on structured data.
\newblock \emph{Advances in neural information processing systems}, 31, 2018.

\bibitem[Long et~al.(2015)Long, Cao, Wang, and Jordan]{long2015learning}
Mingsheng Long, Yue Cao, Jianmin Wang, and Michael Jordan.
\newblock Learning transferable features with deep adaptation networks.
\newblock In \emph{International conference on machine learning}, pages 97--105. PMLR, 2015.

\bibitem[Ma et~al.(2019)Ma, Wu, et~al.]{ma2019generalization}
Chao Ma, Lei Wu, et~al.
\newblock The generalization error of the minimum-norm solutions for over-parameterized neural networks.
\newblock \emph{arXiv preprint arXiv:1912.06987}, 2019.

\bibitem[Mei and Montanari(2022)]{mei2022generalization}
Song Mei and Andrea Montanari.
\newblock The generalization error of random features regression: Precise asymptotics and the double descent curve.
\newblock \emph{Communications on Pure and Applied Mathematics}, 75\penalty0 (4):\penalty0 667--766, 2022.

\bibitem[Mei et~al.(2018)Mei, Montanari, and Nguyen]{mei/montanari/nguyen:2018}
Song Mei, Andrea Montanari, and Phan-Minh Nguyen.
\newblock A mean field view of the landscape of two-layer neural networks.
\newblock \emph{Proceedings of the National Academy of Sciences}, 115\penalty0 (33):\penalty0 E7665--E7671, 2018.

\bibitem[Mei et~al.(2019)Mei, Misiakiewicz, and Montanari]{mei/misiakiewicz/montanari:2019}
Song Mei, Theodor Misiakiewicz, and Andrea Montanari.
\newblock Mean-field theory of two-layers neural networks: dimension-free bounds and kernel limit, 2019.

\bibitem[Nishikawa et~al.(2022)Nishikawa, Suzuki, Nitanda, and Wu]{nishikawatwo}
Naoki Nishikawa, Taiji Suzuki, Atsushi Nitanda, and Denny Wu.
\newblock Two-layer neural network on infinite dimensional data: global optimization guarantee in the mean-field regime.
\newblock In \emph{Advances in Neural Information Processing Systems}, 2022.

\bibitem[Nitanda et~al.(2021)Nitanda, Wu, and Suzuki]{nitanda2021particle}
Atsushi Nitanda, Denny Wu, and Taiji Suzuki.
\newblock Particle dual averaging: Optimization of mean field neural network with global convergence rate analysis.
\newblock \emph{Advances in Neural Information Processing Systems}, 34:\penalty0 19608--19621, 2021.

\bibitem[Rahimi and Recht(2008)]{rahimi2008uniform}
Ali Rahimi and Benjamin Recht.
\newblock Uniform approximation of functions with random bases.
\newblock In \emph{2008 46th Annual Allerton Conference on Communication, Control, and Computing}, pages 555--561. IEEE, 2008.

\bibitem[Sirignano and Spiliopoulos(2022)]{sirignano/spiliopoulos:2019}
Justin Sirignano and Konstantinos Spiliopoulos.
\newblock Mean field analysis of deep neural networks.
\newblock \emph{Mathematics of Operations Research}, 47\penalty0 (1):\penalty0 120--152, 2022.

\bibitem[Tripuraneni et~al.(2020)Tripuraneni, Jordan, and Jin]{tripuraneni2020theory}
Nilesh Tripuraneni, Michael Jordan, and Chi Jin.
\newblock On the theory of transfer learning: The importance of task diversity.
\newblock \emph{Advances in Neural Information Processing Systems}, 33, 2020.

\bibitem[Tzen and Raginsky(2020)]{tzen/raginsky:2020}
Belinda Tzen and Maxim Raginsky.
\newblock A mean-field theory of lazy training in two-layer neural nets: entropic regularization and controlled {M}c{K}ean--{V}lasov dynamics.
\newblock \emph{arXiv preprint arXiv:2002.01987}, 2020.

\bibitem[Wang et~al.(2019)Wang, Mendez, Cai, and Eaton]{wang2019transfer}
Boyu Wang, Jorge Mendez, Mingbo Cai, and Eric Eaton.
\newblock Transfer learning via minimizing the performance gap between domains.
\newblock \emph{Advances in Neural Information Processing Systems}, 32:\penalty0 10645--10655, 2019.

\bibitem[Wang et~al.(2022)Wang, Ma, Zhao, and Tian]{wang2022comprehensive}
Qi~Wang, Yue Ma, Kun Zhao, and Yingjie Tian.
\newblock A comprehensive survey of loss functions in machine learning.
\newblock \emph{Annals of Data Science}, 9\penalty0 (2):\penalty0 187--212, 2022.

\bibitem[Wu et~al.(2020)Wu, Manton, Aickelin, and Zhu]{wu2020information}
Xuetong Wu, Jonathan~H Manton, Uwe Aickelin, and Jingge Zhu.
\newblock Information-theoretic analysis for transfer learning.
\newblock In \emph{2020 IEEE International Symposium on Information Theory (ISIT)}, pages 2819--2824. IEEE, 2020.

\bibitem[Yosinski et~al.(2014)Yosinski, Clune, Bengio, and Lipson]{yosinski2014transferable}
Jason Yosinski, Jeff Clune, Yoshua Bengio, and Hod Lipson.
\newblock How transferable are features in deep neural networks?
\newblock \emph{arXiv preprint arXiv:1411.1792}, 2014.

\end{thebibliography}
\clearpage
\appendix

\section{Other related works}\label{appendix:related work}

\paragraph{Generalization error and overparameterization:}
There are three main approaches to analyzing learning problems in an overparameterized regime: the neural tangent kernel (NTK), random feature and mean-field approaches. Under some assumptions, the NTK approach (a.k.a. lazy training)  shows that an overparameterized one-hidden-layer NN converges to an (infinite dimensional) linear model. The neural tangent random feature approach is similar to NTK, where the model is defined based on the network gradients at the initialization.
The random feature model is similar to NTK, with an extra assumption of constant weights in the single hidden layer of the NN. The mean-field approach uses the exchangeability of neurons to work with distributions of parameters. The study of these methods' generalization performance allows us to extend our understanding of the overparameterized regime. See, for example, for NTK ~\citep{li2018learning,cao2019generalization,arora2019exact,arora2019fine,allen2019learning,ji2019polylogarithmic}, neural tangent random feature \citep{cao2019generalization,chenmuch}, random feature~\citep{mei2022generalization,ma2019generalization} and mean-field \citep{nitanda2021particle,nishikawatwo} settings. The neural tangent kernel and random feature results do not precisely reflect practice due to their constraints, such as the solution's inability to deviate too far from the weights' initialization~\citep{fang2021mathematical}. A generalized NTK approach is considered in \citep{chen2020generalized}, inspired by the mean-field approach, they derive a high-probability bound on the generalization error of one-hidden-layer NNs with  convergence rate $\mathcal{O}(1/\sqrt{n})$ where $n$ is the number of training samples. A high probability generalization error bound with convergence rate $\mathcal{O}(1/\sqrt{n})$ in one-hidden-layer NN and with 0--1 and quadratic loss is studied in \citep{nitanda2021particle,nishikawatwo} assuming a mean-field regime. In this work, we study the transfer learning in mean-field regime. Recently, \citep{aminian2023mean} proposed an approach based on differential calculus on the space of probability measures to study the weak and strong generalization error in supervised learning scenario. A similar approach is utilized to study the generalization error of graph neural networks in mean-field regime by \citet{aminian2024generalization}.

\paragraph{Mean-field:} Our study employs the mean-field framework utilized in a recent line of research \citep{chizat/bach:2018,mei/montanari/nguyen:2018,mei/misiakiewicz/montanari:2019,fang2019over,fang2021mathematical,sirignano/spiliopoulos:2019,MFLD}. \citet{chizat/bach:2018} establishes the convergence of gradient descent for training one-hidden-layer NNs with infinite width under certain structural assumptions. The study of \citet{mei/montanari/nguyen:2018} proves the global convergence of noisy stochastic gradient descent and establishes approximation bounds between finite and infinite neural networks. Furthermore, \citet{mei/misiakiewicz/montanari:2019} demonstrates that this approximation error can be independent of the input dimension in certain cases, and establishes that the residual dynamics of noiseless gradient descent are close to the dynamics of NTK-based kernel regression under some conditions. The mean-field approach is mostly restricted to one-hidden-layer NNs and the extension to Deep NNs is not trivial~\citep{sirignano/spiliopoulos:2019,chizat2022infinite}. \citet{fang2019over} proposes a new concept known as neural feature repopulation inspired by the mean field view. Lastly, the mean-field approach's performance with respect to other methods in feature learning~\citep{fang2021mathematical} suggests it is a viable option for analyzing one-hidden-layer NNs. The focus of our study is based on the work conducted by \citet{MFLD} in the area of non-linear functional minimization with KL regularization. Their investigation demonstrated the linear convergence (in continuous time) of the resulting mean-field Langevin dynamics under the condition of sufficiently robust regularization. In this work, we focus on the study of weak transfer generalization error and weak transfer excess risk in mean-field regime.
\clearpage

\section{Technical Preliminaries}\label{app_calculus}
An overview of our main results is provided in Fig.~\ref{Diagram: TransferLearningResults}. All notations are summarized in Table~\ref{Table: notation}.
\begin{figure}[htbp]
\centering
\begin{tikzpicture}
    [
    scale=1,
    level distance=2.5cm,
    level 1/.style={sibling distance=8cm},
    level 2/.style={sibling distance=4cm},
    edge from parent/.style={draw=black!50, thick, -stealth},
    every node/.style={
        draw=black!50,
        thick,
        rounded corners,
        rectangle,
        top color=white,
        bottom color=blue!20,
        font=\tiny,
        text height=-.5ex,
        text width=2cm,
        minimum height=1.5cm,
        align=center,
    }
]
    \node {Transfer Learning\\ in \\Mean-Field Regime}
        child {
            node {$\alpha$-ERM}
            child {
                node {Weak Transfer Generalization Error}
                child { node {Upper Bound \\ (Theorem~\ref{thm: WTGE alpha erm})} }
            }
            child {
                node {Weak Transfer Excess Risk}
                child { node {Upper Bound \\ (Theorem~\ref{thm: population alpha erm})} }
            }
        }
        child {
            node {Fine-tuning}
            child {
                node {Weak Transfer Generalization Error}
                child { node {Upper Bound \\ (Theorem~\ref{thm: WTGE fine tune})} }
            }
            child {
                node {Weak Transfer Excess Risk}
                child { node {Upper Bound \\(Theorem~\ref{thm: excess-fine-tune}) } }
            }
        };
\end{tikzpicture}
\caption{Overview of Transfer Learning Results}
\label{Diagram: TransferLearningResults}
\end{figure}
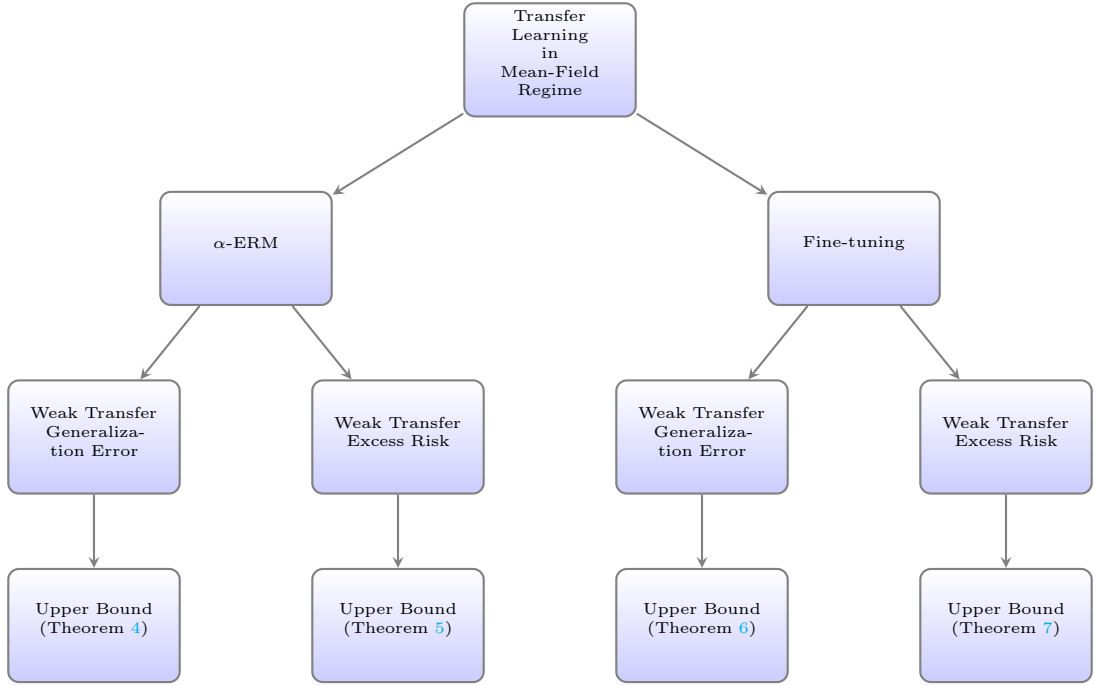


\begin{table}[ht]
    \centering
    \caption{Summary of notations in the paper}
   \resizebox{\linewidth}{!}{ \begin{tabular}{cc|cc}
         \toprule
         Notation&  Definition & Notation&  Definition\\
         \midrule
         $\mbZns$ & Source training dataset & $\mbZnt$ & Target Training dataset\\
         $n_t$ & Number of target training samples & $n_s$ & Number of source training samples\\
         $\nus$ & Source empirical measure & $\nupop^s$ & Source population measure \\
         $\nut$ & Target empirical measure & $\nupop^t$ & Target population measure \\
         $\frac{\sigma^2}{2\beta^2}$ & KL--Regularization parameter & $\mfm(.)$ & Measure over parameters\\
         $\theta$ & Parameters of model & $\Theta$ & Model's parameter space\\
         $\thetas$ & Specific parameters of model & $\Thetas$ &  Model's specific parameter space\\
         $\thetac$ & Comman parameters of model & $\Thetac$ & Model's common parameter space\\
         $\mathrm{gen}(\mfm,\nupop^t)$ & Generalization Error  &  $\mathcal{E}(\mfm,\nupop^t)$& Excess Risk \\
         $\ell(m,z)$ & Loss function & $\varphi(.)$ & Activation function\\
         $\mfm^{\beta}(\nu)$ & Gibbs measure & $\KLr(m\|\gamma_{\theta}^\sigma)$ & KL divergence between $m$ and $\gamma_{\theta}^\sigma$\\
         \bottomrule
    \end{tabular}}
    \label{Table: notation}
\end{table}

\clearpage

We provide the full versions of Assumption~\ref{ass:KLreg_assn} and Assumption~\ref{ass:KLreg_assnf}.

\begin{repassumption}{ass:KLreg_assn}[Full version]The loss function and the functional derivative of loss function with respect to $m$, are such that,

    \begin{enumerate}[(i)]
        
        \item \label{ass:KLreg_assn_item3} For all $z\in \mcZ$ and $m\in \mathcal{P}_4(\Theta)$, there exists $L_m>0$ such that the loss satisfies
        \begin{equation}0\leq \ell(m,z) \leq g(m)\big(1+\|z\|^2\big),\end{equation}
        where $g(m)=L_m (1+\mbE_{\theta\sim m}[\|\theta\|^4])$
        \item \label{ass:derivative_growth_bound}
        For all $z\in \mcZ$ and $m\in \mathcal{P}_4(\Theta)$, there exists $L_e>0$ such that the functional derivative of loss satisfies
    \begin{equation}\label{eq: bounded g ass}\Big|\frac{\delta \ell}{\delta m}(m,z,\theta) \Big| \leq g_e(m,\theta)\big(1+\|z\|^2\big),\end{equation}
where $g_e(m,\theta)= L_e\big(1+\mbE_{\theta\sim m}[\|\theta\|^4]+\|\theta\|^4\big)$.
\item \label{ass:KLreg_assn_item4}  For all $m\in \mathcal{P}_8(\Theta)$, the regularizing potential $U$ satisfies
        $\lim_{\|\theta\|\to \infty}\frac{U(\theta)}{\|\theta\|^4+g_{\mathrm{e}}(m,\theta)} = \infty$;       
\item\label{ass:KLreg_D2loss_integrability} We have the pointwise integrability conditions $g(\tilgammap)<\infty$ and, for all $\nu\in \mathcal{P}_2(\mathcal{Z})$ and $m,m'\in \mathcal{P}_8(\Theta)$,
            \begin{equation}\label{eq: bounded second moment ass}
            \mathbb{E}_{\theta,\theta'}\mathbb{E}_{z\sim \nu}\Big[\Big(\frac{\delta^2 \mathrm{\ell}}{\delta m^2}(m, z, \theta, \theta') \Big)^2\Big]^{1/2}<\infty.
            \end{equation}
    \end{enumerate}
\end{repassumption}


\begin{repassumption}{ass:KLreg_assnf}[\textbf{Full version}]
The loss function and the functional derivative of loss function with respect to $m_{\spec}$, are such that,

    \begin{enumerate}[(i)]
        
        \item \label{ass:KLreg_assn_item3f} For all $z\in \mcZ$, $m_{\mathrm{c}}\in \mathcal{P}_8(\Thetac)$ and $m_{\spec}\in\mathcal{P}_8(\thetas)$, there exists $L_m$ such that the loss satisfies
        \[ \ell(m_{\mathrm{c}}m_{\spec},z)  \leq  g(m_{\mathrm{c}}m_{\spec})\big(1+\|z\|^2\big),\]
where $g(m_{\mathrm{c}}m_{\spec})=L_m(1+\mbE_{\thetac\sim m_{\mathrm{c}}}[\|\thetac\|^4]+\mbE_{\thetas\sim m_{\spec}}[\|\thetas\|^4]).$

 \item \label{ass:KLreg_D2loss_growth_bound_FT}
        For all $z\in \mcZ$, $m_{\mathrm{c}}\in \mathcal{P}_8(\Thetac)$ and $m_{\spec}\in\mathcal{P}_8(\thetas)$, there exists $L_e$ such that the functional derivative of loss function with respect to $m_{\spec}$ satisfies
        \begin{equation}\label{eq: bounded g assf}
            \Big|\frac{\delta \ell}{\delta m_{\spec}}(m_\mathrm{c}m_{\spec},z,\thetas^t) \Big| \leq g_{\spec}(m_\mathrm{c}m_{\spec},\thetas^t)\big(1+\|z\|^2\big);
        \end{equation}
        where $g_{\spec}(m_\mathrm{c}m_{\spec},\thetas^t)=L_e(1+\mbE_{\thetac\sim m_{\mathrm{c}}}[\|\thetac\|^4]+\mbE_{\thetas\sim m_{\spec}}[\|\thetas^t\|^4]+\|\thetas^t\|^4)$.
       
        \item \label{ass:KLreg_assn_item4f}  For all $m_\mathrm{c}\in \mathcal{P}_8(\Thetac)$ and $m_{\spec}\in \mathcal{P}_8(\Thetas)$, the regularizing potential $U$ satisfies
        \[\lim_{\|\thetas\|\to \infty}\frac{U(\thetas^t)}{\|\thetas^t\|^8+g_{\spec}(m_\mathrm{c}m_{\spec},\thetas^t)} = \infty;\]       
       
         \item\label{ass:KLreg_D2loss_integrability_FT} We assume that $g(\hat\gamma_p^\sigma)<\infty$, $g(\tilde\gamma^{\sigma}_{\spec})<\infty$ and, for all $\nu\in \mathcal{P}_2(\mathcal{Z})$ and $m_{\spec},m_{\spec}^{\prime}\in \mathcal{P}_8(\Thetas)$,
            \begin{equation}\label{eq: bounded second moment ass_ft}
            \mathbb{E}_{\theta_\spec,\theta_\spec^{\prime}}\mathbb{E}_{z\sim \nu}\Big[\Big(\frac{\delta^2 \mathrm{\ell}}{\delta m_{\spec}^2}(m_{c} m_{\spec}, z, \theta_{\spec}^t, \theta_{\spec}^{t,\prime}) \Big)^2\Big]^{1/2}<\infty.
            \end{equation}

    \end{enumerate}
\end{repassumption}

\begin{lemma}\label{lem: parameter upper bound with dist}
    Suppose that $\tilde\nu\in\mathcal{P}_2(\mcZ)$, $m_{1},m_2\in\mathcal{P}_4(\Theta)$. Then, under Assumption~\ref{ass:KLreg_assn}, the following upper bound holds,
    \[
    \begin{split}
        &\mrR(m_{1},\tilde\nu)- \mrR(m_{2},\tilde\nu)\leq L_e\mathbb{E}_{Z\sim\tilde \nu}[(1+\|Z\|^2)^2] \Disfth(m_1,m_2).
    \end{split}
    \]
\end{lemma}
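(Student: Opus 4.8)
The plan is to reduce the risk difference to an integral of the linear functional derivative of the loss against the signed measure $m_1-m_2$, and then to recognise that integral as an evaluation of the IPM $\Disfth=\mathrm{d}_{\mathfrak{F}_4}$. First I would unfold the definition of the risk, writing $\mrR(m_1,\tilde\nu)-\mrR(m_2,\tilde\nu)=\int_\mcZ\big(\ell(m_1,z)-\ell(m_2,z)\big)\tilde\nu(\mrd z)$, and then apply the $\mathcal C^1$ and convexity hypotheses on $m\mapsto\ell(m,z)$ (Assumption~\ref{main_ass_square_bounded_item1}). Convexity gives the clean one-sided bound
\[
\ell(m_1,z)-\ell(m_2,z)\le \int_\Theta \frac{\delta\ell}{\delta m}(m_1,z,\theta)\,(m_1-m_2)(\mrd\theta),
\]
whose advantage is that the measure argument is frozen at the single endpoint $m_1$. (Alternatively, the fundamental theorem of calculus in Definition~\ref{def:flatDerivative} yields an exact identity with $\frac{\delta\ell}{\delta m}$ evaluated along the segment $m_2+\lambda(m_1-m_2)$, at the cost of an extra $\lambda$-integral; the convexity route is cleaner here.)

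Next I would bound the inner $\theta$-integral by the IPM. Setting $h_z(\theta):=\frac{\delta\ell}{\delta m}(m_1,z,\theta)$, Assumption~\ref{ass:derivative_growth_bound} gives $|h_z(\theta)|\le g_e(m_1,\theta)(1+\|z\|^2)$ with $g_e(m_1,\theta)=L_e\big(1+\mbE_{\theta\sim m_1}[\|\theta\|^4]+\|\theta\|^4\big)$, so $h_z$ lies in the $\|\theta\|^4$-growth class that is exactly controlled by $\mathfrak F_4$. The two structural facts I would exploit are the normalisation $\int_\Theta h_z(\theta)\,m_1(\mrd\theta)=0$ from Definition~\ref{def:flatDerivative} (so that $\int h_z\,(m_1-m_2)(\mrd\theta)=\int h_z\,\mathrm dm_1-\int h_z\,\mathrm dm_2$ is genuinely a difference of integrals of one and the same test function), together with the observation that the constant function and $\theta\mapsto\|\theta\|^4$ both sit in $\mathfrak F_4^{\mathcal C^1}$. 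Shifting $h_z$ by a $\theta$-constant (which $m_1-m_2$ annihilates) and adding a suitable multiple of $\|\theta\|^4$, I would produce a nonnegative $\mathcal C^1$ function dominated by $(1+\|\theta\|^4)$ times a $z$-dependent scalar, so that the inner integral is bounded by that scalar times $\Disfth(m_1,m_2)$.

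Finally I would integrate the pointwise-in-$z$ estimate against $\tilde\nu$. The scalar produced in the previous step is $L_e(1+\|z\|^2)$ multiplied by the coefficient coming from $g_e$; multiplying these growth factors and integrating over $z\sim\tilde\nu$ delivers the advertised prefactor $L_e\,\mbE_{Z\sim\tilde\nu}[(1+\|Z\|^2)^2]$ in front of $\Disfth(m_1,m_2)$, completing the bound.

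The main obstacle is that $\mathfrak F_4$ is a class of \emph{nonnegative} functions, whereas $h_z$ is signed and only obeys a two-sided growth bound; the crux is therefore the reduction of $\int h_z\,(m_1-m_2)(\mrd\theta)$ to IPM-admissible test functions, which hinges on the normalisation of $\frac{\delta\ell}{\delta m}$ and on the membership of the pure $\|\theta\|^4$ monomial in $\mathfrak F_4$. A secondary point is verifying the $\mathcal C^1$-regularity in $\theta$ required by $\mathfrak F_4^{\mathcal C^1}$ and keeping the fourth-moment coefficient $\mbE_{\theta\sim m_1}[\|\theta\|^4]$ under control; it is the interaction of this coefficient with the explicit $(1+\|z\|^2)$ factor of Assumption~\ref{ass:derivative_growth_bound} that ultimately forces the quadratic growth $(1+\|Z\|^2)^2$ rather than a single power.
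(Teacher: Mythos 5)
Your overall architecture -- linearise the risk difference via the flat derivative, invoke the growth bound of Assumption~\ref{ass:KLreg_assn}, and recognise the surviving $\theta$-integral as an evaluation of $\Disfth$ -- is the same as the paper's, and your convexity shortcut in place of the paper's exact Taylor identity along $m(\lambda)=m_2+\lambda(m_1-m_2)$ is a legitimate variant of the first step. You also correctly identify the delicate point that the paper itself glosses over: $\mathfrak{F}_4$ consists of nonnegative functions while $\frac{\delta\ell}{\delta m}$ is signed, and replacing a signed integrand by a pointwise dominant under the \emph{signed} measure $(m_1-m_2)(\mrd\theta)$ is not, as written, a valid inequality.

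However, your repair of that point does not deliver the stated constant, and the final paragraph of your proposal conceals a genuine gap. Write $h_z(\theta)=\frac{\delta\ell}{\delta m}(m_1,z,\theta)$, so that $|h_z(\theta)|\le L_e(1+M_4+\|\theta\|^4)(1+\|z\|^2)$ with $M_4:=\mbE_{\theta\sim m_1}[\|\theta\|^4]$. Your shift-and-add construction, say $f_z=h_z+c_z+d_z\|\theta\|^4$ with $c_z=L_e(1+M_4)(1+\|z\|^2)$ and $d_z=L_e(1+\|z\|^2)$, gives a nonnegative function dominated by $2c_z(1+\|\theta\|^4)$, and hence an inner bound of order $L_e(3+2M_4)(1+\|z\|^2)\,\Disfth(m_1,m_2)$; integrating in $z$ yields the prefactor $L_e(3+2M_4)\,\mbE_{Z\sim\tilde\nu}[1+\|Z\|^2]$. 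The moment $M_4$ is a $\theta$-quantity: it cannot be ``multiplied into'' the $z$-factor to produce $(1+\|Z\|^2)^2$, so your route proves a bound whose constant depends on $m_1$ and is \emph{not} dominated by $L_e\,\mbE_{Z\sim\tilde\nu}[(1+\|Z\|^2)^2]$. The paper obtains the clean constant by a different ordering of operations: it keeps the exact derivative identity, applies Cauchy--Schwarz in $z$ with weight $(1+\|z\|^2)$ -- this, together with the crude bound $\big(\mbE[(1+\|Z\|^2)^2]\big)^{1/2}\le\mbE[(1+\|Z\|^2)^2]$, is the actual source of the quadratic factor, not any interaction with $g_e$ -- and then exploits that the constant-in-$\theta$ part $L_e\big(1+\mbE_{\theta\sim m(\lambda)}[\|\theta\|^4]\big)$ of the growth envelope integrates to \emph{exactly zero} against the mass-zero measure $(m_1-m_2)$, leaving only the test function $\|\theta\|^4$, which is nonnegative and $\mathcal{C}^1$, so no unverifiable smoothness of $\frac{\delta\ell}{\delta m}$ in $\theta$ is needed either (your $f_z$, by contrast, inherits the unknown $\theta$-regularity of $h_z$). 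In short: the cancellation of the moment terms must happen before any domination; once you dominate first, as your proposal does, the factor $M_4$ is unavoidable and the lemma with its stated constant is not recovered.
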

\begin{proof}
   Define $m(\lambda)=m_2+\lambda(m_1-m_2)$
    \begin{align*}
        &\mrR(m_{1},\tilde\nu)- \mrR(m_{2},\tilde\nu)\\&=\int_{\mcZ} \ell(m_{1},z)-\ell(m_{2},z) \tilde\nu(\mrd z)\\
        &=\int_{0}^1\int_{\mcZ}\int_{\Theta} \frac{\delta\ell}{\delta m}(m(\lambda),z,\theta) (m_{1}-m_{2})(\mrd \theta)\tilde\nu(\mrd z)\mrd \lambda\\
    &\overset{(a)}{\leq} \int_{0}^1 \int_{\Theta} \Big(\int_{\mcZ}(1+\|Z\|^2)^2\tilde\nu(\mrd z)\Big)^{1/2} \Bigg(\int_{\mcZ}\Bigg(\frac{\frac{\delta\ell}{\delta m}(m(\lambda),z,\theta)}{(1+\|z\|^2)}\Bigg)^2\tilde\nu(\mrd z)\Bigg)^{1/2}(m_{1}-m_{2})(\mrd \theta)\mrd \lambda
    \\
    &=\Big(\int_{\mcZ}(1+\|z\|^2)^2\tilde\nu(\mrd z)\Big)^{1/2} \int_{0}^1 \int_{\Theta}  \Big(\int_{\mcZ}\Bigg(\frac{\frac{\delta\ell}{\delta m}(m(\lambda),z,\theta)}{(1+\|Z\|^2)}\Bigg)^2\tilde\nu(\mrd z)\Big)^{1/2}(m_{1}-m_{2})(\mrd \theta)\mrd \lambda\\
    &= \Big(\int_{\mcZ}(1+\|z\|^2)^2\tilde\nu(\mrd z)\Big)^{1/2} \\
    &\quad \times \int_{0}^1 \int_{\Theta} g_{\mathrm{e}}(m(\lambda),\theta)\Bigg(\int_{\mcZ}\Bigg(\frac{\frac{\delta\ell}{\delta m}(m(\lambda),z,\theta)}{(1+\|Z\|^2)g_{\mathrm{e}}(m(\lambda),\theta)}\Bigg)^2\tilde\nu(\mrd z)\Bigg)^{1/2}(m_{1}-m_{2})(\mrd \theta)\mrd \lambda
     \\
    &= \Big(\int_{\mcZ}(1+\|z\|^2)^2\tilde\nu(\mrd z)\Big)^{1/2} \\
    &\quad \times \int_{0}^1 \int_{\Theta} g_{\mathrm{e}}(m(\lambda),\theta)\Bigg(\int_{\mcZ}\Bigg(\frac{\frac{\delta\ell}{\delta m}(m(\lambda),z,\theta)}{(1+\|z\|^2)g_{\mathrm{e}}(m(\lambda),\theta)}\Bigg)^2\tilde\nu(\mrd z)\Bigg)^{1/2}(m_{1}-m_{2})(\mrd \theta)\mrd \lambda
     \\
     &\leq L_e\mathbb{E}_{Z\sim\tilde \nu}[(1+\|Z\|^2)^2]  \int_{0}^1 \int_{\Theta}\big(1+\mbE_{\theta\sim m(\lambda)}[\|\theta\|^4]+\|\theta\|^4\big)(m_{1}-m_{2})(\mrd \theta)\mrd \lambda, \\
    &\overset{(b)}{\leq} L_e\mathbb{E}_{Z\sim\tilde \nu}[(1+\|Z\|^2)^2] \Disfth(m_1,m_2),
    \end{align*}
    where $g_{\mathrm{e}}(m(\lambda),\theta)= L_e\big(1+\mbE_{\theta\sim m}[\|\theta\|^4]+\|\theta\|^4\big)$ and (a) and (b) follow from Cauchy–Schwarz inequality and Assumption~\ref{ass:KLreg_assn}(\ref{ass:derivative_growth_bound}).
\end{proof}
Similar result also holds under Assumption~\ref{ass:KLreg_assnf}.
\begin{lemma}\label{lem: data upper bound with dist}
    Suppose that $\nu_1,\nu_2\in\mathcal{P}_2(\mcZ)$, $m\in\mathcal{P}_8(\Theta)$. Then, under Assumption~\ref{ass:KLreg_assn}, the following upper bound holds,
    \[
    \begin{split}
        &\mrR(m,\nu_1)- \mrR(m,\nu_2)\leq g(m)\mathrm{d}_{\mathfrak{F}_2}(\nu_1,\nu_2),
    \end{split}
    \]
    where $g(m)=L_m(1+\mbE_{\theta\sim m}[\|\theta\|^4])$.
\end{lemma}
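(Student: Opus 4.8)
The plan is to prove Lemma~\ref{lem: data upper bound with dist} by a direct computation that mirrors the structure of the preceding Lemma~\ref{lem: parameter upper bound with dist}, but now perturbing the \emph{data} measure rather than the parameter measure. First I would write the difference of risks as an integral against $(\nu_1-\nu_2)$: since $\mrR(m,\nu) = \int_\mcZ \ell(m,z)\,\nu(\mrd z)$ is linear in $\nu$, we immediately get
\[
\mrR(m,\nu_1)-\mrR(m,\nu_2) = \int_{\mcZ}\ell(m,z)\,(\nu_1-\nu_2)(\mrd z),
\]
with no need for a functional-derivative expansion (this is the key simplification relative to the previous lemma, where the nonlinearity in $m$ forced the use of $\frac{\delta\ell}{\delta m}$ and a path integral over $\lambda$).

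Next I would invoke the growth bound from Assumption~\ref{ass:KLreg_assn}(\ref{ass:KLreg_assn_item3}), namely $0\le \ell(m,z)\le g(m)(1+\|z\|^2)$ with $g(m)=L_m(1+\mbE_{\theta\sim m}[\|\theta\|^4])$. The point is that the integrand $\ell(m,z)$, viewed as a function of $z$ for fixed $m$, is dominated by $g(m)\,(1+\|z\|^2)$; that is, the normalized map $z\mapsto \ell(m,z)/g(m)$ lies (up to the growth factor) in the class $\mathfrak{F}_2$ used to define $\Disnd = \mathrm{d}_{\mathfrak{F}_2}$. Factoring out $g(m)$ and applying the definition of the IPM $\mathrm{d}_{\mathfrak{F}_2}$ as a supremum over admissible test functions then yields
\[
\int_{\mcZ}\ell(m,z)\,(\nu_1-\nu_2)(\mrd z) \le g(m)\,\mathrm{d}_{\mathfrak{F}_2}(\nu_1,\nu_2),
\]
which is exactly the claimed bound.

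The one step that requires care, and which I expect to be the main (if modest) obstacle, is the membership of the normalized loss in the test-function class $\mathfrak{F}_2^{\mathcal{C}^1}$. The definition of $\mathrm{d}_{\mathfrak{F}_p}$ restricts to functions $f$ with $f(x)\le 1+\|x\|^p$ that are moreover $\mathcal{C}^1$; here I need $z\mapsto \ell(m,z)/g(m)$ to satisfy the growth constraint $\le 1+\|z\|^2$ (immediate from the assumption, since $g(m)$ is precisely the prefactor) and to inherit enough regularity in $z$. Nonnegativity of $\ell$ is supplied by the same assumption, ensuring $f$ maps into $\mathbb{R}^+$. The $\mathcal{C}^1$-in-$z$ regularity is the only genuinely extra ingredient; in the NN application it follows from smoothness of $\ell_o$ and $\varphi$, and in the abstract setting one either assumes it as part of the loss regularity (consistent with Assumption~\ref{main_ass_square_bounded}) or works with the closure of $\mathfrak{F}_2^{\mathcal{C}^1}$, under which the supremum bound is unaffected by approximation. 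Since $m\in\mathcal{P}_8(\Theta)\subset\mathcal{P}_4(\Theta)$, the factor $g(m)$ is finite and the bound is meaningful.
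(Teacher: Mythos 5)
Your proposal is correct and follows essentially the same route as the paper's own proof: exploit linearity of $\mrR$ in $\nu$, factor out $g(m)$ using the growth bound of Assumption~\ref{ass:KLreg_assn}(i), and bound the normalized integral by $\mathrm{d}_{\mathfrak{F}_2}(\nu_1,\nu_2)$. If anything, you are more careful than the paper, which silently treats $z\mapsto \ell(m,z)/g(m)$ as an admissible test function without addressing the $\mathcal{C}^1$-in-$z$ membership that you correctly flag and resolve.
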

\begin{proof}
    \begin{equation}
        \begin{split}
             &\mrR(m,\nu_1)- \mrR(m,\nu_2)\\
        &=\int_{\mcZ}\ell(m,z)(\nu_1-\nu_2)(\mrd z)\\
        &= g(m) \int_{\mcZ} \frac{\ell(m,z)}{g(m)}(\nu_1-\nu_2)(\mrd z)\\
        &\leq g(m)\Big|\int_{\mcZ} \frac{\ell(m,z)}{g(m)}(\nu_1-\nu_2)(\mrd z)\Big|\\
        &\leq g(m)\mathrm{d}_{\mathfrak{F}_2}(\nu_1,\nu_2).
        \end{split}
    \end{equation}
\end{proof}
\begin{remark}
We can see that the integral probability metric (IPM) in the above calculation is a relatively simple way to give a bound on the impact of changing $\nu_1$ to $\nu_2$, for a given parameter distribution $m$. In many contexts, a tighter bound could also be used, for example if $\ell(m,\cdot)/g(m)$ has more restrictive growth assumptions (e.g. $\ell$ is bounded), or if $\ell$ satisfies an anisotropic bound (i.e. where certain directions of $z$ are less significant than others when computing the loss). In these settings, we can simply replace the IPM wherever it appears by a corresponding bound. This may be particularly valuable when assessing the significance of the difference between source and target problem distributions, where the quantity $\mathrm{d}_{\mathfrak{F}_2}(\nu^s,\nu^t)$ is an otherwise unavoidable term in our final error bounds.
\end{remark}
\begin{lemma}
    Consider $m_1\in\mathcal{P}_4(\Theta)$ and $m_2\in\mathcal{P}_4(\Theta)$. Then, the distance $\Disfth(m_1,m_2)$ is finite.
\end{lemma}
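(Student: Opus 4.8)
The plan is to bound the defining supremum of $\Disfth(m_1,m_2)$ uniformly over its test class, exploiting two structural features of $\mathfrak{F}_4^{\mathcal{C}^1}$: every admissible $f$ is nonnegative and is dominated by $1+\|\theta\|^4$. Recall that $\Disfth(m_1,m_2)=\sup_{f\in\mathfrak{F}_4^{\mathcal{C}^1}}\big|\int_\Theta f\,\mrd m_1-\int_\Theta f\,\mrd m_2\big|$, where each admissible $f:\Theta\to\mathbb{R}^+$ satisfies $f(\theta)\le 1+\|\theta\|^4$. The idea is that the degree-$4$ growth cap on the test functions matches exactly the fourth-moment hypothesis on $m_1$ and $m_2$, so the sup should be controlled by those moments.

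For a single admissible $f$, I would first use nonnegativity: since $\int f\,\mrd m_1$ and $\int f\,\mrd m_2$ are both nonnegative, the triangle inequality gives $\big|\int f\,\mrd m_1-\int f\,\mrd m_2\big|\le \int f\,\mrd m_1+\int f\,\mrd m_2$. Then the growth bound $f\le 1+\|\theta\|^4$ yields $\int f\,\mrd m_i\le 1+\mbE_{\theta\sim m_i}[\|\theta\|^4]$ for $i=1,2$. Combining these, every admissible $f$ obeys $\big|\int f\,\mrd m_1-\int f\,\mrd m_2\big|\le 2+\mbE_{\theta\sim m_1}[\|\theta\|^4]+\mbE_{\theta\sim m_2}[\|\theta\|^4]$.

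The right-hand side is independent of $f$, so taking the supremum over $\mathfrak{F}_4^{\mathcal{C}^1}$ preserves the inequality, giving $\Disfth(m_1,m_2)\le 2+\mbE_{\theta\sim m_1}[\|\theta\|^4]+\mbE_{\theta\sim m_2}[\|\theta\|^4]$, and finiteness follows immediately from the hypothesis $m_1,m_2\in\mathcal{P}_4(\Theta)$, which guarantees both fourth moments are finite. There is essentially no hard step here; the only thing one must notice is that the \emph{nonnegativity} of the test functions (rather than any cancellation) is what permits replacing the difference by the sum of integrals, and that the degree-$4$ growth cap aligns precisely with the moment assumption. In particular, the $\mathcal{C}^1$ regularity built into $\mathfrak{F}_4^{\mathcal{C}^1}$ plays no role in the finiteness argument and can be disregarded.
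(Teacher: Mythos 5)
Your proof is correct and is essentially identical to the paper's own argument: both bound $\big|\int_\Theta f\,(m_1-m_2)(\mrd\theta)\big|$ by $\int_\Theta (1+\|\theta\|^4)\,m_1(\mrd\theta)+\int_\Theta (1+\|\theta\|^4)\,m_2(\mrd\theta)$, uniformly in $f$, and conclude from the finite-fourth-moment hypothesis. Your explicit remarks that the $\mathcal{C}^1$ regularity is irrelevant and that the growth cap matches the moment assumption are accurate glosses on the same one-line estimate.
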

\begin{proof}
    \begin{equation}
        \begin{split}
          &\Dispth(m_1,m_2)=\sup_{f\in\mathfrak{F}_4}\Big|\int_{\Theta}f(\theta)(m_1-m_2)(\mrd \theta)\Big|\\
            &\leq \int_{\Theta}\big(1+\|\theta\|^4\big) m_1 (\mrd \theta) + \int_{\Theta}\big(1+\|\theta\|^4\big) m_2 (\mrd \theta),
        \end{split}
    \end{equation}
    where $\int_{\Theta}\|\theta\|^p m_i (\mrd \theta)$ is bounded due to assumption $m_i\in\mathcal{P}_4(\Theta)$ for $i=1,2$.
\end{proof}

    In the our proof, we will regularly make use of the basic inequality (based on Cauchy--Schwarz), for any square integrable random variables $X,Y$,
    \[|\mathrm{Cov}(X,Y)| \leq \sqrt{\mathbb{V}(X)\mathbb{V}(Y)}\leq \sqrt{\mathbb{E}(X^2)\mathbb{E}(Y^2)}.\]
    A slightly more involved, but fundamentally similar, inequality is given in the following lemma.
\begin{lemma}\label{lem:polybound} \citep[Lemma~D.7]{aminian2023mean}.
    Let $\pi:(\mathbb{R}^+)^{2n}\to \mathbb{R}$ be a polynomial with positive coefficients. Then 
    \[\mathbb{E}_{\mathbf{Z}_n, \ols{\mathbf{Z}}_n}\Big[\pi\Big(\|Z_1\|,\|Z_2\|,..., \|Z_n\|,\|\ols Z_1\|, \|\ols Z_2\|,..., \|\ols Z_n\|\Big )\Big]\leq \mathbb{E}_{Z_1}\Big[\pi\Big(\|Z_1\|,\|Z_1\|,...,\|Z_1\|\Big )\Big].\]
    The same result also holds when the polynomial involves expectation of $\mathbb{E}_{Z_i}[\|Z_i\|^\zeta]$ terms.
\end{lemma}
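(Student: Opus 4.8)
The plan is to reduce the statement to a single monomial and then invoke a correlation (Chebyshev/FKG-type) inequality for the moments of one nonnegative random variable. First I would write $\pi$ as a finite sum of monomials
\[
c_\alpha \prod_{i=1}^n \|Z_i\|^{a_i} \prod_{j=1}^n \|\ols Z_j\|^{b_j}, \qquad c_\alpha > 0,
\]
and observe that, by linearity of expectation together with the nonnegativity of every term, it suffices to prove the claimed bound monomial-by-monomial: summing the individual bounds against the positive coefficients recovers the full inequality. The identical-distribution hypothesis guarantees that the $Z_i$ and $\ols Z_j$ all share the common law of $Z_1$.

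Next, fixing one monomial, I would use independence of the $2n$ variables $Z_1,\dots,Z_n,\ols Z_1,\dots,\ols Z_n$ to factorize its expectation as $\prod_i \mathbb{E}[\|Z_i\|^{a_i}] \prod_j \mathbb{E}[\|\ols Z_j\|^{b_j}]$, and then replace each factor by $\mathbb{E}[\|Z_1\|^{e}]$ for the appropriate exponent, since all variables are equidistributed. On the right-hand side, setting every argument equal to $\|Z_1\|$ turns the monomial into $c_\alpha \|Z_1\|^{A}$ with $A = \sum_i a_i + \sum_j b_j$ the total degree, whose expectation is $c_\alpha\,\mathbb{E}[\|Z_1\|^{A}]$. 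Writing $W := \|Z_1\| \ge 0$ and listing all exponents as $(e_k)$, the monomial claim thus reduces to the super-multiplicativity of moments,
\[
\prod_k \mathbb{E}[W^{e_k}] \;\le\; \mathbb{E}\Big[W^{\sum_k e_k}\Big].
\]

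The crux is the two-factor case $\mathbb{E}[W^p]\,\mathbb{E}[W^q] \le \mathbb{E}[W^{p+q}]$, from which the displayed inequality follows by iteration. Introducing an independent copy $W'$ of $W$, I would use the covariance identity
\[
2\big(\mathbb{E}[W^{p+q}] - \mathbb{E}[W^p]\,\mathbb{E}[W^q]\big) \;=\; \mathbb{E}\big[(W^p - (W')^p)(W^q - (W')^q)\big] \;\ge\; 0,
\]
where the nonnegativity holds pointwise because $x \mapsto x^p$ and $x \mapsto x^q$ are both nondecreasing on $[0,\infty)$, so the two factors always carry the same sign. This is the only genuinely nontrivial step, and I expect it to be the main obstacle; once the covariance is rewritten via an independent copy, the monotonicity of power functions on $[0,\infty)$ makes the sign argument immediate, and everything else is bookkeeping with independence and identical distribution.

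Finally, for the extension to polynomials that additionally involve factors of the form $\mathbb{E}_{Z_i}[\|Z_i\|^\zeta]$, I would note that these are deterministic constants which, by identical distribution, all equal $\mathbb{E}_{Z_1}[\|Z_1\|^\zeta]$; they therefore pull out of every expectation as fixed positive scalars and leave the monomial argument above untouched, so the same reduction and correlation inequality deliver the result.
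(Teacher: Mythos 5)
Your proposal is correct, but note that there is nothing in this paper to compare it against: the paper states Lemma~\ref{lem:polybound} purely by citation (Lemma~D.7 of \citet{aminian2023mean}) and gives no proof here, so your argument stands as a self-contained replacement. The structure is right and complete: reduction to monomials via positivity of the coefficients and linearity (Tonelli, since every term is nonnegative, so no integrability is needed to interchange sum and expectation), factorization by independence and identical distribution, and then the super-multiplicativity of moments $\prod_k \mathbb{E}[W^{e_k}]\leq \mathbb{E}\big[W^{\sum_k e_k}\big]$ for $W=\|Z_1\|\geq 0$ via iterating the two-factor Chebyshev association inequality. One small refinement: state the key step as the pointwise rearrangement inequality $x^p y^q + x^q y^p \leq x^{p+q}+y^{p+q}$ for $x,y\geq 0$ (equivalent to $(x^p-y^p)(x^q-y^q)\geq 0$) and take expectations of nonnegative quantities, rather than as a covariance identity; the identity $2\big(\mathbb{E}[W^{p+q}]-\mathbb{E}[W^p]\mathbb{E}[W^q]\big)=\mathbb{E}[(W^p-(W')^p)(W^q-(W')^q)]$ presupposes the finiteness of the moments being compared, whereas the pointwise form makes the lemma valid even when both sides are $+\infty$ (which matters, since the lemma is invoked in the paper without prior moment hypotheses, e.g.\ in \eqref{polyexpansion1}, and the hypotheses $\mathbb{E}\big[\|Z\|^8\big]<\infty$ arrive only through the surrounding theorems). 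An even shorter route to the same moment inequality is Lyapunov/Jensen: $\mathbb{E}[W^{e_k}]\leq \mathbb{E}[W^{E}]^{e_k/E}$ with $E=\sum_k e_k$, and multiplying over $k$; your association-inequality route is equally valid and arguably more transparent. Your handling of the $\mathbb{E}_{Z_i}[\|Z_i\|^\zeta]$ extension — these are deterministic constants equal to $\mathbb{E}_{Z_1}[\|Z_1\|^\zeta]$ and factor out of every expectation — matches exactly how the lemma is used in the proofs of Theorems~\ref{thm: WTGE alpha erm} and~\ref{thm: WTGE fine tune}.
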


\section{Proofs and details from Section~\ref{sec_main_results}}\label{App: Proofs of GE}

\begin{tcolorbox}
    \begin{replemma}{Lemma: rep gen tf}
   Consider a generic loss function $(m,z)\mapsto \ell(m,z)$, and $(\nutr,\nu_{n_s}^s)$ as defined in \eqref{Eq: nu replace one}. The WTGE \eqref{eq:WTGE} is given by,
    \begin{align}
        \label{eq_intro_loss_gap-transfer_app}
     &\mathrm{gen}(\mfm(\nut,\nus),\nupop^t) =\mathbb{E}_{\mbZnt,\mbZns,\ols{Z}_1^t} \Big[\ell\big(\mfm(\nut,\nus),\ols{Z}_1^t\big) -   \ell\big(\mfm(\nutr,\nus),\ols{Z}_1^t\big) \Big].
     \end{align}
\end{replemma}
\end{tcolorbox}
\begin{proof}[Proof of Lemma~\ref{Lemma: rep gen tf}]
  Recall that $\ols{Z}_1^t\sim \nupop$ is independent of $\{Z_i^t\}_{i=1}^n$. Since elements of $\{Z_i^t\}_{i=1}^n$ are i.i.d., the WTGE \eqref{eq:WTGE} can be written 
\begin{equation}\label{eq_linear_functionaU_lderivative_linear_intro_transfer}
 \begin{split}
   \mathrm{gen}(\mfm,\nupop^t)&= \mathbb{E}_{\mbZnt,\mbZns}\big[ \mrR(\mfm(\nut,\nus),\nupop^t) - \mrR(\mfm(\nut,\nus),\nut) \big]
    \\
     &= \mathbb{E}_{\mbZnt,\mbZns}\Big[ \int_{\mathcal Z} \ell(\mfm(\nut,\nus),z)\nupop^t(\mrd z) -  \int_{\mathcal Z} \ell(\mfm(\nut,\nus),z)\nut(\mrd z) \Big]
     \\
    &= \mathbb{E}_{\mbZnt,\mbZns}\Big[\mathbb E_{\ols{Z}_1^t}[ \ell(\mfm(\nut,\nus),\ols{Z}_1^t)] -   \frac{1}{n_t}\sum_{i=1}^n\ell(\mfm(\nut,\nus),Z_i^t) \Big]
    \\
   &= \mathbb{E}_{\mbZnt,\mbZns}\Big[\mathbb E_{\ols{Z}_1^t}[ \ell(\mfm(\nut,\nus),\ols{Z}_1^t) ]-   \ell(\mfm(\nut,\nus),Z_1^t) \Big].
     \end{split}
\end{equation}
Recall that in \eqref{Eq: nu replace one} we defined the perturbation 
$\nutr:= \nut + \frac{1}{n_t} \big( \delta_{\ols{Z}_1^t} - \delta_{Z_1^t}\big),$ which corresponds to a target data measure with one different data point, and that, as $Z_1^t$ and $\ols{Z}_1^t$ are i.i.d., we have
\begin{equation}\label{Eq: LOO}
    \mathbb{E}_{\mbZnt,\mbZns}\big[  \ell(\mfm(\nut,\nus),Z_1^t) \big] = 
\mathbb{E}_{\mbZnt,\mbZns}\mathbb{E}_{\ols{Z}_1^t}\big[ \ell(\mfm(\nutr,\nus),\ols{Z}_1^t) \big]\,.
\end{equation}
Combining this observation with \eqref{eq_linear_functionaU_lderivative_linear_intro_transfer} yields the representation \eqref{eq_intro_loss_gap-transfer}.
\end{proof}
\begin{tcolorbox}
    \begin{reptheorem}{thm: weak_gen_func_transfer}[\textbf{Restated}] 
Given Assumption \ref{main_ass_square_bounded}, the weak transfer generalization error has the representation
\[\begin{split}
    &\mathrm{gen}(\mfm,\nupop^t)=\mathbb{E}_{\mbZnt,\mbZns,\ols{Z}_1^t}\Bigg[\int_{0}^1 \int_{\Theta}\Big( \frac{\delta \ell}{\delta m_{\spec}}\big(\mfm_{(1)}(\lambda),\ols{Z}_1^t,\theta\big)\Big)\big(\mfm(\nut,\nus)-\mfm(\nutr,\nus)\big)(\mrd \theta)\,\mrd \lambda\Bigg].
    \end{split}
\]
\end{reptheorem}
\end{tcolorbox}
\begin{proof}[Proof of Theorem \ref{thm: weak_gen_func_transfer}]
Recall that from \eqref{mperturb1},  $\nutr = \nut + \frac{1}{n_t} (\delta_{\ols{Z}_1^t} - \delta_{Z_1^t})$ and $
    \mfm_{(1)}(\lambda) = \mfm(\nut,\nus) + \lambda\big( \mfm(\nutr,\nus) - \mfm(\nut,\nus)\big).
$
Using the definition of the linear functional derivative (Definition~\ref{def:flatDerivative}) and Lemma \ref{Lemma: rep gen tf} we have
\[
\begin{split}
    \mathrm{gen}(\mfm,\nupop^t) &=\mathbb{E}_{\mbZnt,\mbZns,\ols{Z}_1^t} \Big[\ell\big(\mfm(\nut,\nus),\ols{Z}_1^t\big) -   \ell\big(\mfm(\nutr,\nus),\ols{Z}_1^t\big) \Big]\,\\
    &=\mathbb{E}_{\mbZnt,\mbZns,\ols{Z}_1^t}\Big[ \int_{0}^1 \int_{\Theta}\Big( \frac{\delta \ell}{\delta m_{\spec}}\big(\mfm_{(1)}(\lambda),\ols{Z}_1^t,\theta\big)\Big) \big(\mfm(\nut,\nus)-\mfm(\nutr,\nus)\big)(\mrd \theta)\,\mrd \lambda\Big].
    \end{split}
\]  
\end{proof}
\begin{tcolorbox}
\begin{reptheorem}{thm: another rep WTGE-alpha-scenario}[\textbf{Restated}] The WTGE of $\alpha$-ERM can be written 
\begin{equation}\label{Eq:Finite-term-trnasfer_app}
        \mathrm{gen}(\mfm,\nupop^t)  =\frac{\alpha}{n_t}\mathbb{E}_{\mbZnt,\mbZns,\ols{Z}_1^t}\Bigg[ \int_{0}^1\int_{0}^1\int_{\mcZ} A(\lambda,\tilde \lambda)\big(\delta_{Z_1^t}-\delta_{\tilde Z_1^t}\big)(\mrd z) \mrd \tilde{\lambda}\mrd \lambda\Bigg],
       \end{equation} 
       where \\
     {\small
    $ A(\lambda,\tilde \lambda)= \int_{\Theta}  \frac{\delta \ell}{\delta m_{\spec}}\big(\mfm_{(1)}(\lambda),\ols{Z}_1^t,\theta\big)\frac{\delta \mfm}{\delta \nu}\big(\nu_{\alpha,(1)}(\tilde{\lambda}),z\big)(\mrd \theta),$} and $ \nu_{\alpha,(1)}(\tilde\lambda)= ( \alpha \nutr + (1-\alpha)\nus) + \tilde\lambda \alpha (\nut-\nutr).$
   \end{reptheorem}
\end{tcolorbox}

\begin{proof}[Proof of Theorem \ref{thm: another rep WTGE-alpha-scenario}]
        
        We know from \eqref{Eq: nu replace one} that 
        \[\nut - \nutr = \frac{1}{n_t}\big(\delta_{Z_1^t}-\delta_{\ols{Z}_1^t}\big).\]
         From Theorem \ref{thm: weak_gen_func_transfer}, for $\alpha$-ERM we have 
        \begin{equation}\label{eq: proof alpha 1}
        \begin{split}
        &\mathrm{gen}(\mfm,\nupop^t)\\
        &=\mathbb{E}_{\mbZnt,\mbZns,\ols{Z}_1^t}\Big[\int_{0}^1 \int_{\Theta}\Big( \frac{\delta \ell}{\delta m_{\spec}}\big(\mfm_{(1)}(\lambda),\ols{Z}_1^t,\theta\big)\Big) \\
        &\qquad\qquad\qquad\big(\mfm(\alpha \nut + (1-\alpha)\nus)-\mfm(\alpha \nutr + (1-\alpha) \nus)\big)(\mrd \theta)\,\mrd \lambda\Big].
        \end{split}
        \end{equation}
        
        Applying the functional derivative to $\mfm$  and recalling the definition of $\nu_{(1)}(\tilde\lambda)$ in \eqref{mperturb1}, for any Borel set $B\subset \Theta$ we have 
        \[\begin{split}
            &\big(\mfm_{\alpha}(\nualpha)-\mfm( \alpha \nutr + (1-\alpha)\nus)\big)(B) \\
            &= \alpha\int_0^1\int_{\mcZ} \Big(\frac{\delta \mfm}{\delta\nu}(\nu_{\alpha,(1)}(\tilde\lambda), z)(B)\Big)(\nut-\nutr)(\mrd z)\mrd\tilde\lambda\\
            &= \frac{\alpha}{n_t}\int_0^1\Big(\frac{\delta \mfm}{\delta\nu}(\nu_{\alpha,(1)}(\tilde\lambda), Z_1^t)(B)-\frac{\delta \mfm}{\delta\nu}(\nu_{\alpha,(1)}(\tilde\lambda), \ols{Z}_1^t)(B)\Big)\mrd\tilde\lambda,
        \end{split}\]
        where $\nu_{\alpha,(1)}(\tilde\lambda)= ( \alpha \nutr + (1-\alpha)\nus) + \tilde\lambda \alpha (\nut-\nutr)$.
       Then, we substitute this expression for and $\mfm(\alpha \nut + (1-\alpha)\nus)- \mfm(\alpha \nutr + (1-\alpha)\nus)$ in \eqref{eq: proof alpha 1}. The result follows immediately.
\end{proof}
\begin{tcolorbox}
\begin{reptheorem}{thm: another rep fine-tune}[\textbf{Restated}]
The WTGE of fine-tuning can be written,
\begin{equation}\label{eq: fine-tune_func_app}
\mathrm{gen}(\mfm,\nupop^t)  =\frac{1}{n_t}\mathbb{E}_{\mbZnt,\mbZns,\ols{Z}_1^t}\big[ K(\mbZnt,\mbZns,\ols{Z}_1^t)\big] ,    
\end{equation}
where
\begin{align*}&K(\mbZnt,\mbZns,\ols{Z}_1^t)\\
&=\int_{0}^1 \int_{\Theta_{\spec}}\Big( \frac{\delta \ell}{\delta m_{\spec}}\big(\mfm_{(1)}^f(\lambda),\ols{Z}_1^t,\theta_{\spec}^t\big)\Big) \\&\quad
 \times\bigg(\int_{0}^1\int_{\mcZ}\frac{\delta\mfms}{\delta \nu}(\mfmc(\nus),\nutr(\lambda_1),z)(\mrd \thetas^t)(\delta_{Z_1^t}-\delta_{\ols{Z}_1^t})(\mrd z) \mrd \lambda_1\bigg)\,\mrd \lambda,\end{align*}
and $\mfm_{(1)}^f(\lambda)=\mfm_{\mathrm{c}}(\nus)\big[(1-\lambda)(\mfm_{\spec}^t(\mfm_{\mathrm{c}}(\nus), \nut)+\lambda\mfm_{\spec}^t(\mfm_{\mathrm{c}}(\nus), \nutr)\big].$
\end{reptheorem}
\end{tcolorbox}

\begin{proof}[Proof of Theorem~\ref{thm: another rep fine-tune}]
From Theorem~\ref{thm: weak_gen_func_transfer} and considering $\theta=\theta_{\mathrm{c}}\cup \theta_{\spec}^t$ for target task, we have,

\begin{align*}
        &\mathrm{gen}(\mfm,\nupop^t)\\
        &=\mathbb{E}_{\mbZnt,\mbZns,\ols{Z}_1^t}\Big[\int_{0}^1 \int_{\Theta_{\spec}}\Big( \frac{\delta \ell}{\delta m_{\spec}}\big(\mfm_{(1)}^f(\lambda),\ols{Z}_1^t,\thetas^t\big)\Big) \\
        &\qquad \big(\mfmc(\nus)\mfms(\mfmc(\nus),\nut)-\mfmc(\nus)\mfms(\mfmc(\nus),\nutr)\big)(\mrd \thetac \mrd \thetas^t)\,\mrd \lambda\Big]\\
        &=\mathbb{E}_{\mbZnt,\mbZns,\ols{Z}_1^t}\Big[\int_{0}^1 \int_{\Theta_{\spec}}\Big( \frac{\delta \ell}{\delta m_{\spec}}\big(\mfm_{(1)}^f(\lambda),\ols{Z}_1^t,\thetas^t\big)\Big) \\
        &\qquad \big(\mfmc(\nus)\mfms(\mfmc(\nus),\nut)-\mfmc(\nus)\mfms(\mfmc(\nus),\nutr)(\mrd \thetac \mrd \thetas^t)\,\mrd \lambda\Big]\\
        &=\mathbb{E}_{\mbZnt,\mbZns,\ols{Z}_1^t}\Bigg[\int_{0}^1 \int_{\Theta_{\spec}}\Big( \frac{\delta \ell}{\delta m_{\spec}}\big(\mfm_{(1)}^f(\lambda),\ols{Z}_1^t,\thetas^t\big)\Big) \\
        &\qquad \Big(\int_{0}^1\int_{\mcZ}\frac{\delta\mfms}{\delta \nu}(\mfmc(\nus),\nutr(\lambda_1),z)(\mrd \thetas^t)(\nut-\nutr)(\mrd z) \mrd \lambda_1\Big)\,\mrd \lambda\Bigg]\\
        &=\mathbb{E}_{\mbZnt,\mbZns,\ols{Z}_1^t}\Bigg[\int_{0}^1 \int_{\Theta_{\spec}}\Big( \frac{\delta \ell}{\delta m_{\spec}}\big(\mfm_{(1)}^f(\lambda),\ols{Z}_1^t,\thetas^t\big)\Big) \\
        &\qquad \Big(\int_{0}^1\int_{\mcZ}\frac{\delta\mfms}{\delta \nu}(\mfmc(\nus),\nutr(\lambda_1),z)(\mrd \thetas^t)(\nut-\nutr)(\mrd z) \mrd \lambda_1\Big)\,\mrd \lambda\Bigg]\\
        &=\frac{1}{n_t}\mathbb{E}_{\mbZnt,\mbZns,\ols{Z}_1^t}\Bigg[\int_{0}^1 \int_{\Theta_{\spec}}\Big( \frac{\delta \ell}{\delta m_{\spec}}\big(\mfm_{(1)}^f(\lambda),\ols{Z}_1^t,\thetas^t\big)\Big) \\
        &\qquad \Big(\int_{0}^1\int_{\mcZ}\frac{\delta\mfms}{\delta \nu}(\mfmc(\nus),\nutr(\lambda_1),z)(\mrd \thetas^t)(\delta_{Z_1^t}-\delta_{\ols{Z}_1^t})(\mrd z) \mrd \lambda_1\Big)\,\mrd \lambda\Bigg].
        \end{align*}
  where \[\mfm_{(1)}^f(\lambda)=\mfm_{\mathrm{c}}(\nus)\big[(\mfm_{\spec}^t(\mfm_{\mathrm{c}}(\nus), \nut)+\lambda(\mfm_{\spec}^t(\mfm_{\mathrm{c}}(\nus), \nutr)-\mfm_{\spec}^t(\mfm_{\mathrm{c}}(\nus), \nut))\big].\]
\end{proof}

 \section{Proofs and details from Section \ref{Sec: KL-reg}}\label{app: proofs KL regularized}
\subsection{Technical Tools}
We borrowed some main technical tools which are used in our proofs from \citep{aminian2023mean}. The results in \citep{aminian2023mean} are applicable for supervised learning scenario. The following assumption is needed for the results in \citep{aminian2023mean}.
\begin{assumption}\label{ass:KLreg_assn_SL}
For a fixed $p\geq 2$, there exists $g:\mathcal{P}_2(\Theta)\to(0,\infty)$ and $g_{\mathrm{e}}:\mathcal{P}_2(\Theta)\times\Theta\to(0,\infty)$ such that, 
    \begin{enumerate}[(i)]
        \item \label{ass:KLreg_assn_item1_SL} The loss function is $\ell$ is $\mathcal{C}^2$ (Definition \ref{def:flatDerivative}), nonnegative, and convex with respect to $m$;   
        \item \label{ass:KLreg_assn_item3_SL} For all $z\in \mathcal{Z}$ and $m\in \mathcal{P}_8(\Theta)$, the loss satisfies
        \[\big|\ell(m,z) \big| \leq g(m)\big(1+\|z\|^2\big)\]
        and the derivative of the loss satisfies
        \[\Big|\frac{\delta \ell}{\delta m}(m,z,\theta) \Big| \leq g_{1}(m,\theta)\big(1+\|z\|^2\big);\]
        \item \label{ass:KLreg_assn_item4_SL}  For all $m\in \mathcal{P}_8(\Theta)$, the regularizing potential $U$ satisfies
        $\lim_{\|\theta\|\to \infty}\frac{U(\theta)}{\|\theta\|^p+g_{\mathrm{e}}(m,\theta)} = \infty$;       
        \item \label{ass:KLreg_D2loss_growth_bound_SL}
        
   There exists $L_e >0$ such that, for all $m,m'\in \mathcal{P}_8(\Theta)$,
        \begin{equation}
            \label{eq: bounded g ass_SL}
            \mathbb{E}_{\theta\sim m'}\Big[\big(g_{1}(m, \theta)\big)^2\Big]^{1/2}\leq L_e\Big(1+\mathbb{E}_{\theta\sim m'}\big[\|\theta\|^p\big]+\mathbb{E}_{\theta\sim m}\big[\|\theta\|^p\big]\Big),
            \end{equation}
        \item\label{ass:KLreg_D2loss_integrability_SL} We have the pointwise integrability conditions $g(\tilgammap)<\infty$ and, for all $\nu\in \mathcal{P}_2(\mathcal{Z})$ and $m,m'\in \mathcal{P}_8(\Theta)$;
            \begin{equation}\label{eq: bounded second moment ass_SL}
            \mathbb{E}_{\theta,\theta'\sim m}\mathbb{E}_{z\sim \nu}\Big[\Big(\frac{\delta^2 \mathrm{\ell}}{\delta m^2}(m, z, \theta, \theta') \Big)^2\Big]^{1/2}<\infty.
            \end{equation}
    \end{enumerate}
\end{assumption}


\begin{lemma}\label{lem:optimizerinBnu}
    Defining the set valued map 
        \[\begin{split}B(\nu) &:= \Big\{m\in \mathcal{P}_8(\Theta): \frac{\sigma^2}{2\beta^2}\mathrm{KL}(m\|\gamma^\sigma)\leq \mrR(\gamma^\sigma, \nu)\\
        &\qquad \text{ and } \int_\Theta \|\theta\|^8 m(\mrd \theta)\leq \mrR(\tilgammap,\nu) + \int_\Theta \|\theta\|^8\tilgammap(\mrd\theta)\Big\}\subset \mathcal{P}_8(\Theta),\end{split}\]
        for any $\nu\in \mathcal{P}_2(\mcZ)$, the KL-regularized risk minimizer satisfies $\mfm^{\beta}(\nu) \in B(\nu)$.
\end{lemma}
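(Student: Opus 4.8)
The plan is to use that $\mfm^{\beta}(\nu)$ is the \emph{global} minimizer of the free energy $m\mapsto\mathcal{V}^{\beta}(m,\nu)=\mrR(m,\nu)+\tfrac{\sigma^2}{2\beta^2}\KLr(m\|\gamma^{\sigma})$---its existence and uniqueness being guaranteed under Assumption~\ref{main_ass_square_bounded} by \citep{aminian2023mean}---and to verify the two inequalities defining $B(\nu)$ by comparing the minimum value against the value of $\mathcal{V}^{\beta}$ at two explicit competitors: the prior $\gamma^{\sigma}$ and the tilted measure $\tilgammap$. The first comparison controls the relative entropy; the second converts into a control of the eighth moment.

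For the entropy bound I would test minimality against $m=\gamma^{\sigma}$. Since $\KLr(\gamma^{\sigma}\|\gamma^{\sigma})=0$, we have $\mathcal{V}^{\beta}(\gamma^{\sigma},\nu)=\mrR(\gamma^{\sigma},\nu)$, so $\mathcal{V}^{\beta}(\mfm^{\beta}(\nu),\nu)\le\mrR(\gamma^{\sigma},\nu)$. As $\ell\ge 0$ gives $\mrR(\mfm^{\beta}(\nu),\nu)\ge0$ (Assumption~\ref{ass:KLreg_assn_SL}(\ref{ass:KLreg_assn_item1_SL})), discarding the risk term on the left yields $\tfrac{\sigma^2}{2\beta^2}\KLr(\mfm^{\beta}(\nu)\|\gamma^{\sigma})\le\mrR(\gamma^{\sigma},\nu)$, which is precisely the first defining inequality of $B(\nu)$.

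For the moment bound the central device is the pointwise identity $\log\bigl(\tilgammap(\theta)/\gamma^{\sigma}(\theta)\bigr)=\|\theta\|^8+\log(F^{\sigma}/\tilde F^{\sigma})$, read directly off the two densities. Integrating it against an arbitrary $m$ produces $\KLr(m\|\gamma^{\sigma})=\KLr(m\|\tilgammap)+\int_{\Theta}\|\theta\|^8 m(\mrd\theta)+\log(F^{\sigma}/\tilde F^{\sigma})$, and the special case $m=\tilgammap$ (for which the first right-hand term vanishes) gives the analogous relation for $\tilgammap$. I would then test minimality against $m=\tilgammap$, substitute both identities into $\mathcal{V}^{\beta}(\mfm^{\beta}(\nu),\nu)\le\mathcal{V}^{\beta}(\tilgammap,\nu)$, cancel the common constant $\tfrac{\sigma^2}{2\beta^2}\log(F^{\sigma}/\tilde F^{\sigma})$, and drop the nonnegative terms $\mrR(\mfm^{\beta}(\nu),\nu)$ and $\tfrac{\sigma^2}{2\beta^2}\KLr(\mfm^{\beta}(\nu)\|\tilgammap)$. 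What remains is $\tfrac{\sigma^2}{2\beta^2}\int_{\Theta}\|\theta\|^8\mfm^{\beta}(\nu)(\mrd\theta)\le\mrR(\tilgammap,\nu)+\tfrac{\sigma^2}{2\beta^2}\int_{\Theta}\|\theta\|^8\tilgammap(\mrd\theta)$, which is the second defining inequality (the risk term appearing with the regularization weight $\tfrac{2\beta^2}{\sigma^2}$) and simultaneously certifies $\mfm^{\beta}(\nu)\in\mathcal{P}_8(\Theta)$.

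The work lies not in the algebra but in justifying that every object appearing is finite, so that the cancellations are legitimate. I expect the main obstacle to be verifying that $\tilgammap$ is a bona fide element of $\mathcal{P}_8(\Theta)$---equivalently that $\exp\{-U(\theta)/\sigma^2+\|\theta\|^8\}$ is integrable with finite eighth moment and finite $\log$-normalizer---which follows from the superlinear growth of $U$ (Assumption~\ref{ass:KLreg_assn_SL}(\ref{ass:KLreg_assn_item4_SL}), with $p=8$ matching the eighth-power tilt in $\tilgammap$), together with $\mrR(\tilgammap,\nu)<\infty$ and $\mrR(\gamma^{\sigma},\nu)<\infty$, which follow from the quadratic-in-$z$ growth bound on $\ell$ and the pointwise condition $g(\tilgammap)<\infty$ (Assumption~\ref{ass:KLreg_assn_SL}(\ref{ass:KLreg_assn_item3_SL}) and (\ref{ass:KLreg_D2loss_integrability_SL})). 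With these finiteness facts secured, both inequalities hold and $\mfm^{\beta}(\nu)\in B(\nu)$.
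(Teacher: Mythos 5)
The paper never proves Lemma~\ref{lem:optimizerinBnu}: it is stated as a technical tool imported from \citet{aminian2023mean}, so your proposal can only be judged against the canonical argument, which is exactly the one you give --- Gibbs variational comparison of the minimizer against the two competitors $\gamma^{\sigma}$ and $\tilgammap$, using the change-of-reference identity $\KLr(m\|\gamma^{\sigma})=\KLr(m\|\tilgammap)+\int_{\Theta}\|\theta\|^{8}m(\mrd\theta)+\log(F^{\sigma}/\tilde F^{\sigma})$. Your derivation of the entropy inequality is airtight, and your accounting of the finiteness issues is correct and well placed: integrability of $\tilgammap$ and its eighth moment do require the growth condition on $U$ at the level $p=8$ (not the $\|\theta\|^4$ level stated in the main-text assumptions), and $g(\tilgammap)<\infty$ together with $\nu\in\mathcal{P}_{2}(\mcZ)$ is what makes $\mrR(\gamma^{\sigma},\nu)$ and $\mrR(\tilgammap,\nu)$ finite so the cancellations are legitimate.

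There is, however, one genuine mismatch, which you notice parenthetically and then wave through. Your chain of inequalities delivers
\[
\int_{\Theta}\|\theta\|^{8}\,\mfm^{\beta}(\nu)(\mrd\theta)\;\le\;\frac{2\beta^{2}}{\sigma^{2}}\,\mrR(\tilgammap,\nu)+\int_{\Theta}\|\theta\|^{8}\,\tilgammap(\mrd\theta),
\]
whereas $B(\nu)$ is defined by the factor-free bound $\int_{\Theta}\|\theta\|^{8}m(\mrd\theta)\le \mrR(\tilgammap,\nu)+\int_{\Theta}\|\theta\|^{8}\tilgammap(\mrd\theta)$; your conclusion therefore certifies $\mfm^{\beta}(\nu)\in B(\nu)$ only when $2\beta^{2}\le\sigma^{2}$, which fails in precisely the regimes the paper uses ($\beta^{2}\asymp\sqrt{n_t+n_s}$ with $\sigma$ fixed). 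The factor cannot be removed by sharpening this route, and the factor-free statement appears to be false in general: in one dimension take $U(\theta)=\sigma^{2}\theta^{10}$ and the bounded linear loss $\ell(m,z)=\int g\,\mrd m$ with $g$ a smooth version of $M\mathbf{1}\{|\theta|\le K\}$ (this satisfies Assumption~\ref{ass:KLreg_assn}); the minimizer is then the explicit tilt proportional to $\gamma^{\sigma}e^{-(2\beta^{2}/\sigma^{2})g}$, and choosing $M\asymp K^{8}/2$ with $2\beta^{2}/\sigma^{2}\asymp 4K^{2}$ pushes essentially all of its mass beyond radius $K$, making its eighth moment of order $K^{8}$ while $\mrR(\tilgammap,\nu)+\int_{\Theta}\|\theta\|^{8}\tilgammap(\mrd\theta)$ is of order $K^{8}/2$. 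So, read strictly, your final step (asserting that the surviving display ``is the second defining inequality'') does not go through; what you have proved is the version of the lemma carrying the weight $\frac{2\beta^{2}}{\sigma^{2}}$ on $\mrR(\tilgammap,\nu)$, which is presumably the intended statement --- but note the discrepancy is consequential, since the factor-free form is invoked verbatim in the proof of Theorem~\ref{thm: WTGE alpha erm}, where restoring the factor changes the $\beta$-dependence of the resulting bounds.
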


\begin{lemma}\label{lem:Cmonotone}\citep[Lemma~E.2]{aminian2023mean}.
    For any choice of $m\in B(\nu)$, the  linear map $\mathcal{C}_m:L^2(m,\Theta) \to L^2(m,\Theta)$ defined by 
    \[\mathcal{C}_{m} f(\theta) := \mathbb{E}_{\theta'\sim m}\Big[\frac{\delta^2 \mrR}{\delta m^2}(m, \nu, \theta, \theta') f(\theta')\Big]\]
    is positive, in the sense that
    \[\langle f, \mathcal{C}_{m}f\rangle_{L^2(m,\Theta)} = \int_\Theta f(\theta) (\mathcal{C}_{m} f)(\theta) m(\mrd\theta)\geq 0.\]
    In particular, $\mathcal{C}_{m}$ is a Hilbert--Schmidt operator with discrete spectrum 
    \[\sigma(\mathcal{C}_{m}) = \{\lambda^\mathcal{C}_i\}_{i\ge 0} \subset [0,\infty).\]
\end{lemma}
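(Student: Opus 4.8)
The plan is to treat $\mathcal{C}_m$ as the symmetric integral operator on $L^2(m,\Theta)$ with kernel $k(\theta,\theta'):=\frac{\delta^2 \mrR}{\delta m^2}(m,\nu,\theta,\theta')$, and to establish three facts in turn: (i) the kernel is symmetric and lies in $L^2(m\otimes m)$, so that $\mathcal{C}_m$ is a well-defined self-adjoint Hilbert--Schmidt operator; (ii) the associated quadratic form is nonnegative; and (iii) the spectral statement then follows from the spectral theorem. First I would note that $m\in B(\nu)$ guarantees $m\in\mathcal{P}_8(\Theta)$, so that the integrability condition \eqref{eq: bounded second moment ass} of Assumption~\ref{ass:KLreg_assn} applies. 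Since $\mrR(m,\nu)=\int_{\mcZ}\ell(m,z)\nu(\mrd z)$, we have $k(\theta,\theta')=\int_{\mcZ}\frac{\delta^2\ell}{\delta m^2}(m,z,\theta,\theta')\nu(\mrd z)$, and by Jensen's inequality applied to the probability measure $\nu$,
\[\|k\|_{L^2(m\otimes m)}\leq \Big(\mbE_{\theta,\theta'}\mbE_{z\sim\nu}\Big[\Big(\tfrac{\delta^2\ell}{\delta m^2}(m,z,\theta,\theta')\Big)^2\Big]\Big)^{1/2}<\infty.\]
An integral operator with square-integrable kernel is Hilbert--Schmidt, hence compact, and the symmetry of the second linear derivative in its two measure-directions makes $k$ symmetric and $\mathcal{C}_m$ self-adjoint.

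The core step is the nonnegativity of the quadratic form, which I would derive from convexity of $\mrR(\cdot,\nu)$. Because $\ell$ is convex in $m$ (Assumption~\ref{main_ass_square_bounded}) and integration against $\nu$ preserves convexity, $\mrR(\cdot,\nu)$ is convex. For a bounded $f\in L^2(m,\Theta)$ with $\int_\Theta f\,\mrd m=0$, the perturbation $m_\epsilon(\mrd\theta):=(1+\epsilon f(\theta))\,m(\mrd\theta)$ is a probability measure in $\mathcal{P}_8(\Theta)$ for $|\epsilon|$ small, and along the segment $\lambda\mapsto \mrR(m+\lambda(m_\epsilon-m),\nu)$ convexity forces the second derivative at $\lambda=0$ to be nonnegative. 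Computing this second derivative via Definition~\ref{def:flatDerivative} and using $(m_\epsilon-m)(\mrd\theta)=\epsilon f(\theta)m(\mrd\theta)$ gives
\[0\leq \epsilon^2\int_\Theta\int_\Theta k(\theta,\theta')f(\theta)f(\theta')\,m(\mrd\theta)\,m(\mrd\theta')=\epsilon^2\langle f,\mathcal{C}_m f\rangle_{L^2(m,\Theta)},\]
so the form is nonnegative on bounded mean-zero $f$. To remove the mean-zero restriction I would invoke the normalization $\int_\Theta k(\theta,\theta')\,m(\mrd\theta')=0$ inherited from the convention $\int \frac{\delta(\cdot)}{\delta m}\,m=0$ of Definition~\ref{def:flatDerivative}: this annihilates constants, so writing $f=\bar f+f_0$ with $\bar f=\int f\,\mrd m$ and using symmetry of $k$ yields $\langle f,\mathcal{C}_m f\rangle=\langle f_0,\mathcal{C}_m f_0\rangle$. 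Finally, the $L^2(m\otimes m)$ bound on $k$ makes $f\mapsto\langle f,\mathcal{C}_m f\rangle$ continuous on $L^2(m,\Theta)$, so a truncation and density argument extends nonnegativity from bounded mean-zero $f$ to all of $L^2(m,\Theta)$.

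With positivity and the Hilbert--Schmidt property in hand, the spectral statement is immediate: a compact self-adjoint operator has a discrete real spectrum with $0$ the only possible accumulation point, and nonnegativity of the quadratic form forces every eigenvalue to satisfy $\lambda_i^{\mathcal{C}}\geq 0$, giving $\sigma(\mathcal{C}_m)\subset[0,\infty)$.

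I expect the main obstacle to be the positivity extension rather than the operator-theoretic packaging: one must construct admissible probability-measure perturbations with prescribed $L^2(m)$-density while preserving nonnegativity and unit mass, justify differentiating $\mrR$ twice along the segment (which relies on $\mrR\in\mathcal{C}^2$ together with the integrability of Assumption~\ref{ass:KLreg_assn}), and control the passage to the $L^2$-limit using the kernel bound. The remaining ingredients—symmetry of the second linear derivative and the Jensen estimate for the kernel norm—are routine.
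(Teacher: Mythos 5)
You cannot be matched against an internal proof here, because the paper does not contain one: Lemma~\ref{lem:Cmonotone} is imported verbatim from \citet{aminian2023mean} (Lemma~E.2), and the appendix explicitly states that these technical tools are borrowed from that reference. Judged on its own, your reconstruction is correct and is the natural argument: the bound $\|k\|_{L^2(m\otimes m)}\leq \big(\mathbb{E}_{\theta,\theta'}\mathbb{E}_{z\sim\nu}[(\tfrac{\delta^2\ell}{\delta m^2})^2]\big)^{1/2}$ via Jensen together with Assumption~\ref{ass:KLreg_assn}\eqref{ass:KLreg_D2loss_integrability} (valid since $m\in B(\nu)\subset\mathcal{P}_8(\Theta)$) gives the Hilbert--Schmidt property; positivity of the form follows from convexity of $\mrR(\cdot,\nu)$ along the density perturbation $m_\epsilon=(1+\epsilon f)m$; and the spectral claim is then immediate from compactness and self-adjointness. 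Two points deserve tightening. First, symmetry of $\frac{\delta^2\mrR}{\delta m^2}(m,\nu,\theta,\theta')$ in $(\theta,\theta')$ is not automatic from Definition~\ref{def:flatDerivative} together with the normalization convention; what the second-order expansion along segments actually identifies is the symmetrized kernel, so you should either assume the symmetric representative (as is standard in this literature) or note that only the symmetric part enters both the quadratic form and the operator, which suffices for the conclusion. Second, your differentiation of $\lambda\mapsto\mrR(m+\lambda(m_\epsilon-m),\nu)$ at $\lambda=0$ is legitimate without a one-sided limit argument, since for bounded $f$ the curve $(1+\lambda\epsilon f)m$ remains a probability measure for slightly negative $\lambda$, making $\lambda=0$ an interior point of the admissible segment. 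With those clarifications, your proof is a complete, self-contained substitute for the external citation, and the mean-zero reduction via the annihilation of constants plus the density/truncation step is exactly how one extends the form inequality to all of $L^2(m,\Theta)$.
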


\begin{lemma}\label{lem:Sdifferentiable}\citep[Lemma~E.3]{aminian2023mean}.
Define 
\begin{equation}\label{eq:Sdefn}
    S(\nu, \theta) :=\frac{\delta\mrR}{\delta m}(\mfm^{\beta}(\nu),\nu,\theta) = \int_\mcZ \frac{\delta \ell}{\delta m}(\mfm^{\beta}(\nu), z, \theta)\nu(\mrd z).
\end{equation}
    Under Assumption \ref{ass:KLreg_assn_SL}, we know $S$ is differentiable in $\nu$, and its derivative satisfies the bound
    \[\begin{split}&\int_\Theta\Big(\frac{\delta S}{\delta \nu}(\nu, \theta, z)\Big)^2 \mfm^{\beta}(\nu; \mrd \theta)\\& \leq\int_\Theta\Big( \frac{\delta\ell}{\delta m}(\mfm^{\beta}(\nu),\theta,z) - \int_{\mcZ}\frac{\delta\ell}{\delta m}(\mfm^{\beta}(\nu),\theta,z')\nu(\mrd z')\Big)^2 \mfm^{\beta}(\nu; \mrd \theta).\end{split}\]
     In particular, we have the representation:
    \[\begin{split}    \frac{\delta S}{\delta \nu}(\nu,\theta, z)&= \frac{\delta \ell}{\delta m}\big(\mfm^{\beta}(\nu),z,\theta\big) - \int_{\mcZ}\frac{\delta \ell}{\delta m}\big(\mfm^{\beta}(\nu),z',\theta\big)\nu(\mrd z') \\
    &\quad -\frac{2\beta^2}{\sigma^2}\mathrm{Cov}_{\theta'\sim \mfm^{\beta}(\nu)}\Big[ \int_{\mcZ}\frac{\delta^2 \ell}{\delta m^2}(\mfm^{\beta}(\nu), z', \theta, \theta')\nu(\mrd z') ,\, \frac{\delta S}{\delta\nu}(\nu,\theta',z)\Big].
\end{split}    \]
\end{lemma}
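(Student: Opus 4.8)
The plan is to exploit the explicit Gibbs structure of the regularized minimizer in \eqref{Eq: Gibbs measure map}, which after taking logarithms reads
\[
\log \mfm^{\beta}(\nu)(\theta) = -\log F_\beta - \tfrac{2\beta^2}{\sigma^2}S(\nu,\theta) - \tfrac{1}{\sigma^2}U(\theta),
\]
and to differentiate this identity in $\nu$. Writing $m := \mfm^{\beta}(\nu)$ and $h(\theta) := \frac{\delta S}{\delta \nu}(\nu,\theta,z)$, the flat-derivative chain rule applied to $S(\nu,\theta)=\int_\mcZ \frac{\delta\ell}{\delta m}(m,z',\theta)\nu(\mrd z')$ will turn the definition of $S$ into an implicit linear equation for $h$, from which both the representation and the $L^2$ bound follow. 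Flat-differentiability of $S$ in $\nu$ is inherited from the $\mathcal{C}^1$-regularity of the map $\nu\mapsto \mfm^{\beta}(\nu)$ recorded after \eqref{Eq: Gibbs measure map}, combined with Assumption~\ref{ass:KLreg_assn_SL} to differentiate under the integral.

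First I would compute $\frac{\delta \mfm^{\beta}}{\delta\nu}$. Differentiating the log-density identity gives $\frac{1}{m(\theta)}\frac{\delta \mfm^{\beta}}{\delta\nu}(\nu,\theta,z) = -\frac{\delta \log F_\beta}{\delta\nu}(\nu,z) - \frac{2\beta^2}{\sigma^2}h(\theta)$; integrating against $m(\mrd\theta)$ and using mass conservation $\int_\Theta \frac{\delta \mfm^{\beta}}{\delta\nu}(\nu,\theta,z)\,\mrd\theta = 0$ pins down the $\log F_\beta$ term and yields
\[
\frac{\delta \mfm^{\beta}}{\delta\nu}(\nu,\theta,z) = -\frac{2\beta^2}{\sigma^2}\, m(\theta)\big(h(\theta) - \bar h\big), \qquad \bar h := \mbE_{\theta\sim m}[h(\theta)].
\]
Differentiating $S$ then splits into a direct contribution $A(\theta) := \frac{\delta\ell}{\delta m}(m,z,\theta) - \int_\mcZ \frac{\delta\ell}{\delta m}(m,z',\theta)\nu(\mrd z')$ (the subtraction arising from the normalization of the flat derivative) and an indirect contribution $\int_\mcZ\int_\Theta \frac{\delta^2\ell}{\delta m^2}(m,z',\theta,\theta')\frac{\delta \mfm^{\beta}}{\delta\nu}(\nu,\theta',z)\,\mrd\theta'\,\nu(\mrd z')$. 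Substituting the display above and writing $\kappa(\theta,\theta'):=\frac{\delta^2\mrR}{\delta m^2}(m,\nu,\theta,\theta')=\int_\mcZ \frac{\delta^2\ell}{\delta m^2}(m,z',\theta,\theta')\nu(\mrd z')$ recasts the indirect term as $-\frac{2\beta^2}{\sigma^2}\,\mathrm{Cov}_{\theta'\sim m}\big[\kappa(\theta,\theta'),h(\theta')\big]$, which is exactly the claimed representation.

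For the bound, the key structural fact is the normalization of the second flat derivative, $\int_\Theta \frac{\delta^2\ell}{\delta m^2}(m,z',\theta,\theta')\,m(\mrd\theta')=0$, which integrated against $\nu$ gives $\mbE_{\theta'\sim m}[\kappa(\theta,\theta')]=0$. Consequently the covariance collapses to $\mathrm{Cov}_{\theta'\sim m}[\kappa(\theta,\theta'),h(\theta')]=\mbE_{\theta'\sim m}[\kappa(\theta,\theta')h(\theta')]=\mathcal{C}_m h(\theta)$, so the representation becomes the fixed-point equation $\big(I+\frac{2\beta^2}{\sigma^2}\mathcal{C}_m\big)h=A$ in $L^2(m,\Theta)$. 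Pairing with $h$ and using positivity of $\mathcal{C}_m$ (Lemma~\ref{lem:Cmonotone}) gives
\[
\|h\|_{L^2(m)}^2 = \langle A,h\rangle_{L^2(m)} - \frac{2\beta^2}{\sigma^2}\langle \mathcal{C}_m h,h\rangle_{L^2(m)} \le \langle A,h\rangle_{L^2(m)}\le \|A\|_{L^2(m)}\,\|h\|_{L^2(m)},
\]
whence $\|h\|_{L^2(m)}\le \|A\|_{L^2(m)}$; squaring this is precisely the asserted inequality, since $\int_\Theta A(\theta)^2 m(\mrd\theta)$ equals the right-hand side of the bound.

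The main obstacle is the analytic step rather than the algebra: rigorously establishing that $S$ is flat-differentiable in $\nu$, that the differentiation may pass under the integral and through the implicitly defined map $\mfm^{\beta}$, and that the resulting equation for $h$ has a solution. This is where Assumption~\ref{ass:KLreg_assn_SL} does the work—the confinement of $U$ together with Lemma~\ref{lem:optimizerinBnu} keeps $\mfm^{\beta}(\nu)$ in a moment-controlled set $B(\nu)\subset\mathcal{P}_8(\Theta)$, the Hilbert--Schmidt bound \eqref{eq: bounded second moment ass_SL} makes $\mathcal{C}_m$ a compact positive operator, and positivity of $\mathcal{C}_m$ makes $I+\frac{2\beta^2}{\sigma^2}\mathcal{C}_m$ boundedly invertible, which simultaneously yields existence of $h$ and powers the contraction used in the final estimate.
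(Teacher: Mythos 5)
Your proposal is correct and follows essentially the same route as the source: the paper does not prove this lemma itself but imports it from \citep[Lemma~E.3]{aminian2023mean}, whose argument is precisely your differentiation of the Gibbs fixed point \eqref{Eq: Gibbs measure map}, the normalization $\mbE_{\theta'\sim \mfm^{\beta}(\nu)}\big[\frac{\delta^2\mrR}{\delta m^2}(\mfm^{\beta}(\nu),\nu,\theta,\theta')\big]=0$ collapsing the covariance to $\mathcal{C}_m h$, and the resulting resolvent equation $\big(I+\tfrac{2\beta^2}{\sigma^2}\mathcal{C}_m\big)h=A$ combined with positivity of $\mathcal{C}_m$ (Lemma~\ref{lem:Cmonotone}) to get $\|h\|_{L^2(m)}\le\|A\|_{L^2(m)}$. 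The same mechanism is replayed in this paper's own fine-tuning analogue (Proposition~\ref{Prop_reccurence_estimatef}, where the inverse $\big(\mathrm{id}+\tfrac{2\beta^2}{\sigma^2}\mathcal{C}\big)^{-1}$ appears explicitly), and your intermediate formula for $\frac{\delta\mfm^{\beta}_\theta}{\delta\nu}$ coincides with Lemma~\ref{lem_parameter_measure_derivative}.
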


\begin{lemma}
\label{lem_parameter_measure_derivative}\citep[Lemma~E.4]{aminian2023mean}.
Under Assumption \ref{ass:KLreg_assn_SL}, the density of the Gibbs measure, that is, the map $\nu\mapsto \mfm_{\theta}^{\beta}(\nu)$, has derivative
\[\frac{\delta\mfm^{\beta}_\theta}{\delta \nu}(\nu, \theta, z)= -\frac{2\beta^2}{\sigma^2}\mfm^{\beta}_\theta(\nu;\theta)\Big( \frac{\delta S}{\delta\nu}(\nu,\theta,z) - \Big[\int_\Theta \mfm^{\beta}_\theta(\nu;\theta') \frac{\delta S}{\delta\nu}(\nu,\theta',z)\mrd\theta \Big]\Big).\]
In particular, for any $f\in L^2(\mrd \theta)$,
\[\int_\Theta f(\theta)\frac{\delta\mfm^{\beta}_\theta}{\delta \nu}(\nu, \theta, z)\mrd\theta= -\frac{2\beta^2}{\sigma^2}\mathrm{Cov}_{\theta\sim \mfm^{\beta}(\nu)}\Big[f(\theta),\,  \frac{\delta S}{\delta\nu}(\nu,\theta,z) \Big].\]

\end{lemma}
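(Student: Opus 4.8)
The plan is to differentiate the explicit Gibbs form of the minimizer directly. From the fixed-point equation \eqref{Eq: Gibbs measure map} I would write the density as
\[
\mfm^{\beta}_\theta(\nu;\theta) = \frac{1}{F_\beta(\nu)}\exp\Big\{-\tfrac{2\beta^2}{\sigma^2}S(\nu,\theta) - \tfrac{1}{\sigma^2}U(\theta)\Big\},
\]
where $S(\nu,\theta)=\frac{\delta\mrR}{\delta m}(\mfm^{\beta}(\nu),\nu,\theta)$ is the quantity studied in Lemma~\ref{lem:Sdifferentiable} and $F_\beta(\nu)$ is the normalizing constant. Since $U$ does not depend on $\nu$ and $S$ is differentiable in $\nu$ by Lemma~\ref{lem:Sdifferentiable}, the natural route is to take the functional derivative of $\log \mfm^{\beta}_\theta$ in the measure argument and then multiply back through by the density.

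First I would differentiate the log-density. Writing $\log \mfm^{\beta}_\theta(\nu;\theta) = -\frac{2\beta^2}{\sigma^2}S(\nu,\theta) - \frac{1}{\sigma^2}U(\theta) - \log F_\beta(\nu)$, the $U$-term drops under $\frac{\delta}{\delta\nu}$, giving
\[
\frac{1}{\mfm^{\beta}_\theta(\nu;\theta)}\frac{\delta\mfm^{\beta}_\theta}{\delta\nu}(\nu,\theta,z) = -\frac{2\beta^2}{\sigma^2}\frac{\delta S}{\delta\nu}(\nu,\theta,z) - \frac{\delta\log F_\beta}{\delta\nu}(\nu,z).
\]
Next I would handle the normalizer. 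Differentiating $F_\beta(\nu)=\int_\Theta\exp\{\cdots\}\mrd\theta'$ under the integral and dividing by $F_\beta$, the weight $\frac{1}{F_\beta}\exp\{\cdots\}$ collapses back into $\mfm^{\beta}_\theta(\nu;\theta')$, so that
\[
\frac{\delta\log F_\beta}{\delta\nu}(\nu,z) = -\frac{2\beta^2}{\sigma^2}\int_\Theta \mfm^{\beta}_\theta(\nu;\theta')\frac{\delta S}{\delta\nu}(\nu,\theta',z)\,\mrd\theta'.
\]
Substituting this back and multiplying through by $\mfm^{\beta}_\theta(\nu;\theta)$ yields exactly the claimed mean-subtracted expression for $\frac{\delta\mfm^{\beta}_\theta}{\delta\nu}$.

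For the second identity I would simply multiply the density-derivative formula by an arbitrary $f\in L^2(\mrd\theta)$ and integrate in $\theta$. The leading term produces $\mathbb{E}_{\theta\sim\mfm^{\beta}(\nu)}[f(\theta)\frac{\delta S}{\delta\nu}(\nu,\theta,z)]$, while the mean-subtraction term factorizes into $\mathbb{E}_{\theta\sim\mfm^{\beta}(\nu)}[f(\theta)]\,\mathbb{E}_{\theta'\sim\mfm^{\beta}(\nu)}[\frac{\delta S}{\delta\nu}(\nu,\theta',z)]$; their difference is precisely $\mathrm{Cov}_{\theta\sim\mfm^{\beta}(\nu)}[f(\theta),\frac{\delta S}{\delta\nu}(\nu,\theta,z)]$, which is the stated covariance representation.

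The main obstacle will be the analytic justification of differentiating under the integral defining $F_\beta$ and of the chain rule for $\log\mfm^{\beta}_\theta$ in the measure argument, rather than any combinatorial manipulation. This requires that $\frac{\delta S}{\delta\nu}$ exist and be suitably integrable against the Gibbs weight; both follow from Lemma~\ref{lem:Sdifferentiable}, whose $L^2(\mfm^{\beta}(\nu))$ bound on $\frac{\delta S}{\delta\nu}$, together with the confinement assumption on $U$ and the fact that $\mfm^{\beta}(\nu)\in B(\nu)$ from Lemma~\ref{lem:optimizerinBnu}, controls the tails of the exponential family uniformly along the linear interpolation and permits the interchange of derivative and integral. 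Everything else is a direct application of the linear functional derivative of Definition~\ref{def:flatDerivative}.
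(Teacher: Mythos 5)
The paper does not actually prove this lemma in-text --- it is imported verbatim from \citet[Lemma~E.4]{aminian2023mean} --- and your derivation reproduces exactly the standard argument behind it: differentiate the log of the Gibbs density from \eqref{Eq: Gibbs measure map}, compute $\frac{\delta \log F_\beta}{\delta \nu}$ by differentiating under the integral so that the normalizer collapses into the mean-subtraction term, and obtain the covariance identity by integrating the density-derivative formula against $f$. You also correctly route the only genuinely delicate point --- that $S(\nu,\theta)=\frac{\delta \mrR}{\delta m}(\mfm^{\beta}(\nu),\nu,\theta)$ depends on $\nu$ both directly and through the fixed point $\mfm^{\beta}(\nu)$ --- to Lemma~\ref{lem:Sdifferentiable}, which is precisely where that implicit-function difficulty is absorbed, and your integrability justification via the confinement condition on $U$ and Lemma~\ref{lem:optimizerinBnu} matches how the assumptions are designed to be used.
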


\subsection{\texorpdfstring{$\alpha$}{alpha}-ERM details and proofs}\label{app: alpha erm}

Note that under Assumption~\ref{ass:KLreg_assn}, Assumption~\ref{ass:KLreg_assn_SL} also holds. Therefore, the results in Lemma~\ref{lem_parameter_measure_derivative}, Lemma~\ref{lem:optimizerinBnu}, Lemma~\ref{lem:Cmonotone}
and Lemma~\ref{lem:Sdifferentiable} also hold.

\begin{proposition}\label{Prop_reccurence_estimate} 
Suppose Assumption \ref{ass:KLreg_assn} holds. Then
\[\begin{split}&\int_{\Theta}\Big[\frac{\delta S}{\delta\nu}(\nu,\theta, z)-\frac{\delta S}{\delta\nu}(\nu,\theta, z')\Big]^2\mfm^{\beta}(\nu; \mrd \theta) \leq 6L_e^2\Big(1+2\mathbb{E}_{\theta\sim \mfm^{\beta}(\nu)}\big[\|\theta\|^8\big]\Big)\Big(2+\|z\|^2+\|z'\|^2\Big)^2,
\end{split}\]
where $S(\nu,\delta)$ is defined in \eqref{eq:Sdefn}.
\end{proposition}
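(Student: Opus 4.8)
The plan is to reduce the claim to the contraction estimate already isolated in Lemma~\ref{lem:Sdifferentiable} and then bound the resulting right-hand side by elementary means using the growth condition on $\frac{\delta \ell}{\delta m}$ from Assumption~\ref{ass:KLreg_assn}(\ref{ass:derivative_growth_bound}). Writing $m=\mfm^{\beta}(\nu)$ for brevity, two observations drive the argument: (i) in the representation of $\frac{\delta S}{\delta \nu}$ provided by Lemma~\ref{lem:Sdifferentiable}, the centering term $\int_{\mcZ}\frac{\delta \ell}{\delta m}(m,z'',\theta)\nu(\mrd z'')$ does not depend on the perturbation point, so it cancels when we subtract the representation at $z'$ from that at $z$; and (ii) the fixed-point operator appearing there is a positive perturbation of the identity, by Lemma~\ref{lem:Cmonotone}.

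First I would subtract the two representations. Setting $\Delta(\theta):=\frac{\delta S}{\delta\nu}(\nu,\theta,z)-\frac{\delta S}{\delta\nu}(\nu,\theta,z')$, the cancellation in (i) shows that $\Delta$ solves the same linear fixed-point equation as in the proof of Lemma~\ref{lem:Sdifferentiable}, of the form $(I+\tfrac{2\beta^2}{\sigma^2}\mathcal{C}_{m})\Delta=q$, with forcing term $q(\theta):=\frac{\delta \ell}{\delta m}(m,z,\theta)-\frac{\delta \ell}{\delta m}(m,z',\theta)$ and $\mathcal{C}_m$ the operator of Lemma~\ref{lem:Cmonotone}. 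Since $\mathcal{C}_m$ is positive, $I+\tfrac{2\beta^2}{\sigma^2}\mathcal{C}_m$ has inverse of operator norm at most one on $L^2(m)$, whence
\[\int_{\Theta}\Delta(\theta)^2\, m(\mrd\theta)\le \int_{\Theta}\Big[\frac{\delta \ell}{\delta m}(m,z,\theta)-\frac{\delta \ell}{\delta m}(m,z',\theta)\Big]^2 m(\mrd\theta).\]
This is the analogue of the Lemma~\ref{lem:Sdifferentiable} bound, but with no centering on the right, precisely because the centering term was perturbation-independent.

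It then remains to estimate the right-hand side. Using $(A-B)^2\le 2(A^2+B^2)$ together with the bound $|\frac{\delta \ell}{\delta m}(m,z,\theta)|\le L_e(1+\mbE_{\theta\sim m}[\|\theta\|^4]+\|\theta\|^4)(1+\|z\|^2)$ and $(1+\|z\|^2)^2+(1+\|z'\|^2)^2\le (2+\|z\|^2+\|z'\|^2)^2$, the integrand factorizes into a data part $(2+\|z\|^2+\|z'\|^2)^2$ and a parameter part $2L_e^2(1+\mbE_{\theta\sim m}[\|\theta\|^4]+\|\theta\|^4)^2$. For the parameter part I would apply $(a+b+c)^2\le 3(a^2+b^2+c^2)$ and integrate, using Jensen's inequality $(\mbE_{\theta\sim m}[\|\theta\|^4])^2\le \mbE_{\theta\sim m}[\|\theta\|^8]$, to obtain $\int_{\Theta}(1+\mbE_{\theta\sim m}[\|\theta\|^4]+\|\theta\|^4)^2 m(\mrd\theta)\le 3(1+2\mbE_{\theta\sim m}[\|\theta\|^8])$. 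Collecting the constants $2L_e^2\cdot 3=6L_e^2$ gives exactly the stated bound.

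The only genuinely delicate point is Step~1: one must verify that taking the difference truly reproduces the fixed-point equation of Lemma~\ref{lem:Sdifferentiable} with the centering term removed, and that the positivity of $\mathcal{C}_m$ transfers the contraction $\|(I+\tfrac{2\beta^2}{\sigma^2}\mathcal{C}_m)^{-1}\|_{\mathrm{op}}\le 1$ to $\Delta$. Once this is granted, the remainder is a routine application of the moment growth bounds, and the appearance of the eighth moment $\mbE_{\theta\sim m}[\|\theta\|^8]$ is forced by squaring the fourth-order growth of $\frac{\delta\ell}{\delta m}$.
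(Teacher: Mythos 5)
Your proposal is correct and follows essentially the same route as the paper's own proof: the paper likewise uses the positivity of $\mathcal{C}_{m}$ (Lemma~\ref{lem:Cmonotone}) to bound the difference of $\frac{\delta S}{\delta\nu}$ at $z$ and $z'$ in $L^2(\mfm^{\beta}(\nu))$ by the difference of the loss derivatives, and then applies the same elementary steps --- $(A-B)^2\le 2(A^2+B^2)$, the growth bound of Assumption~\ref{ass:KLreg_assn}(\ref{ass:derivative_growth_bound}), $(1+\|z\|^2)^2+(1+\|z'\|^2)^2\le(2+\|z\|^2+\|z'\|^2)^2$, and Jensen --- to collect the constant $6L_e^2$. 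The explicit fixed-point identity $(I+\tfrac{2\beta^2}{\sigma^2}\mathcal{C}_m)\Delta=q$ with the centering term cancelled, which you flag as the delicate point, is exactly what the paper leaves implicit here (it is written out explicitly in the fine-tuning analogue, Proposition~\ref{Prop_reccurence_estimatef}), so your write-up simply makes the paper's compressed first inequality precise.
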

\begin{proof}[Proof of Proposition~\ref{Prop_reccurence_estimate}]

We know from Lemma~\ref{lem:Cmonotone} that 
$\frac{\delta S}{\delta\nu}(\nu,\theta, z)-\frac{\delta S}{\delta\nu}(\nu,\theta, z')$
is bounded in $L^2(\mfm^{\beta}(\nu))$, in particular
\[\begin{split}
        &\int_{\Theta}\Big[\frac{\delta S}{\delta\nu}(\nu,\theta, z)-\frac{\delta S}{\delta\nu}(\nu,\theta, z')\Big]^2\mfm^{\beta}(\nu; \mrd \theta) \\
    &\leq \int_{\Theta}\Big[\frac{\delta \ell}{\delta m}(\mfm^{\beta}(\nu),\theta, z) - \frac{\delta \ell}{\delta m}(\mfm^{\beta}(\nu),\theta, z')\Big]^2\mfm^{\beta}(\nu; \mrd \theta)\\
    &\leq 2\int_{\Theta}\Big[\frac{\delta \ell}{\delta m}(\mfm^{\beta}(\nu),\theta, z)\Big]^2 + \Big[\frac{\delta \ell}{\delta m}(\mfm^{\beta}(\nu),\theta, z')\Big]^2\mfm^{\beta}(\nu; \mrd \theta).
\end{split}\]

Assumption \ref{ass:KLreg_assn}(\ref{ass:derivative_growth_bound}) yields
\[\begin{split}&\int_{\Theta}\Big[\frac{\delta S}{\delta\nu}(\nu,\theta, z)-\frac{\delta S}{\delta\nu}(\nu,\theta, z')\Big]^2\mfm^{\beta}(\nu; \mrd \theta) \\
    &\leq 2\Big(2+\|z\|^2+\|z'\|^2\Big)^2\int_{\Theta}\big[1+\mbE_{\theta\sim \mfm^{\beta}(\nu; \mrd \theta)}[\|\theta\|^4]+\|\theta\|^4\big]^2\mfm^{\beta}(\nu; \mrd \theta)\\
    &\leq 6\Big(2+\|z\|^2+\|z'\|^2\Big)^2\int_{\Theta}\big[1+\mbE_{\theta\sim \mfm^{\beta}(\nu; \mrd \theta)}[\|\theta\|^4]^2+\|\theta\|^8\big]\mfm^{\beta}(\nu; \mrd \theta)\\
    &\leq6\Big(2+\|z\|^2+\|z'\|^2\Big)^2\big[1+2\mbE_{\theta\sim \mfm^{\beta}(\nu; \mrd \theta)}[\|\theta\|^8]\big].
\end{split}\]

\end{proof}

\begin{tcolorbox}
 \begin{reptheorem}{thm: WTGE alpha erm}[\textbf{Full Version}]
    Given Assumption \ref{ass:KLreg_assn}, $\mathbb{E}_{Z\sim\nupop^t}\big[\|Z\|^{8}\big]<\infty$ and $\mathbb{E}_{Z\sim\nupop^s}\big[\|Z\|^{4}\big]<\infty$, the weak transfer generalization error for the $\alpha$-ERM satisfies 
\[\begin{split}  |\mathrm{gen}( \mfm^{\beta}(\nualpha),\nupop^t)|&\leq \sqrt{2} L_e^2 (1+\alpha L_m)^2(1+\frac{2}{n_t})^2 \comp(\theta)
 \\
 & \times \Bigg[2 (2+\alpha)^2\mathbb{E}_{Z_1^t}\Big[(1+\|Z_1^t\|^2)^4\Big]\\&\qquad+ 2 (1-\alpha)^2\mathbb{E}_{Z_1^t}\Big[(1+\|Z_1^t\|^2)^2\Big]\mathbb{E}_{Z_1^s}\Big[(1+\|Z_1^s\|^2)^2\Big]\Bigg],\end{split}\]
 where $\comp(\theta)=\big[1+ 2 \int_{\Theta}\|\theta\|^8\tilgammap(\mrd \theta) +2\mbE_{\theta\sim\tilgammap}[\|\theta\|^4]\big]^2.$
 \end{reptheorem}
\end{tcolorbox}
\begin{proof}[Proof of Theorem \ref{thm: WTGE alpha erm}]

Recall from \eqref{mperturb1} that $\nu_{(1)}(\lambda)=\nu_{n,(1)} +\lambda(\nu_{n}-\nu_{\alpha,(1)})$. From Theorem~\ref{thm: another rep WTGE-alpha-scenario},
\begin{equation}
    \label{WGEKLbasic}
    \mathrm{gen}(\mfm_{\alpha}(\nualpha),\nupop^t)=\frac{1}{n_t}\mathbb{E}_{\mbZnt,\mbZns,\ols{Z}_1^t}\big[ h(\mbZnt
    ,\mbZns,\ols{Z}_1^t)  \big],\end{equation}
where
\begin{equation}
\label{eq:hrepeat}
    \begin{split}
       & h(\mbZnt,\mbZns,\ols{Z}_1^t)  \\&=\int_{0}^1\int_{0}^1\int_{\mcZ}\Big(\int_{\Theta}  \frac{\delta \ell}{\delta m}\big(\mfm_{(1)}(\lambda),\ols{Z}_1^t,\theta\big)\frac{\delta \mfm}{\delta \nu}\big(\nu_{\alpha,(1)}(\tilde{\lambda}),z\big)(\mrd \theta)\Big)\,\big(\delta_{Z_1^t}-\delta_{\tilde Z_1^t}\big)(\mrd z) \mrd \tilde{\lambda}\,\mrd \lambda,
    \end{split}
\end{equation}
As $\mathbb{E}_{\theta\sim \mfm^{\beta, \sigma}(\nu_{\alpha,(1)}(\tilde\lambda))}\big[\frac{\delta S}{\delta \nu}\big(\nu_{\alpha,(1)}(\tilde\lambda), z, \theta\big)\big]\equiv 0$ (by definition of $S= \delta \mrR/\delta m$ and the normalization condition and chain rule), from Lemma~\ref{lem_parameter_measure_derivative} we know 
\begin{equation}\label{WGEKLexp1}
    \begin{split}
    &\int_{\Theta} \Big( \frac{\delta \ell}{\delta m}\big(\mfm_{(1)}(\lambda),Z_1^t,\theta\big)\Big)\Big(\frac{\delta \mfm}{\delta \nu}\big(\nu_{\alpha,(1)}(\tilde{\lambda}),Z_1^t\big)-\frac{\delta \mfm}{\delta \nu}\big(\nu_{\alpha,(1)}(\tilde{\lambda}),\ols{Z}_1^t\big)\Big)(\mrd \theta)\\
 & = -\frac{2\beta^2}{\sigma^2}\mathrm{Cov}_{\theta\sim \mfm^{\beta, \sigma}(\nu_{\alpha,(1)}(\tilde\lambda))}\Big[ \frac{\delta \ell}{\delta m}\big(\mfm_{(1)}(\lambda),Z_1^t,\theta\big),\, \frac{\delta S}{\delta \nu}\big(\nu_{\alpha,(1)}(\tilde\lambda), Z_1^t, \theta\big)-\frac{\delta S}{\delta \nu}\big(\nu_{\alpha,(1)}(\tilde\lambda), \tilde Z_1^t, \theta\big)\Big]\\
 &\leq \frac{2\beta^2}{\sigma^2}\mathbb{E}_{\theta\sim \mfm^{\beta, \sigma}(\nu_{\alpha,(1)}(\tilde\lambda))}\Big[\Big(\frac{\delta \ell}{\delta m}\big(\mfm_{(1)}(\lambda),Z_1^t,\theta\big)\Big)^2\Big]^{1/2}\\
 &\qquad \times \mathbb{E}_{\theta\sim \mfm^{\beta, \sigma}(\nu_{\alpha,(1)}(\tilde\lambda))}\Big[\Big(\frac{\delta S}{\delta \nu}\big(\nu_{\alpha,(1)}(\tilde\lambda), Z_1^t, \theta\big)-\frac{\delta S}{\delta \nu}\big(\nu_{\alpha,(1)}(\tilde\lambda), \tilde Z_1^t, \theta\big)\Big)^2\Big]^{1/2}.
\end{split}
\end{equation}
Using Proposition \ref{Prop_reccurence_estimate} and \eqref{eq: bounded g ass},
\begin{equation}
    \begin{split}
    &\Big[\mathbb{E}_{\theta\sim \mfm^{\beta, \sigma}(\nu_{\alpha,(1)}(\tilde\lambda))}\Big(\Big|\frac{\delta S}{\delta \nu}\big(\nu_{\alpha,(1)}(\tilde\lambda), Z_1^t, \theta\big)-\frac{\delta S}{\delta \nu}\big(\nu_{\alpha,(1)}(\tilde\lambda), \tilde Z_1^t, \theta\big)\Big|^2\Big)\Big]^{1/2}\\
    &\leq L_e \Big(\int_{\Theta}\big[1+\mathbb{E}_{\theta \sim \mfm^{\beta, \sigma}(\nu_{\alpha,(1)}(\tilde\lambda))}\big[\|\theta\|^4\big]+\|\theta\|^4\big]^2\mfm^{\beta, \sigma}(\nu_{\alpha,(1)}(\tilde\lambda); \mrd \theta)\Big)^{1/2}\Big(2+\|Z_1^t\|^2+\|\ols{Z}_1^t\|^2\Big)\\
    &\leq \sqrt{2}L_e\Big[1+2\mathbb{E}_{\theta \sim \mfm^{\beta, \sigma}(\nu_{\alpha,(1)}(\tilde\lambda))}\big[\|\theta\|^8\big]\Big]\Big(2+\|Z_1^t\|^2+\|\ols{Z}_1^t\|^2\Big).
\end{split}\end{equation}

From Lemma \ref{lem:optimizerinBnu}, we know 
\[\mathbb{E}_{\theta \sim \mfm^{\beta, \sigma}(\nu_{\alpha,(1)}(\tilde\lambda))}\big[\|\theta\|^8\big]\leq \mrR(\tilgammap, \nu_{\alpha,(1)}(\tilde\lambda)) + \int_{\Theta}\|\theta\|^8\tilgammap(\mrd \theta).\]
By construction,
\[\nu_{\alpha,(1)}(\tilde\lambda) = \nualpha +\alpha\frac{1-\tilde \lambda}{n_t}\big(\delta_{\ols{Z}_1^t} - \delta_{Z_1^t}\big),\]
so 
\begin{equation}\label{eq:Rboundperturb1}
    \begin{split}
    \mrR(\tilgammap, \nu_{\alpha,(1)}(\tilde\lambda)) &= \mrR(\tilgammap, \nualpha) + \alpha\frac{1-\tilde \lambda}{n_t}\Big(\ell(\tilgammap, \ols Z_1^t) - \ell(\tilgammap, Z_1^t)\Big)\\
    &\leq \mrR(\tilgammap, \nualpha) + g(\tilgammap)\alpha\frac{1-\tilde \lambda}{n_t}\Big(2+\|\ols{Z}_1^t\|^2+ \|Z_1^t\|^2)\Big),
\end{split}
\end{equation}
where $g(\tilgammap)=L_m(1+\mbE_{\theta\sim \tilgammap}[\|\theta\|^4])$,
and hence 
\begin{equation}
    \label{WGEKLexp2}
    \begin{split}
    &\mathbb{E}_{\theta\sim \mfm^{\beta, \sigma}(\nu_{\alpha,(1)}(\tilde\lambda))}\Big[\Big|\frac{\delta S}{\delta \nu}\big(\nu_{\alpha,(1)}(\tilde\lambda), Z_1^t, \theta\big)-\frac{\delta S}{\delta \nu}\big(\nu_{\alpha,(1)}(\tilde\lambda), \tilde Z_1^t, \theta\big)\Big|^2\Big]^{1/2}\\
    &\leq \sqrt{2} L_e\Big(1+ 2\mrR(\tilgammap, \nualpha) +2\int_{\Theta}\|\theta\|^8\tilgammap(\mrd \theta)+2g(\tilgammap)\alpha\frac{1-\tilde \lambda}{n_t}\Big(2+ \|Z_1^t\|^2+\|\ols{Z}_1^t\|^2\Big)\Big)\\&\quad \times\Big(2+\|Z_1^t\|^2+\|\ols{Z}_1^t\|^2\Big).
\end{split}\end{equation}

We also know 
\begin{equation}
    \label{WGEKLexp3}
    \begin{split}
    &\mathbb{E}_{\theta\sim \mfm^{\beta, \sigma}(\nu_{\alpha,(1)}(\tilde\lambda))}\Big[\Big(\frac{\delta \ell}{\delta m}\big(\mfm_{(1)}(\lambda),Z_1^t,\theta\big)\Big)^2\Big]^{1/2}\\
    &\leq \mathbb{E}_{\theta\sim \mfm^{\beta, \sigma}(\nu_{\alpha,(1)}(\tilde\lambda))}\Big[g_{\mathrm{e}}(\mfm_{(1)}(\lambda),\theta)^2\Big]^{1/2} \big(1+\|Z_1^t\|^2\big)\\
    &\leq \sqrt{2}L_e \Big(1+ \mathbb{E}_{\theta\sim \mfm^{\beta, \sigma}(\nu_{\alpha,(1)}(\tilde\lambda))}\big[\|\theta\|^8\big]+\mathbb{E}_{\theta\sim \mfm_{(1)}(\lambda)}\big[\|\theta\|^8\big]\Big)\big(1+\|Z_1^t\|^2\big).
    \end{split}
\end{equation}
By definition, $\mfm_{(1)}(\lambda) = \mfm(\nu_{\alpha,(1)}) + \lambda(\mfm_{\alpha}(\nualpha) - \mfm(\nu_{\alpha,(1)}))$, and hence
\[\mathbb{E}_{\theta\sim \mfm_{(1)}(\lambda)}\big[\|\theta\|^8\big] = (1-\lambda)\mathbb{E}_{\theta\sim \mfm(\nu_{\alpha,(1)})}\big[\|\theta\|^8\big]+ \lambda\mathbb{E}_{\theta\sim \mfm_{\alpha}(\nualpha)}\big[\|\theta\|^8\big].\]
In a similar approach to \eqref{eq:Rboundperturb1}, we have,
\begin{equation}
    \begin{split}
    \mrR(\tilgammap, \nu_{\alpha,(1)}) &= \mrR(\tilgammap, \nualpha) + \frac{\alpha}{n_t}\Big(\ell(\tilgammap, \ols Z_1^t) - \ell(\tilgammap, Z_1^t)\Big)\\
    &\leq \mrR(\tilgammap, \nualpha) + g(\tilgammap)\frac{\alpha}{n_t}\Big(2+\|\ols{Z}_1^t\|^2+ \|Z_1^t\|^2)\Big).
\end{split}
\end{equation} 
As before, it follows from Lemma \ref{lem:optimizerinBnu} that
\begin{equation}
\label{momentbound2}\begin{split}
&\mathbb{E}_{\theta\sim \mfm^{\beta, \sigma}(\nu_{\alpha,(1)}(\tilde\lambda))}\big[\|\theta\|^8\big]+\mathbb{E}_{\theta\sim \mfm_{(1)}(\lambda)}\big[\|\theta\|^8\big]\\
& = \mathbb{E}_{\theta\sim \mfm^{\beta, \sigma}(\nu_{\alpha,(1)}(\tilde\lambda))}\big[\|\theta\|^8\big]+(1-\lambda)\mathbb{E}_{\theta\sim \mfm(\nu_{\alpha,(1)})}\big[\|\theta\|^8\big]+ \lambda\mathbb{E}_{\theta\sim \mfm_{\alpha}(\nualpha)}\big[\|\theta\|^8\big]\\
&\leq \Big[\mrR(\tilgammap, \nu_{\alpha,(1)}(\tilde\lambda)) + \int_{\Theta}\|\theta\|^8\tilgammap(\mrd \theta) \Big] + (1-\lambda)\Big[\mrR(\tilgammap, \nu_{\alpha,(1)}) + \int_{\Theta}\|\theta\|^8\tilgammap(\mrd \theta) \Big] \\
& \qquad + \lambda\Big[\mrR(\tilgammap, \nualpha) + \int_{\Theta}\|\theta\|^8\tilgammap(\mrd \theta) \Big]\\
    &\leq 2\mrR(\tilgammap, \nualpha) + 2\int_{\Theta}\|\theta\|^8\tilgammap(\mrd \theta) + g(\tilgammap)\alpha\frac{(2-\tilde \lambda-\lambda)}{n_t}\Big(2+\|\ols{Z}_1^t\|^2+ \|Z_1^t\|^2\Big).
\end{split} 
\end{equation}

Therefore, substituting \eqref{WGEKLexp1}, \eqref{WGEKLexp2}, \eqref{WGEKLexp3} and \eqref{momentbound2} into \eqref{eq:hrepeat} and simplifying, we have
 \begin{equation}\label{WGEKLexp4}
 \begin{split}
        &h(\mbZnt,\mbZns,\ols{Z}_1^t)   \\
        &\leq \frac{2\beta^2}{\sigma^2} L_e  \Big[1+2\mrR(\tilgammap, \nualpha) + 2 \int_{\Theta}\|\theta\|^8\tilgammap(\mrd \theta) + 2\frac{\alpha g(\tilgammap)}{n_t}\Big(2+\|\ols{Z}_1^t\|^2+ \|Z_1^t\|^2\Big)\Big]\big(1+\|Z_1^t\|^2\big)\\
        &\qquad \times \sqrt{2} L_e\Big[1+ 2\mrR(\tilgammap, \nualpha)+ 2\int_{\Theta}\|\theta\|^8\tilgammap(\mrd \theta) + 2\frac{\alpha g(\tilgammap)}{n_t}\Big(2+ \|Z_1^t\|^2+\|\ols{Z}_1^t\|^2\Big)\Big]\Big(2+\|Z_1^t\|^2+\|\ols{Z}_1^t\|^2\Big)\\
        &\leq \sqrt{2}\frac{2\beta^2}{\sigma^2} L_e^2  \Big[1+2\mrR(\tilgammap, \nualpha) + 2 \int_{\Theta}\|\theta\|^8\tilgammap(\mrd \theta) + 2\frac{\alpha g(\tilgammap)}{n_t}\Big(2+\|Z_1^t\|^2+\|\ols{Z}_1^t\|^2\Big)\Big]^2\\
        &\qquad \times \big(1+\|Z_1^t\|^2\big)\Big(2+\|Z_1^t\|^2+\|\ols{Z}_1^t\|^2\Big).
    \end{split}
    \end{equation}
We have the bound
\begin{equation}\label{eq:Rboundsum}
\mrR(\tilgammap, \nualpha) \leq \int_\mcZ g(\tilgammap) (1+\|z\|^2)\nualpha(\mrd{z}) = g(\tilgammap)\Big(\frac{\alpha}{n_t}\sum_{j=1}^{n_t}(1+\|Z_j^t\|^2)+\frac{(1-\alpha)}{n_s}\sum_{j=1}^{n_s}(1+\|Z_j^s\|^2)\Big).
\end{equation}

Therefore, using the inequality $(a+b(c+d))\leq  (a+cb)(c+d)/c$ for $a,b,c,d\ge 0$,
\begin{equation}\label{hexpansionfinal1} \begin{split}
        &h(\mbZnt,\mbZns,\ols{Z}_1^t)   \\
        &\leq   \Big[1+ 2 \int_{\Theta}\|\theta\|^8\tilgammap(\mrd \theta) + 2\frac{g(\tilgammap)}{n_t}\Big(2+\|Z_1^t\|^2+\|\ols{Z}_1^t\|^2+\frac{\alpha}{n_t}\sum_{j=1}^{n_t}(1+\|Z_j^t\|^2)+\frac{(1-\alpha)}{n_s}\sum_{j=1}^{n_s}(1+\|Z_j^s\|^2)\Big)\Big]^2\\
        &\qquad \times \sqrt{2}\frac{2\beta^2}{\sigma^2} L_e^2\big(1+\|Z_1^t\|^2\big)\Big(2+\|Z_1^t\|^2+\|\ols{Z}_1^t\|^2\Big)\\
        &\leq \frac{\sqrt{2}}{2}\frac{\beta^2}{\sigma^2} L_e^2  \Big[1+ 2 \int_{\Theta}\|\theta\|^8\tilgammap(\mrd \theta) + 4\frac{\alpha g(\tilgammap)}{n_t}\Big]^2\\
        &\quad \times \Big(2+\|Z_1^t\|^2+\|\ols{Z}_1^t\|^2+\frac{\alpha}{n_t}\sum_{j=1}^{n_t}(1+\|Z_j^t\|^2)+\frac{(1-\alpha)}{n_s}\sum_{j=1}^{n_s}(1+\|Z_j^s\|^2)\Big)^2\big(1+\|Z_1^t\|^2\big)\Big(2+\|Z_1^t\|^2+\|\ols{Z}_1^t\|^2\Big).
        \end{split}
        \end{equation}

Applying Lemma \ref{lem:polybound} we have 
\begin{equation}\label{polyexpansion1}\begin{split}
&\mathbb{E}_{\mbZnt,\mbZns, \ols{\mathbf{Z}}_n}\Big[\Big(2+\|Z_1^t\|^2+\|\ols{Z}_1^t\|^2+\frac{\alpha}{n_t}\sum_{j=1}^{n_t}(1+\|Z_j^t\|^2)+\frac{(1-\alpha)}{n_s}\sum_{j=1}^{n_s}(1+\|Z_j^s\|^2)\Big)^2\\
&\qquad \qquad \times \big(1+\|Z_1^t\|^2\big)\Big(2+\|Z_1^t\|^2+\|\ols{Z}_1^t\|^2\Big)\Big]\\
&\leq \mathbb{E}_{Z_1^t,Z_1^s}\Big[\Big(2+\|Z_1^t\|^2+\|Z_1^t\|^2+\alpha(1+\|Z_1^t\|^2)+(1-\alpha)(1+\|Z_1^s\|^2)\Big)^2\big(1+\|Z_1^t\|^2\big)\Big(2+\|Z_1^t\|^2+\|Z_1^t\|^2\Big)\Big]\\
&\leq 2 (2+\alpha)^2\mathbb{E}_{Z_1^t}\Big[(1+\|Z_1^t\|^2)^4\Big]+ 2 (1-\alpha)^2\mathbb{E}_{Z_1^t}\Big[(1+\|Z_1^t\|^2)^2\Big]\mathbb{E}_{Z_1^s}\Big[(1+\|Z_1^s\|^2)^2\Big].
\end{split}
\end{equation}
And therefore, substituting \eqref{polyexpansion1} and \eqref{hexpansionfinal1} in \eqref{WGEKLbasic},
\begin{equation}\label{WGEKLexp5}
\begin{split}
 |\mathrm{gen}(\mfm^{\beta}(\nualpha),\nupop^t)|
 &\leq \frac{\alpha}{n_t} \frac{\sqrt{2}}{2}\frac{\beta^2}{\sigma^2} L_e^2  \Big[1+ 2 \int_{\Theta}\|\theta\|^8\tilgammap(\mrd \theta) + 4\frac{\alpha g(\tilgammap)}{n_t}\Big]^2 
 \\
 &\quad \times \Bigg[2 (2+\alpha)^2\mathbb{E}_{Z_1^t}\Big[(1+\|Z_1^t\|^2)^4\Big]+ 2 (1-\alpha)^2\mathbb{E}_{Z_1^t}\Big[(1+\|Z_1^t\|^2)^2\Big]\mathbb{E}_{Z_1^s}\Big[(1+\|Z_1^s\|^2)^2\Big]\Bigg]\\
 &=\frac{\alpha}{n_t} \frac{\sqrt{2}}{2}\frac{\beta^2}{\sigma^2} L_e^2  \Big[1+ 2 \int_{\Theta}\|\theta\|^8\tilgammap(\mrd \theta) + 4\frac{\alpha L_m(1+\mbE_{\theta\sim\tilgammap}[\|\theta\|^4])}{n_t}\Big]^2 
 \\
 & \quad\times \Bigg[2 (2+\alpha)^2\mathbb{E}_{Z_1^t}\Big[(1+\|Z_1^t\|^2)^4\Big]+ 2 (1-\alpha)^2\mathbb{E}_{Z_1^t}\Big[(1+\|Z_1^t\|^2)^2\Big]\mathbb{E}_{Z_1^s}\Big[(1+\|Z_1^s\|^2)^2\Big]\Bigg]\\
 &\leq \frac{\alpha}{n_t} 2\sqrt{2}\frac{\beta^2}{\sigma^2} L_e^2 (1+\alpha L_m)^2(1+\frac{2}{n_t})^2  \Big[1+ 2 \int_{\Theta}\|\theta\|^8\tilgammap(\mrd \theta) +2\mbE_{\theta\sim\tilgammap}[\|\theta\|^4]\Big]^2 
 \\
 &\quad \times \Bigg[2 (2+\alpha)^2\mathbb{E}_{Z_1^t}\Big[(1+\|Z_1^t\|^2)^4\Big]+ 2 (1-\alpha)^2\mathbb{E}_{Z_1^t}\Big[(1+\|Z_1^t\|^2)^2\Big]\mathbb{E}_{Z_1^s}\Big[(1+\|Z_1^s\|^2)^2\Big]\Bigg].
\end{split}
\end{equation}

\end{proof}
\begin{tcolorbox}
\begin{reptheorem}{thm: population alpha erm}[\textbf{Full version}]
      Under the same assumptions in Theorem~\ref{thm: another rep WTGE-alpha-scenario}, the following upper bound holds on expected target population risk,
    \[\begin{split}
    &\mathcal{E}\big(\mdens^{\beta}(\nualpha),\nupop^t \big)\\
&\leq\frac{c_t\alpha}{n_t}\frac{2\beta^2}{\sigma^2}\comp(\theta) \Bigg[ (2+\alpha)^2\mathbb{E}_{Z_1^t}\Big[(1+\|Z_1^t\|^2)^4\Big]\\&\qquad\qquad+  (1-\alpha)^2\mathbb{E}_{Z_1^t}\Big[(1+\|Z_1^t\|^2)^2\Big]\mathbb{E}_{Z_1^s}\Big[(1+\|Z_1^s\|^2)^2\Big]\Bigg]\\
    &\quad+ \frac{c_s(1-\alpha)}{n_s}\frac{2\beta^2}{\sigma^2} \comp(\theta)\Bigg[ (3-\alpha)^2\mathbb{E}_{Z_1^s}\Big[(1+\|Z_1^s\|^2)^4\Big]\\&\qquad\qquad+  \alpha^2\mathbb{E}_{Z_1^t}\Big[(1+\|Z_1^t\|^2)^2\Big]\mathbb{E}_{Z_1^s}\Big[(1+\|Z_1^s\|^2)^2\Big]\Bigg]\\
    &\quad+6L_m(1+L_m)(1-\alpha)\Disnd(\nupop^s,\nupop^t)\\
    &\qquad\times \Big(1+\int_{\Theta}\|\theta\|^4\bar{m}_{\alpha}(\mrd\theta) \Big)\Big(1+\int_\Theta \|\theta\|^4\tilgammap(\mrd\theta)\Big)\\&\qquad\times\Big(1+\alpha \mathbb{E}_{Z_1^t}[(1+\|Z_1^t \|^2)]+(1-\alpha) \mathbb{E}_{Z_1^s}[(1+\|Z_1^s \|^2)]\Big) \\
    &\quad+  \frac{\sigma^2}{2\beta^2} \KLr(\bar{m}_\alpha\|\gamma^{\sigma})\\
    &\quad+L_e\mbE[(1+\|Z_1^t\|^2)^2]\Disfth(\bar{m}_{\alpha},\bar{m}^t),\end{split}\]
where 
\[c_t= \sqrt{2} L_e^2 (1+\alpha L_m)^2(1+\frac{1}{n_t})^2 >0,\]
\[c_s= \sqrt{2} L_e^2 (1+(1-\alpha) L_m)^2(1+\frac{1}{n_s})^2 >0,\]
\[\comp(\theta)=\Big[1+ 2 \int_{\Theta}\|\theta\|^8\tilgammap(\mrd \theta) +2\int_{\Theta}\|\theta\|^4\tilgammap(\mrd \theta)\Big]^2,\]
and $\bar{m}_{\alpha}=\inf_{m\in\mathcal{P}_2(\Theta)} R(m,\nupop^{\alpha})$ provided that $\KLr(\bar{m}_{\alpha}\|\gamma^{\sigma})<\infty$ and $\bar{m}^t=\inf_{m\in\mathcal{P}_2(\Theta)} R(m,\nupop^{t})$.
\end{reptheorem}
\end{tcolorbox}

\begin{proof}[Proof of Theorem \ref{thm: population alpha erm}]From the optimization defining $\mdens^{\beta}$, we know that
\begin{equation} 
 \mrR(\mdens^{\beta}(\nualpha),\nualpha) \leq 
\mathcal{V}^{\beta}(\mdens^{\beta}(\nualpha),\nualpha) \leq
\mathcal{V}^{\beta}(\bar{m}_{\alpha},\nualpha) = \mrR(\bar{m}_{\alpha},\nualpha)+\frac{\sigma^2}{2\beta^2} \KLr(\bar{m}_{\alpha}\|\gamma^{\sigma}) \,.
\end{equation}
We also know that
\[\mrR(\mdens^{\beta}(\nualpha),\nut)=\mrR(\mdens^{\beta}(\nualpha),\nualpha)+(1-\alpha)[\mrR(\mdens^{\beta}(\nualpha),\nut)-\mrR(\mdens^{\beta}(\nualpha),\nus)].\]
Hence,
\begin{align*}
&\mathcal{E}\big(\mdens^{\beta}(\nualpha),\nupop^t \big)\\
&=\mathbb{E}_{\mbZns,\mbZnt}\Big[ \mrR(\mdens^{\beta}(\nualpha),\nupop^t)-\mrR(\bar{m}^t,\nupop^t)\Big] \\ 
& =\mathbb{E}_{\mbZns,\mbZnt}\Big[\mrR(\mdens^{\beta}(\nualpha),\nupop^t)-\mrR(\mdens^{\beta}(\nualpha),\nualpha)+\mrR(\mdens^{\beta}(\nualpha),\nualpha)-\mrR(\bar{m}^t,\nupop^t)\Big]\\&
\leq \mathbb{E}_{\mbZns,\mbZnt}\Big[\mrR(\mdens^{\beta}(\nualpha),\nupop^t) - \mrR(\mdens^{\beta}(\nualpha),\nut) + (1-\alpha)[\mrR(\mdens^{\beta}(\nualpha),\nut)-\mrR(\mdens^{\beta}(\nualpha),\nus)]\Big] \\
&\quad+ \frac{\sigma^2}{2\beta^2} \KLr(\bar{m}_\alpha\|\gamma^{\sigma})+\mrR(\bar{m}_{\alpha},\nupop^{\alpha})-\mrR(\bar{m}^t,\nupop^t) \\
&=\mathbb{E}_{\mbZns,\mbZnt}\Big[\mrR(\mdens^{\beta}(\nualpha),\nupop^t) - \mrR(\mdens^{\beta}(\nualpha),\nut)\Big] \\
&\quad+ (1-\alpha)\mathbb{E}_{\mbZns,\mbZnt}\Big[\mrR(\mdens^{\beta}(\nualpha),\nut)-\mrR(\mdens^{\beta}(\nualpha),\nupop^t)\\
&\qquad+ \mrR(\mdens^{\beta}(\nualpha),\nupop^t)- \mrR(\mdens^{\beta}(\nualpha),\nupop^s)\\
&\qquad+\mrR(\mdens^{\beta}(\nualpha),\nupop^s)-\mrR(\mdens^{\beta}(\nualpha),\nus)\Big] \\
&\quad+ \frac{\sigma^2}{2\beta^2} \KLr(\bar{m}_\alpha\|\gamma^{\sigma})+\mrR(\bar{m}_{\alpha},\nupop^{\alpha})-\mrR(\bar{m}^t,\nupop^t)\\
&=\alpha\mathrm{gen}(\mfm_{\alpha}(\nualpha),\nupop^t)+(1-\alpha)\mathrm{gen}(\mfm_{\alpha}(\nualpha),\nupop^s)\\
&\quad+ (1-\alpha)\mathbb{E}_{\mbZns,\mbZnt}\big[ \mrR(\mdens^{\beta}(\nualpha),\nupop^t)- \mrR(\mdens^{\beta}(\nualpha),\nupop^s)\big] \\&\quad+ \frac{\sigma^2}{2\beta^2} \KLr(\bar{m}_\alpha\|\gamma^{\sigma})+ \mrR(\bar{m}_{\alpha},\nupop^{\alpha})-\mrR(\bar{m}^t,\nupop^t)\\
&=\alpha\mathrm{gen}(\mfm_{\alpha}(\nualpha),\nupop^t)+(1-\alpha)\mathrm{gen}(\mfm_{\alpha}(\nualpha),\nupop^s)\\
&\quad+ (1-\alpha)\mathbb{E}_{\mbZns,\mbZnt}\big[ \mrR(\mdens^{\beta}(\nualpha),\nupop^t)- \mrR(\mdens^{\beta}(\nualpha),\nupop^s)\big] \\&\quad+ \frac{\sigma^2}{2\beta^2} \KLr(\bar{m}_\alpha\|\gamma^{\sigma})+ \big(\mrR(\bar{m}_{\alpha},\nupop^{t})-\mrR(\bar{m}^t,\nupop^t)\big)\\
&\quad+ (1-\alpha)\big(\mrR(\bar{m}_{\alpha},\nupop^{s})-\mrR(\bar{m}_{\alpha},\nupop^t)\big).   
\end{align*}
Note that we have,
\[\begin{split}
    &\mbE_{\mbZns,\mbZnt}\Big[\int_\Theta \|\theta\|^4 \mdens^{\beta}(\nualpha,\mrd \theta)\Big]\\&\leq \mbE_{\mbZns,\mbZnt}\Big[\mrR(\tilgammap,\nualpha)\Big] + \int_\Theta \|\theta\|^4\tilgammap(\mrd\theta)\\
    &\leq g(\tilgammap)\Big(\alpha \mathbb{E}_{Z_1^t}[(1+\|Z_1^t \|^2)]+(1-\alpha) \mathbb{E}_{Z_1^s}[(1+\|Z_1^s \|^2)]\Big) + \int_\Theta \|\theta\|^4\tilgammap(\mrd\theta)\\
    &\leq (1+L_m)\Big(1+\int_\Theta \|\theta\|^4\tilgammap(\mrd\theta)\Big)\Big(1+\alpha \mathbb{E}_{Z_1^t}[(1+\|Z_1^t \|^2)]+(1-\alpha) \mathbb{E}_{Z_1^s}[(1+\|Z_1^s \|^2)]\Big) ,
\end{split}\]
where $g(\tilgammap)=L_m(1+\mbE_{\theta\sim \tilgammap }[\|\theta\|^4]).$
Using Theorem~\ref{thm: WTGE alpha erm}, Lemma~\ref{lem: data upper bound with dist} and Lemma~\ref{lem: parameter upper bound with dist} implies that
\begin{align*}
&\mathcal{E}\big(\mdens^{\beta}(\nualpha),\nupop^t \big) \\
&\leq \alpha\mathrm{gen}(\mfm_{\alpha}(\nualpha),\nupop^t)+(1-\alpha)\mathrm{gen}(\mfm_{\alpha}(\nualpha),\nupop^s)\\
    &\quad+ (1-\alpha)\big[ \mrR(\mdens^{\beta}(\nualpha),\nupop^t)- \mrR(\mdens^{\beta}(\nualpha),\nupop^s)\big] \\&\quad+  \frac{\sigma^2}{2\beta^2} \KLr(\bar{m}_\alpha\|\gamma^{\sigma})+\big(\mrR(\bar{m}_{\alpha},\nupop^{t})-\mrR(\bar{m}^t,\nupop^t)\big)\\
    &\quad+ (1-\alpha)\big(\mrR(\bar{m}_{\alpha},\nupop^{s})-\mrR(\bar{m}_{\alpha},\nupop^t)\big)\\
&\leq \frac{c_t\alpha}{n_t}\frac{2\beta^2}{\sigma^2} \comp(\theta) \Bigg[ (2+\alpha)^2\mathbb{E}_{Z_1^t}\Big[(1+\|Z_1^t\|^2)^4\Big]+  (1-\alpha)^2\mathbb{E}_{Z_1^t}\Big[(1+\|Z_1^t\|^2)^2\Big]\mathbb{E}_{Z_1^s}\Big[(1+\|Z_1^s\|^2)^2\Big]\Bigg]\\
    &\quad+ \frac{c_s(1-\alpha)}{n_s}\frac{2\beta^2}{\sigma^2} \comp(\theta)  \Bigg[ (3-\alpha)^2\mathbb{E}_{Z_1^s}\Big[(1+\|Z_1^s\|^2)^4\Big]+  (\alpha)^2\mathbb{E}_{Z_1^t}\Big[(1+\|Z_1^t\|^2)^2\Big]\mathbb{E}_{Z_1^s}\Big[(1+\|Z_1^s\|^2)^2\Big]\Bigg]\\&\quad+  \frac{\sigma^2}{2\beta^2} \KLr(\bar{m}_\alpha\|\gamma^{\sigma})+\big(\mrR(\bar{m}_{\alpha},\nupop^{t})-\mrR(\bar{m}^t,\nupop^t)\big)\\
    &\quad+ (1-\alpha)\big(\mrR(\bar{m}_{\alpha},\nupop^{s})-\mrR(\bar{m}_{\alpha},\nupop^t)\big)\\
    &\quad+(1-\alpha)\mathbb{E}_{\mbZns\mbZnt}\big[ \mrR(\mdens^{\beta}(\nualpha),\nupop^t)- \mrR(\mdens^{\beta}(\nualpha),\nupop^s)\big]\\
 &\leq\frac{c_t\alpha}{n_t}\frac{2\beta^2}{\sigma^2} \comp(\theta)\Bigg[ (2+\alpha)^2\mathbb{E}_{Z_1^t}\Big[(1+\|Z_1^t\|^2)^4\Big]+  (1-\alpha)^2\mathbb{E}_{Z_1^t}\Big[(1+\|Z_1^t\|^2)^2\Big]\mathbb{E}_{Z_1^s}\Big[(1+\|Z_1^s\|^2)^2\Big]\Bigg]\\
    &\quad+ \frac{c_s(1-\alpha)}{n_s}\frac{2\beta^2}{\sigma^2} \comp(\theta) \Bigg[ (3-\alpha)^2\mathbb{E}_{Z_1^s}\Big[(1+\|Z_1^s\|^2)^4\Big]+  (\alpha)^2\mathbb{E}_{Z_1^t}\Big[(1+\|Z_1^t\|^2)^2\Big]\mathbb{E}_{Z_1^s}\Big[(1+\|Z_1^s\|^2)^2\Big]\Bigg]\\
    &\quad+ L_m(1-\alpha)\Disnd(\nupop^s,\nupop^t)\big(2+\mbE_{\theta\sim \bar{m}_{\alpha}}[\|\theta\|^4]  \big)\big(2+\mathbb{E}_{\mbZns\mbZnt}\big[\mbE_{\theta\sim \mdens^{\beta}(\nualpha)}[\|\theta\|^4]\big]\big)
    \\&\quad+  \frac{\sigma^2}{2\beta^2} \KLr(\bar{m}_\alpha\|\gamma^{\sigma})\\
    &\quad+L_e\mbE[(1+\|Z_1^t\|^2)^2]\Disfth(\bar{m}_{\alpha},\bar{m}^t)\\
&\leq\frac{c_t\alpha}{n_t}\frac{2\beta^2}{\sigma^2} \comp(\theta)\Bigg[ (2+\alpha)^2\mathbb{E}_{Z_1^t}\Big[(1+\|Z_1^t\|^2)^4\Big]+  (1-\alpha)^2\mathbb{E}_{Z_1^t}\Big[(1+\|Z_1^t\|^2)^2\Big]\mathbb{E}_{Z_1^s}\Big[(1+\|Z_1^s\|^2)^2\Big]\Bigg]\\
    &\quad+ \frac{c_s(1-\alpha)}{n_s}\frac{2\beta^2}{\sigma^2} \comp(\theta)\Bigg[ (3-\alpha)^2\mathbb{E}_{Z_1^s}\Big[(1+\|Z_1^s\|^2)^4\Big]+  (\alpha)^2\mathbb{E}_{Z_1^t}\Big[(1+\|Z_1^t\|^2)^2\Big]\mathbb{E}_{Z_1^s}\Big[(1+\|Z_1^s\|^2)^2\Big]\Bigg]\\
    &\quad+L_m (1-\alpha)\Disnd(\nupop^s,\nupop^t)\\
    &\qquad\times \Big(2+\mbE_{\theta\sim \bar{m}_{\alpha}}[\|\theta\|^4] \Big)\Big(2+\int_\Theta \|\theta\|^4\tilgammap(\mrd\theta)+g(\tilgammap)\Big(\alpha \mathbb{E}_{Z_1^t}[(1+\|Z_1^t \|^2)]+(1-\alpha) \mathbb{E}_{Z_1^s}[(1+\|Z_1^s \|^2)]\Big) \Big)\\
    &\quad+  \frac{\sigma^2}{2\beta^2} \KLr(\bar{m}_\alpha\|\gamma^{\sigma})\\
    &\quad+L_e\mbE[(1+\|Z_1^t\|^2)^2]\Disfth(\bar{m}_{\alpha},\bar{m}^t)\\
&\leq\frac{c_t\alpha}{n_t}\frac{2\beta^2}{\sigma^2}\comp(\theta) \Bigg[ (2+\alpha)^2\mathbb{E}_{Z_1^t}\Big[(1+\|Z_1^t\|^2)^4\Big]+  (1-\alpha)^2\mathbb{E}_{Z_1^t}\Big[(1+\|Z_1^t\|^2)^2\Big]\mathbb{E}_{Z_1^s}\Big[(1+\|Z_1^s\|^2)^2\Big]\Bigg]\\
    &\quad+ \frac{c_s(1-\alpha)}{n_s}\frac{2\beta^2}{\sigma^2} \comp(\theta)\Bigg[ (3-\alpha)^2\mathbb{E}_{Z_1^s}\Big[(1+\|Z_1^s\|^2)^4\Big]+  (\alpha)^2\mathbb{E}_{Z_1^t}\Big[(1+\|Z_1^t\|^2)^2\Big]\mathbb{E}_{Z_1^s}\Big[(1+\|Z_1^s\|^2)^2\Big]\Bigg]\\
    &\quad+ L_m(1-\alpha)\Disnd(\nupop^s,\nupop^t) \Big(2+\mbE_{\theta\sim \bar{m}_{\alpha}}[\|\theta\|^4] \Big)\\
    &\qquad\times\Big(2+(1+L_m)\Big(1+\int_\Theta \|\theta\|^4\tilgammap(\mrd\theta)\Big)\Big(1+\alpha \mathbb{E}_{Z_1^t}[(1+\|Z_1^t \|^2)]+(1-\alpha) \mathbb{E}_{Z_1^s}[(1+\|Z_1^s \|^2)]\Big)  \Big) \\
    &\quad+  \frac{\sigma^2}{2\beta^2} \KLr(\bar{m}_\alpha\|\gamma^{\sigma})\\
    &\quad+L_e\mbE[(1+\|Z_1^t\|^2)^2]\Disfth(\bar{m}_{\alpha},\bar{m}^t)\\
&\leq\frac{c_t\alpha}{n_t}\frac{2\beta^2}{\sigma^2}\comp(\theta) \Bigg[ (2+\alpha)^2\mathbb{E}_{Z_1^t}\Big[(1+\|Z_1^t\|^2)^4\Big]+  (1-\alpha)^2\mathbb{E}_{Z_1^t}\Big[(1+\|Z_1^t\|^2)^2\Big]\mathbb{E}_{Z_1^s}\Big[(1+\|Z_1^s\|^2)^2\Big]\Bigg]\\
    &\quad+ \frac{c_s(1-\alpha)}{n_s}\frac{2\beta^2}{\sigma^2} \comp(\theta)\Bigg[ (3-\alpha)^2\mathbb{E}_{Z_1^s}\Big[(1+\|Z_1^s\|^2)^4\Big]+  (\alpha)^2\mathbb{E}_{Z_1^t}\Big[(1+\|Z_1^t\|^2)^2\Big]\mathbb{E}_{Z_1^s}\Big[(1+\|Z_1^s\|^2)^2\Big]\Bigg]\\
    &\quad+ 6L_m(1+L_m)(1-\alpha)\Disnd(\nupop^s,\nupop^t)\\
    &\qquad\times \Big(1+\mbE_{\theta\sim \bar{m}_{\alpha}}[\|\theta\|^4] \Big)\Big(1+\int_\Theta \|\theta\|^4\tilgammap(\mrd\theta)\Big)\Big(1+\alpha \mathbb{E}_{Z_1^t}[(1+\|Z_1^t \|^2)]+(1-\alpha) \mathbb{E}_{Z_1^s}[(1+\|Z_1^s \|^2)]\Big) \\
    &\quad+  \frac{\sigma^2}{2\beta^2} \KLr(\bar{m}_\alpha\|\gamma^{\sigma})\\
    &\quad+L_e\mbE[(1+\|Z_1^t\|^2)^2]\Disfth(\bar{m}_{\alpha},\bar{m}^t).
\end{align*}

It completes the proof.
 \end{proof}
 \blue

\subsection{Fine-tuning details and proofs}\label{app: fine tune}
We first provide the similar results to Lemma~\ref{lem:optimizerinBnu}, Lemma~\ref{lem:optimizerinBnu} and Proposition~\ref{Prop_reccurence_estimate} for fine-tuning scenario.
\begin{lemma}\label{lem:optimizerinBnuf}
    Defining the set valued map 
        \[\begin{split}B_f(m_{\mathrm{c}},\nu) &:= \Big\{m_{\spec}\in \mathcal{P}_8(\Thetas): \frac{\sigma^2}{2\beta_t^2}\mathrm{KL}(m_{\spec}\|\gamma_{\spec}^\sigma)\leq \mrR(m_{\mathrm{c}}\gamma_{\spec}^\sigma, \nu)\\
        &\qquad \text{ and } \int_{\Thetas} \|\thetas^t\|^8 m_{\spec}(\mrd \thetas^t)\leq \mrR(m_{\mathrm{c}}\tilgammasp,\nu) + \int_{\Thetas} \|\thetas^t\|^8\tilgammasp(\mrd\thetas^t)\Big\}\subset \mathcal{P}_8(\Thetas),\end{split}\]
        for any $\nu\in \mathcal{P}_2(\mcZ)$, the KL-regularized risk minimizer satisfies $\mfm^{\beta_t}(m_c,\nu) \in B(m_c,\nu)$.
\end{lemma}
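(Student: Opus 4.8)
The plan is to reduce the second stage of fine-tuning to a single KL-regularized risk minimization of exactly the form already analysed in Lemma~\ref{lem:optimizerinBnu}, and then to quote that lemma. First I would fix the common-parameter measure $m_{\mathrm{c}}\in\mathcal{P}_8(\Thetac)$ (in the application $m_{\mathrm{c}}=\mfmc^{\beta_s}(\nus)$) and introduce the reduced single-observation loss $\tilde\ell(m_{\spec},z):=\ell(m_{\mathrm{c}}m_{\spec},z)$ on $\mathcal{P}(\Thetas)\times\mcZ$, with reduced risk $\widetilde{\mrR}(m_{\spec},\nu):=\mrR(m_{\mathrm{c}}m_{\spec},\nu)$. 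With this notation the objective \eqref{eq: fine-tune kl second} is precisely $\widetilde{\mrR}(m_{\spec},\nu)+\tfrac{\sigma^2}{2\beta_t^2}\KLr(m_{\spec}\|\gamma_{\spec}^\sigma)$, i.e.\ a KL-regularized risk over $m_{\spec}\in\mathcal{P}(\Thetas)$ with prior $\gamma_{\spec}^\sigma$ and minimizer $m^*_{\spec}:=\mfm_{\spec}^{t,\beta_t}(m_{\mathrm{c}},\nu)$. The set $B_f(m_{\mathrm{c}},\nu)$ in the statement is then literally the set $B(\nu)$ attached to this reduced problem, with $\Theta,\gamma^\sigma,\tilgammap$ replaced by $\Thetas,\gamma_{\spec}^\sigma,\tilgammasp$, where $\tilgammasp$ is the reduced-problem tilt of $\gamma_{\spec}^\sigma$ by $e^{\|\thetas^t\|^8}$.

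Next I would verify that $\tilde\ell$ inherits the hypotheses of Lemma~\ref{lem:optimizerinBnu}, i.e.\ Assumption~\ref{ass:KLreg_assn_SL} with $\Theta$ replaced by $\Thetas$ and $p=8$. Nonnegativity is immediate from $\ell\geq 0$, and convexity of $m_{\spec}\mapsto\tilde\ell(m_{\spec},z)$ holds because $m_{\spec}\mapsto m_{\mathrm{c}}m_{\spec}$ is affine while $\ell$ is convex in its measure argument (Assumption~\ref{main_ass_square_bounded}). Since $m_{\mathrm{c}}$ is fixed with $\mbE_{\thetac\sim m_{\mathrm{c}}}[\|\thetac\|^4]<\infty$, the term $\mbE_{\thetac\sim m_{\mathrm{c}}}[\|\thetac\|^4]$ is merely a finite additive constant, so the growth bounds of Assumption~\ref{ass:KLreg_assnf}(i)--(ii) on $\ell(m_{\mathrm{c}}m_{\spec},z)$ and on $\tfrac{\delta\ell}{\delta m_{\spec}}(m_{\mathrm{c}}m_{\spec},z,\thetas^t)=\tfrac{\delta\tilde\ell}{\delta m_{\spec}}(m_{\spec},z,\thetas^t)$ become exactly the $(1+\|z\|^2)$-growth and $L^2$-moment bounds demanded of $\tilde\ell$; likewise Assumption~\ref{ass:KLreg_assnf}(iii)--(iv) supply the confinement condition on $U$ and the second-derivative integrability. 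Thus the reduced problem lies within the scope of Lemma~\ref{lem:optimizerinBnu}, and applying that lemma to $\tilde\ell$ with comparison measures $\gamma_{\spec}^\sigma$ and $\tilgammasp$ yields $m^*_{\spec}\in B_f(m_{\mathrm{c}},\nu)$: the KL bound reads $\tfrac{\sigma^2}{2\beta_t^2}\KLr(m^*_{\spec}\|\gamma_{\spec}^\sigma)\leq\widetilde{\mrR}(\gamma_{\spec}^\sigma,\nu)=\mrR(m_{\mathrm{c}}\gamma_{\spec}^\sigma,\nu)$, and the eighth-moment bound reads $\int_{\Thetas}\|\thetas^t\|^8 m^*_{\spec}(\mrd\thetas^t)\leq\mrR(m_{\mathrm{c}}\tilgammasp,\nu)+\int_{\Thetas}\|\thetas^t\|^8\tilgammasp(\mrd\thetas^t)$, which is the claim.

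The mechanism behind Lemma~\ref{lem:optimizerinBnu}, reproduced verbatim in the reduced variables, is a pair of one-line comparisons: testing optimality of $m^*_{\spec}$ against the prior gives $\mathcal{V}^{\beta_t}(m_{\mathrm{c}}m^*_{\spec},\nu)\leq\mathcal{V}^{\beta_t}(m_{\mathrm{c}}\gamma_{\spec}^\sigma,\nu)$, whence the KL bound follows from $\KLr(\gamma_{\spec}^\sigma\|\gamma_{\spec}^\sigma)=0$ and $\mrR\geq 0$; testing optimality against $\tilgammasp$ gives $\mathcal{V}^{\beta_t}(m_{\mathrm{c}}m^*_{\spec},\nu)\leq\mathcal{V}^{\beta_t}(m_{\mathrm{c}}\tilgammasp,\nu)$, and then the identity $\KLr(m_{\spec}\|\gamma_{\spec}^\sigma)=\KLr(m_{\spec}\|\tilgammasp)+c+\int_{\Thetas}\|\thetas^t\|^8 m_{\spec}(\mrd\thetas^t)$, with $c=\log(F^{\sigma,\spec}/\tilde F^{\sigma,\spec})$ independent of $m_{\spec}$, isolates the moment; after cancelling $c$ and discarding the nonnegative terms $\mrR(m_{\mathrm{c}}m^*_{\spec},\nu)$ and $\KLr(m^*_{\spec}\|\tilgammasp)$ one obtains the eighth-moment bound. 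I expect the only genuinely nontrivial step, and the sole place where the fine-tuning product structure enters, to be the verification in the second paragraph that the reduction preserves convexity and keeps every $m_{\mathrm{c}}$-dependent moment constant finite; both hold precisely because $m_{\mathrm{c}}$ is frozen in $\mathcal{P}_8(\Thetac)$ and the map $m_{\spec}\mapsto m_{\mathrm{c}}m_{\spec}$ is affine, so that Assumption~\ref{ass:KLreg_assnf} is exactly what is needed to invoke the supervised-learning result.
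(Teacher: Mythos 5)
Your proposal is correct and matches the paper's intended argument: the paper states Lemma~\ref{lem:optimizerinBnuf} without a separate proof, as the direct fine-tuning analogue of the supervised Lemma~\ref{lem:optimizerinBnu}, and your reduction — freezing $m_{\mathrm{c}}\in\mathcal{P}_8(\Thetac)$, observing that $m_{\spec}\mapsto m_{\mathrm{c}}m_{\spec}$ is linear so that Assumption~\ref{ass:KLreg_assnf} supplies exactly the hypotheses of the supervised lemma on $\Thetas$ with prior $\gamma_{\spec}^{\sigma}$ and tilted measure $\tilgammasp$ — is precisely that analogy made explicit. Your sketch of the underlying mechanism (optimality tested against $\gamma_{\spec}^{\sigma}$ for the KL bound, and against $\tilgammasp$ via the tilting identity $\KLr(m_{\spec}\|\gamma_{\spec}^{\sigma})=\KLr(m_{\spec}\|\tilgammasp)+c+\int_{\Thetas}\|\thetas^t\|^8\,m_{\spec}(\mrd\thetas^t)$ for the eighth-moment bound) likewise coincides with the argument behind Lemma~\ref{lem:optimizerinBnu}.
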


\begin{lemma}\label{lem:optimizerin-Bnus-shared}
    Defining the set valued map 
        \[\begin{split}&B_f(\nu) := \Big\{m\in \mathcal{P}_q(\Theta): \frac{\sigma^2}{2\beta_s^2}\mathrm{KL}(m\|\gamma_{p}^\sigma)\leq \mrR(\gamma_{p}^{\sigma}, \nu),\\
        &\quad\int_{\Thetac} \|\thetac\|^q m_{\mathrm{c}}(\mrd \thetac)\leq \mrR(\hat{\gamma}_{p}^\sigma,\nu) + \int_{\Thetac} \|\thetac\|^q\hat{\gamma}_{p,\mathrm{c}}^\sigma(\mrd\thetac),\\
        &\hat{\gamma}_{p,\mathrm{c}}^\sigma:=\int_{\Thetas}\hat{\gamma}_p^{\sigma}(\mrd \thetas^s)\quad \text{ and } m_{\mathrm{c}}:=\int_{\Thetas}m(\mrd\thetas^t)\Big\}\subset \mathcal{P}_q(\Theta) ,\end{split}\]
        for any $\nu\in \mathcal{P}_2(\mcZ)$, $q=4,8$, the KL-regularized risk minimizer satisfies $\mfm^{\beta}(\nu) \in B_f(\nu)$.
\end{lemma}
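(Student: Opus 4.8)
The plan is to mirror the argument behind Lemma~\ref{lem:optimizerinBnu}, exploiting that $\mfm^{\beta}(\nu)$ is the \emph{minimizer} of the KL-regularized objective $\mathcal{V}^{\beta_s}(\,\cdot\,,\nu)=\mrR(\,\cdot\,,\nu)+\tfrac{\sigma^2}{2\beta_s^2}\KLr(\,\cdot\,\|\gamma_p^\sigma)$, so that its objective value is no larger than that of any fixed reference measure. I would obtain the two defining constraints of $B_f(\nu)$ by comparing $\mfm^{\beta}(\nu)$ against two well-chosen reference measures: the untilted prior $\gamma_p^\sigma$ for the KL constraint, and the exponentially tilted measure $\hat\gamma_p^\sigma$ for the common-parameter moment constraint.

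\textbf{KL constraint.} First I would use the minimizing inequality $\mathcal{V}^{\beta_s}(\mfm^{\beta}(\nu),\nu)\le \mathcal{V}^{\beta_s}(\gamma_p^\sigma,\nu)$. Since $\KLr(\gamma_p^\sigma\|\gamma_p^\sigma)=0$, the right-hand side collapses to $\mrR(\gamma_p^\sigma,\nu)$; and since $\ell$ is nonnegative we have $\mrR(\mfm^{\beta}(\nu),\nu)\ge 0$, so the risk term on the left can be dropped. This yields $\tfrac{\sigma^2}{2\beta_s^2}\KLr(\mfm^{\beta}(\nu)\|\gamma_p^\sigma)\le \mrR(\gamma_p^\sigma,\nu)$, which is exactly the first constraint.

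\textbf{Moment constraint.} For the moment bound I would compare instead against $\hat\gamma_p^\sigma\propto\exp\{-\tfrac1{\sigma^2}U+\|\thetac\|^q\}$, which tilts the prior only in the common coordinate $\thetac$. The key is the elementary relative-entropy identity: for any $m$, $\KLr(m\|\gamma_p^\sigma)-\KLr(m\|\hat\gamma_p^\sigma)=\int_{\Theta}\|\thetac\|^q\,m(\mrd\theta)+\log(F^\sigma/\tilde F^\sigma)$, together with the same identity applied to $m=\hat\gamma_p^\sigma$ (for which $\KLr(\hat\gamma_p^\sigma\|\hat\gamma_p^\sigma)=0$). Substituting both into $\mathcal{V}^{\beta_s}(\mfm^{\beta}(\nu),\nu)\le\mathcal{V}^{\beta_s}(\hat\gamma_p^\sigma,\nu)$, the normalizing-constant terms cancel, and after discarding the nonnegative contributions $\mrR(\mfm^{\beta}(\nu),\nu)$ and $\KLr(\mfm^{\beta}(\nu)\|\hat\gamma_p^\sigma)$ one is left with $\int_{\Theta}\|\thetac\|^q\,\mfm^{\beta}(\nu)(\mrd\theta)\le \mrR(\hat\gamma_p^\sigma,\nu)+\int_{\Theta}\|\thetac\|^q\,\hat\gamma_p^\sigma(\mrd\theta)$, up to matching the strength of the tilt to the regularization constant $\tfrac{2\beta_s^2}{\sigma^2}$, which is routine bookkeeping. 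Because the tilt depends on $\theta$ only through $\thetac$, both moment integrals reduce to integrals against the common-parameter marginals, $\int\|\thetac\|^q\,\mfm^{\beta}(\nu)(\mrd\theta)=\int\|\thetac\|^q\, m_{\mathrm c}(\mrd\thetac)$ and $\int\|\thetac\|^q\,\hat\gamma_p^\sigma(\mrd\theta)=\int\|\thetac\|^q\,\hat\gamma_{p,\mathrm c}^\sigma(\mrd\thetac)$, giving precisely the second constraint.

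\textbf{Main obstacle.} The one genuinely technical point, and the step I would treat most carefully, is well-posedness: I must verify that $\hat\gamma_p^\sigma$ is a bona fide probability measure and that all quantities above are finite. Integrability of $\exp\{-\tfrac1{\sigma^2}U+\|\thetac\|^q\}$ (hence $\tilde F^\sigma<\infty$) and finiteness of $\int\|\thetac\|^q\,\hat\gamma_{p,\mathrm c}^\sigma$ follow from the superlinear growth hypotheses on the regularizing potential $U$ imposed in Assumptions~\ref{ass:KLreg_assn} and \ref{ass:KLreg_assnf}, which guarantee that the exponential tilt by $\|\thetac\|^q$ is dominated; while $\mrR(\hat\gamma_p^\sigma,\nu)<\infty$ follows from $\nu\in\mathcal{P}_2(\mcZ)$ together with the pointwise integrability $g(\hat\gamma_p^\sigma)<\infty$ assumed there. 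Finiteness of the right-hand side then certifies $\mfm^{\beta}(\nu)\in\mathcal{P}_q(\Theta)$, closing the membership claim, and the identical computation runs verbatim for both $q=4$ and $q=8$.
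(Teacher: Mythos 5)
Your proposal is correct and is exactly the argument the paper intends: the paper states this lemma without an explicit proof, presenting it as the ``similar result'' to Lemma~\ref{lem:optimizerinBnu} (imported from \citet{aminian2023mean}), whose proof is precisely your variational comparison of the KL-regularized minimizer against the untilted prior (giving the KL constraint) and against the $\|\thetac\|^q$-tilted Gibbs measure with cancellation of normalizing constants (giving the common-parameter moment constraint). Your explicit treatment of the tilt-versus-regularization bookkeeping and of the integrability of the tilted measure under the growth condition on $U$ matches, and if anything exceeds, the level of detail the paper supplies for this step.
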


\begin{proposition}\label{Prop_reccurence_estimatef} 
Suppose Assumption \ref{ass:KLreg_assnf} holds. Then
\[\begin{split}&\int_{\Theta}\Big[\frac{\delta S_f}{\delta\nu}(\nu^t,\mfmc(\nu^s),\thetas, z)-\frac{\delta S_f}{\delta\nu}(\nu^t,\mfmc(\nu^s),\thetas, z')\Big]^2\mfms^{\beta_t}(\mfmc(\nu^s),\nu^t; \mrd \thetas) \\
&\leq 8L_e^2\Big(1+\mbE_{\thetac\sim \mfmc(\nu^s)}[\|\thetac\|^8]+2\mbE_{\thetas\sim \mfms^{\beta_t}(\mfmc(\nu^s),\nu^t)}[\|\thetas\|^8]\Big).
\end{split}\]
where $S_f:=\frac{\delta R}{\delta m_{\spec}}$.
\end{proposition}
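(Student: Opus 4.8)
The plan is to transport the argument of Proposition~\ref{Prop_reccurence_estimate} to the second stage of fine-tuning, freezing the common-parameter measure $m_{\mathrm{c}}=\mfmc(\nu^s)$ and differentiating only in the specific-parameter measure. The structural observation is that \eqref{eq: fine-tune kl second} is itself a KL-regularized risk minimization over $m_{\spec}$ with prior $\gamma_{\spec}^\sigma$, so the apparatus of Lemma~\ref{lem:Cmonotone} and Lemma~\ref{lem:Sdifferentiable} carries over after the relabelling $m\leadsto m_{\spec}$, $\tfrac{\delta}{\delta m}\leadsto\tfrac{\delta}{\delta m_{\spec}}$, $\beta\leadsto\beta_t$ and $\mfm^{\beta}(\nu)\leadsto\mfms^{\beta_t}(\mfmc(\nu^s),\nu^t)$, with $m_{\mathrm{c}}$ entering only as a fixed parameter of the loss.

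First I would record the fine-tuning contraction step. Writing $S_f=\tfrac{\delta \mrR}{\delta m_{\spec}}$ and abbreviating the stage-two minimizer by $\mfms^{\beta_t}$, the analog of Lemma~\ref{lem:Sdifferentiable} bounds the $L^2(\mfms^{\beta_t})$-norm of $\tfrac{\delta S_f}{\delta\nu}$ by the conditional fluctuation of $\tfrac{\delta\ell}{\delta m_{\spec}}$; since the $z$-independent normalization term is common to $z$ and $z'$, it cancels in the difference, giving
\[\begin{split}&\int_{\Thetas}\Big[\frac{\delta S_f}{\delta\nu}(\nu^t,\mfmc(\nu^s),\thetas,z)-\frac{\delta S_f}{\delta\nu}(\nu^t,\mfmc(\nu^s),\thetas,z')\Big]^2\mfms^{\beta_t}(\mfmc(\nu^s),\nu^t;\mrd\thetas)\\&\qquad\le\int_{\Thetas}\Big[\frac{\delta\ell}{\delta m_{\spec}}(\mfmc(\nu^s)\,\mfms^{\beta_t},z,\thetas)-\frac{\delta\ell}{\delta m_{\spec}}(\mfmc(\nu^s)\,\mfms^{\beta_t},z',\thetas)\Big]^2\mfms^{\beta_t}(\mfmc(\nu^s),\nu^t;\mrd\thetas).\end{split}\]
This rests on the positivity of the covariance operator $\mathcal{C}_{m_{\spec}}$ built from $\tfrac{\delta^2 \mrR}{\delta m_{\spec}^2}$ (the analog of Lemma~\ref{lem:Cmonotone}), which follows from convexity of $\ell$ in $m_{\spec}$.

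Second, the remaining estimate is routine algebra. I would apply $(a-b)^2\le 2a^2+2b^2$ to split the right-hand side, insert Assumption~\ref{ass:KLreg_assnf}(ii) in the form $|\tfrac{\delta\ell}{\delta m_{\spec}}(m_{\mathrm{c}}m_{\spec},z,\thetas)|\le g_{\spec}(m_{\mathrm{c}}m_{\spec},\thetas)(1+\|z\|^2)$ with $g_{\spec}=L_e(1+\mbE_{\thetac}[\|\thetac\|^4]+\mbE_{\thetas}[\|\thetas\|^4]+\|\thetas\|^4)$, and bound $g_{\spec}^2$ via $(w+x+y+u)^2\le 4(w^2+x^2+y^2+u^2)$. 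Integrating against $\mfms^{\beta_t}$ and using Jensen's inequality ($\mbE[\|\cdot\|^4]^2\le\mbE[\|\cdot\|^8]$) collapses the four terms into $1+\mbE_{\thetac\sim\mfmc(\nu^s)}[\|\thetac\|^8]+2\mbE_{\thetas\sim\mfms^{\beta_t}}[\|\thetas\|^8]$, while the prefactors $2$ and $4$ combine into the constant $8L_e^2$, yielding the claimed bound (the data factor arising from the growth bound being carried through and absorbed by the target-task moment hypotheses when this estimate is later substituted into Theorem~\ref{thm: WTGE fine tune}).

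The main obstacle is the first step: confirming that the positivity and fixed-point apparatus of \citep{aminian2023mean} survives restriction to the conditional problem in which $m_{\mathrm{c}}$ is frozen. Concretely, one must check that $m_{\spec}\mapsto\ell(\mfmc(\nu^s)m_{\spec},z)$ is convex, so that $\mathcal{C}_{m_{\spec}}$ is positive and the fixed-point representation of $\tfrac{\delta S_f}{\delta\nu}$ has the required form, and that Lemma~\ref{lem:optimizerinBnuf} together with Lemma~\ref{lem:optimizerin-Bnus-shared} supply the finiteness of the eighth moments $\mbE_{\thetas}[\|\thetas\|^8]$ and $\mbE_{\thetac}[\|\thetac\|^8]$ entering the final line. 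Once these are in place, the computation mirrors Proposition~\ref{Prop_reccurence_estimate} step for step and presents no further difficulty.
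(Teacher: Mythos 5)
Your proposal is correct and takes essentially the same route as the paper's proof: it invokes the fine-tuning analogs of Lemmas~\ref{lem:Cmonotone} and~\ref{lem:Sdifferentiable} with $\mfmc(\nu^s)$ frozen to bound the difference of $\tfrac{\delta S_f}{\delta\nu}$ in $L^2\big(\mfms^{\beta_t}(\mfmc(\nu^s),\nu^t)\big)$ by the difference of $\tfrac{\delta\ell}{\delta m_{\spec}}$, then applies $(a-b)^2\le 2(a^2+b^2)$, the growth bound $g_{\spec}$ from Assumption~\ref{ass:KLreg_assnf}(ii), and Jensen's inequality to collapse the fourth-moment terms into eighth moments with the constant $8L_e^2$. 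Your observation that the data factor $\big(2+\|z\|^2+\|z'\|^2\big)^2$ is carried through is in fact consistent with the paper's own displayed computation, which retains this factor even though it is omitted from the proposition's statement.
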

\begin{proof}[Proof of Proposition \ref{Prop_reccurence_estimatef}]
Using the similar approach to  approach to Proposition~\ref{Prop_reccurence_estimate} and linearity,
\begin{align*}
&\frac{\delta S_f}{\delta\nu}(\nu^t,\mfmc(\nu^s),\thetas, z^t)-\frac{\delta S_f}{\delta\nu}(\nu^t,\mfmc(\nu^s),\thetas, \tilde z^{t})\\
&=\Big(\mathrm{id}+\frac{2\beta^2}{\sigma^2}\mathcal{C}_{\mfms^{\beta_t}(\mfmc(\nu^s),\nu^t)}\Big)^{-1}\Big[\frac{\delta \ell}{\delta m_{\spec}}(\mfmc(\nu^s)\mfms^{\beta_t}(\mfmc(\nu^s),\nu^t),\cdot,  z^{t}) - \frac{\delta \ell}{\delta m_{\spec}}(\mfmc(\nu^s)\mfms^{\beta_t}(\mfmc(\nu^s),\nu^t),\cdot, \tilde z^{t})\Big].
\end{align*}
where $\mathcal{C}_{\mfms^{\beta_t}(\mfmc(\nu^s),\nu^t)}:L^2(m_{\spec},\Thetas) \to L^2(m_{\spec},\Thetas)$ is defined as,
\[\mathcal{C}_{\mfms^{\beta_t}(\mfmc(\nu^s),\nu^t)}f(\thetas^t):=\mathbb{E}_{\thetas'\sim \mfms^{\beta_t}(\mfmc(\nu^s),\nu^t)}\Big[\frac{\delta^2 R}{\delta m_{\spec}^2}(\mfmc(\nu^s)\mfms^{\beta_t}(\mfmc(\nu^s),\nu^t),\nu^t,\theta_{\spec}^t,\theta_{\spec}^{t,\prime})\Big].\]
Note that this is bounded in $L^2(\mfm_{\spec}^{\beta}(\mfmc(\nu^s),\nu^t))$, in particular
\[\begin{split}
        &\int_{\Theta_{\spec}}\Big[\frac{\delta S_f}{\delta\nu}(\nu^t,\mfmc(\nu^s),\thetas,  z^{t})-\frac{\delta S_f}{\delta\nu}(\nu^t,\mfmc(\nu^s),\thetas^t, \tilde z^{t})\Big]^2\mfms^{\beta_t}(\mfmc(\nu^s),\nu^t; \mrd \thetas^t) \\
    &\leq \int_{\Theta_{\spec}}\Big[\frac{\delta \ell}{\delta m_{\spec}}(\mfmc(\nu^s)\mfms^{\beta}(\mfmc(\nu^s),\nu^t),\cdot,  z^{t}) - \frac{\delta \ell}{\delta m_{\spec}}(\mfmc(\nu^s)\mfm^{\beta_t}(\mfmc(\nu^s),\nu^t),\cdot, \tilde z^{t})\Big]^2\mfms^{\beta_t}(\mfmc(\nu^s),\nu^t; \mrd \thetas^t)\\
    &\leq 2\int_{\Theta_{\spec}}\Big[\frac{\delta \ell}{\delta m_{\spec}}(\mfmc(\nu^s)\mfms^{\beta}(\mfmc(\nu^s),\nu^t),\cdot,  z^{t})\Big]^2 \\&\qquad+ \Big[\frac{\delta \ell}{\delta m_{\spec}}(\mfmc(\nu^s)\mfms^{\beta}(\mfmc(\nu^s),\nu^t),\cdot, \tilde z^{t})\Big]^2\mfms^{\beta_t}(\mfmc(\nu^s),\nu^t; \mrd \thetas^t).
\end{split}\]

Assumption \ref{ass:KLreg_assnf}(\ref{ass:KLreg_assn_item3f}) yields
\[\begin{split}&\int_{\Theta_{\spec}}\Big[\frac{\delta S_f}{\delta\nu}(\nu^t,\mfmc(\nu^s),\thetas^t,  z^{t})-\frac{\delta S_f}{\delta\nu}(\nu^t,\mfmc(\nu^s),\thetas^t, \tilde z^{t})\Big]^2\mfms^{\beta_t}(\mfmc(\nu^s),\nu^t; \mrd \thetas^t) \\
    &\leq 2\Big(2+\| z^{t}\|^2+\|\tilde z^{t}\|^2\Big)^2\int_{\Theta_{\spec}} \big[g_{\spec}(\mfmc(\nu^s)\mfms^{\beta_t}(\mfmc(\nu^s),\nu^t), \thetas^t)\big]^2\mfms^{\beta_t}(\mfmc(\nu^s),\nu^t; \mrd \thetas^t)\\
    &\leq 8L_e\Big(2+\| z^{t}\|^2+\|\tilde z^{t}\|^2\Big)^2
    \Big(1+\mbE_{\thetac\sim \mfmc(\nu^s)}[\|\thetac\|^8]+2\mbE_{\thetas^t\sim \mfms^{\beta_t}(\mfmc(\nu^s),\nu^t)}[\|\thetas^t\|^8]\Big).
\end{split}\]
where $g_{\spec}(\mfmc(\nu^s)\mfms^{\beta_t}(\mfmc(\nu^s),\nu^t), \thetas^t)=L_e(1+\mbE_{\thetac\sim \mfmc(\nu^s)}[\|\thetac\|^4]+\mbE_{\thetas^t\sim \mfms^{\beta_t}(\mfmc(\nu^s),\nu^t)}[\|\thetas^t\|^4]+\|\thetas^t\|^4)$. 
The final result follows from Assumption \ref{ass:KLreg_assnf}(\ref{ass:KLreg_D2loss_growth_bound_FT}).

\end{proof}
\begin{tcolorbox}
\begin{reptheorem}{thm: WTGE fine tune}[\textbf{Full Version}]
    Given Assumption \ref{ass:KLreg_assnf}, $\mathbb{E}_{Z\sim\nupop^t}\big[\|Z\|^{8}\big]<\infty$ and $\mathbb{E}_{Z\sim\nupop^s}\big[\|Z\|^{4}\big]<\infty$, the weak transfer generalization error under fine-tuning satisfies 
\[ \begin{split} &|\mathrm{gen}(\mfm_{\mathrm{c}}(\nus)\mfm_{\spec}^{t,\beta_t}(\mfm_{\mathrm{c}}(\nus),\nut),\nut),\nupop^t)|\leq  \frac{2}{n_t}(1+\frac{2}{n_t})^2\frac{16\beta^2}{\sigma^2}  L_e^2(1+L_m)^2\comp(\thetac,\thetas^t,\thetas^s)
 \\
 & \times \mathbb{E}_{Z_1^s}\Big[(1+\|Z_1^s\|^2)^2\Big] \mathbb{E}_{Z_1^t}\Big[(1+\|Z_1^t\|^2)^4\Big],\end{split} \]
 \end{reptheorem}
 where \begin{equation*}
     \begin{split}
          \comp(\thetac,\thetas^t,\thetas^s)&=\Big[1+\int_{\Thetac} \|\thetac\|^4(2+\|\thetac\|^4)\gammac(\mrd\thetac)\\
&\quad+\int_{\Thetas}\|\thetas^t\|^4(1+ 2\|\thetas^t\|^4)\tilgammasp(\mrd \thetas^t)+\int_{\Thetas}\|\thetas^s\|^4\gammasp(\mrd \thetas^s)\Big]^2.
     \end{split}
 \end{equation*}
\end{tcolorbox}
  
\begin{proof}[Proof of Theorem \ref{thm: WTGE fine tune}]

Recall from \eqref{mperturb1} that $\nu_{(1)}^t(\lambda)=\nu_{n_t,(1)}^t +\lambda(\nu_{n_t}^t-\nu_{n_t,(1)}^t)$. From Theorem~\ref{thm: another rep fine-tune},
\begin{equation}\label{FWGEKLbasic}
    \mathrm{gen}(\mfmc\mfms,\nupop^t)  =\frac{1}{n_t}\mathbb{E}_{\mbZnt,\mbZns,\ols{Z}_1^t}\big[ K(\mbZnt,\mbZns,\ols{Z}_1^t)\big] ,
\end{equation}

where 
\begin{equation}\label{eq:Fhrepeat}\begin{split}
&K(\mbZnt,\mbZns,\ols{Z}_1^t)=\int_{0}^1 \int_{\Theta_{\spec}}\Big( \frac{\delta \ell}{\delta m_{\spec}}\big(\mfm_{(1)}^f(\lambda),\ols{Z}_1^t,\theta_{\spec}\big)\Big) \\
 &\qquad \times\Bigg(\int_{0}^1\int_{\mcZ}\frac{\delta\mfms}{\delta \nu}(\mfmc(\nus),\nu_{(1)}^t(\lambda_1),z)(\mrd \thetas)(\delta_{Z_1^t}-\delta_{\ols{Z}_1^t})(\mrd z) \mrd \lambda_1\Bigg)\,\mrd \lambda,
\end{split}\end{equation}
As $\mathbb{E}_{\thetas\sim \mfms^{\beta_t}(\mfmc(\nus),\nu_{(1)}^t(\tilde\lambda))}\big[\frac{\delta S_f}{\delta \nu}\big(\mfmc(\nus),\nu_{(1)}^t(\tilde\lambda), z, \thetas\big)\big]\equiv 0$ (by definition of $S_f= \delta \mrR/\delta m_{\spec}$ and the normalization condition and chain rule), inspired by Lemma~\ref{lem_parameter_measure_derivative} we know 
\begin{equation}\label{FWGEKLexp1}
    \begin{split}
    &\int_{\Theta_{\spec}} \Big( \frac{\delta \ell}{\delta m_{\spec}}\big(\mfm_{(1)}^f(\lambda),Z_1^t,\thetas\big)\Big)\\&\qquad\times\Big(\frac{\delta \mfms}{\delta \nu}\big(\mfmc(\nus),\nu_{(1)}^t(\tilde{\lambda}),Z_1^t\big)-\frac{\delta \mfms}{\delta \nu}\big(\mfmc(\nus),\nu_{(1)}^t(\tilde{\lambda}),\ols{Z}_1^t\big)\Big)(\mrd \thetas) \\
 & = -\frac{2\beta_t^2}{\sigma^2}\mathrm{Cov}_{\thetas\sim \mfms^{\beta_t}(\mfmc(\nus),\nu_{(1)}^t(\tilde\lambda))}\Big[ \frac{\delta \ell}{\delta m_{\spec}}\big(\mfm_{(1)}^f(\lambda),Z_1^t,\thetas\big)\\
 &\qquad\qquad,\, \frac{\delta S_f}{\delta \nu}\big(\mfmc(\nus),\nu_{(1)}^t(\tilde\lambda), Z_1^t, \thetas\big)-\frac{\delta S_f}{\delta \nu}\big(\mfmc(\nus),\nu_{(1)}^t(\tilde\lambda), \tilde Z_1^t, \thetas\big)\Big]\\
 &\leq \frac{2\beta_t^2}{\sigma^2}\mathbb{E}_{\thetas\sim \mfms^{\beta_t}(\mfmc(\nus),\nu_{(1)}^t(\tilde\lambda))}\Big[\Big(\frac{\delta \ell}{\delta m_{\spec}}\big(\mfm_{(1)}^f(\lambda),Z_1^t,\thetas\big)\Big)^2\Big]^{1/2}\\
 &\qquad \times \mathbb{E}_{\thetas\sim \mfms^{\beta_t}(\mfmc(\nus),\nu_{(1)}^t(\tilde\lambda))}\Big[\Big(\frac{\delta S_f}{\delta \nu}\big(\mfmc(\nus),\nu_{(1)}^t(\tilde\lambda), Z_1^t, \thetas\big)-\frac{\delta S_f}{\delta \nu}\big(\mfmc(\nus),\nu_{(1)}^t(\tilde\lambda), \tilde Z_1^t, \thetas\big)\Big)^2\Big]^{1/2}.
\end{split}
\end{equation}
Using Proposition \ref{Prop_reccurence_estimatef} and \eqref{eq: bounded g assf},
\begin{equation}
    \begin{split}
    &\mathbb{E}_{\thetas\sim \mfms^{\beta_t}(\mfmc(\nus),\nu_{(1)}^t(\tilde\lambda))}\Big[\Big(\frac{\delta S_f}{\delta \nu}\big(\mfmc(\nus),\nu_{(1)}^t(\tilde\lambda), Z_1^t, \thetas\big)-\frac{\delta S_f}{\delta \nu}\big(\mfmc(\nus),\nu_{(1)}^t(\tilde\lambda), \tilde Z_1^t, \thetas\big)\Big)^2\Big]^{1/2}\\
    &\leq 2\sqrt{2} L_e\Big(1+ 2\mathbb{E}_{\thetas \sim \mfms^{\beta_t}(\mfmc^{\beta_s}(\nus),\nu_{(1)}^t(\tilde\lambda))}\big[\|\thetas\|^8\big]+\mathbb{E}_{\thetac \sim \mfmc^{\beta_s}(\nus)}\big[\|\thetac\|^8\big]\Big)\Big(2+\|Z_1^t\|^2+\|\ols{Z}_1^t\|^2\Big),
\end{split}\end{equation}

From Lemma \ref{lem:optimizerinBnuf}, we know 
\[\mathbb{E}_{\thetas \sim \mfms^{\beta_t}(\mfmc(\nus),\nu_{(1)}^t(\tilde\lambda))}\big[\|\thetas\|^8\big]\leq \mrR(\mfmc(\nus)\tilgammasp, \nu_{(1)}^t(\tilde\lambda)) + \int_{\Theta_{\spec}}\|\thetas\|^8\tilgammasp(\mrd \thetas^t),\]
and from Lemma~\ref{lem:optimizerin-Bnus-shared}, we have,
\[\mathbb{E}_{\thetac \sim \mfmc^{\beta_s}(\nus)}\big[\|\thetac\|^8\big]\leq \mrR(\hat\gamma_{p}^\sigma, \nus) + \int_{\Thetac}\|\thetac\|^8\gammac(\mrd \thetac),\]
where $\gamma_{p,\mathrm{c}}^\sigma(\mrd \thetac)=\int_{\Thetas}\hat\gamma_{p}^\sigma(\mrd \thetas).$
By construction,
\[\nu_{(1)}^t(\tilde\lambda) = \nut +\frac{1-\tilde \lambda}{n_t}\big(\delta_{\ols{Z}_1^t} - \delta_{Z_1^t}\big),\]
so 
\begin{equation}\label{eq:FRboundperturb1}
    \begin{split}
    \mrR(\mfmc(\nus)\tilgammasp, \nu_{(1)}^t(\tilde\lambda)) &= \mrR(\mfmc(\nus)\tilgammasp, \nut) + \frac{1-\tilde \lambda}{n_t}\Big(\ell(\mfmc(\nus)\tilgammasp, \ols Z_1^t) - \ell(\mfmc(\nus)\tilgammasp, Z_1^t)\Big)\\
    &\leq \mrR(\mfmc(\nus)\tilgammasp, \nut) + g(\mfmc(\nus)\tilgammasp)\frac{1-\tilde \lambda}{n_t}\Big(2+\|\ols{Z}_1^t\|^2+ \|Z_1^t\|^2)\Big).
\end{split}
\end{equation}
We also have,
\begin{equation}
    \begin{split}
       \mrR(\hat\gamma_{p}^\sigma, \nus)\leq g(\hat\gamma_{8,c}^\sigma \gammasp) \frac{1}{n_s}\sum_{j=1}^{n_s} (1+\norm{Z_j^s}^2),
    \end{split}
\end{equation}
and hence 
\begin{equation}
    \label{FWGEKLexp2}
    \begin{split}
    &\mathbb{E}_{\thetas\sim \mfms^{\beta_t}(\mfmc(\nus),\nu_{(1)}^t(\tilde\lambda))}\Big[\Big(\frac{\delta S_f}{\delta \nu}\big(\mfmc(\nus),\nu_{(1)}^t(\tilde\lambda), Z_1^t, \thetas\big)-\frac{\delta S_f}{\delta \nu}\big(\mfmc(\nus),\nu_{(1)}^t(\tilde\lambda), \tilde Z_1^t, \thetas\big)\Big)^2\Big]^{1/2}\\
    &\leq 2\sqrt{2} L_e\Big(1+ 2\mrR(\mfmc(\nus)\tilgammasp, \nut) +2\int_{\Theta_{\spec}}\|\thetas\|^8\tilgammasp(\mrd \thetas^t)\\
    &\qquad+2g(\mfmc(\nus)\tilgammasp)\frac{1-\tilde \lambda}{n_t}\Big(2+ \|Z_1^t\|^2+\|\ols{Z}_1^t\|^2\Big)+g(\hat\gamma_{8,c}^\sigma \gammasp) \frac{1}{n_s}\sum_{j=1}^{n_s} (1+\norm{Z_j^s}^2)+\int_{\Thetac}\|\thetac\|^8\gammac(\mrd \thetac)\Big)\\&\quad \times\Big(2+\|Z_1^t\|^2+\|\ols{Z}_1^t\|^2\Big).
\end{split}\end{equation}

By definition, $\mfm_{(1)}^f(\lambda) = \mfmc(\nus)\mfms(\mfmc(\nus),\nu^t_{(1)})(\lambda),$ where $\mfms(\mfmc(\nus),\nu^t_{(1)})(\lambda):=\mfms(\mfmc(\nus),\nutr) + \lambda(\mfms(\mfmc(\nus),\nut) - \mfms(\mfmc(\nus),\nutr))$, we also have 
\begin{equation}
    \label{FWGEKLexp3}
    \begin{split}
    &\mathbb{E}_{\thetas\sim \mfms^{\beta_t}(\mfmc(\nus),\nu_{(1)}^t(\tilde\lambda))}\Big[\Big(\frac{\delta \ell}{\delta m_{\spec}}\big(\mfm_{(1)}^f(\lambda),Z_1^t,\thetas\big)\Big)^2\Big]^{1/2}\\
    &\leq \mathbb{E}_{\thetas\sim \mfms^{\beta_t}(\mfmc(\nus),\nu_{(1)}^t(\tilde\lambda))}\Big[g_{\spec}(\mfm_{(1)}^f(\lambda),\thetas)^2\Big]^{1/2} \big(1+\|Z_1^t\|^2\big)\\
    &\leq\sqrt{2} L_e \Big(1+ \mathbb{E}_{\thetas\sim \mfms^{\beta_t}(\mfmc(\nus),\nu_{(1)}^t(\tilde\lambda))}\big[\|\thetas\|^8\big]\\
   &\qquad+ \mathbb{E}_{\thetas\sim \mfms^{\beta_t}(\mfmc(\nus),\nu_{(1)}^t)(\lambda)}\big[\|\thetas\|^8\big]+\mathbb{E}_{\thetac\sim \mfmc(\nus)}\big[\|\thetac\|^8\big]\Big)\big(1+\|Z_1^t\|^2\big).
    \end{split}
\end{equation}
 and hence
\[\mathbb{E}_{\mfms^{\beta_t}(\mfmc(\nus),\nu_{(1)}^t)(\lambda)}\big[\|\thetas\|^8\big] = (1-\lambda)\mathbb{E}_{\thetas\sim \mfms(\mfmc(\nus),\nutr)}\big[\|\thetas\|^8\big]+ \lambda\mathbb{E}_{\thetas\sim \mfms(\mfmc(\nus),\nut)}\big[\|\thetas\|^8\big].\]

As before, it follows from Lemma \ref{lem:optimizerinBnuf} that
\begin{equation}
\label{Fmomentbound2}
\begin{split}
&\mathbb{E}_{\mfms^{\beta_t}(\mfmc(\nus),\nu_{(1)}^t)(\lambda)}\big[\|\thetas\|^8\big]+\mathbb{E}_{\mfms^{\beta_t}(\mfmc(\nus),\nu_{(1)}^t(\tilde\lambda))}\big[\|\thetas\|^8\big]\\
& = \mathbb{E}_{\thetas\sim \mfms^{\beta_t}(\mfmc(\nus),\nu_{(1)}^t(\tilde\lambda))}\big[\|\thetas\|^8\big]+(1-\lambda)\mathbb{E}_{\thetas\sim \mfms(\mfmc(\nus),\nutr)}\big[\|\thetas\|^8\big]+ \lambda\mathbb{E}_{\thetas\sim \mfms(\mfmc(\nus),\nut)}\big[\|\thetas\|^8\big]\\
&\leq \Big[\mrR(\mfmc(\nus)\tilgammasp, \nu_{(1)}^t(\tilde\lambda)) + \int_{\Theta_{\spec}}\|\thetas\|^8\tilgammasp(\mrd \thetas^t)\Big] + (1-\lambda)\Big[\mrR(\mfmc(\nus)\tilgammasp,\nutr) + \int_{\Theta_{\spec}}\|\thetas\|^8\tilgammasp(\mrd \thetas^t) \Big] \\
& \qquad + \lambda\Big[\mrR(\mfmc(\nus)\tilgammasp, \nut) +\int_{\Theta_{\spec}}\|\thetas\|^8\tilgammasp(\mrd \thetas^t)\Big]\\
    &\leq 2\mrR(\mfmc(\nus)\tilgammasp, \nut) + 2\int_{\Thetas}\|\thetas\|^8\tilgammasp(\mrd \theta) + g(\mfmc(\nus)\tilgammasp)\frac{(2-\tilde \lambda-\lambda)}{n_t}\Big(2+\|\ols{Z}_1^t\|^2+ \|Z_1^t\|^2\Big)\\
  &\leq 2\mrR(\mfmc(\nus)\tilgammasp, \nut) + 2\int_{\Thetas}\|\thetas\|^8\tilgammasp(\mrd \theta) \\
   &\quad+\Big( 1+ g(\hat\gamma_{8,c}^\sigma \gammasp)\frac{1}{n_s}\sum_{j=1}^{n_s}\big(1+\|Z_j^s\|^2\big) + \int_{\Thetac} \|\thetac\|^4\gammac(\thetac)\mrd\thetac+\mathbb{E}_{\thetas\sim\tilgammasp}[\|\thetas\|^4]\Big) \\&\qquad\times\frac{(2-\tilde \lambda-\lambda)}{n_t}\Big(2+\|\ols{Z}_1^t\|^2+ \|Z_1^t\|^2\Big) .
\end{split} 
\end{equation}
It also follows from Lemma~\ref{lem:optimizerin-Bnus-shared},
\begin{equation}
    \begin{split}
        &\mathbb{E}_{\thetac\sim \mfmc(\nus)}\big[\|\thetac\|^4\big]\\
        &\leq  \mrR(\hat\gamma_{p}^\sigma,\nus) + \int_{\Thetac} \|\thetac\|^4\gammac(\thetac)\mrd\thetac\\
        &\leq g(\hat\gamma_{8,c}^\sigma \gammasp)\frac{1}{n_s}\sum_{j=1}^{n_s}(1+\|Z_j^s\|^2)+\int_{\Thetac} \|\thetac\|^4\gammac(\thetac)\mrd\thetac.
    \end{split}
\end{equation}

We also have the following bound on $\mrR(\mfmc(\nus)\tilgammasp, \nut) $ from Lemma~\ref{lem:optimizerin-Bnus-shared},
\begin{equation}\label{eq:FRboundsum}
\begin{split}
    &\mrR(\mfmc(\nus)\tilgammasp, \nut) \\&\leq \int_\mcZ g(\mfmc(\nus)\tilgammasp) (1+\|z\|^2)\nut(\mrd{z})\\
    &= g(\mfmc(\nus)\tilgammasp)\mbE\Big[\frac{1}{n_t}\sum_{j=1}^{n_t}(1+\|Z_j^t\|^2)\Big]\\
    &\leq L_m\big(1+\mathbb{E}_{\thetac\sim\mfmc(\nus)}[\|\thetac\|^4]+\mathbb{E}_{\thetas\sim\tilgammasp}[\|\thetas\|^4]\big)\Big(\frac{1}{n_t}\sum_{j=1}^{n_t}(1+\|Z_j^t\|^2)\Big)\\
    &\leq L_m\Big( 1+ \mrR(\hat\gamma_{p}^\sigma,\nus) + \int_{\Thetac} \|\thetac\|^4\gammac(\mrd\thetac)+\mathbb{E}_{\thetas\sim\tilgammasp}[\|\thetas\|^4]\Big)\Big(\frac{1}{n_t}\sum_{j=1}^{n_t}(1+\|Z_j^t\|^2)\Big)\\
    &\leq L_m\Big( 1+ g(\hat\gamma_{8,c}^\sigma\gammasp)\frac{1}{n_s}\sum_{j=1}^{n_s}\big(1+\|Z_j^s\|^2\big) + \int_{\Thetac} \|\thetac\|^4\gammac(\thetac)\mrd\thetac+\mathbb{E}_{\thetas\sim\tilgammasp}[\|\thetas\|^4]\Big)\Big(\frac{1}{n_t}\sum_{j=1}^{n_t}(1+\|Z_j^t\|^2)\Big) .
    \end{split}
\end{equation}

Therefore, substituting \eqref{FWGEKLexp1}, \eqref{FWGEKLexp2}, \eqref{FWGEKLexp3} and \eqref{Fmomentbound2} into \eqref{eq:Fhrepeat} and simplifying and denoting $A_c:=\int_{\Thetac} \|\thetac\|^8\gammac(\mrd\thetac)$, $B_p:=\int_{\Thetas}\|\thetas\|^8\tilgammasp(\mrd \thetas^t),$ $A_{c,4}=\int_{\Thetac} \|\thetac\|^4\gammac(\thetac)\mrd\thetac$ and $B_{p,4}=\int_{\Thetas}\|\thetas\|^4\tilgammasp(\mrd \thetas^t)$, we have,
 \begin{align*}
&k(\mbZnt,\mbZns,\ols{Z}_1^t,\lambda,\tilde\lambda)   \\
&\leq \frac{2\beta^2}{\sigma^2} 2\sqrt{2} L_e\Big(1+ 2\mrR(\mfmc(\nus)\tilgammasp, \nut) +2\int_{\Theta_{\spec}}\|\thetas\|^8\tilgammasp(\mrd \thetas^t)\\
         &\qquad+2g(\mfmc(\nus)\tilgammasp)\frac{1-\tilde \lambda}{n_t}\Big(2+ \|Z_1^t\|^2+\|\ols{Z}_1^t\|^2\Big)+2g(\hat\gamma_{8,c}^\sigma \gammasp) \frac{1}{n_s}\sum_{j=1}^{n_s} (1+\norm{Z_j^s}^2)+\int_{\Thetac}\|\thetac\|^8\gammac(\mrd \thetac)\Big)\\&\quad \times\Big(2+\|Z_1^t\|^2+\|\ols{Z}_1^t\|^2\Big)\\
         &\qquad \times L_e \Big(1+ \mathbb{E}_{\thetas\sim \mfms^{\beta_t}(\mfmc(\nus),\nu_{(1)}^t(\tilde\lambda))}\big[\|\thetas\|^8\big]\\
        &\qquad+ \mathbb{E}_{\thetas\sim \mfms^{\beta_t}(\mfmc(\nus),\nu_{(1)}^t)(\lambda)}\big[\|\thetas\|^8\big]+\mathbb{E}_{\thetac\sim \mfmc(\nus)}\big[\|\thetac\|^8\big]\Big)\big(1+\|Z_1^t\|^2\big)\\
 &\leq \frac{2\beta^2}{\sigma^2} 8 L_e^2 \big(1+\|Z_1^t\|^2\big) \Big(2+\|Z_1^t\|^2+\|\ols{Z}_1^t\|^2\Big)\\
         &\quad \times \Bigg(1+ 2g\big(\hat\gamma_{8,c}^\sigma\gammasp\big)\frac{1}{n_s}\sum_{j=1}^{n_s}(1+\|Z_j^s\|^2)+A_c+2B_p+ 2G_1\Big(\frac{1}{n_t}\sum_{j=1}^{n_t}(1+\|Z_j^t\|^2)\Big) \\&\qquad+2G_1\frac{1-\tilde \lambda}{n_t}\Big(2+ \|Z_1^t\|^2+\|\ols{Z}_1^t\|^2\Big)\Bigg)\\
          &\quad \times \Bigg(1+2g\big(\hat\gamma_{8,c}^\sigma\gammasp\big)\frac{1}{n_s}\sum_{j=1}^{n_s} (1+\norm{Z_j^s}^2)+A_c+2B_p+ 2G_1\Big(\frac{1}{n_t}\sum_{j=1}^{n_t}(1+\|Z_j^t\|^2)\Big)\\&\qquad+G_1 \frac{(2-\tilde \lambda-\lambda)}{n_t}\Big(2+\|\ols{Z}_1^t\|^2+ \|Z_1^t\|^2\Big)\Bigg),
    \end{align*}
where $G_1=L_m\Big(1+g(\hat \gamma_{8,c}^{\sigma}\hat \gamma_{8,\spec}^{\sigma})\frac{1}{n_s}\sum_{j=1}^{n_s}(1+\|Z_j^s\|^2)+A_{c,4} +B_{p,4}\Big)$.

More simplifying with respect to $\lambda$ and $\tilde \lambda$,
\begin{equation}\label{Fhexpansionfinal1} 
\begin{split}
&K(\mbZnt,\mbZns,\ols{Z}_1^t)   \\
&\leq \frac{2\beta^2}{\sigma^2} 8 L_e^2 \big(1+\|Z_1^t\|^2\big) \Big(2+\|Z_1^t\|^2+\|\ols{Z}_1^t\|^2\Big)\\
         &\quad \times \Bigg(1+ 2g\big(\hat\gamma_{8,c}^\sigma\gammasp\big)\frac{1}{n_s}\sum_{j=1}^{n_s}(1+\|Z_j^s\|^2)+A_c+2B_p+ 2G_1\Big(\frac{1}{n_t}\sum_{j=1}^{n_t}(1+\|Z_j^t\|^2)\Big)\\&\qquad +G_1\frac{2}{n_t}\Big(2+ \|Z_1^t\|^2+\|\ols{Z}_1^t\|^2\Big)\Bigg)^2
         \end{split}
\end{equation}

And therefore, applying Lemma \ref{lem:polybound} and substituting \eqref{Fhexpansionfinal1} in \eqref{FWGEKLbasic},
\begin{equation}\label{FWGEKLexp5}
\begin{split}
 &|\mathrm{gen}(\mfmc(\nus)\mfms(\mfmc(\nus),\nut),\nupop^t)|
 \leq \frac{2}{n_t}(1+\frac{1}{n_t})^2\frac{16\beta^2}{\sigma^2}  L_e^2(1+L_m)^2\\
 &\times \Big[1+ g(\hat\gamma_{8,c}^\sigma \gammasp)+\int_{\Thetac} \|\thetac\|^8\gammac(\mrd\thetac)+2\int_{\Thetas}\|\thetas\|^8\tilgammasp(\mrd \thetas^t)+\int_{\Thetac} \|\thetac\|^4\gammac(\mrd\thetac)+\int_{\Thetas}\|\thetas\|^4\tilgammasp(\mrd \thetas^t)\Big]^2
 \\
 & \times \mathbb{E}_{Z_1^s}\Big[(1+\|Z_1^s\|^2)^2\Big] \mathbb{E}_{Z_1^t}\Big[(1+\|Z_1^t\|^2)^4\Big]\\
 &\leq \frac{2}{n_t}(1+\frac{2}{n_t})^2\frac{16\beta^2}{\sigma^2}  L_e^2(1+L_m)^2\\
 &\times \Big[1+\int_{\Thetac} \|\thetac\|^8\gammac(\mrd\thetac)+2\int_{\Thetas}\|\thetas\|^8\tilgammasp(\mrd \thetas^t)+2\int_{\Thetac} \|\thetac\|^4\gammac(\mrd\thetac)\\&\qquad+\int_{\Thetas}\|\thetas^t\|^4\tilgammasp(\mrd \thetas^t)+\int_{\Thetas}\|\thetas^s\|^4\gammasp(\mrd \thetas^s)\Big]^2
 \\
 & \times \mathbb{E}_{Z_1^s}\Big[(1+\|Z_1^s\|^2)^2\Big] \mathbb{E}_{Z_1^t}\Big[(1+\|Z_1^t\|^2)^4\Big].
\end{split}
\end{equation}
\end{proof}

\begin{tcolorbox}
    \begin{reptheorem}{thm: excess-fine-tune}[\textbf{Full version}]
    Under Assumption~\ref{ass:KLreg_assnf} and Assumption~\ref{ass:KLreg_assn}, the following upper bound holds on the WTER under fine-tuning scenario,
   \begin{equation}
       \begin{split}
           &\mathcal{E}(\mfmc^{\beta_s}(\nus)\mfms^{\beta_t}(\mfmc^{\beta_s}(\nus),\nut),\nupop^t)\\
           &\leq \frac{2\beta_t^2}{\sigma^2}  \frac{16}{n_t}(1+\frac{2}{n_t})^2L_e^2(1+L_m)^2\comp(\thetac,\thetas^t,\thetas^s) \mathbb{E}_{Z_1^s}\Big[(1+\|Z_1^s\|^2)^2\Big] \mathbb{E}_{Z_1^t}\Big[(1+\|Z_1^t\|^2)^4\Big]\\
    &\quad+\frac{\sigma^2}{2\beta_t^2}\KLr(\tilde{m}_{\spec}^t\| \tilde\gamma_{\spec}^{\sigma})\\
    &\quad+ \frac{2\beta_s^2}{\sigma^2} \frac{9\sqrt{2}}{n_s} (1+\frac{2}{n_s})^2 L_e^2 (1+ L_m)^2 \comp(\theta) \mathbb{E}_{Z_1^t}\Big[(1+\|Z_1^s\|^2)^4\Big]\\
    &\quad+\frac{\sigma^2}{2\beta_s^2}\KLr(\mrmbarc^s\otimes\mrmbarsp^s\|\gamma_c^\sigma\otimes\gamma_{\spec}^{\sigma})\\
    &\quad+4\Big(1+\int_{\Thetac} \|\thetac\|^4\hat\gamma_{8,c}^\sigma(\mrd\thetac)+\int_{\Thetas}\|\thetas^s\|^4\gammasp(\mrd \thetas^s)\Big)\\&\qquad\times\Big(1+\int_{\Thetac} \|\thetac\|^4\mrmbarc^{s}(\mrd \thetac)+\int_{\Thetas}\|\thetas^s\|^4\mrmbarsp^{s}(\mrd \thetas^s)
    +\int_{\Thetas}\|\thetas^t\|^4 \tilde{m}_{\spec}^t(\mrd \thetas^t)\Big)\mbE_{Z_1^s}\big[(1+\|Z_1^s\|^2)\big]
     \\&\qquad\times\Disnd(\nupop^s,\nupop^t) \\
    &\quad+L_e\mbE_{Z_1^s}[(1+\|Z_1^s\|^2)^2]\Disfth(\tilde{m}_{\spec}^s,\tilde{m}_{\spec}^t)\\
     &\quad + L_e\mbE_{Z_1^t}[(1+\|Z_1^t\|^2)^2]\Disfth(\mrmbarc^{s}\otimes\mrmbarsp^{s},\mrmbarc^{t}\otimes\mrmbarsp^{t}),
     \end{split} 
   \end{equation}
   where \[\begin{split}\comp(\theta)&=\Big[1+ 2 \int_{\Theta}\|\theta\|^8\tilgammap(\mrd \theta) +2\mbE_{\theta\sim\tilgammap}[\|\theta\|^4]\Big]^2,\\
   \comp(\thetac,\thetas^t,\thetas^s)&=\Big[1+\int_{\Thetac} \|\thetac\|^4(2+\|\thetac\|^4)\gammac(\mrd\thetac)\\
&\quad+\int_{\Thetas}\|\thetas^t\|^4(1+ 2\|\thetas^t\|^4)\tilgammasp(\mrd \thetas^t)+\int_{\Thetas}\|\thetas^s\|^4\gammasp(\mrd \thetas^s)\Big]^2,
    \\\tilde{m}_{\spec}^t&=\mathop{\arg\min}_{m_{\spec}\in\mathcal{P}_4(\Thetas)} R(\mfmc^{\beta_s}(\nus) m_{\spec},\nupop^t)\,,\end{split}\] provided that $\KLr(\tilde{m}_{\spec}^t\| \tilde\gamma_{\spec}^{\sigma})<\infty$, 
  
  \[\begin{split}\tilde{m}_{\spec}^s&=\mathop{\arg\min}_{m_{\spec}\in\mathcal{P}_4(\Thetas)} R(\mfmc^{\beta_s}(\nus) m_{\spec},\nupop^s),\\\mrmbarc^s\otimes\mrmbarsp^s&=\mathop{\arg\min}_{\mrmbarc^s\otimes\mrmbarsp^s\in\mathcal{P}_4(\Thetac)\times \mathcal{P}_4(\Thetas)} R(\mrmbarc^s\otimes\mrmbarsp^s,\nupop^s),\end{split}\] 
  
  and \[\mrmbarc^t\otimes\mrmbarsp^t=\mathop{\arg\min}_{\mrmbarc^t\otimes\mrmbarsp^t\in\mathcal{P}_4(\Thetac)\times \mathcal{P}_4(\Thetas)} R(\mrmbarc^t\otimes\mrmbarsp^t,\nupop^t).\]
\end{reptheorem}
\end{tcolorbox}

\begin{proof}[Proof of Theorem~\ref{thm: excess-fine-tune}]
    \begin{equation}\label{eq: ex fine}
   \begin{split}
       &\mathcal{E}(\mfmc^{\beta_s}(\nus)\mfms^{\beta_t}(\mfmc^{\beta_s}(\nus),\nut),\nupop^t)\\
       &=\mathbb{E}_{\mbZnt,\mbZns}[\mrR(\mfmc^{\beta_s}(\nus)\mfms^{\beta_t}(\mfmc^{\beta_s}(\nus),\nut),\nupop^t)]-\mrR(\mrmbarc^{t}\otimes\mrmbarsp^{t},\nupop^t)
       \\
       &= \underbrace{\mathbb{E}_{\mbZnt,\mbZns}[\mrR(\mfmc^{\beta_s}(\nus)\mfms^{\beta_t}(\mfmc^{\beta_s}(\nus),\nut),\nupop^t)] 
        - \mathbb{E}_{\mbZnt,\mbZns}[\mrR(\mfmc^{\beta_s}(\nus)\mfms^{\beta_t}(\mfmc^{\beta_s}(\nus),\nut),\nut)]}_{I_1}
       \\ &\quad+\underbrace{\mathbb{E}_{\mbZnt,\mbZns}[\mrR(\mfmc^{\beta_s}(\nus)\mfms^{\beta_t}(\mfmc^{\beta_s}(\nus),\nut),\nut)]
      -\mathbb{E}_{\mbZnt,\mbZns}[\mrR(\mfmc^{\beta_s}(\nus)\tilde{m}_{\spec}^t,\nupop^t)]}_{I_2}
       \\ &\quad+\underbrace{ \mathbb{E}_{\mbZnt,\mbZns}[\mrR(\mfmc^{\beta_s}(\nus)\tilde{m}_{\spec}^t,\nupop^t)]
        -\mathbb{E}_{\mbZnt,\mbZns}[\mrR(\mfmc^{\beta_s}(\nus)\tilde{m}_{\spec}^t,\nupop^s)]}_{I_3}
       \\&\quad+ \underbrace{\mathbb{E}_{\mbZnt,\mbZns}[\mrR(\mfmc^{\beta_s}(\nus)\tilde{m}_{\spec}^t,\nupop^s)] 
       - \mathbb{E}_{\mbZnt,\mbZns}[\mrR(\mfmc^{\beta_s}(\nus)\tilde{m}_{\spec}^s,\nupop^s)]}_{I_4} 
       \\&\quad +\underbrace{\mathbb{E}_{\mbZnt,\mbZns}[\mrR(\mfmc^{\beta_s}(\nus)\tilde{m}_{\spec}^s,\nupop^s)]
       - \mathbb{E}_{\mbZnt,\mbZns}[\mrR(\mfmc^{\beta_s}(\nus)\mfms^{\beta_s}(\mfmc^{\beta_s}(\nus),\nus),\nus),\nupop^s)]}_{I_5}
        \\&\quad+ \underbrace{\mathbb{E}_{\mbZnt,\mbZns}[\mrR(\mfmc^{\beta_s}(\nus)\mfms^{\beta_s}(\mfmc^{\beta_s}(\nus),\nus),\nus),\nupop^s)] 
      - \mathbb{E}_{\mbZnt,\mbZns}[\mrR(\mfmc^{\beta_s}(\nus)\mfms^{\beta_s}(\mfmc^{\beta_s}(\nus),\nus),\nus),\nus)]}_{I_6}
       \\&\quad + \underbrace{\mathbb{E}_{\mbZnt,\mbZns}[\mrR(\mfmc^{\beta_s}(\nus)\mfms^{\beta_s}(\mfmc^{\beta_s}(\nus),\nut),\nus),\nus)]- \mrR(\mrmbarc^{s}\otimes\mrmbarsp^{s},\nupop^s)}_{I_7}\\
       &\quad + \underbrace{\mrR(\mrmbarc^{s}\otimes\mrmbarsp^{s},\nupop^s)-\mrR(\mrmbarc^{s}\otimes\mrmbarsp^{s},\nupop^t)}_{I_8}\\
&\quad+\underbrace{\mrR(\mrmbarc^{s}\otimes\mrmbarsp^{s},\nupop^t)-\mrR(\mrmbarc^{t}\otimes\mrmbarsp^{t},\nupop^t)}_{I_9},
       \end{split}
   \end{equation}
   where $\mrmbarc^{\mathcal{T}}=\int_{\Thetas}\bar{m}^{\mathcal{T}}\mrd \thetas$, $\mrmbarsp^{\mathcal{T}}=\int_{\Thetac}\bar{m}^{\mathcal{T}}\mrd \thetac$, $\bar{m}^{\mathcal{T}}=\mathop{\arg \inf}_{m\in\mathcal{P}(\Theta)}\mrR(m,\nupop^{\mathcal{T}})$ for $\mathcal{T}\in\{s,t\}$ and $\tilde{m}_{\spec}^t=\mathop{\arg \min}_{m_{\spec}\in\mathcal{P}(\Thetas)}\mrR(\mfmc(\nus)m_{\spec},\nupop^t)$.
   Using the Theorem~\ref{thm: WTGE fine tune} result, we have the following bound on term $I_1$ in \eqref{eq: ex fine},
   \begin{equation}\label{eq: ex fine1}
   \begin{split}
   &\mathbb{E}_{\mbZnt,\mbZns}[\mrR(\mfmc^{\beta_s}(\nus)\mfms^{\beta_t}(\mfmc^{\beta_s}(\nus),\nut),\nupop^t)] 
        \\&\qquad- \mathbb{E}_{\mbZnt,\mbZns}[\mrR(\mfmc^{\beta_s}(\nus)\mfms^{\beta_t}(\mfmc^{\beta_s}(\nus),\nut),\nut)]\\
        &\leq \frac{2\beta_t^2}{\sigma^2}  \frac{16}{n_t}(1+\frac{2}{n_t})^2L_e^2(1+L_m)^2\comp(\thetac,\thetas^t,\thetas^s) \mathbb{E}_{Z_1^s}\Big[(1+\|Z_1^s\|^2)^2\Big] \mathbb{E}_{Z_1^t}\Big[(1+\|Z_1^t\|^2)^4\Big].
 \end{split}\end{equation}
Using the definition of $\mrmbarsp$ and the KL-regularized, we have the following bound on term $I_2$ in \eqref{eq: ex fine},
 \begin{equation}\label{eq: ex fine2}
 \begin{split}
& \mathbb{E}_{\mbZnt,\mbZns}[\mrR(\mfmc^{\beta_s}(\nus)\mfms^{\beta_t}(\mfmc^{\beta_s}(\nus),\nut),\nut)]
       -\mathbb{E}_{\mbZnt,\mbZns}[\mrR(\mfmc^{\beta_s}(\nus)\tilde{m}_{\spec}^t,\nupop^t)]\\
        &\leq \frac{\sigma^2}{2\beta_t^2}\KLr(\mrmbarsp^t\|\tilde\gamma_{\spec}^{\sigma}).
 \end{split}
 \end{equation}
Using Lemma~\ref{lem: data upper bound with dist}, we have the following bound on term $I_3$ in \eqref{eq: ex fine},
 \begin{equation}\label{eq: ex fine5}
     \begin{split}
          &\mathbb{E}_{\mbZnt,\mbZns}[\mrR(\mfmc^{\beta_s}(\nus)\tilde{m}_{\spec}^t,\nupop^t)]-\mathbb{E}_{\mbZnt,\mbZns}[\mrR(\mfmc^{\beta_s}(\nus)\tilde{m}_{\spec}^t,\nupop^s)]\\
          &\leq L_m\big(1+\mbE_{\thetac\sim \mfmc^{\beta_s}(\nus)}[\|\thetac\|^4]+\mbE_{\thetas\sim \tilde{m}_{\spec}^t}[\|\thetas^t\|^4]\big)\Disnd(\nupop^s,\nupop^t)\\
          &\leq 2L_m\mbE_{Z_1^s}\big[(1+\|Z_1^s\|^2)\big] \\&\qquad\times\Big(1+\int_{\Thetac} \|\thetac\|^4\hat\gamma_{8,c}^\sigma(\mrd\thetac)+\int_{\Thetas}\|\thetas^s\|^4\gammasp(\mrd \thetas^s)+\int_{\Thetas}\|\thetas\|^4 \tilde{m}_{\spec}^t(\mrd \thetas^t)\Big)\Disnd(\nupop^s,\nupop^t).
     \end{split}
 \end{equation}
 where the last inequality follows from applying  Lemma~\ref{lem:optimizerin-Bnus-shared},
\begin{equation}
    \begin{split}
        &\mathbb{E}_{\thetac\sim \mfmc(\nus)}\big[\|\thetac\|^4\big]\\
        &\leq  \mrR(\hat\gamma_{p}^\sigma,\nus) + \int_{\Thetac} \|\thetac\|^4\gammac(\thetac)\mrd\thetac\\
        &\leq g(\hat\gamma_{8,c}^\sigma \gammasp)\frac{1}{n_s}\sum_{j=1}^{n_s}(1+\|Z_j^s\|^2)+\int_{\Thetac} \|\thetac\|^4\gammac(\thetac)\mrd\thetac,
    \end{split}
\end{equation}

 Using a similar approach to Lemma~\ref{lem: parameter upper bound with dist}, we have the following bound on term $I_4$ in \eqref{eq: ex fine}, 
   \begin{equation}
       \begin{split}
          & \mathbb{E}_{\mbZnt,\mbZns}[\mrR(\mfmc^{\beta_s}(\nus)\tilde{m}_{\spec}^t,\nupop^s)] - \mathbb{E}_{\mbZnt,\mbZns}[\mrR(\mfmc^{\beta_s}(\nus)\tilde{m}_{\spec}^s,\nupop^s)] \\
           &\leq L_e\mbE_{Z_1^s}[(1+\|Z_1^s\|^2)^2]\Big(1+\mbE_{\mbZns}\big[\mbE_{\thetac\sim\mfmc^{\beta_s}(\nus)}[\| \thetac\|^4] \big]+ \mbE_{\thetas\sim\frac{\tilde{m}_{\spec}^s+\tilde{m}_{\spec}^t}{2}}[\| \thetas\|^4] \Big)\\
           &\qquad\times\Disfth(\tilde{m}_{\spec}^s,\tilde{m}_{\spec}^t).
       \end{split}
   \end{equation}
Using the definition of $\tilde{m}_{\spec}^s$, we have the following bound on term $I_5$ in \eqref{eq: ex fine},
 \begin{equation}\label{eq: last fine}
     \begin{split}
         &\mathbb{E}_{\mbZnt,\mbZns}[\mrR(\mfmc^{\beta_s}(\nus)\tilde{m}_{\spec}^s,\nupop^s)]
       - \mathbb{E}_{\mbZnt,\mbZns}[\mrR(\mfmc^{\beta_s}(\nus)\mfms^{\beta_s}(\mfmc^{\beta_s}(\nus),\nus),\nus),\nupop^s)]\leq 0.
     \end{split}
 \end{equation}

 Using the same approach in Theorem~\ref{thm: WTGE fine tune} in the first step under Assumption~\ref{ass:KLreg_assn}, we have the following bound on term $I_6$ in \eqref{eq: ex fine},
 \begin{equation}\label{eq: ex fine3}
 \begin{split}
 &\mathbb{E}_{\mbZnt,\mbZns}[\mrR(\mfmc^{\beta_s}(\nus)\mfms^{\beta_t}(\mfmc^{\beta_s}(\nus),\nus),\nus),\nupop^s)] \\&\qquad- \mathbb{E}_{\mbZnt,\mbZns}[\mrR(\mfmc^{\beta_s}(\nus)\mfms^{\beta_s}(\mfmc^{\beta_s}(\nus),\nus),\nus),\nus)] \\
 &\leq   \frac{9\sqrt{2}}{n_s} L_e^2 (1+ L_m)^2(1+\frac{1}{n_s})^2 \frac{2\beta_s^2}{\sigma^2}\comp(\theta) \mathbb{E}_{Z_1^t}\Big[(1+\|Z_1^s\|^2)^4\Big].
 \end{split}
 \end{equation}
 Furthermore, we have the following bound on term $I_7$ in \eqref{eq: ex fine},
 \begin{equation}\label{eq: ex fine4}
     \begin{split}
         &\mathbb{E}_{\mbZnt,\mbZns}[\mrR(\mfmc^{\beta_s}(\nus)\mfms^{\beta_t}(\mfmc^{\beta_s}(\nus),\nut),\nus),\nus)]- \mrR(\mrmbarc^{s}\mrmbarsp^{s},\nupop^s)\\&\leq \frac{\sigma^2}{2\beta_s^2}\KLr(\mrmbarc^s\mrmbarsp^s\|\gamma_c^\sigma\otimes\gamma_{\spec}^{\sigma}).
     \end{split}
 \end{equation}
 Using Lemma~\ref{lem: data upper bound with dist}, we have the following bound on term $I_8$ in \eqref{eq: ex fine},
 \begin{equation}
     \begin{split}
    &\mrR(\mrmbarc^{s}\mrmbarsp^{s},\nupop^s)-\mrR(\mrmbarc^{s}\mrmbarsp^{s},\nupop^t)\\&\leq (1+\mbE_{\thetac\sim \mrmbarc^{s}}[\|\thetac\|^4]+\mbE_{\thetas\sim \mrmbarsp^{s}}[\|\thetas\|^4])\Disnd(\nupop^s,\nupop^t).
     \end{split}
 \end{equation}
Using Lemma~\ref{lem: parameter upper bound with dist}, we have the following bound on term $I_9$ in \eqref{eq: ex fine},
 \begin{equation}
     \begin{split}
         &\mrR(\mrmbarc^{s}\otimes\mrmbarsp^{s},\nupop^t)-\mrR(\mrmbarc^{t}\otimes\mrmbarsp^{t},\nupop^t)\\
         &\leq \mbE_{Z_1^t}[(1+\|Z_1^t\|^2)^2]\Disfth(\mrmbarc^{s}\otimes\mrmbarsp^{s},\mrmbarc^{t}\otimes\mrmbarsp^{t}).
     \end{split}
 \end{equation}
   The final result holds via substituting \eqref{eq: ex fine1}-\eqref{eq: last fine} in \eqref{eq: ex fine},
   \begin{equation}
       \begin{split}
&\mathcal{E}(\mfmc^{\beta_s}(\nus)\mfms^{\beta_t}(\mfmc^{\beta_s}(\nus),\nut),\nupop^t)\\
 &\leq \frac{2\beta_t^2}{\sigma^2}  \frac{16}{n_t}(1+\frac{2}{n_t})^2L_e^2(1+L_m)^2\comp(\thetac,\thetas^t,\thetas^s) \mathbb{E}_{Z_1^s}\Big[(1+\|Z_1^s\|^2)^2\Big] \mathbb{E}_{Z_1^t}\Big[(1+\|Z_1^t\|^2)^4\Big]\\
     &\quad+\frac{\sigma^2}{2\beta_t^2}\KLr(\tilde{m}_{\spec}^t\| \tilde\gamma_{\spec}^{\sigma})\\
     &\quad+2\mbE_{Z_1^s}\big[(1+\|Z_1^s\|^2)\big] \Big(1+\int_{\Thetac} \|\thetac\|^4\hat\gamma_{8,c}^\sigma(\mrd\thetac)+\int_{\Thetas}\|\thetas\|^4\gammasp(\mrd \thetas^s)+\int_{\Thetas}\|\thetas\|^4 \tilde{m}_{\spec}^t(\mrd \thetas)\Big)\\&\qquad\times\Disnd(\nupop^s,\nupop^t) \\
     &\quad+\mbE_{Z_1^s}[(1+\|Z_1^s\|^2)^2]\Disfth(\tilde{m}_{\spec}^s,\tilde{m}_{\spec}^t)\\
     &\quad+ \frac{2\beta_s^2}{\sigma^2} \frac{9\sqrt{2}}{n_s} L_e^2 (1+ L_m)^2(1+\frac{2}{n_s})^2 \comp(\theta) \mathbb{E}_{Z_1^t}\Big[(1+\|Z_1^s\|^2)^4\Big]\\
     &\quad+\frac{\sigma^2}{2\beta_s^2}\KLr(\mrmbarc^s\otimes\mrmbarsp^s\|\gamma_c^\sigma\otimes\gamma_{\spec}^{\sigma})\\
     &\quad +L_e (1+\mbE_{\thetac\sim \mrmbarc^{s}}[\|\thetac\|^4]+\mbE_{\thetas\sim \mrmbarsp^{s}}[\|\thetas\|^4])\Disnd(\nupop^s,\nupop^t)\\
     &\quad +L_e \mbE_{Z_1^t}[(1+\|Z_1^t\|^2)^2]\Disfth(\mrmbarc^{s}\otimes\mrmbarsp^{s},\mrmbarc^{t}\otimes\mrmbarsp^{t})\\
 &\leq \frac{2\beta_t^2}{\sigma^2}  \frac{16}{n_t}(1+\frac{2}{n_t})^2L_e^2(1+L_m)^2\comp(\thetac,\thetas^t,\thetas^s) \mathbb{E}_{Z_1^s}\Big[(1+\|Z_1^s\|^2)^2\Big] \mathbb{E}_{Z_1^t}\Big[(1+\|Z_1^t\|^2)^4\Big]\\
    &\quad+\frac{\sigma^2}{2\beta_t^2}\KLr(\tilde{m}_{\spec}^t\| \tilde\gamma_{\spec}^{\sigma})\\
    &\quad+ \frac{2\beta_s^2}{\sigma^2} \frac{9\sqrt{2}}{n_s} L_e^2 (1+ L_m)^2(1+\frac{2}{n_s})^2 \comp(\theta) \mathbb{E}_{Z_1^t}\Big[(1+\|Z_1^s\|^2)^4\Big]\\
    &\quad+\frac{\sigma^2}{2\beta_s^2}\KLr(\mrmbarc^s\otimes\mrmbarsp^s\|\gamma_c^\sigma\otimes\gamma_{\spec}^{\sigma})\\
    &\quad+4\mbE_{Z_1^s}\big[(1+\|Z_1^s\|^2)\big] \Big(1+\mbE_{\thetac\sim \mrmbarc^{s}}[\|\thetac\|^4]+\mbE_{\thetas\sim \mrmbarsp^{s}}[\|\thetas\|^4]+\int_{\Thetac} \|\thetac\|^4\hat\gamma_{8,c}^\sigma(\mrd\thetac)
    \\&\qquad+\int_{\Thetas}\|\thetas\|^4\gammasp(\mrd \thetas^s)+\int_{\Thetas}\|\thetas\|^4 \tilde{m}_{\spec}^t(\mrd \thetas)\Big)
     \\&\qquad\times\Disnd(\nupop^s,\nupop^t) \\
    &\quad+L_e\mbE_{Z_1^s}[(1+\|Z_1^s\|^2)^2]\Disfth(\tilde{m}_{\spec}^s,\tilde{m}_{\spec}^t)\\
     &\quad + L_e \mbE_{Z_1^t}[(1+\|Z_1^t\|^2)^2]\Disfth(\mrmbarc^{s}\otimes\mrmbarsp^{s},\mrmbarc^{t}\otimes\mrmbarsp^{t}).
     \end{split} 
   \end{equation}
\end{proof}

\section{Proofs and details from Section \ref{Sec: Application}}\label{app: Sec: Application}
The Fine-tuning scenario in one-hidden layer neural network is shown in Fig.~\ref{Fig: NN-combined}. Furthermore, the block diagram of $\alpha$-ERM is shown in Fig.~\ref{Fig: NN-alpha} where we have $\alpha \mrR(m,\nut) + (1-\alpha) \mrR(m,\nus)$ as risk function.
\begin{figure}[htb]
    \centering
    \begin{subfigure}[b]{0.7\textwidth}
        \centering
        \resizebox{\textwidth}{!}{%
            \begin{tikzpicture}[
    input/.style = {circle, draw, fill=green!20, minimum size=0.8cm, font=\footnotesize},
    hidden/.style = {circle, draw, fill=blue!20, minimum size=0.8cm, font=\footnotesize},
    output1/.style = {circle, draw, fill=red!20, minimum size=0.8cm, font=\footnotesize},
    output2/.style = {circle, draw, fill=yellow!20, minimum size=0.8cm, font=\footnotesize},
    every edge/.style = {draw, ->, blue},
    every node/.style = {font=\scriptsize},
]

\node[input] (x1) at (0, -1) {$x_1^s$};
\node[input] (x2) at (0, -2.5) {$x_2^s$};
\node (dots1) at (0, -3.5) {$\vdots$};
\node[input] (xq) at (0, -5) {$x_q^s$};

\node[hidden] (h1) at (4, 0) {$h_1$};
\node[hidden] (h2) at (4, -2) {$h_2$};
\node[hidden] (h3) at (4, -4) {$h_3$};
\node (dots2) at (4, -5) {$\vdots$};
\node[hidden] (hr) at (4, -6) {$h_r$};

\node[output1] (O1) at (8, -2) {$O_s$};
\node[output2] (O2) at (8, -4) {$O_t$};

\draw[->, blue] (x1) -- node[pos=0.3, above, sloped] {$w_{1,1}$} (h1);
\draw[->, blue] (x2) -- node[pos=0.3, above, sloped] {$w_{2,1}$} (h1);
\draw[->, blue] (xq) -- node[pos=0.3, above, sloped] {$w_{q,1}$} (h1);

\draw[->, blue] (h1) -- node[pos=0.3, above, sloped] {$a_{1,s}$} (O1);
\draw[->, blue] (h2) -- node[pos=0.3, above, sloped] {$a_{2,s}$} (O1);
\draw[->, blue] (h3) -- node[pos=0.3, below, sloped] {$a_{3,s}$} (O1);
\draw[->, blue] (hr) -- node[pos=0.3, below, sloped] {$a_{r,s}$} (O1);

\draw[->, black!20] (h1) --  (O2);
\draw[->, black!20] (h2) --  (O2);
\draw[->, black!20] (h3) --  (O2);
\draw[->, black!20] (hr) --  (O2);

\draw[->, blue!20] (x1) -- (h2);
\draw[->, blue!20] (x1) -- (h3);
\draw[->, blue!20] (x1) -- (hr);
\draw[->, blue!20] (x2) -- (h2);
\draw[->, blue!20] (x2) -- (h3);
\draw[->, blue!20] (x2) -- (hr);
\draw[->, blue!20] (xq) -- (h2);
\draw[->, blue!20] (xq) -- (h3);
\draw[->, blue!20] (xq) -- (hr);

\node[right=0.5cm of h1, text width=4cm, align=left, font=\scriptsize] {
    $h_1 = \varphi \left( \sum_{i=1}^q w_{i,1}x_i^s \right)$
};

\node[right=0.5cm of O1, text width=6cm, align=left, font=\scriptsize] {
    $O_s = \frac{1}{r} \sum_{j=1}^r a_{j,s} \varphi \left( \sum_{i=1}^q w_{i,j}x_i^s \right)$
};




\end{tikzpicture}
        }
        \caption{First step of Fine-tuning}
        \label{Fig: NN-first}
    \end{subfigure}
    \hfill
    \begin{subfigure}[b]{0.7\textwidth}
        \centering
        \resizebox{\textwidth}{!}{%
            \begin{tikzpicture}[
    input/.style = {circle, draw, fill=green!20, minimum size=0.8cm, font=\footnotesize},
    hidden/.style = {circle, draw, fill=blue!20, minimum size=0.8cm, font=\footnotesize},
    output1/.style = {circle, draw, fill=red!20, minimum size=0.8cm, font=\footnotesize},
    output2/.style = {circle, draw, fill=yellow!20, minimum size=0.8cm, font=\footnotesize},
    every edge/.style = {draw, ->, blue},
    every node/.style = {font=\scriptsize},
]

\node[input] (x1) at (0, -1) {$x_1^t$};
\node[input] (x2) at (0, -2.5) {$x_2^t$};
\node (dots1) at (0, -3.5) {$\vdots$};
\node[input] (xq) at (0, -5) {$x_q^t$};

\node[hidden] (h1) at (4, 0) {$h_1$};
\node[hidden] (h2) at (4, -2) {$h_2$};
\node[hidden] (h3) at (4, -4) {$h_3$};
\node (dots2) at (4, -5) {$\vdots$};
\node[hidden] (hr) at (4, -6) {$h_r$};

\node[output1] (O1) at (8, -2) {$O_s$};
\node[output2] (O2) at (8, -4) {$O_t$};

\draw[->, blue] (x1) -- node[pos=0.3, above, sloped] {$w_{1,1}$} (h1);
\draw[->, blue] (x2) -- node[pos=0.3, above, sloped] {$w_{2,1}$} (h1);
\draw[->, blue] (xq) -- node[pos=0.3, above, sloped] {$w_{q,1}$} (h1);

\draw[->, blue!20] (h1) -- (O1);
\draw[->, blue!20] (h2) -- (O1);
\draw[->, blue!20] (h3) -- (O1);
\draw[->, blue!20] (hr) -- (O1);

\draw[->, black] (h1) -- node[pos=0.3, above, sloped] {$a_{1,t}$} (O2);
\draw[->, black] (h2) -- node[pos=0.3, below, sloped] {$a_{2,t}$} (O2);
\draw[->, black] (h3) -- node[pos=0.3, below, sloped] {$a_{3,t}$} (O2);
\draw[->, black] (hr) -- node[pos=0.3, below, sloped] {$a_{r,t}$} (O2);

\draw[->, blue!20] (x1) -- (h2);
\draw[->, blue!20] (x1) -- (h3);
\draw[->, blue!20] (x1) -- (hr);
\draw[->, blue!20] (x2) -- (h2);
\draw[->, blue!20] (x2) -- (h3);
\draw[->, blue!20] (x2) -- (hr);
\draw[->, blue!20] (xq) -- (h2);
\draw[->, blue!20] (xq) -- (h3);
\draw[->, blue!20] (xq) -- (hr);

\node[right=0.5cm of h1, text width=4cm, align=left, font=\scriptsize] {
    $h_1 = \varphi \left( \sum_{i=1}^q w_{i,1}x_i^t \right)$
};



\node[right=0.5cm of O2, text width=6cm, align=left, font=\scriptsize] {
    $O_t = \frac{1}{r} \sum_{j=1}^r a_{j,t} \varphi \left( \sum_{i=1}^q w_{i,j}x_i^t \right)$
};

\end{tikzpicture}
        }
        \caption{Second step of Fine-tuning}
        \label{Fig: NN-second}
    \end{subfigure}
    \caption{ Fine-tuning Scenario}
    \label{Fig: NN-combined}
\end{figure}
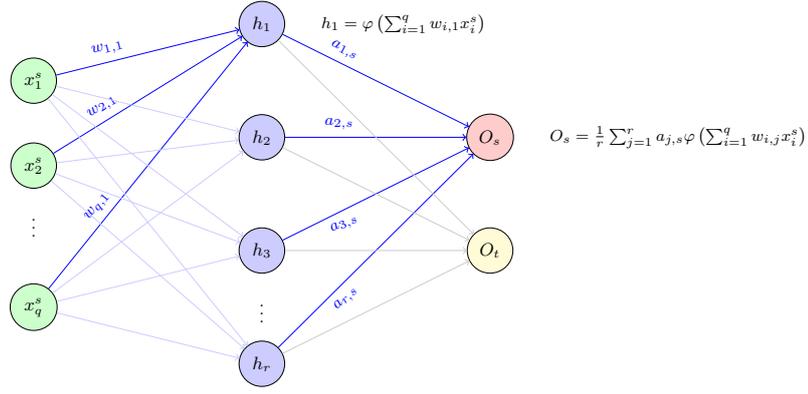
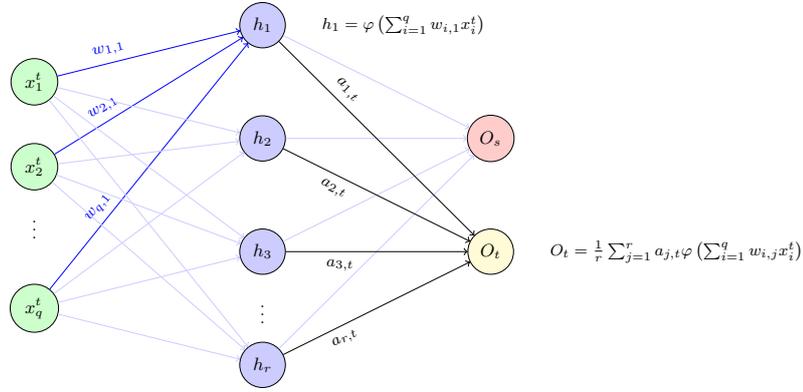
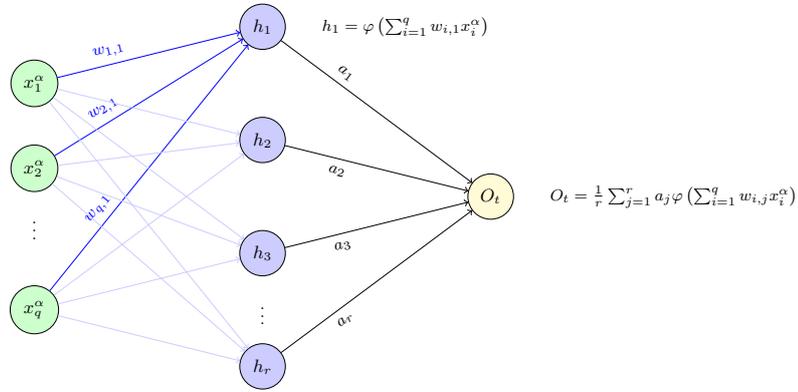
\begin{figure}[htb]
        \centering
        \resizebox{0.7\textwidth}{!}{%
            \begin{tikzpicture}[
    input/.style = {circle, draw, fill=green!20, minimum size=0.8cm, font=\footnotesize},
    hidden/.style = {circle, draw, fill=blue!20, minimum size=0.8cm, font=\footnotesize},
    output1/.style = {circle, draw, fill=red!20, minimum size=0.8cm, font=\footnotesize},
    output2/.style = {circle, draw, fill=yellow!20, minimum size=0.8cm, font=\footnotesize},
    every edge/.style = {draw, ->, blue},
    every node/.style = {font=\scriptsize},
    arrow/.style={->, line width=5pt, >=latex}
]

\node[input] (x1) at (0, -1) {$x_1^{\alpha}$};
\node[input] (x2) at (0, -2.5) {$x_2^{\alpha}$};
\node (dots1) at (0, -3.5) {$\vdots$};
\node[input] (xq) at (0, -5) {$x_q^{\alpha}$};

\node[hidden] (h1) at (4, 0) {$h_1$};
\node[hidden] (h2) at (4, -2) {$h_2$};
\node[hidden] (h3) at (4, -4) {$h_3$};
\node (dots2) at (4, -5) {$\vdots$};
\node[hidden] (hr) at (4, -6) {$h_r$};

\node[output2] (O2) at (8, -3) {$O_t$};

\draw[->, thin, blue] (x1) -- node[pos=0.3, above, sloped] {$w_{1,1}$} (h1);
\draw[->, blue] (x2) -- node[pos=0.3, above, sloped] {$w_{2,1}$} (h1);
\draw[->, blue] (xq) -- node[pos=0.3, above, sloped] {$w_{q,1}$} (h1);


\draw[->, black] (h1) -- node[pos=0.3, above, sloped] {$a_{1}$} (O2);
\draw[->, black] (h2) -- node[pos=0.3, below, sloped] {$a_{2}$} (O2);
\draw[->, black] (h3) -- node[pos=0.3, below, sloped] {$a_{3}$} (O2);
\draw[->, black] (hr) -- node[pos=0.3, below, sloped] {$a_{r}$} (O2);

\draw[->, blue!20] (x1) -- (h2);
\draw[->, blue!20] (x1) -- (h3);
\draw[->, blue!20] (x1) -- (hr);
\draw[->, blue!20] (x2) -- (h2);
\draw[->, blue!20] (x2) -- (h3);
\draw[->, blue!20] (x2) -- (hr);
\draw[->, blue!20] (xq) -- (h2);
\draw[->, blue!20] (xq) -- (h3);
\draw[->, blue!20] (xq) -- (hr);

\node[right=0.5cm of h1, text width=4cm, align=left, font=\scriptsize] {
    $h_1 = \varphi \left( \sum_{i=1}^q w_{i,1}x_i^{\alpha} \right)$
};




\node[right=0.5cm of O2, text width=6cm, align=left, font=\scriptsize] {
    $O_t = \frac{1}{r} \sum_{j=1}^r a_{j} \varphi \left( \sum_{i=1}^q w_{i,j}x_i^{\alpha} \right)$
};

\end{tikzpicture}
        }
        \caption{$\alpha$-ERM Scenario}
        \label{Fig: NN-alpha}
    \end{figure}

\clearpage
\begin{tcolorbox}
\begin{lemma}\label{lem: equ with main ass NN FT}
Under Assumption~\ref{ass:NN_KL_alpha}, there exists a constant $c>0$ independent of $n$ such that, for all $z\in\mcZ$, $\thetac\in \Thetac$, $\thetas\in \Thetas$, $m_{\spec}\in \mathcal{P}_4(\thetas)$, $m_{c}\in \mathcal{P}_4(\thetac)$ and $\nu\in \mathcal{P}_2(\mcZ)$,
\begin{align*}
&\big|\ell_o(\mathbb{E}_{a\sim m_c, w\sim m_{\spec}}[a\varphi(w.x)],y) \big| \leq 12 L_\ell L_\varphi^2 \big(1+\mathbb{E}_{a\sim m_{\spec}}[a^4]+\mbE_{w\sim m_c}[\|w\|^4]\big) \big(1+\|y\|^2 + \|x\|^2),\\
&\Big|\frac{\delta \ell_o}{\delta m_{\spec}}(\mathbb{E}_{a\sim m_c, w\sim m_{\spec}}[a\varphi(w.x)],y,a) \Big| \\&\quad\leq  64 L_{\ell,1}L_{\varphi}(1+L_{\varphi})\big(1+\mathbb{E}_{w\sim \mfmc}[\|w\|^4]+\mbE_{a\sim \mfms}[a^4]+|a|^4\big)\Big(1+\|x\|^2+\|y\|^2\Big),\\
&\mathbb{E}_{\theta,\theta'\sim m}\mathbb{E}_{z\sim \nu}\Big[\Big(\frac{\delta^2 \mathrm{\ell}}{\delta m^2}(m, z, \theta, \theta') \Big)^2\Big]^{1/2}\leq c\Big[\mathbb{E}_{\theta\sim m}[1+\|\theta\|^4]\Big]\mathbb{E}_{Z\sim \nu}[1+\|Z\|^2].
\end{align*}
In particular, if $p\ge 8$ in Assumption~\ref{ass:NN_KL_alpha}, then Assumption~\ref{ass:KLreg_assnf} is satisfied for some choice of $L_e<\infty$.
\end{lemma}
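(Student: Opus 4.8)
The plan is to exploit that, for this network, the loss is a composition $\ell(m,z)=\ell_o(\hat y(m),y)$ with $\hat y(m)=\mathbb{E}_{\theta\sim m}[\phi(\theta,x)]$ \emph{linear} in $m$ (and, in the fine-tuning case, $\hat y=\mathbb{E}_{w\sim m_c}[\varphi(w\cdot x)]\,\mathbb{E}_{a\sim m_{\spec}}[a]$ by independence of the common and specific parameters). Consequently every flat derivative in the sense of Definition~\ref{def:flatDerivative} is obtained by the chain rule, so the whole lemma reduces to (a) the pointwise growth bounds on $\ell_o,\partial_{\hat y}\ell_o,\partial_{\hat y\hat y}\ell_o$ from Assumption~\ref{ass:NN_KL_alpha}, and (b) moment control of $\phi(\theta,x)=a\varphi(w\cdot x)$. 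The single preliminary estimate driving everything is $|\phi(\theta,x)|\le L_\varphi(1+\|x\|)\,|a|(1+\|w\|)\le L_\varphi(1+\|x\|)(1+\|\theta\|^2)$, using $|a|\le\tfrac12(1+a^2)$ and $|a|\,\|w\|\le\tfrac12\|\theta\|^2$ with $\|\theta\|^2=a^2+\|w\|^2$, together with the elementary inequality $\|\theta\|^4=(a^2+\|w\|^2)^2\le 2(a^4+\|w\|^4)$, which is exactly what turns joint $\theta$-moments into the \emph{sum} $\mathbb{E}[a^4]+\mathbb{E}[\|w\|^4]$ appearing on the right-hand sides.

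For the first inequality I would bound $|\hat y|\le\mathbb{E}_\theta|\phi(\theta,x)|\le L_\varphi(1+\|x\|)(1+\mathbb{E}_\theta\|\theta\|^2)$, square, pass to fourth moments by Jensen ($(\mathbb{E}_\theta\|\theta\|^2)^2\le\mathbb{E}_\theta\|\theta\|^4$) and the splitting $\|\theta\|^4\le2(a^4+\|w\|^4)$, and finally insert this into $|\ell_o(\hat y,y)|\le L_\ell(1+\|\hat y\|^2+\|y\|^2)$; collecting constants and using $(1+\|x\|)^2\le 2(1+\|x\|^2)$ gives the factor $12L_\ell L_\varphi^2$. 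No cross term arises here precisely because $\theta$ is kept joint inside a single $\|\theta\|^4$.

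For the derivative bound, the chain rule gives $\frac{\delta\ell}{\delta m_{\spec}}(m_cm_{\spec},z,a)=\partial_{\hat y}\ell_o(\hat y,y)\,B\,(a-\mathbb{E}_{m_{\spec}}[a'])$ with $B=\mathbb{E}_{w\sim m_c}[\varphi(w\cdot x)]$; bounding $|\partial_{\hat y}\ell_o|\le L_{\ell,1}(1+|\hat y|+\|y\|)$, $|B|\le L_\varphi(1+\|x\|)(1+\mathbb{E}_w\|w\|)$ and $|a-\mathbb{E}_{m_{\spec}}[a']|\le |a|+\mathbb{E}_a|a'|$ produces, in its worst term, a product of a $w$-moment factor and an $a$-moment factor, e.g. $(1+\mathbb{E}_w\|w\|)^2(\mathbb{E}_a|a'|)^2$. \textbf{This is the main obstacle:} naively such products generate a cross term $\mathbb{E}_w\|w\|^4\,\mathbb{E}_a[a^4]$, which is \emph{not} dominated by the sum $1+\mathbb{E}_w\|w\|^4+\mathbb{E}_a[a^4]$ demanded by the statement (here, unlike in the loss bound, $w$ and $a$ sit in genuinely separate factors because $\hat y$ factorizes and $B$ gets multiplied in twice). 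The resolution is Young's inequality $u^2v^2\le\tfrac12(u^4+v^4)$ applied with $u=1+\mathbb{E}_w\|w\|$ and $v=\mathbb{E}_a|a'|$, followed by Jensen ($(\mathbb{E}_w\|w\|)^4\le\mathbb{E}_w\|w\|^4$, $(\mathbb{E}_a|a'|)^4\le\mathbb{E}_a[a^4]$): this separates the two parameter families additively and, after absorbing the pointwise evaluation via $|a|^4$, yields exactly $1+\mathbb{E}_w\|w\|^4+\mathbb{E}_a[a^4]+|a|^4$ with constant $64L_{\ell,1}L_\varphi(1+L_\varphi)$, the factor $L_\varphi(1+L_\varphi)$ reflecting one power of $L_\varphi$ from $B$ and one more from $|\hat y|$ inside $\partial_{\hat y}\ell_o$.

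For the second-derivative integrability I would differentiate once more to obtain $\frac{\delta^2\ell}{\delta m^2}(m,z,\theta,\theta')=\partial_{\hat y\hat y}\ell_o(\hat y,y)\,\tilde\phi(\theta)\tilde\phi(\theta')-\partial_{\hat y}\ell_o(\hat y,y)\,\tilde\phi(\theta')$ with $\tilde\phi(\theta)=\phi(\theta,x)-\hat y$, then use $|\partial_{\hat y\hat y}\ell_o|\le L_{\ell,2}$ together with Cauchy--Schwarz to factor the $\theta$ and $\theta'$ integrals against $m$; each factor reduces to $\mathbb{E}_\theta[\tilde\phi^2]\le\mathbb{E}_\theta[\phi^2]\le C(1+\|x\|^2)(1+\mathbb{E}_\theta\|\theta\|^4)$, with the residual $z$-dependence controlled by the stated moments, giving the claimed $c\,[\mathbb{E}_{\theta\sim m}(1+\|\theta\|^4)]\,\mathbb{E}_{Z\sim\nu}[1+\|Z\|^2]$ shape. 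Finally, these three estimates are exactly parts (i), (ii) and the integrability condition (iv) of Assumption~\ref{ass:KLreg_assnf}; the coercivity part (iii) follows because $g_{\spec}$ grows only like $\|\theta_{\spec}\|^4$, so $\lim_{\|\theta_{\spec}\|\to\infty}U/(\|\theta_{\spec}\|^8+g_{\spec})=\infty$ whenever $\lim_{\|\theta\|\to\infty} U/\|\theta\|^p=\infty$ with $p\ge 8$. Hence Assumption~\ref{ass:KLreg_assnf} holds for finite $L_m,L_e$, which completes the proof.
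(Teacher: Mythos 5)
Your proposal is correct and follows essentially the same route as the paper's proof: chain-rule computation of the flat derivatives of $\ell_o(\Phi(m_c\otimes m_{\spec},x),y)$, the growth bounds of Assumption~\ref{ass:NN_KL_alpha}, and Jensen/Young to convert products of lower moments of $a$ and $w$ into the additive fourth-moment bound defining $g_{\spec}$ — indeed, the Young-inequality step you flag as the main obstacle is exactly what justifies the paper's more tersely written chain of inequalities, where products such as $(1+m_{2,c}+m_{2,\spec})(1+m_{2,c})$ are silently absorbed and only become legitimate once fourth moments appear. The one cosmetic difference is in the second derivative, where you retain the extra term $-\partial_{\hat y}\ell_o\,\tilde\phi(\theta')$ that the paper discards by appeal to the normalization convention of \citep[Remark~2.5]{cardaliaguet2019master}; since that term satisfies the same moment bounds (via $L_{\ell,1}$), both versions yield the stated integrability estimate and the verification of Assumption~\ref{ass:KLreg_assnf}, including the coercivity item, which you check explicitly under the $p\ge 8$ caveat just as the lemma requires.
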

\end{tcolorbox}

\begin{proof}[Proof of Lemma~\ref{lem: equ with main ass NN FT}]
 We recall that our loss is of the form $\ell(m, z) = \ell_o(\mathbb{E}_{a\sim m_c, w\sim m_{\spec}}[a\varphi(w.x)],y)$, for $\ell_o$ a convex function. It follows that $\ell$ is convex and nonnegative with respect to $m$.
 
 By the chain rule, we can differentiate with respect to $m_{\spec}$, to obtain, with $\Phi(m_c \otimes m_{\spec},x) := \mathbb{E}_{a\sim m_{\spec},w\sim m_c}[a\varphi(w.x)]$, $\thetas = a$ and $\thetac= w$,
    \begin{equation}\label{Eq: derivative NN loss_FT}
        \begin{split}
        \frac{\delta \Phi}{\delta m_{\spec}}(m_c \otimes m_{\spec},x,a) &= (a-\mbE_{a\sim m_{\spec}}[a])\mbE_{w\sim m_c}[\varphi(w.x)],\\
            \frac{\delta \ell}{\delta m_{\spec}}(m_c \otimes m_{\spec},z,a)&=\partial_{\hat{y}}\ell_o(\Phi(m,x),y)(a-\mbE_{a\sim m_{\spec}}[a])\mbE_{w\sim m_c}[\varphi(w.x)].
         \end{split}
    \end{equation}
    Note that, 
   \[
    \frac{\delta^2 \ell}{\delta m_{\spec}^2}(m_c \otimes m_{\spec},z,a, a') = \frac{\delta}{\delta m_{\spec}}\left(  \frac{\delta \ell}{\delta m_{\spec}}(\cdot,\cdot, a)\right)(m_c \otimes m_{\spec},z, a')\,,
    \] 
  and due to the normalization convention, see \citep[Remark 2.5]{cardaliaguet2019master},  
    \begin{equation}
        \begin{split}
             \frac{\delta^2 \ell}{\delta m_{\spec}^2}(m_c \otimes m_{\spec},z,a, a')&=\partial_{\hat{y}\hat{y}}\ell_o(\Phi(m_c \otimes m_{\spec},x),y)
             (a'-\mbE_{a\sim m_{\spec}}[a])
             (a-\mbE_{a\sim m_{\spec}}[a])\mbE_{w\sim m_c}[\varphi(w.x)]^2.
        \end{split}
    \end{equation}
It follows that $\ell$ is $\mathcal{C}^2$ with respect to $m$ (the required bounds on the derivatives will be obtained below).

Without loss of generality, for notational convenience, we assume that $L_\ell, L_{\ell,1}, L_{\ell,2}$ and $L_\phi$ in Assumption \ref{ass:NN_KL_alpha} are all at least $1$. Throughout, $c$ is a constant which can vary from line to line, but does not depend on $n$.   
 
 We first consider obtaining bounds on $\ell$. We know that 
\[\begin{split}
    \ell(\mfmc(\nus)\mfms(\mfmc(\nus),\nut),z)&\leq L_\ell(1+ \|y\|^2 + \|\hat y\|^2)= L_\ell(1+ \|y\|^2 + \|\mbE_{a,w}[a\varphi(w.x)]\|^2))\,.
\end{split}
\]
As
\[\begin{split}
    \|\mbE_{a,w}[a\varphi(w.x)]\|^2&\leq 2L_\varphi^2 (1+\|x\|)^2\big(|a|(1+\|w\|)\big)^2\\
    &\leq 8 L_\varphi^2 (1+\|x\|)^2\big(1+\mathbb{E}_{a\sim m_{\spec}}[a^2]+\mbE_{w\sim m_c}[\|w\|^2]\big)^2
\end{split}\]
we know 
\[\begin{split}
    \ell(\mfmc(\nus)\mfms(\mfmc(\nus),\nut),z)
    &\leq L_\ell\big(1+ \|y\|^2 + 2 L_\varphi^2 (1+\|x\|)^2\big(1+\mathbb{E}_{a\sim m_{\spec}}[a^2]+\mbE_{w\sim m_c}[\|w\|^2]\big)^2\big)\\
    &\leq 8 L_\ell L_\varphi^2 \big(1+\mathbb{E}_{a\sim m_{\spec}}[a^2]+\mbE_{w\sim m_c}[\|w\|^2]\big)^2 \big(1+\|y\|^2 + \|x\|^2).
\end{split}
\]
As we have \begin{equation}
    \begin{split}
         8L_\ell L_\phi^2 \big(1+\mathbb{E}_{a\sim m_{\spec}}[a^2]+\mbE_{w\sim m_c}[\|w\|^2]\big)^2
        \leq 24L_\ell L_\phi^2 \big(1+\mathbb{E}_{a\sim m_{\spec}}[a^4]+\mbE_{w\sim m_c}[\|w\|^4]\big),
    \end{split}
\end{equation}
then, we take $g(m_c\otimes m_{\spec})=24L_\ell L_\phi^2 \big(1+\mathbb{E}_{a\sim m_{\spec}}[a^4]+\mbE_{w\sim m_c}[\|w\|^4]\big).$

We now consider the derivative of $\ell$. Writing $m_{2,c}:=\mathbb{E}_{w\sim \mfmc}[\|w\|^2]$ and $m_{2,\spec}:=\mbE_{a\sim \mfms}[a^2]$, we have 
    \begin{align*}
        \Big|\frac{\delta \ell}{\delta m_{\spec}}\big(m,z,\theta\big)\Big|& = \Big|\partial_{\hat{y}}\ell_o(\mbE_{a,w}[a\varphi(w.x)],y)\Big|\,|(a-\mbE_{a\sim m_{\spec}}[a])||\mbE_{w\sim m_c}[\varphi(w.x)]|\\
        &\leq L_{\ell,1}(1+|\mbE_{a,w}[a\varphi(w.x)]|+\|y\|)|(a-\mbE_{a\sim m_{\spec}}[a])||\mbE_{w\sim m_c}[\varphi(w.x)]|\\
        &\leq L_{\ell,1}L_{\varphi}\Big(1+L_{\varphi}(1+\|x\|)(1+m_{2,c}+m_{2,\spec})+\|y\|\Big)(1+\|x\|)\Big(1+m_{2,c}\Big)|(a-\mbE_{a\sim m_{\spec}}[a])|\\
        &\leq L_{\ell,1}L_{\varphi}(1+L_{\varphi})(1+m_{2,c}+m_{2,\spec})\Big(1+\|x\|+\|y\|\Big)(1+\|x\|)^2|(a-\mbE_{a\sim m_{\spec}}[a])|\\
         &\leq  2L_{\ell,1}L_{\varphi}(1+L_{\varphi})(1+m_{2,c}+m_{2,\spec})\Big(1+\|x\|+\|y\|\Big)(1+\|x\|^2)|(a-\mbE_{a\sim m_{\spec}}[a])|\\
         &\leq 8L_{\ell,1}L_{\varphi}(1+L_{\varphi})(1+m_{2,c}+m_{2,\spec})\big(1+|a|+|\mbE_{a\sim m_{\spec}}[a]|\big)\Big(1+\|x\|^2+\|y\|^2\Big)\\
         &\leq 32L_{\ell,1}L_{\varphi}(1+L_{\varphi})(1+m_{2,c}+m_{2,\spec})\big(1+|a|^2+m_{2,\spec}\big)\Big(1+\|x\|^2+\|y\|^2\Big)\\
        & \leq 64 L_{\ell,1}L_{\varphi}(1+L_{\varphi})(1+m_{2,c}+m_{2,\spec}+|a|^2\big)\Big(1+\|x\|^2+\|y\|^2\Big)
    \end{align*}
  We then take \[g_{\spec}(m_c m_{\spec},\thetas) = 64L_{\ell,1}L_{\varphi}(1+L_{\varphi})\big(1+\mathbb{E}_{w\sim \mfmc}[\|w\|^4]+\mbE_{a\sim \mfms}[a^4]+|a|^4\big),\] and we have, with $c$ a constant varying from line to line,
  \begin{equation}
  \begin{split}
      \mathbb{E}_{\theta\sim m_{\spec}^\prime}\big[g_{\spec}(m_c m_{\spec},\theta_{\spec})^2\big]^{1/2}&=c\mathbb{E}_{\thetas\sim m_{\spec}^\prime}\Big[\Big(1+\mathbb{E}_{w\sim \mfmc}[\|w\|^4]+\mbE_{a\sim \mfms}[a^4]+|a|^4\Big)^2\Big]^{1/2}\\
      &\leq c\Big(1+\mathbb{E}_{w\sim \mfmc}[\|w\|^8]+\mbE_{a\sim \mfms}[a^8]+\mbE_{a\sim \mfms^{\prime}}[a^8]\Big)^{1/2}\\
      &\leq c\Big(1+\mathbb{E}_{w\sim \mfmc}[\|w\|^8]+\mbE_{a\sim \mfms}[a^8]+\mbE_{a\sim \mfms^{\prime}}[a^8]\Big).
      \end{split}
  \end{equation}
Together with Jensen's inequality, this yields \eqref{eq: bounded g assf}.

  Finally, we consider the second derivative of $\ell$. We have,  
   \begin{equation}
   \begin{split}
        &\mathbb{E}_{a,a'\sim m_{\spec},m_{\spec}^{\prime}}\mathbb{E}_{z\sim \nu}\Big[\Big(\frac{\delta^2 \ell}{\delta m_{\spec}^2}(m_c \otimes m_{\spec},z,a, a') \Big)^2\Big]\\
&=\int_\mcZ\Big[\partial_{\hat{y}\hat{y}}\ell_o(\Phi(m_c \otimes m_{\spec},x),y)\Big]^2\mathbb{V}_{\thetas\sim m_{\spec}}\Big(a\Big)\mathbb{V}_{\thetas^\prime\sim m_{\spec}^{\prime}}\Big(a^{\prime}\Big) \mbE_{w\sim m_c}[\varphi(w.x)]^4\nu(\mrd z)\\
        &\leq \int_\mcZ\Big[\partial_{\hat{y}\hat{y}}\ell_o(\Phi(m_c \otimes m_{\spec},x),y)\Big]^2\Big[\mathbb{V}_{\thetas\sim m_{\spec}}\Big(a\Big)\Big]^2\mbE_{w\sim m_c}[(1+\|w\|)^4](1+\|x\|)^4\nu(\mrd z)\\
        &\leq c\, \mathbb{E}_{Z\sim \nu}[(1+\|Z\|)^4] \mbE_{w\sim m_c}[(1+\|w\|)^4]\mbE_{a\sim m_{\spec}}[a^4].\\
    \end{split}
    \end{equation}
giving the stated inequalities. As $\nu\in \mathcal{P}_2(\mcZ)$, and assuming $m_{\spec}\otimes m_{c}\in \mathcal{P}_8(\Thetac)\times \mathcal{P}_8(\Thetas) $. It follows that Assumption \ref{ass:KLreg_assnf} holds whenever $p\ge 8$.
\end{proof}

\begin{tcolorbox}
\begin{lemma}\label{lem: equ with main ass NN alpha}
Under Assumption~\ref{ass:NN_KL_alpha}, there exists a constant $c>0$ independent of $n$ such that, for all $z\in\mcZ$,  $\theta\in \Theta$, $m\in \mathcal{P}_8(\Theta)$, and $\nu\in \mathcal{P}_2(\mcZ)$,
\begin{align*}
\big|\ell(m,z) \big| &\leq c\Big[\mathbb{E}_{\theta\sim m}[1+\|\theta\|^4]\Big]\big(1+\|z\|^2\big),\\
\Big|\frac{\delta \ell}{\delta m}(m,z,\theta) \Big| &\leq c\Big[1+\|\theta\|^4 + \mathbb{E}_{\theta'\sim m}[\|\theta'\|^4]\Big]\big(1+\|z\|^2\big),\\
\mathbb{E}_{\theta,\theta'\sim m}\mathbb{E}_{z\sim \nu}\Big[\Big(\frac{\delta^2 \mathrm{\ell}}{\delta m^2}(m, z, \theta, \theta') \Big)^2\Big]^{1/2}&\leq c\Big[\mathbb{E}_{\theta\sim m}[1+\|\theta\|^4]\Big]\mathbb{E}_{Z\sim \nu}[1+\|Z\|^2].
\end{align*}
In particular, under Assumption~\ref{ass:NN_KL_alpha}, then Assumption~\ref{ass:KLreg_assn} is satisfied for some choice of $L_e<\infty$.
\end{lemma}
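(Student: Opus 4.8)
The plan is to follow the template of the fine-tuning result, Lemma~\ref{lem: equ with main ass NN FT}, essentially verbatim; the only simplification is that here the flat derivative is taken with respect to the entire parameter measure $m$ rather than a specific component, so no common/specific splitting is needed. Write $\Phi(m,x) := \mbE_{\theta\sim m}[\phi(\theta,x)] = \mbE_{(a,w)\sim m}[a\varphi(w\cdot x)]$, so that $\ell(m,z) = \ell_o(\Phi(m,x),y)$ as in \eqref{eq: nn loss function}. Since $m\mapsto\Phi(m,x)$ is linear and $\ell_o$ is convex and nonnegative, $\ell$ is convex and nonnegative in $m$, which supplies the convexity, nonnegativity and (once the estimates below are in place) $\mathcal{C}^2$ regularity underlying Assumption~\ref{ass:KLreg_assn} and Assumption~\ref{ass:KLreg_assn_SL}. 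Differentiating through the chain rule and using the normalization convention $\int\frac{\delta\Phi}{\delta m}\,m(\mrd\theta)=0$ (cf. Definition~\ref{def:flatDerivative}), I would record
\[
\frac{\delta\ell}{\delta m}(m,z,\theta) = \partial_{\hat y}\ell_o(\Phi(m,x),y)\big(\phi(\theta,x)-\Phi(m,x)\big),
\]
\[
\frac{\delta^2\ell}{\delta m^2}(m,z,\theta,\theta') = \partial_{\hat y\hat y}\ell_o(\Phi(m,x),y)\big(\phi(\theta,x)-\Phi(m,x)\big)\big(\phi(\theta',x)-\Phi(m,x)\big),
\]
exactly as in the fine-tuning computation but with $\phi(\theta,x)=a\varphi(w\cdot x)$ playing the role of the product term. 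This reduces all three displayed estimates to growth bounds on $\phi$ and its centred versions.

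The basic building block is $|\phi(\theta,x)| \leq L_\varphi|a|(1+\|x\|)(1+\|w\|)$ from Assumption~\ref{ass:NN_KL_alpha}, together with the elementary $|a|(1+\|w\|)\leq c(1+\|\theta\|^2)$ and hence $a^2(1+\|w\|)^2\leq c(1+\|\theta\|^4)$ for $\theta=(a,w)$. For the first estimate I would insert the quadratic-growth bound $|\ell_o(\hat y,y)|\leq L_\ell(1+\|\hat y\|^2+\|y\|^2)$, control $\|\Phi(m,x)\|^2\leq \mbE_{\theta\sim m}[\phi(\theta,x)^2]$ by Jensen, and collect the $x$- and $m$-dependence to obtain $|\ell(m,z)|\leq c\,\mbE_{\theta\sim m}[1+\|\theta\|^4](1+\|z\|^2)$, giving $g(m)$ of the required form.

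For the first-derivative estimate I would combine $|\partial_{\hat y}\ell_o(\hat y,y)|\leq L_{\ell,1}(1+\|\hat y\|+\|y\|)$ with the bound $|\phi(\theta,x)-\Phi(m,x)|\leq c\big(1+\|\theta\|^2+\mbE_{\theta\sim m}[\|\theta\|^2]\big)(1+\|x\|)$; the resulting product carries a genuinely multiplicative cross term of shape $\mbE_{\theta\sim m}[\|\theta\|^2]\cdot\|\theta\|^2$, which I would linearize via Young's inequality and Jensen ($\mbE_{\theta\sim m}[\|\theta\|^2]^2\leq\mbE_{\theta\sim m}[\|\theta\|^4]$) to land on the additive template $c\big(1+\|\theta\|^4+\mbE_{\theta\sim m}[\|\theta\|^4]\big)(1+\|z\|^2)$, yielding $g_e$. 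For the second-derivative moment I would use the Hessian bound $|\partial_{\hat y\hat y}\ell_o|\leq L_{\ell,2}$ and independence of $\theta,\theta'$ under $m\otimes m$ to factor
\[
\mbE_{\theta,\theta'\sim m}\big[(\phi(\theta,x)-\Phi)^2(\phi(\theta',x)-\Phi)^2\big] = \big(\mathbb{V}_{\theta\sim m}\phi(\theta,x)\big)^2,
\]
then bound $\mathbb{V}_{\theta\sim m}\phi(\theta,x)\leq\mbE_{\theta\sim m}[\phi(\theta,x)^2]\leq c(1+\|x\|^2)\,\mbE_{\theta\sim m}[1+\|\theta\|^4]$, and integrate against $\nu$ and take the square root to reach the claimed bound.

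Finally, having identified $g_e(m,\theta)=c\big(1+\|\theta\|^4+\mbE_{\theta\sim m}[\|\theta\|^4]\big)$ with exactly $\|\theta\|^4$ pointwise growth, the potential condition Assumption~\ref{ass:KLreg_assn}(\ref{ass:KLreg_assn_item4}) — $\lim_{\|\theta\|\to\infty}U(\theta)/(\|\theta\|^4+g_e(m,\theta))=\infty$ — follows immediately from $\lim_{\|\theta\|\to\infty}U(\theta)/\|\theta\|^4=\infty$ in Assumption~\ref{ass:NN_KL_alpha}, while the pointwise integrability $g(\tilgammap)<\infty$ holds since $\tilgammap$ has all polynomial moments. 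I expect the only delicate point to be the bookkeeping in the first-derivative step: every multiplicative $m$-versus-$\theta$ cross term must be converted into the \emph{additive} $\|\theta\|^4$ structure demanded by Assumption~\ref{ass:KLreg_assn}, which is precisely where Young's inequality and the power-mean/Jensen inequalities do the essential work; the remaining computations are routine and parallel those of Lemma~\ref{lem: equ with main ass NN FT}.
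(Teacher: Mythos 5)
Your proposal is correct and follows essentially the same route as the paper's proof: the same chain-rule formulas for $\frac{\delta \ell}{\delta m}$ and $\frac{\delta^2 \ell}{\delta m^2}$ (with the normalization convention), the same quadratic-growth/Jensen bound for $\ell$ itself, the same linearization of the $m$-versus-$\theta$ cross terms into the additive $\|\theta\|^4$ form of $g_{\mathrm{e}}$, and the same variance-squared factorization under $m\otimes m$ for the second-derivative moment. Your explicit remarks on the potential condition and $g(\tilgammap)<\infty$ match the paper's concluding verification, so no gap remains.
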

\end{tcolorbox}

\begin{proof}[Proof of Lemma~\ref{lem: equ with main ass NN alpha}]
 We recall that our loss is of the form $\ell(m, z) = \ell_o(\mathbb{E}_{\theta\sim m}[\phi(\theta,x)],y)$, for $\ell_o$ a convex function. It follows that $\ell$ is convex and nonnegative with respect to $m$.
 
 We write $\partial_{\hat y}\ell_o$ for the derivative of $\ell_o$ with respect to its first argument. By the chain rule, we can differentiate with respect to $m$, to obtain, with $\Phi(m,x) := \mathbb{E}_{\theta\sim m}[\phi(\theta,x)]$,
    \begin{equation}\label{Eq: derivative NN loss_alpha}
        \begin{split}
        \frac{\delta \Phi}{\delta m}(m,x,\theta) &= \phi(\theta,x)-\Phi(m,x),\\
            \frac{\delta \ell}{\delta m}(m,z,\theta)&=\partial_{\hat{y}}\ell_o(\Phi(m,x),y)\Big(\phi(\theta,x)-\Phi(m,x)\Big).
         \end{split}
    \end{equation}
    Note that, 
   \[
    \frac{\delta^2 \ell}{\delta m^2}(m,z,\theta, \theta') = \frac{\delta}{\delta m}\left(  \frac{\delta \ell}{\delta m}(\cdot,\cdot, \theta)\right)(m,z, \theta')\,,
    \] 
  and due to the normalization convention, see \citep[Remark 2.5]{cardaliaguet2019master},  
    \begin{equation}
        \begin{split}
             \frac{\delta^2 \ell}{\delta m^2}(m,z,\theta, \theta')&=\partial_{\hat{y}\hat{y}}\ell_o(\Phi(m,x),y)\Big(\phi(\theta,x)-\Phi(m,x)\Big)\Big(\phi(\theta',x)-\Phi(m,x)\Big).
        \end{split}
    \end{equation}
It follows that $\ell$ is $\mathcal{C}^2$ with respect to $m$ (the required bounds on the derivatives will be obtained below).

Without loss of generality, for notational convenience, we assume that $L_\ell, L_{\ell,1}, L_{\ell,2}$ and $L_\phi$ in Assumption \ref{ass:NN_KL_alpha} are all at least $1$. Throughout, $c$ is a constant which can vary from line to line, but does not depend on $n$.   
 
 We first consider obtaining bounds on $\ell$. We know that 
\[\begin{split}
    \ell(\mfm(\nu_n),z)&\leq L_\ell(1+ \|y\|^2 + \|\hat y\|^2)= L_\ell(1+ \|y\|^2 + \|\Phi(m,x)\|^2))\,.
\end{split}
\]
As
\[\begin{split}
    \|\mbE_{a,w}[a\varphi(w.x)]\|^2&\leq 2L_\varphi^2 (1+\|x\|)^2\big(|a|(1+\|w\|)\big)^2\\
    &\leq 8 L_\varphi^2 (1+\|x\|)^2\big(1+\mathbb{E}_{a\sim m_{\spec}}[a^2]+\mbE_{w\sim m_c}[\|w\|^2]\big)^2\\
    &\leq 8 L_\varphi^2 (1+\|x\|)^2\big(1+\mbE_{\theta\sim m}[\|\theta\|^2]\big)^2
\end{split}\]
we know 
\[\begin{split}
    \ell(\mfm(\nu_n),z)
    &\leq L_\ell\big(1+ \|y\|^2 + L_\phi^2 (1+\|x\|)^2\mathbb{E}_{\theta\sim m}[1+\|\theta\|^2]^2\big)\\
    &\leq 2L_\ell L_\phi^2 \mathbb{E}_{\theta\sim m}[1+\|\theta\|^2]^2 \big(1+\|y\|^2 + \|x\|^2).
\end{split}
\]
We therefore take $g(m):= 4L_\ell L_\phi^2 \mathbb{E}_{\theta\sim m}[1+\|\theta\|^4]$. As we know $p\ge 4$ and $\tilgammap$ is a density in $\mathcal{P}_4$, it follows that $g(\tilgammap)<\infty$.

We now consider the derivative of $\ell$. Writing $m_2:=\mathbb{E}_{\theta\sim m}[\|\theta\|^2]$, we have 
    \begin{align*}
        \Big|\frac{\delta \ell}{\delta m}\big(m,z,\theta\big)\Big|& = \Big|\partial_{\hat{y}}\ell_o(\Phi(m,x),y)\Big|\,|\phi(\theta,x)-\Phi(m,x)|\\
        &\leq L_{\ell,1}(1+|\Phi(m,x)|+\|y\|)\big|\phi(\theta,x)-\Phi(m,x)\big|\\
        &\leq L_{\ell,1}L_{\phi}\Big(1+L_{\phi}(1+\|x\|)(1+m_2)+\|y\|\Big)(1+\|x\|)\Big(2+\|\theta\|^2+m_2\Big)\\
        &\leq L_{\ell,1}L_{\phi}(1+L_{\phi})(1+m_2)\Big(2+\|\theta\|^2+m_2\Big)\Big(1+\|x\|+\|y\|\Big)(1+\|x\|)\\
         &\leq 2 L_{\ell,1}L_{\phi}(1+L_{\phi})\Big(2+\|\theta\|^2+m_2\Big)^2\Big(1+\|x\|^2+\|y\|^2\Big)\\
         &\leq 6L_{\ell,1}L_{\phi}(1+L_{\phi})\Big(4+\|\theta\|^4+\mathbb{E}_{\theta\sim m}[\|\theta\|^4]\Big)\Big(1+\|x\|^2+\|y\|^2\Big).
    \end{align*}
  We then take \[g_{\mathrm{e}}(m,\theta) = 6L_{\ell,1}L_{\phi}(1+L_{\phi})\Big(4+\|\theta\|^4+\mathbb{E}_{\theta\sim m}[\|\theta\|^4]\Big),\] and we have, with $c$ a constant varying from line to line,
  \begin{equation}
  \begin{split}
      \mathbb{E}_{\theta\sim m'}\big[g_{\mathrm{e}}(m,\theta)^2\big]^{1/2}&=c\mathbb{E}_{\theta\sim m'}\Big[\Big(4+\|\theta\|^4+\mathbb{E}_{\theta\sim m}[\|\theta\|^4]\Big)^2\Big]^{1/2}\\
      &\leq c\Big(8+2\mathbb{E}_{\theta\sim m'}[\|\theta\|^8]+2\mathbb{E}_{\theta\sim m}[\|\theta\|^8]\Big)^{1/2}\\
      &\leq c\Big(1+\mathbb{E}_{\theta\sim m'}[\|\theta\|^8]+\mathbb{E}_{\theta\sim m}[\|\theta\|^8]\Big).
      \end{split}
  \end{equation}
Together with Jensen's inequality, this yields \eqref{eq: bounded g ass} in the case  $p \geq 8$.

  Finally, we consider the second derivative of $\ell$. We have,  
   \begin{equation}
   \begin{split}
        &\mathbb{E}_{\theta,\theta'\sim m}\mathbb{E}_{z\sim \nu}\Big[\Big(\frac{\delta^2 \mathrm{\ell}}{\delta m^2}(m, z, \theta, \theta') \Big)^2\Big]\\
&=\int_\mcZ\Big[\partial_{\hat{y}\hat{y}}\ell_o(\Phi(m,x),y)\Big]^2\mathbb{V}_{\theta\sim m}\Big(\phi(\theta,x)\Big)\mathbb{V}_{\theta'\sim m}\Big(\phi(\theta',x)\Big)\nu(\mrd z)\\
        &= \int_\mcZ\Big[\partial_{\hat{y}\hat{y}}\ell_o(\Phi(m,x),y)\Big]^2\Big[\mathbb{V}_{\theta\sim m}\Big(\phi(\theta,x)\Big)\Big]^2\nu(\mrd z)\\
        &\leq c\, \mathbb{E}_{Z\sim \nu}[(1+\|Z\|)^2] \mathbb{E}_{\theta\sim m}\big[(1+\|\theta\|^2)^2\big].\\
    \end{split}
    \end{equation}
giving the stated inequalities. As $\nu\in \mathcal{P}_2(\mcZ)$, and assuming $m\in \mathcal{P}_8(\Theta)$. It follows that Assumption \ref{ass:KLreg_assn} holds.
\end{proof}

\section{More Discussion}\label{app: comparison}
In this section, we provide more details on comparison between $\alpha$-ERM and fine-tuning scenario.

\subsection{Complexity Terms}
In this section we compare the complexity terms in fine-tuning and $\alpha$-ERM, i.e., $\comp(\thetac,\thetas^t,\thetas^s)$ and $\comp(\theta)$, respectively.

Assuming $\gammac\in\mathcal{P}_8(\Thetac), \tilgammasp,\gammasp\in\mathcal{P}_8(\Thetas), \tilgammap \in\mathcal{P}_8(\Theta)  $, the  $ \comp_2(\thetac,\thetas)$ and $\comp_2(\theta)$ are defined based on $L_2$-norm,
\begin{equation}\label{eq: comp finetune}
\begin{split}
    \comp_2(\thetac,\thetas^t,\thetas^s)&=\Big[1+\int_{\Thetac} \|\thetac\|_2^4(2+\|\thetac\|_2^4)\gammac(\mrd\thetac)\\
&\quad+\int_{\Thetas}\|\thetas^t\|_2^4(1+ 2\|\thetas^t\|_2^4)\tilgammasp(\mrd \thetas^t)+\int_{\Thetas}\|\thetas^s\|_2^4\gammasp(\mrd \thetas^s)\Big]^2,
    \end{split}
\end{equation}
and
\begin{equation}\label{eq: comp alphaRenyi}
    \comp_2(\theta)=\Big(1+ 2 \int_{\Theta}\|\theta\|_2^8\tilgammap(\mrd \theta) +2\int_{\Theta}\|\theta\|_2^4\tilgammap(\mrd \theta)]\Big)^2,
\end{equation}
which are bounded. We think of these as measures of the complexity of our model, as they determine how extreme the parameters in the model can become, which acts as a proxy for the model's expressive power. We note that these quantities do not depend on the training data, and are only functions of the chosen initialization distribution and regularization.

\begin{lemma}\label{lem: complexity}
Let $\theta\in\Theta\subset\mbR^{\dimg}$ , $\thetac\in\Thetac\subset\mbR^{\dimc}$, $\thetas\in\Thetas\subset\mbR^{\dims}$ and $\tilgammap\in\mathcal{P}_8(\Theta)$. Assume the following conditions:

\begin{itemize}
    \item For each dimension $i\in[\dimg]$, the 8th moment of $\theta_i$ under the marginal distribution $\tilgammap[i]$ is bounded by $B$:
$\mbE_{\theta_i\sim \tilgammap[i]}[\theta_i^8]\leq B$
\item The 8th moments of each dimension for distributions $\gammac$, $\gammasp$, and $\tilgammasp$ are also bounded by $B$. 
\end{itemize}

Then, the following inequalities hold,
\begin{equation}
\begin{split}
\comp_2(\thetac,\thetas)&\leq \big(1+(\dimc^4 +2\dims^4)B+2(\dimc^2+\dims^2)B^{1/2}\big)^2, \\
\comp_2(\theta) &\leq \big(1+\dimg^{4}B+\dimg^2 B^{1/2}\big)^2.
\end{split}
\end{equation}
\end{lemma}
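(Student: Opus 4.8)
The plan is to reduce every term appearing in $\comp_2$ to integrals of the form $\int\|\cdot\|_2^4\,\mrd\mu$ and $\int\|\cdot\|_2^8\,\mrd\mu$, and then to bound each such integral purely in terms of the ambient dimension and the coordinate-wise moment bound $B$. Once these two ``master bounds'' are in hand, both displayed inequalities follow by expanding the polynomial integrands, integrating termwise, and collecting the resulting powers of the dimensions.

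First I would establish the elementary power-mean inequality that converts a Euclidean norm into a sum of coordinates. For $\theta\in\mbR^{d}$ and an integer $k\ge 1$, convexity of $t\mapsto t^{k}$ (equivalently, Jensen applied to the uniform average over coordinates) gives
\[
\|\theta\|_2^{2k}=\Big(\sum_{i=1}^{d}\theta_i^2\Big)^{k}\le d^{\,k-1}\sum_{i=1}^{d}\theta_i^{2k}.
\]
Taking $k=2$ and $k=4$ yields $\|\theta\|_2^4\le d\sum_i\theta_i^4$ and $\|\theta\|_2^8\le d^3\sum_i\theta_i^8$. Integrating against a measure $\mu$ with marginals $\mu[i]$ and using linearity reduces matters to the scalar moments $\mbE_{\theta_i\sim\mu[i]}[\theta_i^4]$ and $\mbE_{\theta_i\sim\mu[i]}[\theta_i^8]$. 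The hypothesis bounds the eighth coordinate moments by $B$ directly; for the fourth moments I would invoke Lyapunov's inequality (Jensen for $t\mapsto t^{1/2}$), $\mbE[\theta_i^4]=\mbE[(\theta_i^8)^{1/2}]\le(\mbE[\theta_i^8])^{1/2}\le B^{1/2}$. Summing over the $d$ coordinates then produces the master bounds
\[
\int\|\theta\|_2^4\,\mrd\mu\le d^2 B^{1/2},\qquad \int\|\theta\|_2^8\,\mrd\mu\le d^4 B,
\]
valid for $\mu\in\{\tilgammap,\gammac,\gammasp,\tilgammasp\}$ with the appropriate ambient dimension $d\in\{\dimg,\dimc,\dims\}$.

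It then remains to assemble these. For the fine-tuning term I would expand each integrand, using $\|\thetac\|_2^4(2+\|\thetac\|_2^4)=2\|\thetac\|_2^4+\|\thetac\|_2^8$ and $\|\thetas^t\|_2^4(1+2\|\thetas^t\|_2^4)=\|\thetas^t\|_2^4+2\|\thetas^t\|_2^8$, integrate termwise against $\gammac$, $\tilgammasp$, and $\gammasp$ respectively, and substitute the master bounds; collecting the $B$ and $B^{1/2}$ coefficients gives $1+(\dimc^4+2\dims^4)B+2(\dimc^2+\dims^2)B^{1/2}$ inside the square, which is exactly the claimed bound on $\comp_2(\thetac,\thetas)$. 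The $\alpha$-ERM term $\comp_2(\theta)$ is handled identically, substituting the $\tilgammap$ master bounds into $1+2\int\|\theta\|_2^8\,\mrd\tilgammap+2\int\|\theta\|_2^4\,\mrd\tilgammap$.

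Since every inequality used is elementary, I do not anticipate a genuine obstacle; the only care required is bookkeeping of the dimensional factors $d^{\,k-1}$ from the power-mean step together with the matching of $B$ versus $B^{1/2}$ coefficients. The one point worth double-checking is the arithmetic of the numerical constants in the $\alpha$-ERM bound: the master bounds give $2\int\|\theta\|_2^8\,\mrd\tilgammap\le 2\dimg^4 B$ and $2\int\|\theta\|_2^4\,\mrd\tilgammap\le 2\dimg^2 B^{1/2}$, so the constants in the displayed bound on $\comp_2(\theta)$ should be reconciled against the explicit factor $2$ in its definition.
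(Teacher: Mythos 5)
Your proposal is correct and follows essentially the same route as the paper's proof: the coordinatewise power-mean bounds $\|\theta\|_2^4\le d\sum_i\theta_i^4$ and $\|\theta\|_2^8\le d^3\sum_i\theta_i^8$, Jensen/Lyapunov to get $\mbE[\theta_i^4]\le B^{1/2}$, the resulting master bounds $\int\|\theta\|_2^4\,\mrd\mu\le d^2B^{1/2}$ and $\int\|\theta\|_2^8\,\mrd\mu\le d^4B$, and termwise substitution into the definitions of $\comp_2$. Your closing caveat is also warranted: with the explicit factors of $2$ in the definition of $\comp_2(\theta)$ the bookkeeping gives $\big(1+2\dimg^4B+2\dimg^2B^{1/2}\big)^2$ rather than the stated $\big(1+\dimg^4B+\dimg^2B^{1/2}\big)^2$, a constant-level slip the paper's own proof does not track (it stops at the master moment bounds), though it is harmless for the dimension comparison the lemma is used for.
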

\begin{proof}
    Note that we have,
    \[\begin{split}
        \int_{\Thetac} \|\thetac\|_2^4\gammac(\mrd\thetac)
        &= \int_{\Thetac} \Big(\sum_{i=1}^{\dimc}\theta_{i,c}^2\Big)^2\gammac(\mrd\thetac)\\
        &\leq \int_{\Thetac} \dimc\Big(\sum_{i=1}^{\dimc}\theta_{i,c}^4\Big)\gammac(\mrd\thetac)\\
        &=  \dimc\sum_{i=1}^{\dimc}\mbE_{\gammac[i]}[\theta_{i,c}^4]\\
        &\leq \dimc^2 B^{1/2}.
    \end{split}\]
    where the last inequality follows from Jensen inequality. Similarly, we have,
    \begin{equation}
        \begin{split}
            \int_{\Thetac} \|\thetac\|_2^8\gammac(\mrd\thetac)&\leq \dimc^4 B,\\
            \int_{\Thetas} \|\thetas\|_2^8\gammasp(\mrd\thetas)&\leq \dims^4 B,\\
            \int_{\Theta} \|\theta\|_2^8\tilgammap(\mrd\theta)&\leq \dimg^4 B.
        \end{split}
    \end{equation}  
    It completes the proof.
\end{proof}
 From Lemma~\ref{lem: complexity} and due to the fact that $\dimg=\dims+\dimc$, we observe that the upper bound on $\comp_2(\thetac,\thetas)$ is demonstrably lower than that on $\comp_2(\theta)$. This difference is particularly pronounced in numerous applications where $\dims\ll\dimc$, for example, in one-hidden layer neural network in Section~\ref{Sec: Application}, we have $\dims=1$ and $\dimc=q$ where $q$ is the dimension of input. Consequently, this suggests that the upper bound on the WTER for fine-tuning (Theorem~\ref{thm: excess-fine-tune}) will typically be smaller than the upper bound on WTER for $\alpha$-ERM (Theorem~\ref{thm: population alpha erm}), while the two bounds will be asympotically of the same order as $k_d\to \infty$.
\subsection{Non-similar Tasks}
In this section, we study the effect of non-similar tasks where $\Disnd(\nupop^s,\nupop^t)> 0$. For this purpose, we compare the coefficients of $\Disnd(\nupop^s,\nupop^t)$ in Theorem~\ref{thm: excess-fine-tune} and Theorem~\ref{thm: population alpha erm}. Therefore, we have the following similarity terms,
\begin{itemize}
    \item Coefficient of $\Disnd(\nupop^s,\nupop^t)$ in $\alpha$-ERM: 
    \[\begin{split}
    &C_{\alpha}(\theta):=(1-\alpha) \Big(1+\int_{\Theta}\|\theta\|^4\bar{m}_{\alpha}(\mrd\theta) \Big)\Big(1+\int_\Theta \|\theta\|^4\tilgammap(\mrd\theta)\Big)\Big(1+\alpha \mathbb{E}_{Z_1^t}[\|Z_1^t \|^2]+(1-\alpha) \mathbb{E}_{Z_1^s}[\|Z_1^s \|^2]\Big)
    \end{split}\]
    \item Coefficient of $\Disnd(\nupop^s,\nupop^t)$ in Fine-tuning:
    \[\begin{split}
        & C_{\mathrm{FT}}(\thetac,\thetas^s,\thetas^t)\\&:=\Big(1+\int_{\Thetac} \|\thetac\|^4\hat\gamma_{8,c}^\sigma(\mrd\thetac)+\int_{\Thetas}\|\thetas^s\|^4\gammasp(\mrd \thetas^s)\Big)\Big(1+\int_{\Thetac} \|\thetac\|^4\mrmbarc^{s}(\mrd \thetac)+\int_{\Thetas}\|\thetas^s\|^4\mrmbarsp^{s}(\mrd \thetas^s)
    \\&\qquad+\int_{\Thetas}\|\thetas^t\|^4 \tilde{m}_{\spec}^t(\mrd \thetas^t)\Big)\mbE_{Z_1^s}\big[(1+\|Z_1^s\|^2)\big]
     \end{split}\]
\end{itemize}
Similar to Lemma~\ref{lem: complexity}, we can provide the following comparison.
\begin{lemma}\label{lem: comp df}
    Under the same assumptions in Lemma~\ref{lem: complexity}, we have,
    \begin{equation}
        \begin{split}
            C_{\alpha}(\theta)&\leq (1-\alpha)\big(1+\dimg^4B^{1/2})^2\Big(1+\alpha \mathbb{E}_{Z_1^t}[\|Z_1^t \|^2]+(1-\alpha) \mathbb{E}_{Z_1^s}[\|Z_1^s \|^2]\Big),
            \\C_{\mathrm{FT}}(\thetac,\thetas^s,\thetas^t)&\leq \big(1+\dimc^4B^{1/2} + \dims^4B^{1/2} \big)\big(1+\dimc^4B^{1/2}+2\dims^4B^{1/2} \big)\mbE_{Z_1^s}\big[(1+\|Z_1^s\|^2)\big].
        \end{split}
    \end{equation}
\end{lemma}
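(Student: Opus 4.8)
The plan is to reuse, term by term, the dimension-counting estimate that already powers the proof of Lemma~\ref{lem: complexity}; the statement contains no new analytic content beyond that estimate, so the work is purely bookkeeping. The one point I would make explicit at the outset is that ``the same assumptions as in Lemma~\ref{lem: complexity}'' must here be read as covering \emph{all} the measures entering $C_{\alpha}$ and $C_{\mathrm{FT}}$, namely the minimizers $\bar{m}_{\alpha},\mrmbarc^{s},\mrmbarsp^{s},\tilde{m}_{\spec}^t$ and the tilted priors $\hat\gamma_{8,c}^\sigma,\gammasp,\tilgammap$: each is assumed to have per-coordinate $8$th moment bounded by $B$.

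First I would isolate the single driving inequality. For any probability measure $\rho$ on $\mbR^{k}$ whose marginals satisfy $\mbE_{\theta_i\sim\rho[i]}[\theta_i^8]\leq B$,
\[
\int \|\theta\|_2^4\,\rho(\mrd\theta)=\int\Big(\sum_{i=1}^{k}\theta_i^2\Big)^2\rho(\mrd\theta)\leq k\sum_{i=1}^{k}\mbE_{\rho[i]}[\theta_i^4]\leq k^2 B^{1/2},
\]
where the first step is Cauchy--Schwarz (equivalently the power-mean inequality) and the second is Jensen's inequality $\mbE[\theta_i^4]\leq(\mbE[\theta_i^8])^{1/2}$, exactly as in the proof of Lemma~\ref{lem: complexity}. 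Since $k\leq k^4$, this yields the coarser form $\int\|\theta\|_2^4\,\rho(\mrd\theta)\leq k^4 B^{1/2}$, which is the version appearing in the two claimed inequalities.

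Next I would substitute this estimate into each $\|\cdot\|^4$-integral. For $C_{\alpha}(\theta)$ both integrals live on $\Theta\subset\mbR^{\dimg}$, so $\int_\Theta\|\theta\|^4\bar{m}_{\alpha}(\mrd\theta)\leq\dimg^4B^{1/2}$ and $\int_\Theta\|\theta\|^4\tilgammap(\mrd\theta)\leq\dimg^4B^{1/2}$; multiplying the two bracketed factors gives $(1+\dimg^4B^{1/2})^2$, while the factors $(1-\alpha)$ and $\big(1+\alpha\mbE_{Z_1^t}[\|Z_1^t\|^2]+(1-\alpha)\mbE_{Z_1^s}[\|Z_1^s\|^2]\big)$ are carried through unchanged. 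For $C_{\mathrm{FT}}$ I would split each integral according to whether it is over $\Thetac\subset\mbR^{\dimc}$ or $\Thetas\subset\mbR^{\dims}$: the first bracket is then bounded by $1+\dimc^4B^{1/2}+\dims^4B^{1/2}$, and the second by $1+\dimc^4B^{1/2}+2\dims^4B^{1/2}$, the factor $2$ arising from the two $\Thetas$-integrals against $\mrmbarsp^{s}$ and $\tilde{m}_{\spec}^t$; the data factor $\mbE_{Z_1^s}[(1+\|Z_1^s\|^2)]$ is left as is.

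The computation is routine once the driving estimate is in hand, so the only thing demanding care is the strengthening of hypotheses flagged above: finiteness of the relevant moments of the minimizers is already guaranteed by the constraint sets in Lemma~\ref{lem:optimizerinBnu} and Lemma~\ref{lem:optimizerin-Bnus-shared}, but the \emph{uniform numerical} bound $B$ on their coordinate marginals is an additional standing assumption that I would record explicitly before invoking the estimate. With that in place, assembling the term-by-term bounds gives precisely the two displayed inequalities.
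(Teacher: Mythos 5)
Your proposal is correct and follows essentially the same route as the paper, whose proof of this lemma is simply a pointer back to the dimension-counting estimate $\int\|\theta\|_2^4\,\rho(\mrd\theta)\leq k^2 B^{1/2}\leq k^4 B^{1/2}$ (Cauchy--Schwarz plus Jensen) from Lemma~\ref{lem: complexity}, applied term by term to the brackets in $C_{\alpha}$ and $C_{\mathrm{FT}}$. Your explicit observation that the hypothesis must be read as imposing the per-coordinate $8$th-moment bound $B$ also on the minimizers $\bar{m}_{\alpha},\mrmbarc^{s},\mrmbarsp^{s},\tilde{m}_{\spec}^t$ --- not just on the tilted priors named in Lemma~\ref{lem: complexity} --- is a point the paper leaves implicit, and recording it is the right call.
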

\begin{proof}
    The proof approach is similar to Lemma~\ref{lem: complexity}.
\end{proof}
 From Lemma~\ref{lem: complexity} and due to the fact that $\dimg=\dims+\dimc$, the upper bound on $C_{\alpha}(\theta)$ can be larger than the upper bound on $C_{\mathrm{FT}}(\thetac,\thetas^s,\thetas^t)$ for the case $\dims\ll\dimc$.

 \subsection{Other Comparison}
 Note that in $\alpha$-ERM, the hyper-parameter $\alpha$  must be tuned for each specific application. Fine-tuning, on the other hand, requires no such hyperparameter tuning. Moreover, in many cases, particularly in large language models, access to the original source training dataset is often impossible, and only the pre-trained model is shared. In these situations, fine-tuning can be directly applied to the target dataset and we can not apply $\alpha$-ERM scenario.
 
 \end{document}